\documentclass[11pt]{article}

\usepackage{tpca_langevin}
\usepackage{macros}

\newif\ifparagraphindent
\paragraphindentfalse
\ifparagraphindent\else
\fi

\title{Averaged Annealed Markov Chains Achieve the Algorithmic Threshold for Tensor PCA, Part II: Langevin Dynamics}
\author{Aniket Das \qquad Tselil Schramm\\[0.5em]
  {\normalsize Stanford University}\\
  {\normalsize\texttt{\{aniketd,tselil\}@stanford.edu}}
}
\date{}

\begin{document}

\hypersetup{pageanchor=false}
\pagenumbering{gobble}
\maketitle
\begin{abstract}
\normalsize
\backtrackerfalse

We consider the tensor principal components analysis (tensor PCA) problem on the sphere, in which we are asked to recover an unknown vector in $\SN=\{x\in\R^N:\norm{x}^2=N\}$ from a noisy observation of its $k$-th tensor power, for a fixed $k\ge3$.
Tensor PCA exhibits a well-studied information-computation gap: global algorithms such as spectral methods and sum-of-squares work above an algorithmic threshold, below which we know matching lower bounds (against these and some other restricted models of computation).
In contrast, local algorithms such as Markov chains were only known to succeed from a random initialization at a polynomially larger signal-to-noise ratio, and were believed to be inferior to global algorithms for tensor PCA.
Continuing the program initiated in Part~I~\citep{das2026averaged}, we show that spherical Langevin dynamics, when supplemented with annealing and ergodic averaging, achieves polynomial-time strong recovery above the algorithmic threshold from a uniformly random initialization.
Together with Part~I, our results demonstrate that local Markov chains can succeed all the way down to the algorithmic threshold from an uninformative cold start.

\end{abstract}

\newpage
\tableofcontents
\vfill

\paragraph{Statement on AI Usage.}
All ideas and technical contributions in this paper, as well as Part~I~\citep{das2026averaged}, were biologically generated. Claude Code (Opus 5.5) was used for language editing, proofreading, and transcribing a hand-drawn version of \cref{fig:phase-diagram} into TikZ code. The authors bear sole responsibility for the correctness of this manuscript.
\label{par:ai-disclosure}

\newpage
\pagenumbering{arabic}
\hypersetup{pageanchor=true}

\section{Introduction}
\label{sec:introduction}

This paper is the second of a two-part series on local Markov chains for the tensor principal components analysis (tensor PCA) problem, which involves recovering an unknown vector from a noisy observation of its tensor power. Originally introduced by \citet{richard2014statistical} as a generalization of classical spiked matrix models in statistics and machine learning~\citep{johnstone2001distribution,baik2005phase}, tensor PCA is a canonical planted problem in average-case complexity theory~\citep{hopkins2015tensor,hopkins2016fast,hopkins2017power,bhattiprolu2017certificates,brennan2020reducibility,brennan2021queries,dudeja2021statistical,kunisky2022notes,potechin2022subexponential,kunisky2024cumulants,wein2025kikuchi,kothari2026smooth,bresler2026average}. It is also widely studied in high-dimensional statistics~\citep{lesieur2017statistical,perry2020statistical,jagannath2020statistical} and in the theory of spin glasses in statistical physics~\citep{auffinger2013random,benarous2019landscape,ros2019complex,biroli2020iron,arous2020algorithmic}. Part~I of this series~\citep{das2026averaged} analyzes Glauber dynamics for \emph{Ising tensor PCA}, in which the unknown vector lies on the signed Boolean hypercube $\{\pm1\}^N$. This paper serves as a complement to Part~I and analyzes Langevin dynamics for \emph{spherical tensor PCA}, in which the unknown vector lies on the sphere ${\SN\doteq\{x\in\R^N:\norm{x}^2=N\}}$.

Formally, for a fixed integer $k \geq 3$ and an unknown planted vector $\xopt\in\SN$, we observe the symmetric order-$k$ tensor
\begin{align*}
    Y=\lambda\sqrt{N}\,(\plant)^{\otimes k}+G, \qquad \plant\doteq\frac{\xopt}{\sqrt{N}}
\end{align*}
where $\lambda\ge0$ denotes the signal-to-noise ratio and $G \in (\R^N)^{\otimes k}$ denotes the standard symmetrized Gaussian tensor, i.e., the symmetrization of a $k$-tensor with i.i.d.\ $\cN(0,1)$ entries.
The goal is to recover $\xopt$ (modulo sign for even $k$) from a \emph{single observation} of the noisy tensor $Y$.
Two notions of recovery are standard in the literature. A (possibly randomized) estimator $\widehat x=\widehat x(Y)\in\SN$ for $\xopt$ is said to achieve \emph{weak} (resp. \emph{strong}) \emph{recovery} if its correlation $N^{-1}\dotp{\widehat x}{\xopt}$ is strictly bounded away from zero (resp. $=1-o_N(1)$). For even $k$, the correlation is instead defined as $N^{-1}\abs{\dotp{\widehat x}{\xopt}}$.

As is discussed at length in Part~I~\citep[Section~1]{das2026averaged}, the tensor PCA problem exhibits a well-studied \emph{statistical-computational gap} (see \cref{fig:phase-diagram}). Weak recovery is information-theoretically possible once $\lambda$ exceeds a threshold ${\lambda_{\mathrm{IT}}=\Theta_k(1)}$~\citep{lesieur2017statistical,perry2020statistical,jagannath2020statistical}, whereas global algorithms, such as tensor unfolding~\citep{richard2014statistical,hopkins2015tensor}, partial traces~\citep{hopkins2016fast,anandkumar2017homotopy}, and the sum-of-squares hierarchy~\citep{hopkins2015tensor}, achieve strong recovery in polynomial time above the polynomially larger \emph{algorithmic threshold} ${\lambda_{\mathrm{ALG}}\asymp N^{\tfrac{k-2}{4}}}$, below which no polynomial-time algorithm is believed to achieve even weak recovery. Local Markov chains, perhaps the most natural algorithms that statistical physics offers for planted problems, were believed to fall short of the algorithmic threshold when initialized from an uninformative \emph{cold start}. In particular, \citet{arous2020algorithmic} showed that spherical Langevin dynamics at any fixed temperature, initialized from the uniform measure on $\SN$, recovers the planted signal in polynomial time once ${\lambda\gg N^{\tfrac{k-2}{2}}}$, a polynomial factor of $N^{\tfrac{k-2}{4}}$ above the algorithmic threshold. Below this scale, they conjectured that spherical Langevin dynamics initialized from the uniform measure fails to recover the planted signal in polynomial time~\parencites[Section~1]{arous2020algorithmic}[see also][Section~2(b)]{bandeira2023free}, providing evidence via lower bounds for disorder-dependent Gibbs-type initializations. The same scale is also believed to be the threshold for other cold-started local algorithms, such as power iteration and approximate message passing~\citep{richard2014statistical,huang2022power,wu2024sharp}. This apparent weakness motivates the following central question of our two-part work:

\begin{center}
\interlinepenalty=10000
\emph{``Can polynomial-time algorithms based on canonical local Markov chains initialized from a cold start achieve recovery down to the algorithmic threshold?''}
\end{center}

\begin{figure}[t]
\centering
\definecolor{phaseImpossible}{HTML}{D66B63}
\definecolor{phaseHard}{HTML}{E39A3B}
\definecolor{phaseGlobal}{HTML}{8C7BC0}
\definecolor{phaseLocal}{HTML}{4F8FCB}
\definecolor{phaseOurs}{HTML}{4E9A4A}
\begin{tikzpicture}[
    x=0.01\linewidth, y=1mm,
    bandtext/.style={font=\small, inner sep=0pt, align=center},
    ticktext/.style={font=\small, anchor=base, inner sep=0pt},
    guide/.style={gray!55, densely dashed},
  ]
  \def\xIT{14}
  \def\xALG{38}
  \def\xLOC{60}
  \def\xFade{88}
  \def\xEnd{100}
  \def\closedbar#1#2#3#4#5{\filldraw[draw=#1!80, fill=#1!22] (#2,#4) rectangle (#3,#5);}
  \def\openbar#1#2#3#4#5{\fill[#1!#2] (#3,#4) rectangle (\xFade,#5);
    \shade[left color=#1!#2, right color=white] (\xFade,#4) rectangle (\xEnd,#5);
    \draw[#1!80] (\xFade,#4) -- (#3,#4) -- (#3,#5) -- (\xFade,#5);
    \foreach \i in {1,...,20} {
      \pgfmathsetmacro\fadeop{1-\i/20}
      \draw[#1!80, draw opacity=\fadeop]
        ({\xFade+(\i-1)*(\xEnd-\xFade)/20},#4) -- ({\xFade+\i*(\xEnd-\xFade)/20},#4)
        ({\xFade+(\i-1)*(\xEnd-\xFade)/20},#5) -- ({\xFade+\i*(\xEnd-\xFade)/20},#5);
    }}
  \draw[guide] (\xALG,0) -- (\xALG,28.5);
  \closedbar{phaseImpossible}{0}{\xIT}{0}{9.5}
  \closedbar{phaseHard}{\xIT}{\xALG}{0}{9.5}
  \openbar{phaseGlobal}{22}{\xALG}{0}{9.5}
  \draw[guide] (\xLOC,0) -- (\xLOC,19);
  \openbar{phaseOurs}{32}{\xALG}{9.5}{19}
  \openbar{phaseLocal}{22}{\xLOC}{19}{28.5}
  \node[bandtext] at ({\xIT/2},4.75) {Statistically\\Impossible};
  \node[bandtext] at ({(\xIT+\xALG)/2},4.75) {Computationally Hard\\(Conjectured)};
  \node[bandtext, anchor=west] at ({\xALG+1.2},4.75) {Global Algorithms};
  \node[bandtext, anchor=west, align=left] at ({\xALG+1.2},14.25)
    {Cold-Started Local Markov Chains \\ with Averaging and Annealing (Our Works)};
  \node[bandtext, anchor=west, align=left] at ({\xLOC+1.2},23.75)
    {Cold-Started Local Algorithms\\(Prior Work)};
  \draw[->] (0,0) -- (98,0);
  \node[anchor=north, inner sep=0pt] at (97.6,-1.6) {$\lambda$};
  \foreach \x in {\xIT,\xALG,\xLOC} \draw (\x,0) -- (\x,-1);
  \node[ticktext] at (\xIT,-7) {$\lambda_{\mathrm{IT}}=\Theta_k(1)$};
  \node[ticktext] at (\xALG,-7) {$\lambda_{\mathrm{ALG}}\asymp N^{\tfrac{k-2}{4}}$};
  \node[ticktext] at (\xLOC,-7) {$N^{\tfrac{k-2}{2}}$};
\end{tikzpicture}
\caption{Phase diagram for the recovery problem in tensor PCA in terms of $\lambda$.}
\label{fig:phase-diagram}
\end{figure}

For Ising tensor PCA, Part~I~\citep{das2026averaged} answered this question in the affirmative by establishing that Glauber dynamics, a canonical local Markov chain on the hypercube, achieves strong recovery in polynomial time above the algorithmic threshold from a uniformly random initialization when supplemented with annealing and ergodic averaging. In this paper, we show that the same holds for spherical tensor PCA, with Glauber dynamics replaced by spherical Langevin dynamics, the canonical local Markov chain on the sphere, which was analyzed at a fixed temperature in~\citep{arous2020algorithmic}. This extension is far from straightforward, as spherical tensor PCA differs substantially from its Ising counterpart. In particular, $\SN$ is a continuous state space with non-Euclidean geometry, and spherical Langevin dynamics is a Markov diffusion process, as opposed to a jump process like Glauber dynamics. Nevertheless, as we shall discuss in \cref{sec:proof-sketch}, the key conceptual ideas of Part~I, which rely on only a few structural properties of the problem, carry over to the sphere once carefully adapted.

\subsection{Our Results}
\label{subsec:our-results}

\begin{algorithm}[t]
\caption{Annealed Spherical Langevin Dynamics with Ergodic Averaging}
\label[algorithm]{alg:averaged-annealed-langevin}
\textbf{Input:} Hamiltonian $H:\SN\to\R$, initial state $\tilX_0\in\SN$, inverse temperatures $\beta_0,\beta>0$, and \mbox{simulation} times $T_{\mathrm{av}},T>0$.
\makeatletter
\providecommand*{\theHALG@line}{}
\renewcommand*{\theHALG@line}{\thealgorithm.\arabic{ALG@line}}
\makeatother
\begin{algorithmic}[1]
    \State Run spherical Langevin dynamics with Hamiltonian $H$, inverse temperature $\beta_0$, initial state $\tilX_0$, and time $T_{\mathrm{av}}$ to obtain $(\tilX_t)_{0\le t\le T_{\mathrm{av}}}$:
    \begin{align*}
        d\tilX_t=\Bigl(\beta_0 \cdot Q_{\tilX_t}\grad H(\tilX_t)-\frac{N-1}{2N}\tilX_t\Bigr)\,dt+Q_{\tilX_t}\,d\widetilde B_t,
        \qquad
        Q_x\doteq I-\frac{xx^\top}{\norm{x}^2}.
    \end{align*}
    \State Compute the ergodic average:
    \begin{align*}
        \Xbar_{T_{\mathrm{av}}}\doteq\frac{1}{T_{\mathrm{av}}}\int_0^{T_{\mathrm{av}}}\tilX_t\,dt.
    \end{align*}
    \State Run spherical Langevin dynamics with Hamiltonian $H$, inverse temperature $\beta$, initial state ${X_0=\tfrac{\sqrt N\,\Xbar_{T_{\mathrm{av}}}}{\norm{\Xbar_{T_{\mathrm{av}}}}}}$, and time $T$ to obtain $(X_t)_{0\le t\le T}$:
    \begin{align*}
        dX_t=\Bigl(\beta \cdot Q_{X_t}\grad H(X_t)-\frac{N-1}{2N}X_t\Bigr)\,dt+Q_{X_t}\,dB_t,
        \qquad
        Q_x\doteq I-\frac{xx^\top}{\norm{x}^2}.
    \end{align*}
\end{algorithmic}
\textbf{Output:} $X_T$.
\end{algorithm}

Our first result is a simple, fully local algorithm for spherical tensor PCA that achieves polynomial-time strong recovery for all ${\lambda\gtrsim\lambda_{\mathrm{ALG}}}$. Following the template developed in Part~I~\citep[Section~1.1]{das2026averaged}, our algorithm is spherical Langevin dynamics from a uniform initialization, with a simple twist. \Cref{alg:averaged-annealed-langevin} first runs high temperature spherical Langevin dynamics and computes the ergodic average of its trajectory. It then normalizes this average to lie on $\SN$ and uses the resulting state as a warm start for spherical Langevin dynamics at a much lower, constant temperature. As described in the following theorem, we prove that this guarantee holds whenever the tensor order $k$ is odd.

\begin{theorem}[Informal]
\label{thm:langevin-odd-k-informal}
Let $k\ge3$ be odd, and let $H(x)\doteq N^{-\tfrac{k-1}{2}}\dotp{Y}{x^{\otimes k}}$ denote the tensor PCA Hamiltonian on $\SN$ with $\lambda\gtrsim\lambda_{\mathrm{ALG}}$. Then, with a uniformly random initialization and suitable inverse temperatures, \cref{alg:averaged-annealed-langevin} run with $T_{\mathrm{av}}\gtrsim N^{2k-2+o(1)}$ and $1\lesssim T\lesssim\poly(N)$ strongly recovers the planted signal with probability at least $1-\exp(-N^{1/2-o(1)})$.
\end{theorem}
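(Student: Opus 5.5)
The proof splits into two phases matching the two stages of \cref{alg:averaged-annealed-langevin}: a weak-recovery phase (annealed high-temperature dynamics plus ergodic averaging) and a strong-recovery phase (low-temperature dynamics from a warm start).

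\textbf{Phase 1: weak recovery from averaging.} Choose $\beta_0$ small, of order $\beta_0\asymp N^{-\frac{k-2}{2}}\cdot\mathrm{polylog}$ or similar, so that the stationary measure $\mu_{\beta_0}\propto e^{\beta_0 H}$ is a small perturbation of the uniform measure on $\SN$. The goal is to show that its mean $m_{\beta_0}=\E_{\mu_{\beta_0}}[x]$ has a nontrivial component along $\xopt$. Expanding to first order in $\beta_0$ gives
\begin{align*}
m_{\beta_0}\approx \beta_0\,\E_{\mathrm{unif}}[x\,H(x)].
\end{align*}
For odd $k$, the signal part of this quantity is $\beta_0\lambda N^{-\frac{k-1}{2}}\sqrt N\,\E[x\dotp{\plant}{x}^k]$, which is proportional to $\plant$ times an explicit constant. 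The noise part is a linear contraction of $G$ against spherical moments. This mimics partial-trace and unfolding estimators, and its norm is controlled by Gaussian concentration. Comparing the signal and noise parts shows that the correlation of $m_{\beta_0}/\norm{m_{\beta_0}}$ with $\plant$ is bounded below once $\lambda\gtrsim N^{\frac{k-2}{4}}$. This is the key place where odd $k$ is used: for even $k$, the first moment vanishes by symmetry. I would control the higher-order terms in $\beta_0$ by a cluster/cumulant expansion, or by showing that $\mu_{\beta_0}$ satisfies a log-Sobolev inequality with constant $O(1)$ via Bakry--\'Emery on the sphere (Ricci curvature $\asymp 1$ against $\beta_0\norm{\nabla^2 H}_{op}\ll 1$ w.h.p.). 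The same log-Sobolev inequality gives exponential mixing from the uniform start.

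\textbf{Phase 2: concentration of the ergodic average.} Next I show that $\Xbar_{T_{\mathrm{av}}}$ concentrates around $m_{\beta_0}$ in $\ell_2$ to within $o(\norm{m_{\beta_0}})$. Project onto a fixed unit direction $u$ and use the Kipnis--Varadhan/Poincaré variance bound for additive functionals, which gives an error of order $\mathrm{Var}_{\mu}(\dotp{u}{x})/(\mathrm{gap}\cdot T_{\mathrm{av}})$. Upgrading to a high-probability statement uses a Lezaud/Cattiaux--Guillin-type concentration inequality for additive functionals under log-Sobolev, combined with a net argument or trace summation over $N$ directions. Since $\norm{m_{\beta_0}}$ is polynomially small, on the order of $\beta_0\lambda$, the required time scales polynomially. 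Bookkeeping the exponents should produce $T_{\mathrm{av}}\gtrsim N^{2k-2+o(1)}$ and failure probability $\exp(-N^{1/2-o(1)})$. I expect this to be the main obstacle: the signal is tiny relative to $\sqrt N$, so the bound must be tight in all directions simultaneously, and we must also handle the burn-in from a cold start (which costs only a $\log$-type factor under log-Sobolev).

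\textbf{Phase 3: strong recovery from the warm start.} The normalized average $X_0$ then has correlation $m_0\ge c>0$ with $\xopt$. Run Langevin at constant $\beta$, tracking $m_t=N^{-1}\dotp{X_t}{\xopt}$ via Itô's formula in the manner of~\citep{arous2020algorithmic}. The drift is $\beta\lambda k\, m^{k-1}(1-m^2)$ minus a noise-gradient term bounded uniformly by $\beta\sup_x\norm{\nabla_{\mathrm{sph}}\dotp{G}{x^{\otimes k}}}N^{-k/2}\lesssim\beta\sqrt N\,$. This noise term is dominated because $\lambda\gtrsim N^{\frac{k-2}{4}}$ and $m$ is of constant order. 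Hence $m_t$ climbs to $1-o(1)$ in time $O(1)$, and a supermartingale/escape argument keeps it there up to $\poly(N)$ time. A union bound over the three phases completes the proof.
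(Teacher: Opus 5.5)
There is a genuine gap where your Phase~1 hands off to Phase~3. At the algorithmic threshold the warm start does \emph{not} have constant correlation. For odd $k$ the averaged statistic is a multiple of $u_H=\E_\pi[\gradsp H]=N\lambda\alpha_N\plant+\zeta$, where $N\lambda\alpha_N\asymp_k\lambda N^{1-\frac k2}$ and $\zeta\sim\cN(0,\sigma_N^2I)$ with $\sigma_N^2\asymp_kN^{-\frac{k-1}{2}}$. The noise is spread over $N$ directions, so $\norm{\zeta}\asymp\sigma_N\sqrt N$ and $m(\widehat u_H)\asymp_k\min\{1,\lambda N^{-\frac{k-1}{4}}\}$. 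This is only of order $N^{-1/4}$ when $\lambda\asymp N^{\frac{k-2}{4}}$. Constant correlation at this $\lambda$ is what the spiked average Hessian gives for even $k$, not what the first moment gives.

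At $m\asymp N^{-1/4}$ your Phase~3 breaks down. The signal drift $\beta\lambda k\,m^{k-1}$ is itself of order $N^{-1/4}$. That makes it comparable to the entropic drift, which is exactly the escape condition $\lambda m_0^{k-2}\gtrsim1$. But the worst-case disorder drift $\beta\dotp{\gradsp H_0}{\gradsp m}$ is of order one. The uniform domination you invoke only holds above the critical correlation $r_N=\lambda^{-1/(k-1)}\asymp N^{-\frac{k-2}{4(k-1)}}\gg N^{-1/4}$. The warm start reaches that level only for $\lambda\gtrsim N^{\frac{k-2}{4}+\frac{1}{4k}}$, which is the easy regime of \cref{cor:averaged-langevin-persistent-strong-recovery}; in the window below it your argument gives nothing. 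You also cannot fall back on the typical $O(N^{-1/2})$ size of the disorder drift at points independent of $H_0$, because both the warm start and the trajectory are functions of $H_0$.

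What is missing is an argument that the subcritical dynamics climbs from order $N^{-1/4}$ to $Cr_N$ in $O(1)$ time despite this dependence. The paper supplies it in three steps:
\begin{enumerate}
\item Via Stein's lemma and Gaussian conditioning, $u_H$ depends on $H_0$ only through its degree-one component $\zeta$, which is independent of the remainder $H_0^\perp$.
\item A randomized gauge rotation maps $\widehat u_H$ to $\sqrt Ne_1$. Under the null law ($\lambda=0$) the rotated planted direction is then Haar-uniform and independent of the null trajectory, so the null drift is $O(N^{-1/2})$ along the whole path. This extends to nearby starts by coupling through common random rotations, at cost $e^{CN\delta^2}$ for a start at distance $\delta\sqrt N$.
\item A Girsanov change of measure with an adaptive contiguity bound is applied up to the first time before critical correlation at which the null drift overtakes the signal drift. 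At that time the accumulated signal energy (the Girsanov cost) is controlled by the null drift itself, so the null-law subgaussian tail pays for the change of measure.
\end{enumerate}
This step, together with the fluctuation of $m(\widehat u_H)$, is where the $\exp(-N^{1/2-o(1)})$ comes from, since $N\lambda^2\sigma_N^2\asymp\sqrt N$ at threshold. The averaging phase itself fails only with probability $e^{-cN}$.

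The gap also changes what your Phase~2 must deliver. An error of $o(\norm{m_{\beta_0}})$ only fixes the direction up to $o(1)$, which can erase an $N^{-1/4}$ correlation. You need $\frac{1}{\sqrt N}\norm{\widehat X_{T_{\mathrm{av}}}-\widehat u_H}\le\lambda\sigma_NN^{-\vartheta}$, that is, closeness to the specific disorder-dependent point $\widehat u_H$ at relative precision $N^{-1/4-\vartheta}$. The paper's parameters $\beta_0\asymp\lambda^{-1}N^{-\frac{k-1}{2}-\vartheta}$ and $T_{\mathrm{av}}\asymp N^{2k-2+4\vartheta}$ are tuned to this precision.

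Your Gibbs-measure, log-Sobolev and additive-functional route to this estimate differs from the paper's. The paper compares the diffusion directly to spherical Brownian motion through the Green operator of $\frac12\lapsp$ and a Bakry--\'Emery gradient bound, and never analyzes the Gibbs measure or mixing. Your route could plausibly reach the required precision, but it does not repair Phase~3.
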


When the tensor order $k$ is even, the ergodic average no longer provides an informative warm start. However, this is easily remedied by using ergodic second moments instead. To this end, \cref{alg:annealed-langevin-ergodic-second-moments}, which arises as a simple modification of \cref{alg:averaged-annealed-langevin}, initializes low temperature spherical Langevin dynamics using the top eigenvector of the ergodic second moment of ${F(x)=(x,1)\in\R^{N+1}}$ along the high temperature trajectory. We prove that the resulting algorithm achieves polynomial-time strong recovery above the algorithmic threshold for every tensor order $k\ge3$, irrespective of its parity.

\begin{algorithm}[t]
\caption{Annealed Spherical Langevin Dynamics with Augmented Ergodic Second Moment}
\label[algorithm]{alg:annealed-langevin-ergodic-second-moments}
\textbf{Input:} Hamiltonian $H:\SN\to\R$, initial state $\tilX_0\in\SN$, inverse temperatures $\beta_0,\beta>0$, and \mbox{simulation} times $T_{\mathrm{av}},T>0$.
\makeatletter
\providecommand*{\theHALG@line}{}
\renewcommand*{\theHALG@line}{\thealgorithm.\arabic{ALG@line}}
\makeatother
\begin{algorithmic}[1]
    \State Run spherical Langevin dynamics with Hamiltonian $H$, inverse temperature $\beta_0$, initial state $\tilX_0$, and time $T_{\mathrm{av}}$ to obtain $(\tilX_t)_{0\le t\le T_{\mathrm{av}}}$:
    \begin{align*}
        d\tilX_t=\Bigl(\beta_0 \cdot Q_{\tilX_t}\grad H(\tilX_t)-\frac{N-1}{2N}\tilX_t\Bigr)\,dt+Q_{\tilX_t}\,d\widetilde B_t,
        \qquad
        Q_x\doteq I-\frac{xx^\top}{\norm{x}^2}.
    \end{align*}
    \State Compute the augmented ergodic second moment:
    \begin{align*}
        \bar M_{T_{\mathrm{av}}}\doteq\frac{1}{T_{\mathrm{av}}}\int_0^{T_{\mathrm{av}}}F(\tilX_t)^{\otimes2}\,dt,
        \qquad
        F(x)\doteq(x,1)\in\R^{N+1}.
    \end{align*}
    \State Compute a top unit eigenvector $(v,a)\in\R^N\times\R$ of $\bar M_{T_{\mathrm{av}}}$, oriented so that $a\ge0$.
    \State Run spherical Langevin dynamics with Hamiltonian $H$, inverse temperature $\beta$, initial state ${X_0=\tfrac{\sqrt N\,v}{\norm{v}}}$, and time $T$ to obtain $(X_t)_{0\le t\le T}$:
    \begin{align*}
        dX_t=\Bigl(\beta \cdot Q_{X_t}\grad H(X_t)-\frac{N-1}{2N}X_t\Bigr)\,dt+Q_{X_t}\,dB_t,
        \qquad
        Q_x\doteq I-\frac{xx^\top}{\norm{x}^2}.
    \end{align*}
\end{algorithmic}
\textbf{Output:} $X_T$.
\end{algorithm}

\begin{theorem}[Informal]
\label{thm:langevin-every-k-informal}
Let $k\ge3$, and let $H(x)\doteq N^{-\tfrac{k-1}{2}}\dotp{Y}{x^{\otimes k}}$ denote the tensor PCA Hamiltonian on $\SN$ with $\lambda\gtrsim\lambda_{\mathrm{ALG}}$. Then, with a uniformly random initialization and suitable inverse temperatures, \cref{alg:annealed-langevin-ergodic-second-moments} run with $T_{\mathrm{av}}\gtrsim N^{2k+o(1)}$ and $1\lesssim T\lesssim\poly(N)$ strongly recovers the planted signal with probability at least $1-\exp(-N^{1/2-o(1)})$.
\end{theorem}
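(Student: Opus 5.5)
The plan has two phases. First, show that the augmented ergodic second moment $\bar M_{T_{\mathrm{av}}}$ has a top eigenvector whose $\R^N$-block $v$ is nontrivially correlated with $\xopt$. Second, invoke a warm-start guarantee for low-temperature spherical Langevin dynamics that boosts weak correlation to strong recovery. The second phase is the fixed-temperature regime of \citet{arous2020algorithmic}. Starting from correlation $m_0 = N^{-1}\abs{\dotp{X_0}{\xopt}}$ bounded below by some $N^{-\delta}$ with $\delta$ small relative to $\lambda/N^{(k-2)/4}$, the drift $\beta\lambda k m^{k-1}$ in the correlation dominates both the noise gradient, of order $N^{-1/2}$ on the relevant scale, and the martingale fluctuations. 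The correlation then escapes to $1-o(1)$ in time $O(1)$ to $\poly(N)$ and stays there. I would take this as the same boosting lemma used for \cref{thm:langevin-odd-k-informal}, so the real content is the first phase.

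For the first phase, choose $\beta_0$ small; I expect $\beta_0 \asymp N^{-(k-2)/4}$ up to polylogarithmic factors. With this choice the Gibbs measure $\mu_{\beta_0}\propto e^{\beta_0 H}$ on $\SN$ is in a high-temperature regime. There the noise part of $H$ is a weak perturbation, and $\mu_{\beta_0}$ satisfies a log-Sobolev or Poincaré inequality with constant $\Theta(1)$, so it mixes rapidly. I would establish this via a Bakry–Émery curvature bound. The Ricci curvature of $\SN$ at radius $\sqrt N$ is $(N-2)/N$, and with high probability the Hessian term satisfies $\beta_0\norm{\nabla^2 H}_{\mathrm{op}} \lesssim \beta_0 N^{(k-2)/2}\cdot N^{-(k-2)/2}\cdots$, which is $o(1)$ after checking scales; tensor injective-norm bounds give this.

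I would then compute the Gibbs second moment $\Sigma = \E_{\mu_{\beta_0}}[F(x)F(x)^\top]$ by expanding to first nontrivial order in $\beta_0$. For even $k$, $\E[xx^\top] \approx I + c\,\beta_0 \lambda N^{-(k-2)/2}\cdots\, \xopt\xopt^\top + (\text{noise contraction } \beta_0 N^{-(k-1)/2} G\cdot I^{\otimes (k-2)/2})$. This is the partial-trace estimator of \citet{hopkins2016fast}. Its spike beats the noise matrix, whose operator norm is of order $\beta_0 N^{(k-2)/4}\cdot\mathrm{polylog}$ relative to $\lambda$, exactly when $\lambda\gtrsim\lambda_{\mathrm{ALG}}$. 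For odd $k$, the augmentation by the constant coordinate captures the first moment $\E[x]\approx \beta_0\, \partial$ of the partial trace, a vector aligned with $\xopt$. The off-diagonal block of $\Sigma$ then makes the top eigenvector's $v$-component correlated with $\xopt$. Controlling the higher-order terms in $\beta_0$ requires concentration of polynomials of Gaussian tensors, or a cluster/cumulant expansion. Davis–Kahan then transfers the spike to the top eigenvector of $\Sigma$.

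Finally, $\bar M_{T_{\mathrm{av}}}$ is close to $\Sigma$. Starting from the uniform measure, whose density against $\mu_{\beta_0}$ is at most $e^{O(\beta_0\sup\abs H)} = e^{\poly(N)}$, the log-Sobolev inequality gives mixing after $O(\log\text{ of that})$ time. A matrix Bernstein/concentration bound for additive functionals of a reversible diffusion with spectral gap $\Theta(1)$ then gives deviation $\lesssim \sqrt{N\log N/T_{\mathrm{av}}}$ per entry, with operator norm deviation $\sqrt{N^{c}/T_{\mathrm{av}}}$. Requiring this to be below the spike size, roughly $\beta_0\lambda N^{-\ldots}$, yields $T_{\mathrm{av}}\gtrsim N^{2k+o(1)}$, and the stated failure probability $\exp(-N^{1/2-o(1)})$ comes from these concentration steps. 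I expect the main obstacle to be the second step: obtaining a sharp perturbative expansion of the Gibbs second moment with the noise-induced terms controlled at the partial-trace scale uniformly in the disorder, while keeping the rapid-mixing condition on $\beta_0$ compatible with a detectable spike.
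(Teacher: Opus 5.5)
Your second phase does not go through for odd $k$, and that is where almost all of the difficulty lies.

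For odd $k$ one has $A_H=0$ exactly. So the top eigenvector of the augmented second moment is close to $\tfrac{1}{\sqrt2}(u_H/\norm{u_H},1)$, and the warm start inherits the correlation of $\widehat u_H$. That correlation is only $\asymp\lambda N^{-(k-1)/4}$, i.e.\ about $N^{-1/4}$ at $\lambda\asymp\lambda_{\mathrm{ALG}}$. Throughout the window $\lambda_{\mathrm{ALG}}\lesssim\lambda\lesssim\lambda_{\mathrm{ALG}}N^{1/(4k)}$ this is below a large multiple of the critical correlation $\lambda^{-1/(k-1)}$.

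In this regime the signal drift $\beta\lambda k m^{k-1}$ can be as small as $N^{-1/4}$. The disorder drift $\beta\dotp{\gradsp H_0(X_t)}{\gradsp m(X_t)}$, by contrast, is only bounded by $O(1)$ uniformly on $\SN$. The $N^{-1/2}$ size you invoke holds only at points chosen independently of $H_0$. Here the warm start depends on $H_0$ through $\zeta=\E_\pi[\gradsp H_0]$, and so does the whole trajectory. A disorder-dependent subcritical start could sit near one of the exponentially many local maxima of $H_0$ and be trapped.

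The fixed-temperature analysis of \citet{arous2020algorithmic} does not cover this case: it starts from a disorder-independent uniform point and needs $\lambda\gg N^{(k-2)/2}$. No generic ``boost from correlation $N^{-\delta}$'' lemma is available. What the paper proves instead is \cref{thm:critical-recovery-warm-start}, which applies only to starts within distance $\sqrt N\,\lambda\sigma_N N^{-\vartheta}$ of $\widehat u_H$. Its proof is the main new idea:
\begin{enumerate}
\item By Gaussian conditioning (\cref{lem:kspin-gaussian-conditioning}), $u_H$ depends on $H_0$ only through its degree-one harmonic, and is independent of the remainder $H_0^\perp$.
\item A random rotation maps $\widehat u_H$ to $\sqrt N e_1$.
\item A Girsanov-based adaptive contiguity bound (\cref{thm:adaptive-contiguity}) transfers estimates from the $\lambda=0$ null law, up to the first time the null drift overtakes the signal drift. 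Under the null law the rotated signal direction is uniform and independent of the dynamics, so the null drift stays at scale $N^{-1/2}$ along the path.
\end{enumerate}
For even $k$ your outline is fine: the eigenvector has constant correlation, and the worst-case $O(1)$ bound on the disorder drift already suffices.

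This also changes what your first phase must deliver. For odd $k$, ``nontrivial correlation'' of $v$ with $\xopt$ is not enough; you need $\tfrac{1}{\sqrt N}\norm{\sqrt N v/\norm{v}-\widehat u_H}\le N^{-\vartheta}\min\{1,\lambda\sigma_N\}$, as in \cref{thm:recovery-ergodic-second-moments}.

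Your choice $\beta_0\asymp N^{-(k-2)/4}$ makes this needlessly hard. At threshold it gives $\beta_0\lambda\asymp1$. The generic second-order error $\beta_0^2\lambda^2N$ of \cref{thm:langevin-recovers-average-hessian} is then of order $N$. The first-order signal is only polynomially small: for instance, the spike $2\beta_0\lambda\gamma_N\asymp N^{-(k-2)/2}$ for even $k$. This forces you into the cluster expansion you flag as the main obstacle.

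The paper avoids that obstacle by taking $\beta_0\asymp\lambda^{-1}N^{-k/2-\vartheta}$, and it never analyzes the Gibbs measure or mixing. Instead it:
\begin{enumerate}
\item solves the Poisson equation for spherical Brownian motion;
\item bounds the gradient of the Green operator via Bakry--\'Emery;
\item compares the ergodic statistics directly with $\E_\pi[\gradsp H]$ and $\E_\pi[\gradsp^2 H]$, using Dynkin martingales and the spherical Stein lemma.
\end{enumerate}
The averaging time $N^{2k+4\vartheta}$ is exactly the price of this small $\beta_0$, via $T_{\mathrm{av}}\gtrsim\beta_0^{-2}\lambda^{-2}N^{k+2\vartheta}$. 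It does not follow from your own parameters.

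Finally, the $\exp(-N^{1/2-o(1)})$ failure probability does not come from phase-one concentration, which holds with probability $1-e^{-c_kN}$. It comes from the odd-$k$ second phase, where the reverse-planting bound is $\exp(-cN\lambda^2\sigma_N^2)$ with $N\lambda^2\sigma_N^2\asymp\sqrt N$ at threshold. This reflects a warm start of correlation only $N^{-1/4}$.
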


\begin{remark}
\label{rem:second-moment-algorithm-locality}
Mirroring Part~I~\citep[Remark~1.3]{das2026averaged}, the warm start in \cref{alg:annealed-langevin-ergodic-second-moments} is computed via an ergodic statistic of the underlying local Markov chain itself. This differs from prior warm-started local algorithms~\citep{richard2014statistical,anandkumar2017homotopy,lovig2025almost}, which obtain their initializations from the observed tensor $Y$ via a global algorithm. Furthermore, because this ergodic statistic is accumulated online, the eigenvector computation of Step~3 in \cref{alg:annealed-langevin-ergodic-second-moments} can itself be carried out locally, by evolving Oja's flow~\citep{oja1982simplified,kumar2023streaming} alongside the high temperature trajectory,
\begin{align*}
    \dot z_t=\eta_t\dotp{F(\tilX_t)}{z_t}F(\tilX_t),
    \qquad
    F(x)=(x,1)\in\R^{N+1},
    \qquad
    z_0\sim\Unif(\bS^N),
\end{align*}
with step sizes $\eta_t>0$, and replacing $(v,a)$ with $z_{T_{\mathrm{av}}}$ normalized to unit norm and oriented so that its last coordinate is nonnegative. Since the velocity of $z_t$ in the above ODE depends on the trajectory only through the current state $\tilX_t$, the resulting algorithm needs to maintain only the current state and a single auxiliary vector, much like the running average in \cref{alg:averaged-annealed-langevin}. For simplicity, we retain the explicit eigenvector computation, since analyzing this variant would obscure our main ideas.
\end{remark}

We highlight that \cref{alg:averaged-annealed-langevin,alg:annealed-langevin-ergodic-second-moments} are direct adaptations of the two algorithms of Part~I~\citep[Section~1.1]{das2026averaged}, with spherical Langevin dynamics in place of Glauber dynamics. As such, they remain agnostic to the Hamiltonian, i.e., they are defined for any smooth $H:\SN\to\R$ and make no use of the specific form of the tensor PCA Hamiltonian. We thus expect our approach to extend to other planted problems on the sphere and, more broadly, to state spaces beyond the hypercube and the sphere.

\subsection{Acknowledgements}
\label{subsec:ack}
We thank Brice Huang for helpful discussions and feedback.
The authors acknowledge the support of the NSF AI Institute for Foundations of Machine Learning (IFML) and of T.S.'s NSF CAREER award \# 2143246.

\subsection*{Organization}

We sketch the key ideas behind our proofs in \cref{sec:proof-sketch}, and review the necessary notation and preliminaries in \cref{sec:preliminaries}. In \cref{sec:averaged-spherical-derivatives}, we carry out a signal-noise decomposition of the average spherical gradient and Hessian of the tensor PCA Hamiltonian, and \cref{sec:ergodic-averaging} shows that ergodic averages of high temperature spherical Langevin dynamics recover these averaged derivatives. We then analyze the second phase of \cref{alg:averaged-annealed-langevin} in \cref{sec:strong-recovery-odd}, proving a formal version of \cref{thm:langevin-odd-k-informal}. Finally, we extend this analysis to \cref{alg:annealed-langevin-ergodic-second-moments} in \cref{sec:strong-recovery-every-order}, proving a formal version of \cref{thm:langevin-every-k-informal}.

\section{Proof Sketch}
\label{sec:proof-sketch}

Before presenting our key ideas, we first describe the tensor PCA Hamiltonian in more detail and fix some notation. A natural approach to recovery is to solve the Maximum Likelihood Estimation (MLE) problem, defined as follows:
\begin{align}
    \widehat{x}_{\mathrm{MLE}}
    \in \argmax_{x\in\SN}\dotp{Y}{x^{\otimes k}}.
    \label{eq:tensor-pca-mle}
\end{align}
We approach the objective in \cref{eq:tensor-pca-mle} indirectly, by running spherical Langevin dynamics with the corresponding Hamiltonian\footnote{Contrary to the usual convention in statistical physics, we define the Hamiltonian as the objective to be maximized.} $H:\SN\to\R$:
\begin{align}
    H(x)&\doteq N^{-\tfrac{k-1}{2}}\dotp{Y}{x^{\otimes k}}
        =N\lambda m(x)^k+H_0(x),\nonumber\\
    m(x)&\doteq\frac{1}{N}\dotp{\xopt}{x}
        =\frac{1}{\sqrt{N}}\dotp{\plant}{x},
        \qquad H_0(x)\doteq N^{-\tfrac{k-1}{2}}\dotp{G}{x^{\otimes k}},
        \label{eq:tensor-pca-overlap-noise}
\end{align}
where $m:\SN\to[-1,1]$ is the normalized correlation with the planted vector and $H_0$ is the spherical pure $k$-spin Hamiltonian~\citep{crisanti1992spherical,auffinger2013random}, a centered isotropic Gaussian process on $\SN$ with \mbox{covariance}
\begin{align}
    \label{eq:kspin-covariance}
    \E[H_0(x)H_0(y)]
    =N\left(\frac{\dotp{x}{y}}{N}\right)^k,
    \qquad x,y\in\SN.
\end{align}

We refer to $N\lambda m(x)^k$ and $H_0$ as the signal and the disorder, respectively, and let $\pi$ denote the uniform measure on $\SN$.

We now sketch the main ideas behind our results. Our analysis extends the proof techniques that were developed in Part~I~\citep{das2026averaged} for Glauber dynamics, to spherical Langevin dynamics, and each of its steps has a direct counterpart in Part~I. At the level of a proof overview, the core ideas behind the two analyses are in many cases identical. Thus, to keep this paper self-contained, this section closely follows the corresponding section of Part~I~\citep[Section~2]{das2026averaged}, often verbatim. The required technical modifications to the proof strategy stem from the Riemannian geometry of $\SN$ and the fact that spherical Langevin dynamics is a Markov diffusion process rather than a jump process.
We discuss our analysis of \cref{alg:averaged-annealed-langevin} for odd $k$ in \cref{subsec:sketch-curvature-entropy,subsec:sketch-averaging-disorder,subsec:sketch-ergodic-averaging,subsec:sketch-reverse-planting}, and the ideas behind our analysis of \cref{alg:annealed-langevin-ergodic-second-moments} in \cref{subsec:sketch-any-order}, highlighting these modifications along the way.

\subsection{Escaping the Equator}
\label{subsec:sketch-curvature-entropy}

The starting point of our analysis is the following heuristic picture of spherical Langevin dynamics proposed by Ben Arous, Gheissari, and Jagannath~\citep[Section~2.5.1]{arous2020algorithmic}. Let $(X_t)_{t\ge0}$ denote the trajectory of spherical Langevin dynamics for $H$ at a constant inverse temperature $\beta>0$. For a smooth $f:\R^N\to\R$, we write ${\gradsp f(x)\doteq Q_x\grad f(x)}$ for its spherical gradient at $x\in\SN$, where $\grad f$ denotes its Euclidean gradient and ${Q_x=I-\tfrac{xx^\top}{N}}$ is the orthogonal projection onto the tangent space of $\SN$ at $x$, as in \cref{alg:averaged-annealed-langevin}. In our conventions, the evolution equation~\citep[Eq.~(2.14)]{arous2020algorithmic} for the correlation $m_t\doteq m(X_t)$ shows that, modulo stochastic fluctuations,
\begin{align}
    \frac{dm_t}{dt}
    \approx\underbrace{\beta k\lambda\,m_t^{k-1}\bigl(1-m_t^2\bigr)}_{\substack{\text{signal}\\\text{drift}}}
    -\underbrace{\frac{N-1}{2N}\,m_t}_{\substack{\text{entropic}\\\text{drift}}}
    +\underbrace{\frac{\beta}{N}\dotp{\xopt}{\gradsp H_0(X_t)}}_{\substack{\text{disorder}\\\text{drift}}}.
    \label{eq:sketch-overlap-ode}
\end{align}
The disorder drift is a highly intricate random function of the trajectory, since $X_t$ itself depends on $H_0$ through the entire history of the dynamics. Controlling this term is perhaps the most challenging obstacle in our analysis, and we shall return to it later in this section.

Setting aside the disorder drift for now, the evolution of $m_t$ is essentially a competition between the entropic and signal drifts. The entropic drift, which is the drift of $m_t$ under spherical Brownian motion, i.e., for $\beta=0$, drags $m_t$ towards the equator ${\{x\in\SN:\abs{m(x)}\lesssim N^{-1/2}\}}$, where the uniform measure on $\SN$ concentrates. The signal drift is induced by the spherical gradient of the signal, which pushes the state towards $\xopt$. Near the equator, the ratio of the signal drift to the entropic drift is approximately $2\beta k\lambda m_t^{k-2}$, which increases with $m_t$. We therefore expect the correlation to grow at an accelerating rate if $2\beta k\lambda m_0^{k-2}>1$, and to be dragged back to the equator otherwise. In other words, spherical Langevin dynamics initialized at correlation $m_0$ should escape the equator and recover the planted signal efficiently if and only if $\lambda m_0^{k-2}\gtrsim1$, or equivalently, $m_0\gtrsim\lambda^{-1/(k-2)}$. This picture is also supported by landscape complexity analyses~\citep{benarous2019landscape,ros2019complex}, which show that spurious local maxima of the Hamiltonian are confined to the band $\abs{m}\lesssim\lambda^{-1/(k-2)}$ around the equator.

A uniformly random initialization has correlation of order $N^{-1/2}$, which would require $\lambda\gtrsim N^{\tfrac{k-2}{2}}$ to escape the equator. This matches the guarantee of~\citep[Theorem~2.1]{arous2020algorithmic} that spherical Langevin dynamics initialized uniformly at random on the hemisphere ${\{x\in\SN:m(x)>0\}}$ recovers the planted signal in $O(1)$ time whenever ${\lambda=N^\alpha}$ with ${\alpha>\tfrac{k-2}{2}}$. At $\lambda\asymp\lambda_{\mathrm{ALG}}$, however, spherical Langevin dynamics requires an initial correlation of order at least ${\lambda_{\mathrm{ALG}}^{-1/(k-2)}\asymp N^{-1/4}}$, even in this idealized picture that ignores the disorder. Our first obstacle is therefore to find a \emph{natural} warm start, which (i) achieves correlation $\Omega(N^{-1/4})$ when $\lambda\asymp\lambda_{\mathrm{ALG}}$, and (ii) can be approximated from the trajectory of spherical Langevin dynamics itself. As we shall now discuss, averaging the spherical gradient of $H$ over the uniform measure on $\SN$ yields a warm start that satisfies both desiderata.

\subsection{Warm Start via the Average Spherical Gradient}
\label{subsec:sketch-averaging-disorder}

Recall that our goal is to construct a warm start from the trajectory of spherical Langevin dynamics itself. Since spherical Langevin dynamics depends on $H$ only through its spherical gradient, so should any such warm start computed from its trajectory. This merits a closer analysis of the spherical gradient, which decomposes as ${\gradsp H(x)=k\lambda m(x)^{k-1}Q_x\xopt+\gradsp H_0(x)}$, and therefore forms a noisy observation of the projection $Q_x\xopt$ of the planted vector onto the tangent space at $x$, since $m(x)^{k-1}\ge0$ for odd $k$. However, at a uniformly random point, the signal contribution has norm of order $\lambda N^{-\tfrac{k-2}{2}}$, and is thus dwarfed by the disorder contribution, whose norm is of order $\sqrt N$ by Gaussian concentration.

Fortunately, although $H_0$ has a rugged landscape with exponentially many local maxima~\citep{auffinger2013random}, its values at typical pairs of points are nearly independent, since its covariance only depends on the overlap $\dotp{x}{y}/N$. Indeed, for independent $x,y\sim\pi$, the law of $\dotp{x}{y}/\sqrt N$ converges to $\cN(0,1)$ by the central limit theorem, and hence
\begin{align*}
    \E\bigl[H_0(x)H_0(y)\bigr]
    =N\Bigl(\frac{\dotp{x}{y}}{N}\Bigr)^{k}
    =N^{1-\frac{k}{2}}\Bigl(\frac{\dotp{x}{y}}{\sqrt N}\Bigr)^{k}
    \lesssim N^{1-\frac{k}{2}}
    \le N^{-\frac12},
\end{align*}
whereas $\E[H_0(x)^2]=N$. Averaging the spherical gradient over $\pi$ should therefore suppress the disorder contribution, while the signal contribution survives, since it points towards $\xopt$ at every point. This motivates the average spherical gradient $u_H\doteq\E_\pi[\gradsp H]$.

By Stein's lemma for the uniform measure on $\SN$, which we establish in \cref{subsec:sphere}, we have ${u_H=\kappa_N\E_\pi[xH(x)]}$ with ${\kappa_N\doteq\tfrac{N-1}{N}}$, so that $u_H$ is linear in the observed tensor $Y$. In \cref{sec:averaged-spherical-derivatives}, we use the rotation invariance of $\pi$ and of the Gaussian process $H_0$ to show that $u_H$ is a noisy observation of $\xopt$,
\begin{align}
    u_H=\kappa_N^{-1}\lambda\sigma_N^2\,\xopt+\zeta,
    \qquad
    \zeta\sim\cN(0,\sigma_N^2I),
    \qquad
    \sigma_N^2=\kappa_N^2N\,\E_{x,x'\,\iidsim\pi}\Bigl[\Bigl(\frac{\dotp{x}{x'}}{N}\Bigr)^{k+1}\Bigr]\asymp_kN^{-\frac{k-1}{2}}.
    \label{eq:sketch-average-gradient}
\end{align}
Thus, $u_H$ has signal-to-noise ratio $\kappa_N^{-1}\lambda\sigma_N\asymp_k\lambda N^{-\tfrac{k-1}{4}}$, in terms of the norms of its signal and noise components, which is a factor of $N^{\tfrac{k-1}{4}}$ larger than that of the spherical gradient at a single uniformly random point. Consequently, the normalized average spherical gradient ${\widehat u_H\doteq\tfrac{\sqrt N\,u_H}{\norm{u_H}}}$ (which replaces the sign rounding $\sgn(u_H)$ used in Part~I~\citep{das2026averaged}), satisfies the following with high probability:
\begin{align*}
    m(\widehat u_H)\asymp_k\min\bigl\{1,\lambda N^{-\frac{k-1}{4}}\bigr\}.
\end{align*}
In particular, if $\lambda\le N^{\tfrac{k-1}{4}}$, then ${\lambda\,m(\widehat u_H)^{k-2}\asymp_k(\lambda N^{-\tfrac{k-2}{4}})^{k-1}}$, and hence $\widehat u_H$ is a sufficiently good warm start for escaping the equator exactly when $\lambda\gtrsim\lambda_{\mathrm{ALG}}$.

This suggests initializing spherical Langevin dynamics at a constant temperature from $\widehat u_H$. However, since $u_H$ aggregates the Hamiltonian over the entire sphere, it is a global statistic. This raises the question of whether it can nevertheless be approximated from the trajectory of spherical Langevin dynamics, as required by the second desideratum.

The idea of averaging the gradient to suppress the disorder also underlies the averaged gradient descent algorithm of \citet{biroli2020iron}, which minimizes the energy~$-H$ by moving along the average of the \emph{Euclidean} gradient $\grad H$ evaluated at many random points of $\SN$ around the current iterate. Using annealed Kac--Rice heuristics and numerical simulations for $k=3$, \citep{biroli2020iron} provide non-rigorous evidence that their method succeeds above the algorithmic threshold. In contrast, our algorithms come with rigorous guarantees and, as we discuss below, approximate the average gradient $u_H$ implicitly from a single trajectory of high temperature Langevin dynamics.

\subsection{Approximating the Average Gradient via High Temperature Langevin Dynamics}
\label{subsec:sketch-ergodic-averaging}

Our analysis of Glauber dynamics in Part~I~\citep{das2026averaged} suggests that $u_H$ can be approximated by the ergodic average ${\Xbar_T\doteq\tfrac{1}{T}\int_0^T\tilX_t\,dt}$ of high temperature spherical Langevin dynamics $(\tilX_t)_{t\ge0}$ for $H$, run at a small inverse temperature $\beta_0>0$. Indeed, since this diffusion has stationary distribution ${\mu_{H,\beta_0}\propto\pi e^{2\beta_0H}}$, the ergodic theorem~\citep[cf.][Section~1.15.9]{bakry2013analysis} implies that $\Xbar_T$ converges to the mean of $\mu_{H,\beta_0}$ as $T\to\infty$. At high temperature, $\mu_{H,\beta_0}$ is a slight tilt of the uniform measure, and expanding to first order in $\beta_0$, we infer from $\E_\pi[x]=0$ and Stein's lemma that
\begin{align*}
    \lim_{T\to\infty}\Xbar_T
    =\frac{\E_\pi\bigl[x\,e^{2\beta_0H(x)}\bigr]}{\E_\pi\bigl[e^{2\beta_0H(x)}\bigr]}
    =2\beta_0\E_\pi\bigl[xH(x)\bigr]+O(\beta_0^2)
    =\frac{2\beta_0}{\kappa_N}u_H+O(\beta_0^2).
\end{align*}
In other words, the mean of the Gibbs measure at high temperature is the linear response of the uniform measure to the field $H$, which Stein's lemma identifies with a multiple of $u_H$.

Since $\beta_0u_H$ vanishes as $\beta_0\to0$, making the above heuristic rigorous requires quantitative versions of both the long time limit and the high temperature expansion, which we derive in \cref{sec:ergodic-averaging}. Notably, our proofs circumvent the analysis of the stationary distribution or the mixing time of high temperature spherical Langevin dynamics, and instead proceed by comparing the diffusion directly with spherical Brownian motion on $\SN$. Since the drift $\beta_0\gradsp H(x)-\tfrac{\kappa_N}{2}x$ of high temperature spherical Langevin dynamics averages to zero over long times, the ergodic average $\Xbar_T$ closely tracks that of the rescaled spherical gradient $\tfrac{2\beta_0}{\kappa_N}\gradsp H(\tilX_t)$. Moreover, the ergodic theorem suggests that the latter concentrates around ${\tfrac{2\beta_0}{\kappa_N}\E_\pi[\gradsp H]=\tfrac{2\beta_0}{\kappa_N}u_H}$, since high temperature spherical Langevin dynamics is a small perturbation of spherical Brownian motion, whose stationary distribution is $\pi$. To quantify this concentration non-asymptotically, we combine the Green operator of spherical Brownian motion, a classical tool for proving central limit theorems for ergodic averages of Markov processes~\citep{komorowski2012fluctuations,eberle2025markov}, with the Bakry--\'Emery curvature-dimension criterion~\citep{bakry2013analysis}. For tensor PCA, the resulting bounds establish that $\beta_0=\lambda^{-1}N^{-\tfrac{k-1}{2}-o(1)}$ and $T_{\mathrm{av}}=N^{2k-2+o(1)}$ suffice for the normalized ergodic average ${\widehat X_{T_{\mathrm{av}}}\doteq\tfrac{\sqrt N\,\Xbar_{T_{\mathrm{av}}}}{\norm{\Xbar_{T_{\mathrm{av}}}}}}$ to closely approximate $\widehat u_H$ with high probability, so that this warm start meets both desiderata. This motivates the construction of \cref{alg:averaged-annealed-langevin}, which first runs high temperature spherical Langevin dynamics and uses the normalization of its ergodic average to initialize spherical Langevin dynamics at a constant temperature.

Our non-asymptotic analysis of the ergodic average is established for generic smooth Hamiltonians satisfying mild regularity conditions, and could be of independent interest. It extends the corresponding analysis for Glauber dynamics in Part~I~\citep{das2026averaged}, which instead establishes a discrete version of the Bakry--\'Emery criterion on the hypercube using elementary Fourier analysis.

Computing the warm start and climbing towards the planted signal require different temperatures, which is why \cref{alg:averaged-annealed-langevin} is annealed. At high temperature, the signal drift in \cref{eq:sketch-overlap-ode} is negligible, so the first phase cannot climb towards the planted signal, even though its ergodic average encodes $u_H$. In contrast, the second phase, run at a constant temperature from $\widehat X_{T_{\mathrm{av}}}$, can exploit the signal drift to escape the equator. Proving that it indeed does so requires us to finally confront the disorder drift in \cref{eq:sketch-overlap-ode}. However, controlling this term now becomes all the more delicate since this warm start itself depends on the disorder.

\subsection{Controlling the Disorder via Reverse Planting}
\label{subsec:sketch-reverse-planting}

To illustrate our ideas, we make two simplifications throughout this subsection. First, we ignore the estimation error of the ergodic average in the first phase, and assume that the second phase of \cref{alg:averaged-annealed-langevin} is initialized exactly at $\widehat u_H$. Second, we assume that $\xopt$ is uniformly random on $\SN$. Our proofs in \cref{sec:strong-recovery-odd} dispense with both these simplifications, and apply to any initialization sufficiently close to $\widehat u_H$, and to any fixed $\xopt\in\SN$ via a carefully constructed gauge transformation, namely a random rotation of $\SN$ that maps $\widehat u_H$ to a fixed point.

By Gaussian concentration, the disorder drift in \cref{eq:sketch-overlap-ode} is of order one uniformly over $\SN$, irrespective of how the trajectory depends on $H_0$. Consequently, once the correlation $m_t$ exceeds a large constant multiple of the \emph{critical correlation} $r_N\doteq\lambda^{-1/(k-1)}$, the signal drift dominates both the entropic and disorder drifts, even in the worst case, until $m_t$ approaches one. As we show in \cref{sec:strong-recovery-odd}, the correlation then converges to one exponentially fast, such that spherical Langevin dynamics achieves strong recovery within $O(1)$ additional time, and retains it over polynomial time horizons. Recall from \cref{subsec:sketch-averaging-disorder} that the warm start has correlation $m_0\doteq m(\widehat u_H)$ of order $\min\{1,\lambda N^{-\tfrac{k-1}{4}}\}$, which itself becomes critical (i.e., $m_0\ge Cr_N$ for a large constant $C$) whenever ${\lambda\gtrsim N^{\tfrac{k-2}{4}+\tfrac{1}{4k}}}$. In this regime, strong recovery thus follows immediately, and it remains to treat the window ${\lambda_{\mathrm{ALG}}\lesssim\lambda\lesssim\lambda_{\mathrm{ALG}}N^{\tfrac{1}{4k}}}$, in which the warm start satisfies the escape condition $\lambda m_0^{k-2}\gtrsim1$ from \cref{subsec:sketch-curvature-entropy}, but is subcritical, i.e., $m_0\ll r_N$. In this window, our key technical challenge is to prove that the correlation nevertheless reaches $Cr_N$ within $O(1)$ time, before the disorder drift can overpower that of the signal.

However, neither worst-case nor typical bounds on the disorder drift suffice to justify this. Below the critical correlation, the signal drift can be much smaller than the worst-case magnitude of the disorder drift, which could thus overwhelm it. Moreover, although the disorder drift is $O(N^{-1/2})$ at points chosen independently of $H_0$, both the trajectory and its initialization depend on $H_0$. One can imagine a particularly dangerous scenario in which the warm start lies close to one of the exponentially many local maxima of $H_0$~\citep{auffinger2013random}, around which the disorder drift could dominate the signal drift and trap the dynamics at polynomial timescales.

We rule out such adversarial scenarios by showing that the dependence of the warm start on the disorder is benign. Recall from \cref{eq:sketch-average-gradient} that $u_H=\kappa_N^{-1}\lambda\sigma_N^2\xopt+\zeta$, where $\zeta=\E_\pi[\gradsp H_0]$ satisfies ${\zeta=\kappa_N\E_\pi[xH_0(x)]}$ by Stein's lemma, such that $\kappa_N^{-1}\dotp{\zeta}{x}$ corresponds to the linear component (or the degree-one spherical harmonic) of $H_0$. In other words, the warm start depends on $H_0$ only through its linear component. Since $H_0$ is an isotropic Gaussian process, Gaussian conditioning shows that $\zeta$, and hence the warm start, is independent of the remainder $H_0(x)-\kappa_N^{-1}\dotp{\zeta}{x}$, which comprises the bulk of the disorder.
This suggests comparing the second phase of \cref{alg:averaged-annealed-langevin} with the same procedure run on the pure disorder $H_0$. Precisely, we consider spherical Langevin dynamics initialized at the normalized average spherical gradient of its Hamiltonian, and call its law the \emph{planted law} when the Hamiltonian is $H$, and the \emph{null law} when the Hamiltonian is $H_0$. In other words, the planted law is the true law of the second phase, and the null law is simply the planted law with $\lambda=0$. The dynamics in the second phase is thus initialized at $\widehat u_H$ under the planted law, and at ${\tfrac{\sqrt N\,\zeta}{\norm{\zeta}}}$ under the null law. By \cref{eq:sketch-overlap-ode} with $\lambda=0$, the correlation under the null law evolves according to
\begin{align}
    \frac{dm_t}{dt}
    \approx\frac{1}{N}\dotp{\xopt}{\beta\gradsp H_0(X_t)-\frac{N-1}{2N}X_t}.
    \label{eq:sketch-null-drift}
\end{align}
We call the right-hand side of \cref{eq:sketch-null-drift} the \emph{null drift}, since it is precisely the net contribution of the entropic and disorder drifts. Since $\xopt$ is uniformly random and the null dynamics never sees it, $\xopt$ is independent of the null trajectory. Conditional on the trajectory, the null drift is therefore the projection of a vector of norm $O(1)$ onto the uniformly random direction $\xopt/\sqrt N$, and concentration of measure on the sphere shows that it remains of order $N^{-1/2}$ along the entire null trajectory.

Thus, to control the disorder drift in the true dynamics, it remains to transfer this estimate to the planted law via a change of measure argument based on Girsanov's theorem, which we call the \emph{reverse planting trick}. As the name suggests, it runs opposite to the classical planting trick~\citep{achlioptas2008algorithmic}, in which the null law is the true law of interest, and estimates established under the planted law are transferred to it via contiguity. However, executing this argument is quite challenging since the planted and null laws are far from contiguous. Indeed, their Hamiltonians differ by the signal $N\lambda m(x)^k$, which biases both the warm start and the drift of the planted dynamics at all times. The cost of transferring an estimate therefore grows with the signal accumulated along the trajectory. Fortunately, since the worst-case argument above takes over once the correlation reaches $Cr_N$, we only need to transfer estimates up to this time. To this end, we prove an \emph{adaptive contiguity} bound, which controls the cost of transferring an event up to any stopping time in terms of the signal accumulated before it.

We apply this adaptive contiguity bound up to the \emph{failure time}, i.e., the first time before the correlation reaches $Cr_N$ at which the null drift overpowers the signal drift. Until the failure time, the signal drift dominates, so the correlation grows at an accelerating rate, as in \cref{subsec:sketch-curvature-entropy}, and reaches $Cr_N$ within $O(1)$ time unless a failure occurs first. With a delicate stopped martingale analysis, we show that with high probability, no failure occurs, so that the dynamics reaches $Cr_N$ within $O(1)$ time and then achieves strong recovery. This completes our proof sketch of \cref{thm:langevin-odd-k-informal}.

\subsection{Strong Recovery for Any Tensor Order}
\label{subsec:sketch-any-order}

We now turn our attention to \cref{alg:annealed-langevin-ergodic-second-moments}, and first focus on the case when $k$ is even. In this case, $H(-x)=H(x)$, so by symmetry, the average spherical gradient $u_H$ vanishes identically, and carries no information about the planted signal. However, our previous intuition about suppressing the disorder via averaging still applies to the spherical Hessian $\gradsp^2H$, which decomposes as ${\gradsp^2H(x)\approx\tfrac{k(k-1)\lambda}{N}m(x)^{k-2}\xopt(\xopt)^\top+\gradsp^2H_0(x)}$, and therefore forms a noisy observation of $\xopt(\xopt)^\top$, since $m(x)^{k-2}\ge0$ for even $k$. Specifically, in \cref{sec:averaged-spherical-derivatives}, we show that the \emph{average spherical Hessian} has a spiked Wigner type structure with a planted spike in the direction of $\xopt$,
\begin{align}
    A_H\doteq\E_\pi\bigl[\gradsp^2H\bigr]
    =\frac{\lambda\gamma_N}{N}\bigl(\xopt(\xopt)^\top-I\bigr)+W_N,
    \qquad
    \gamma_N\asymp_kN^{-\frac{k-2}{2}},
    \label{eq:sketch-average-hessian}
\end{align}
where ${W_N\doteq\E_\pi[\gradsp^2H_0]}$ is a centered Gaussian random matrix. The spike in \cref{eq:sketch-average-hessian} has eigenvalue of order $\lambda\gamma_N$, whereas $\norm{W_N}_{\mathrm{op}}\lesssim\sqrt{\gamma_N}$ with high probability. Hence, the spike dominates the noise whenever $\lambda\sqrt{\gamma_N}\asymp_k\lambda N^{-\tfrac{k-2}{4}}\gtrsim1$, i.e., whenever $\lambda\gtrsim\lambda_{\mathrm{ALG}}$, in which case, by the Davis--Kahan theorem~\citep[Theorem~4.5.5]{vershynin2018high}, the top eigenvector of $A_H$, rescaled to lie on $\SN$, achieves constant correlation with $\xopt$.\footnote{As $\xopt$ and $-\xopt$ are indistinguishable for even $k$, we assume for simplicity that the top eigenvector is aligned with $\xopt$.} The rescaled top eigenvector is then a warm start with correlation far above the critical correlation $r_N$. Consequently, by the worst-case argument in \cref{subsec:sketch-reverse-planting}, spherical Langevin dynamics achieves strong recovery within $O(1)$ time.

As before, despite being a global statistic, $A_H$ can be approximated from the trajectory of high temperature spherical Langevin dynamics, now via the ergodic second moment ${\Sigmabar_T\doteq\tfrac{1}{T}\int_0^T\tilX_t\tilX_t^\top\,dt}$. Indeed, by the ergodic theorem, $\Sigmabar_T$ converges to the second moment of $\mu_{H,\beta_0}$ as $T\to\infty$, and expanding to first order in $\beta_0$ as in \cref{subsec:sketch-ergodic-averaging}, we infer from $\E_\pi[xx^\top]=I$ and Stein's lemma that
\begin{align*}
    \lim_{T\to\infty}\Sigmabar_T
    =\frac{\E_\pi\bigl[xx^\top e^{2\beta_0H(x)}\bigr]}{\E_\pi\bigl[e^{2\beta_0H(x)}\bigr]}
    =I+2\beta_0\E_\pi\bigl[(xx^\top-I)H(x)\bigr]+O(\beta_0^2)
    =I+2\beta_0A_H+O(\beta_0^2).
\end{align*}
Since shifting by the identity does not change eigenvectors, the top eigenvector of $\Sigmabar_T$ thus approximates that of $A_H$, as we show in \cref{sec:ergodic-averaging}.

However, by symmetry, $A_H$ vanishes identically when $k$ is odd, just as $u_H$ does when $k$ is even. To obtain an algorithm that is agnostic to the parity of $k$, \cref{alg:annealed-langevin-ergodic-second-moments} instead uses the ergodic second moment of ${F(x)\doteq(x,1)\in\R^{N+1}}$, whose blocks comprise both $\Sigmabar_T$ and the ergodic average $\Xbar_T$. Combining both high temperature expansions, we thus expect that for large $T$ and small $\beta_0$,
\begin{align*}
    \bar M_T\doteq\frac{1}{T}\int_0^TF(\tilX_t)^{\otimes2}\,dt
    =\begin{pmatrix}
        \Sigmabar_T & \Xbar_T\\
        \Xbar_T^\top & 1
    \end{pmatrix}
    \approx I_{N+1}+2\beta_0\cA_H,
    \qquad
    \cA_H\doteq\begin{pmatrix}
        A_H & \kappa_N^{-1}u_H\\
        \kappa_N^{-1}u_H^\top & 0
    \end{pmatrix}.
\end{align*}
When $k$ is odd, $A_H=0$, so the top eigenvector of $\cA_H$, when oriented such that its last coordinate is nonnegative, is ${\tfrac{1}{\sqrt2}(u_H/\norm{u_H},1)}$. Rescaling its first $N$ coordinates to lie on $\SN$ then yields $\widehat u_H$, i.e., the warm start of \cref{alg:averaged-annealed-langevin}, to which the analysis in \cref{subsec:sketch-reverse-planting} applies. When $k$ is even, $u_H=0$, so $\cA_H$ is block diagonal, and its top eigenvector is obtained by appending a zero to that of $A_H$, which recovers the warm start for even $k$. Thus, the same top eigenvector computation yields a sufficiently good warm start for either parity, and \cref{alg:annealed-langevin-ergodic-second-moments} is agnostic to the parity of $k$. This completes our proof sketch of \cref{thm:langevin-every-k-informal}.

\section{Notation and Preliminaries}
\label{sec:preliminaries}

For $d\geq2$, we write $M_d(\R)$ for the space of real $d\times d$ matrices and $\Sym_d\subseteq M_d(\R)$ for the subspace of real symmetric matrices. For $M\in M_d(\R)$, we write $\diag(M)$ for the diagonal matrix with the same diagonal entries as $M$. For a vector $a\in\R^d$, we also use $\diag(a)$ to denote the diagonal matrix with entries $a_1,\ldots,a_d$. For a symmetric matrix $A\in\Sym_d$, we denote its eigenvalues in nonincreasing order by $\lambda_1(A)\geq\lambda_2(A)\geq\cdots\geq\lambda_d(A)$.

For $\ell\geq1$, we use $(\R^d)^{\otimes\ell}$ to denote the space of order-$\ell$ tensors, viewed as multilinear forms on $\R^d$, and $(\R^d)^{\odot\ell}$ to denote the subspace of symmetric tensors.

Unless specified otherwise, $\norm{\cdot}$ denotes the Euclidean norm, and $\bS^{d-1}$ denotes the unit sphere in $\R^d$. For any matrix $A\in M_d(\R)$, its operator norm is defined by $\norm{A}_{\mathrm{op}}=\sup_{x\in\bS^{d-1}}\norm{Ax}$. This definition extends naturally to tensors, wherein, for $T\in(\R^d)^{\otimes\ell}$, we define $\norm{T}_{\mathrm{op}}=\sup_{x_1,\ldots,x_\ell\in\bS^{d-1}}|T(x_1,\ldots,x_\ell)|$. We use $e_1,\ldots,e_N$ to denote the standard basis vectors of $\R^N$.

We use $c,C>0$ to denote universal constants and $c_k,C_k>0$ to denote constants depending only on $k$, whose values may change from line to line unless fixed explicitly. For nonnegative quantities $a$ and $b$, we write $a\lesssim b$ or $a=O(b)$ if $a\le Cb$, and $a\gtrsim b$ or $a=\Omega(b)$ if $a\ge cb$. We write $a\asymp b$ or $a=\Theta(b)$ if both hold, i.e., if $cb\le a\le Cb$. We write $a\ll b$ or $a=o(b)$ if $a/b\to0$ as $N\to\infty$, and $a\gg b$ if $b/a\to0$. Wherever appropriate, we use subscripts to indicate the parameters on which the constants depend, e.g., $a\lesssim_kb$ means $a\le C_kb$.

\subsection{Riemannian Geometry of the Sphere}
\label{subsec:sphere}
We present a brief introduction to analysis on the sphere, and refer the readers to \citet{lee2018introduction} and \citet{boumal2023introduction} for a detailed treatment. Let ${\SN=\bS^{N-1}(\sqrt N)=\{x\in\R^N:\norm{x}^2=N\}}$ denote the sphere of radius $\sqrt N$, and let $\pi$ denote the uniform measure on $\SN$. For $x,y\in\SN$, the geodesic distance between $x$ and $y$ is ${d_{\SN}(x,y)=\sqrt N\arccos(\dotp{x}{y}/N)}$~\citep[cf.][Example~10.21]{boumal2023introduction}. Moreover, ${\norm{x-y}^2=2(N-\dotp{x}{y})}$ and $d_{\SN}(x,y)\le C\norm{x-y}$ for all $x,y\in\SN$, where $C>0$ is a universal constant.

We equip $\SN$ with the Riemannian metric induced by the Euclidean inner product, so that the tangent space of $\SN$ at $x$ is $T_x\SN=\{v\in\R^N:\dotp{v}{x}=0\}$. For any $x\in\R^N\setminus\{0\}$, we define the orthogonal projector $Q_x$ and the constant $\kappa_N$ as follows:
\begin{align*}
    Q_x\doteq I-\frac{xx^\top}{\norm{x}^2},
    \qquad
    \kappa_N\doteq\frac{N-1}{N}.
\end{align*}
In particular, for every $x\in\SN$, ${Q_x=I-\tfrac{xx^\top}{N}}$ is the orthogonal projector onto $T_x\SN$. Furthermore, we assume $N\geq3$ throughout, such that $\tfrac{2}{3}\le\kappa_N<1$, and in particular $\kappa_N\asymp1$.

We use $\grad$, $\grad^2$, and $\Delta$ to denote the Euclidean gradient, Hessian, and Laplacian, respectively. For a smooth $f:\SN\to\R$, these operators are applied to an arbitrary smooth extension of $f$ to a neighborhood of $\SN$ in $\R^N$. The spherical gradient $\gradsp f$, the covariant Hessian $\gradsp^2f$, and the Laplace--Beltrami operator $\lapsp f$ then satisfy the following identities for any such extension~\citep[cf.][\mbox{Section~7.2}]{boumal2023introduction}:
\begin{align}
\label{eq:spherical-derivatives}
    \begin{aligned}
        \gradsp f(x)&=Q_x\grad f(x),\\
        \gradsp^2f(x)&=Q_x\grad^2f(x)Q_x-\frac{\dotp{\grad f(x)}{x}}{N}Q_x,\\
        \lapsp f(x)&=\Tr\bigl(\gradsp^2f(x)\bigr)=\Delta f(x)-\frac{1}{N}\dotp{x}{\grad^2f(x)x}-\kappa_N\dotp{\grad f(x)}{x}.
    \end{aligned}
\end{align}
More generally, for any integer $\ell\ge1$, we use $\gradsp^\ell f(x)$ to denote the $\ell$-th covariant derivative of $f$ at $x$, which is an $\ell$-linear form on $T_x\SN$~\parencites[Chapter~4]{lee2018introduction}[Section~10.7]{boumal2023introduction}, whose operator norm is given by
\begin{align*}
    \norm*{\gradsp^\ell f(x)}_{\mathrm{op}}
    =\max_{v_1,\ldots,v_\ell\in T_x\SN\cap\bS^{N-1}}\abs{\gradsp^\ell f(x)[v_1,\ldots,v_\ell]}
    =\max_{u_1,\ldots,u_\ell\in\bS^{N-1}}\abs{\gradsp^\ell f(x)[Q_xu_1,\ldots,Q_xu_\ell]}.
\end{align*}
Accordingly, we identify $\gradsp^\ell f(x)$ with the tensor $(u_1,\ldots,u_\ell)\mapsto\gradsp^\ell f(x)[Q_xu_1,\ldots,Q_xu_\ell]$ in $(\R^N)^{\otimes\ell}$, which, for $\ell=1,2$, is given by the vector and the matrix in \cref{eq:spherical-derivatives}, respectively.

We shall also use the product rules
\begin{align*}
    \gradsp(fg)&=f\gradsp g+g\gradsp f,\\
    \lapsp(fg)&=f\lapsp g+g\lapsp f+2\dotp{\gradsp f}{\gradsp g}.
\end{align*}
For any smooth $F:\SN\to\R^m$, we define the spherical Jacobian $\gradsp F(x)\in\R^{N\times m}$ as the matrix whose $j$-th column is $\gradsp F_j(x)$ for every $j\in[m]$. Equivalently, $\gradsp F(x)=Q_x\grad F(x)$, where $\grad F(x)\in\R^{N\times m}$ denotes the Euclidean Jacobian of any smooth extension of $F$, with $(i,j)$-th entry $\partial_iF_j(x)$.

We recall that ${\tfrac{d}{dt}F(\gamma(t))=\gradsp F(\gamma(t))^\top\gamma'(t)}$ for any smooth $F:\SN\to\R^m$ and any smooth curve $\gamma$ in $\SN$. Thus, integrating along a unit-speed minimizing geodesic between $x$ and $y$ and using ${d_{\SN}(x,y)\le C\norm{x-y}}$, we obtain
\begin{align}
    \norm*{F(x)-F(y)}
    \le\Bigl[\sup_{z\in\SN}\norm*{\gradsp F(z)}_{\mathrm{op}}\Bigr]d_{\SN}(x,y)
    \lesssim\Bigl[\sup_{z\in\SN}\norm*{\gradsp F(z)}_{\mathrm{op}}\Bigr]\norm{x-y}
    \qquad\forall\,x,y\in\SN.
    \label{eq:spherical-lipschitz}
\end{align}

We use the spherical integration by parts identity~\citep[Section~2.2]{bakry2013analysis}, which holds for any smooth $f,g:\SN\to\R$:
\begin{align}
    \E_\pi\bigl[\dotp{\gradsp f}{\gradsp g}\bigr]=-\E_\pi[f\lapsp g].
    \label{eq:spherical-ibp}
\end{align}
The following lemma is a spherical analogue of the classical Stein's lemma for Gaussian random variables~\citep{chatterjee2014survey} and will be used often in our proofs.
\begin{lemma}[Stein's Lemma for the sphere]
\label{lem:spherical-stein}
For any smooth $f:\SN\to\R$,
\begin{align*}
    \E_\pi[\gradsp f(x)]&=\kappa_N\E_\pi[xf(x)],\\
    \E_\pi[\gradsp^2f(x)]
    &=\frac{1}{2}\E_\pi\left[x\gradsp f(x)^\top+\gradsp f(x)x^\top\right]
    =\E_\pi\left[(xx^\top-I)f(x)\right].
\end{align*}
\end{lemma}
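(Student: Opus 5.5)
We will derive both identities from the integration by parts identity \cref{eq:spherical-ibp}, applied to coordinate functions and their products, together with the explicit form of the Laplace--Beltrami operator in \cref{eq:spherical-derivatives}.

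\textbf{First identity.} Fix a coordinate $i\in[N]$ and let $g(x)=x_i$, extended linearly to $\R^N$. Then $\grad g=e_i$, $\grad^2g=0$ and $\Delta g=0$, so \cref{eq:spherical-derivatives} gives $\gradsp g(x)=Q_xe_i$ and $\lapsp g(x)=-\kappa_N x_i$. This is the standard fact that linear functions are eigenfunctions of the Laplace--Beltrami operator. Since $Q_x$ is symmetric, $\dotp{\gradsp f}{\gradsp g}=\dotp{\gradsp f}{e_i}=(\gradsp f)_i$. Plugging into \cref{eq:spherical-ibp} yields $\E_\pi[(\gradsp f)_i]=\kappa_N\E_\pi[x_if]$. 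Ranging over $i$ gives the first identity.

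\textbf{Second identity, middle form.} Apply the first identity coordinatewise to the functions $f\cdot x_j$ and use the product rule $\gradsp(fx_j)=x_j\gradsp f+fQ_xe_j$. For the covariant Hessian, we use \cref{eq:spherical-derivatives}: $\gradsp^2 f=Q_x\grad^2fQ_x-\tfrac{\dotp{\grad f}{x}}{N}Q_x$. A cleaner route is to compute $\E_\pi[\gradsp^2 f(x)[a,b]]$ for fixed vectors $a,b$. Extend $f$ to be $0$-homogeneous, $\tilde f(x)=f(\sqrt N x/\norm{x})$, so that $\dotp{\grad\tilde f}{x}=0$ on $\SN$ and $\gradsp^2f=Q_x\grad^2\tilde fQ_x$.

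I would then proceed as follows. Apply the first identity to the scalar function $h_b(x)=\dotp{\gradsp f(x)}{b}$, which is smooth on $\SN$. This gives $\E_\pi[\gradsp h_b]=\kappa_N\E_\pi[x\,\dotp{\gradsp f}{b}]$. Next relate $\gradsp h_b$ to the Hessian. Writing $\gradsp f=Q_x\grad\tilde f=\grad\tilde f$ for the homogeneous extension, we get $\grad h_b=\grad^2\tilde f\,b$, hence $\gradsp h_b=Q_x\grad^2\tilde f\,b$. Splitting $b=Q_xb+\tfrac{\dotp{x}{b}}{N}x$ and using Euler's relation $\grad^2\tilde f\,x=-\grad\tilde f$ (from $0$-homogeneity of $\tilde f$, so $\grad\tilde f$ is $(-1)$-homogeneous), we obtain
\[
\gradsp h_b=\gradsp^2f\,b-\tfrac{\dotp{x}{b}}{N}\gradsp f .
\]
Pairing with $a$ and taking expectations gives
\[
\E_\pi[\gradsp^2f[a,b]]=\kappa_N\E_\pi[\dotp{a}{x}\dotp{\gradsp f}{b}]+\tfrac1N\E_\pi[\dotp{x}{b}\dotp{\gradsp f}{a}].
\]
The left side is symmetric in $(a,b)$, so we symmetrize; the coefficients then add to $\kappa_N+\tfrac1N=1$, giving $\tfrac12\E_\pi[x\gradsp f^\top+\gradsp f x^\top]$.

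\textbf{Second identity, last form.} Apply the first identity to $f(x)x_j$:
\[
\E_\pi[\gradsp(fx_j)]=\E_\pi[x_j\gradsp f]+\E_\pi[fQ_xe_j]=\kappa_N\E_\pi[xx_jf].
\]
In matrix form this reads $\E_\pi[\gradsp f\,x^\top]=\E_\pi[(\kappa_N xx^\top-Q_x)f]=\E_\pi[(xx^\top-I)f]$, since $\kappa_Nxx^\top-I+xx^\top/N=xx^\top-I$. The right side is symmetric, so $\E_\pi[x\gradsp f^\top]$ equals the same quantity, and the average of the two gives the claim.

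\textbf{Main obstacle.} The main technical step is the identity for $\gradsp h_b$, i.e., correctly handling the curvature correction $-\tfrac{\dotp{\grad f}{x}}{N}Q_x$ in the covariant Hessian. The homogeneous extension makes this cleanest, and it is what produces exactly the $\tfrac1N$ term needed to turn $\kappa_N$ into $1$.
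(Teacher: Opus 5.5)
Your proof is correct and follows essentially the same route as the paper. You integrate by parts against coordinate functions for the first identity, apply Stein to $x_jf$ for the last form, and apply Stein to (the components of) $\gradsp f$ via $\gradsp(\dotp{\gradsp f}{b})=\gradsp^2f\,b-\tfrac{\dotp{x}{b}}{N}\gradsp f$ for the middle form, symmetrizing so that $\kappa_N+\tfrac1N=1$; the only difference is cosmetic, in that you obtain this derivative from the $0$-homogeneous extension of $f$, whereas the paper uses the extension $\grad f-\tfrac1N\dotp{x}{\grad f}\,x$ of $\gradsp f$.
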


\begin{proof}
Applying \cref{eq:spherical-derivatives} to the coordinate functions, we obtain $\gradsp x=Q_x$ and $\lapsp x=-\kappa_Nx$, where $\lapsp$ acts coordinatewise. Fix an $i\in[N]$. Since $\gradsp f(x)\in T_x\SN$ and $\gradsp x_i=Q_xe_i$, we have $(\gradsp f)_i=\dotp{\gradsp f}{\gradsp x_i}$.

Applying \cref{eq:spherical-ibp} to $f$ and $x_i$ and using $\lapsp x_i=-\kappa_Nx_i$, we obtain the following
\begin{align*}
    \E_\pi[(\gradsp f)_i]
    =\E_\pi\bigl[\dotp{\gradsp f}{\gradsp x_i}\bigr]
    =-\E_\pi[f\lapsp x_i]
    =\kappa_N\E_\pi[x_if(x)],
\end{align*}
which proves the first identity coordinatewise.

Now, consider any smooth $F:\SN\to\R^N$. Applying the first identity to each coordinate of $F$, we obtain
\begin{align}
    \E_\pi[\gradsp F]=\kappa_N\E_\pi[xF^\top].
    \label{eq:spherical-stein-jacobian}
\end{align}
Applying \cref{eq:spherical-stein-jacobian} to $F(x)=xf(x)$ and expanding via the product rule and $\gradsp x=Q_x$, we obtain
\begin{align*}
    \kappa_N\E_\pi[xx^\top f(x)]
    =\E_\pi\bigl[\gradsp f\,x^\top+fQ_x\bigr].
\end{align*}
Rearranging terms and using $Q_x=I-\tfrac{xx^\top}{N}$ and $\kappa_N=1-\tfrac{1}{N}$, we obtain
\begin{align*}
    \E_\pi\bigl[\gradsp f\,x^\top\bigr]
    =\kappa_N\E_\pi[xx^\top f(x)]-\E_\pi\left[\left(I-\frac{xx^\top}{N}\right)f(x)\right]
    =\E_\pi[(xx^\top-I)f(x)].
\end{align*}
Since the right-hand side is symmetric, we conclude that
\begin{align}
    \E_\pi[(xx^\top-I)f(x)]
    =\frac{1}{2}\E_\pi\left[x\gradsp f(x)^\top+\gradsp f(x)x^\top\right].
    \label{eq:spherical-stein-outer-product}
\end{align}
Now, let $F=\gradsp f$ and ${\alpha(x)=\tfrac{1}{N}\dotp{x}{\grad f(x)}}$, such that $\grad f-\alpha x$ is a smooth extension of $F$. It follows that
\begin{align}
    \gradsp F
    &=Q_x\bigl[\grad^2f-(\grad\alpha)x^\top-\alpha I\bigr]
    =Q_x\Bigl[\grad^2f-\frac{1}{N}\bigl(\grad f+(\grad^2f)x\bigr)x^\top-\alpha I\Bigr]\nonumber\\*
    &=Q_x\grad^2f\,Q_x-\frac{1}{N}\dotp{x}{\grad f}Q_x-\frac{1}{N}\gradsp f\,x^\top
    =\gradsp^2f-\frac{1}{N}\gradsp f\,x^\top.
    \label{eq:gradient-jacobian}
\end{align}
Applying \cref{eq:spherical-stein-jacobian} to $F=\gradsp f$ and using \cref{eq:gradient-jacobian}, we obtain
\begin{align*}
    \kappa_N\E_\pi\bigl[x\gradsp f^\top\bigr]
    =\E_\pi\bigl[\gradsp^2f\bigr]-\frac{1}{N}\E_\pi\bigl[\gradsp f\,x^\top\bigr].
\end{align*}
Averaging the above identity with its transpose using the symmetry of $\gradsp^2f$, we obtain the following after rearranging terms
\begin{align*}
    \E_\pi\bigl[\gradsp^2f\bigr]
    =\frac{1}{2}\left(\kappa_N+\frac{1}{N}\right)\E_\pi\bigl[\gradsp f\,x^\top+x\gradsp f^\top\bigr]
    =\frac{1}{2}\E_\pi\bigl[\gradsp f\,x^\top+x\gradsp f^\top\bigr].
\end{align*}
Together with \cref{eq:spherical-stein-outer-product}, this proves the second identity.
\end{proof}

We use $O(N,\R)$ to denote the real orthogonal group, i.e., the group of real unitary matrices. Recall that each $R\in O(N,\R)$ maps $\SN$ bijectively onto itself and preserves $d_{\SN}$. In fact, $O(N,\R)$ is the isometry group of $(\SN,d_{\SN})$. Consequently, $\pi$ is invariant under the action of $O(N,\R)$. In particular, $\pi$ is invariant under $x\mapsto-x$, and the law of $\dotp{w}{x}$ under $\pi$ does not depend on $w\in\bS^{N-1}$. The following lemma characterizes the structure of matrices that commute with $O(N,\R)$.

\begin{lemma}[Commutants of $O(N,\R)$]
\label{lem:commutants-orthogonal}
Any matrix that commutes with every element of $O(N,\R)$ must be a scalar multiple of the identity.
\end{lemma}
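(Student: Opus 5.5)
The argument only needs a few specific elements of $O(N,\R)$: reflections and permutations. Let $A\in M_N(\R)$ commute with every $R\in O(N,\R)$. First I would use the diagonal sign-flip matrices $D_i\doteq I-2e_ie_i^\top$, which are orthogonal, to show that $A$ is diagonal. Second, I would use transpositions to show that all diagonal entries coincide.

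For the first step, fix $i\ne j$. Comparing the $(i,j)$ entries of $D_iA$ and $AD_i$ gives $(D_iA)_{ij}=-A_{ij}$ and $(AD_i)_{ij}=A_{ij}$, because $D_i$ flips only row $i$ when it multiplies on the left and only column $i$ when it multiplies on the right. Hence $A_{ij}=0$ for all $i\ne j$, so $A=\diag(a)$ for some $a\in\R^N$.

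For the second step, fix $i\ne j$ and let $P$ be the permutation matrix swapping $e_i$ and $e_j$, which lies in $O(N,\R)$. Then $P\,\diag(a)\,P^\top$ is the diagonal matrix with $a_i$ and $a_j$ interchanged. Since $PA=AP$ and $P^\top=P^{-1}$, we get $P\,\diag(a)\,P^\top=\diag(a)$, so $a_i=a_j$. As $i,j$ were arbitrary, $a=c\mathbf 1$ for some scalar $c$, and therefore $A=cI$.

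There is no real obstacle here. The only point needing care is the entrywise bookkeeping in the first step: left multiplication by $D_i$ flips row $i$, while right multiplication flips column $i$. An alternative would be to argue via Schur's lemma and the irreducibility of $\R^N$ under $O(N,\R)$, but over $\R$ that route would additionally require showing that $A$ has a real eigenvalue, so the elementary argument above is preferable.
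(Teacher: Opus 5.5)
Your proof is correct and follows essentially the same route as the paper. You use diagonal sign-flip matrices to kill the off-diagonal entries by comparing $(i,j)$ entries of $DA$ and $AD$, and then transpositions to show that the diagonal entries coincide; the paper's only cosmetic difference is that its sign matrix flips coordinate $j$ rather than $i$.
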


\begin{proof}
Let $A\in\R^{N\times N}$ satisfy $AM=MA$ for every $M\in O(N,\R)$. Consider distinct $i,j\in[N]$, and let $D$ be a diagonal sign matrix with $D_{ii}=+1$ and $D_{jj}=-1$. Since $D\in O(N,\R)$, we have $AD=DA$, so
\begin{align*}
    (AD)_{ij}=A_{ij}D_{jj}=-A_{ij},
    \qquad
    (DA)_{ij}=D_{ii}A_{ij}=A_{ij}.
\end{align*}
It follows that $A_{ij}=0$ for $i\ne j$, so $A=\diag(a)$ for some $a\in\R^N$. Now let $P$ be the permutation matrix that exchanges coordinates $i$ and $j$. Since $P\in O(N,\R)$, we have $A=PAP^\top$, and therefore $a_i=a_j$. Thus $a_1=\cdots=a_N$ and $A=cI$ for some $c\in\R$.
\end{proof}

We shall also make use of the following result on the moments and symmetry of $\pi$.
\begin{lemma}[Moments and symmetry of the uniform measure on $\SN$]
\label{lem:sphere-moments}
Let $N\ge2$, $x\sim\pi$, and $w\in\bS^{N-1}$. For every integer $\ell\ge1$, we have
\begin{align*}
    \E_\pi\left[\left(\frac{\dotp{w}{x}}{\sqrt N}\right)^\ell\right]
    =\begin{cases}
        0, & \text{if $\ell$ is odd},\\
        \dfrac{(\ell-1)!!}{N(N+2)\cdots(N+\ell-2)}, & \text{if $\ell$ is even}.
    \end{cases}
\end{align*}
In particular, for fixed even $\ell$, this moment is $(\ell-1)!!\,N^{-\nicefrac{\ell}{2}}(1+O_\ell(N^{-1}))$. Moreover, for any $z\in\bS^{N-1}$ orthogonal to $w$, any measurable odd function $g:\R\to\R$, and any measurable $h:\R\to\R$ such that the expectation is finite,
\begin{align*}
    \E_\pi\left[g\left(\frac{\dotp{w}{x}}{\sqrt N}\right)h\left(\frac{\dotp{z}{x}}{\sqrt N}\right)\right]=0.
\end{align*}
Finally, for every integer $\ell\ge0$,
\begin{align*}
    \E_\pi\left[xx^\top\left(\frac{\dotp{w}{x}}{\sqrt N}\right)^\ell\right]
    =\frac{N}{N+\ell}\,\E_\pi\left[\left(\frac{\dotp{w}{x}}{\sqrt N}\right)^\ell\right](I+\ell ww^\top).
\end{align*}
\end{lemma}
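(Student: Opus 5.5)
We treat the three claims of \cref{lem:sphere-moments} separately, using a Gaussian representation for the first and third and a reflection symmetry for the second.

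\textbf{Moments.} Realize $x\sim\pi$ as $x=\sqrt N\,g/\norm{g}$ with $g\sim\cN(0,I_N)$. It is classical that $\theta=g/\norm{g}$ is independent of $\norm{g}$, so $\dotp{w}{x}/\sqrt N=\dotp{w}{\theta}$. Independence gives
\begin{align*}
\E[\dotp{w}{g}^\ell]=\E[\norm{g}^\ell]\,\E[\dotp{w}{\theta}^\ell].
\end{align*}
By rotation invariance, $\dotp{w}{g}\sim\cN(0,1)$, so the left side is $0$ for odd $\ell$ and $(\ell-1)!!$ for even $\ell$. For even $\ell$, $\E\norm{g}^\ell=\E[(\chi^2_N)^{\ell/2}]=N(N+2)\cdots(N+\ell-2)$. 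Dividing yields the formula. The asymptotic form follows because the denominator equals $N^{\ell/2}(1+O_\ell(N^{-1}))$.

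\textbf{Odd-function vanishing.} Let $S=I-2ww^\top\in O(N,\R)$ be the reflection across $w^\perp$. Since $\pi$ is $S$-invariant, $x$ and $Sx$ have the same law. Moreover $\dotp{w}{Sx}=-\dotp{w}{x}$, and $\dotp{z}{Sx}=\dotp{z}{x}$ because $z\perp w$. Since $g$ is odd, the expectation $E$ of the product satisfies $E=-E$, hence $E=0$. Integrability is assumed, so no issue arises.

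\textbf{Second-moment identity.} Set $p(x)=(\dotp{w}{x}/\sqrt N)^\ell$ and $M=\E_\pi[xx^\top p]$. First, $M$ commutes with every $R\in O(N,\R)$ fixing $w$. Restricting to $w^\perp$ and applying the argument of \cref{lem:commutants-orthogonal} with $O(w^\perp)$, together with the reflection $S$ above, which kills the mixed entries $\E[\dotp{w}{x}\dotp{z}{x}p]$ by the odd-function claim (the integrand is odd in $\dotp{w}{x}$ when $\ell$ is even), we get $M=aI+bww^\top$. For odd $\ell$, $M=0$ by $x\mapsto -x$, and the claimed right-hand side also vanishes, since $\E_\pi[p]=0$ for odd $\ell$. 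So it suffices to treat even $\ell$ and identify $a$ and $b$ from two scalar equations:
\begin{align*}
\Tr M&=N\,\E_\pi[p], & \dotp{w}{Mw}&=N\,\E_\pi\bigl[(\dotp{w}{x}/\sqrt N)^{\ell+2}\bigr].
\end{align*}
The second equals $N\,\E_\pi[p]\,\tfrac{\ell+1}{N+\ell}$ by the moment formula. Solving $Na+b=N\E_\pi[p]$ and $a+b=\tfrac{N(\ell+1)}{N+\ell}\E_\pi[p]$ gives $a=\tfrac{N}{N+\ell}\E_\pi[p]$ and $b=\ell a$, which is the claim.

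\textbf{Main obstacle.} The only mildly delicate point is the commutant reduction to $aI+bww^\top$. Concretely, one must check that $M$ maps $w$ to a multiple of $w$ (via the reflection $S$) and that its restriction to $w^\perp$ is scalar (via \cref{lem:commutants-orthogonal} applied in dimension $N-1$, valid for $N\ge3$). For $N=2$, one can instead check the case directly by diagonal sign flips and the trace identity.
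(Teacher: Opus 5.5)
Your proof is correct. The reflection argument for the odd-function claim is the same as the paper's, but for the moment formula and the matrix identity you take a different route.

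\textbf{The paper's route.} It stays entirely inside its spherical calculus. Writing $s=\dotp{w}{x}/\sqrt N$, it applies Stein's lemma (\cref{lem:spherical-stein}) to $f(x)=\dotp{w}{x}^{\ell-1}$. Taking the inner product with $w$ gives the recursion $\E_\pi[s^\ell]=\frac{\ell-1}{N+\ell-2}\E_\pi[s^{\ell-2}]$, which it iterates from $\ell=0$. For the matrix identity it applies Stein's lemma to $F(x)=s^\ell x$, computes the spherical Jacobian, and solves the resulting linear relation for $\E_\pi[s^\ell xx^\top]$. The only symmetry input there is that $\E_\pi[s^{\ell-1}x]$ is parallel to $w$. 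It never determines the form of the matrix in advance.

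\textbf{Your route.} You import the Gaussian polar decomposition and the $\chi^2_N$ moments for the moment formula. For the matrix identity you use invariance under the stabilizer of $w$, together with the odd-function claim for the mixed entries, to reduce to $aI+bww^\top$. The trace and the $ww$-entry then pin down $a$ and $b$ through the moment formula at orders $\ell$ and $\ell+2$.

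\textbf{Comparison.} Your approach makes the third claim a corollary of the first two plus symmetry, and reuses \cref{lem:commutants-orthogonal}. The paper's approach is self-contained given Stein's lemma and needs neither external Gaussian facts nor the structural reduction.

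\textbf{A small remark.} Your caution about $N=2$ is unnecessary. There the compression of $M$ to the one-dimensional space $w^\perp$ is automatically scalar, so the commutant step holds for every $N\ge2$.
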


\begin{proof}
The odd moments vanish by the symmetry $x\disteq-x$. For even $\ell\ge2$, applying \cref{lem:spherical-stein} to $f(x)=\dotp{w}{x}^{\ell-1}$ and taking the inner product with $w$, we obtain
\begin{align*}
    \kappa_N\E_\pi\bigl[\dotp{w}{x}^\ell\bigr]
    =(\ell-1)\E_\pi\bigl[\dotp{w}{x}^{\ell-2}\dotp{w}{Q_xw}\bigr]
    =(\ell-1)\E_\pi\left[\dotp{w}{x}^{\ell-2}\left(1-\frac{\dotp{w}{x}^2}{N}\right)\right].
\end{align*}
Rearranging and using $N\kappa_N=N-1$, we obtain
\begin{align}
    \E_\pi\left[\left(\frac{\dotp{w}{x}}{\sqrt N}\right)^\ell\right]
    =\frac{\ell-1}{N+\ell-2}\,\E_\pi\left[\left(\frac{\dotp{w}{x}}{\sqrt N}\right)^{\ell-2}\right].
    \label{eq:sphere-moment-recursion}
\end{align}
Iterating from the zeroth moment proves the first claim.

For the second claim, observe that $\pi$ is invariant under the reflection $I-2ww^\top$, which maps $\dotp{w}{x}$ to $-\dotp{w}{x}$ and leaves $\dotp{z}{x}$ unchanged. Since $g$ is odd, the expectation is therefore equal to its negative, and thus vanishes.

For the last claim, let $s=\dotp{w}{x}/\sqrt N$ and $F(x)=s^\ell x$. Since $\gradsp s=Q_xw/\sqrt N$ and $Q_xw=w-\tfrac{s}{\sqrt N}x$, the product rule and $\gradsp x=Q_x$ yield
\begin{align}
    \gradsp F
    =\frac{\ell}{\sqrt N}s^{\ell-1}(Q_xw)x^\top+s^\ell Q_x
    =\frac{\ell}{\sqrt N}s^{\ell-1}wx^\top+s^\ell I-\frac{\ell+1}{N}s^\ell xx^\top,
    \label{eq:sphere-weighted-jacobian}
\end{align}
where the terms with the factor $\ell$ are absent when $\ell=0$. Moreover, for $\ell\ge1$, the second claim with the roles of $w$ and $z$ interchanged yields $\E_\pi[s^{\ell-1}\dotp{z}{x}]=0$ for every $z\in\bS^{N-1}$ orthogonal to $w$, and hence
\begin{align}
    \E_\pi\bigl[s^{\ell-1}x\bigr]
    =\E_\pi\bigl[s^{\ell-1}\dotp{w}{x}\bigr]\,w
    =\sqrt N\,\E_\pi\bigl[s^\ell\bigr]\,w.
    \label{eq:sphere-weighted-first-moment}
\end{align}
Applying Stein's lemma (\cref{lem:spherical-stein}) to each coordinate of $F$, we have
\begin{align}
    \E_\pi\bigl[\gradsp F\bigr]
    =\kappa_N\E_\pi\bigl[xF^\top\bigr]
    =\kappa_N\E_\pi\bigl[s^\ell xx^\top\bigr].
    \label{eq:sphere-weighted-stein}
\end{align}
Taking expectations in \cref{eq:sphere-weighted-jacobian} and using \cref{eq:sphere-weighted-first-moment,eq:sphere-weighted-stein}, we conclude that
\begin{align*}
    \kappa_N\E_\pi\bigl[s^\ell xx^\top\bigr]
    =\E_\pi[s^\ell](I+\ell ww^\top)-\frac{\ell+1}{N}\E_\pi\bigl[s^\ell xx^\top\bigr].
\end{align*}
Since $\kappa_N+\tfrac{\ell+1}{N}=\tfrac{N+\ell}{N}$, we obtain the last claim by rearranging terms.
\end{proof}

Lastly, we shall make use of the following elementary estimate for the normalization of nonzero vectors.
\begin{lemma}[Local Lipschitzness of normalization]
\label{lem:normalization-lipschitz}
For any nonzero $a,b\in\R^N$,
\begin{align*}
    \norm{\frac{a}{\norm{a}}-\frac{b}{\norm{b}}}\le\frac{2\norm{a-b}}{\max\{\norm{a},\norm{b}\}}.
\end{align*}
\end{lemma}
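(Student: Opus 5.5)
Without loss of generality, assume $\norm{a}\ge\norm{b}$; the inequality is symmetric in $a$ and $b$, so this costs nothing, and the right-hand side becomes $\frac{2\norm{a-b}}{\norm{a}}$. The plan is to insert the intermediate vector $\frac{b}{\norm{a}}$ and apply the triangle inequality:
\begin{align*}
    \norm{\frac{a}{\norm{a}}-\frac{b}{\norm{b}}}
    \le\norm{\frac{a}{\norm{a}}-\frac{b}{\norm{a}}}+\norm{\frac{b}{\norm{a}}-\frac{b}{\norm{b}}}
    =\frac{\norm{a-b}}{\norm{a}}+\norm{b}\,\abs{\frac{1}{\norm{a}}-\frac{1}{\norm{b}}}.
\end{align*}

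The first term is already of the desired form. For the second term, we rewrite it as
\begin{align*}
    \norm{b}\cdot\frac{\abs{\norm{b}-\norm{a}}}{\norm{a}\norm{b}}=\frac{\abs{\norm{a}-\norm{b}}}{\norm{a}}.
\end{align*}
The reverse triangle inequality then bounds this by $\frac{\norm{a-b}}{\norm{a}}$. Adding the two terms gives $\frac{2\norm{a-b}}{\norm{a}}$, and since $\norm{a}=\max\{\norm{a},\norm{b}\}$ under our assumption, this is exactly the claimed bound.

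There is no real obstacle in this argument. The only point requiring care is to normalize by the \emph{larger} norm in the intermediate vector. That choice is what places $\max\{\norm{a},\norm{b}\}$, rather than the minimum, in the denominator of the final bound.
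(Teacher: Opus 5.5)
Your proof is correct and follows essentially the same route as the paper: insert $b/\norm{a}$, apply the triangle inequality, and bound $\abs{\norm{a}-\norm{b}}\le\norm{a-b}$. The only cosmetic difference is that the paper derives $2\norm{a-b}/\norm{a}$ for arbitrary nonzero $a,b$ and then interchanges $a$ and $b$, rather than assuming $\norm{a}\ge\norm{b}$ at the outset.
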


\begin{proof}
We write
\begin{align*}
    \frac{a}{\norm{a}}-\frac{b}{\norm{b}}
    =\frac{a-b}{\norm{a}}+b\left(\frac{1}{\norm{a}}-\frac{1}{\norm{b}}\right).
\end{align*}
Taking norms and using $|\norm{a}-\norm{b}|\le\norm{a-b}$, we obtain
\begin{align*}
    \norm{\frac{a}{\norm{a}}-\frac{b}{\norm{b}}}
    \le\frac{\norm{a-b}}{\norm{a}}+\frac{|\norm{b}-\norm{a}|}{\norm{a}}
    \le\frac{2\norm{a-b}}{\norm{a}}.
\end{align*}
Interchanging $a$ and $b$ proves the claim.
\end{proof}

\subsection{Sobolev Norms for Spherical Spin Glass Hamiltonians}
\label{subsec:spherical-sobolev-norms}
We now recall the Sobolev $\cG$-norm for smooth functions on $\SN$, introduced by Ben Arous, Gheissari, and Jagannath~\citep{ben2020bounding,arous2020algorithmic}.

\begin{definition}[Sobolev $\cG$-norm]
\label{def:spherical-sobolev-norm}
Fix an integer $n\geq0$. For any smooth $f:\SN\to\R$, its Sobolev $\cG^n$ norm is defined as follows~\citep[Definition~3.1]{arous2020algorithmic}:
\begin{align*}
    \norm{f}_{\cG^n}\doteq\sum_{\ell=0}^n N^{\nicefrac{\ell}{2}}\max_{x\in\SN}\norm*{\gradsp^\ell f(x)}_{\mathrm{op}},
\end{align*}
where $\norm*{\gradsp^0 f(x)}_{\mathrm{op}}=|f(x)|$ by convention.
\end{definition}

The following ladder relations~\citep[cf.][Lemma~3.5]{arous2020algorithmic} are used to control the Sobolev norms of the action of differential operators.
\begin{lemma}[Ladder relations]
\label{lem:ladder-relations}
For every integer $n\geq0$, there exists $C_n>0$ such that, for every $N$ and all smooth $f,g:\SN\to\R$,
\begin{align*}
    \norm{\lapsp f}_{\cG^n}\le\norm{f}_{\cG^{n+2}},
    \qquad
    \norm{\dotp{\gradsp g}{\gradsp f}}_{\cG^n}\le\frac{C_n}{N}\norm{g}_{\cG^{n+1}}\norm{f}_{\cG^{n+1}}.
\end{align*}
Moreover, fix integers $n\geq0$ and $\ell\geq1$ and constants $c_1,c_2>0$, let $g_1,g_2:\SN\to\R$ be smooth functions satisfying $\norm{g_1}_{\cG^{n+2\ell}}\le c_1N$ and $\norm{g_2}_{\cG^1}\le c_2N$, and define the operators $L\doteq\tfrac{1}{2}\lapsp+\dotp{\gradsp g_1}{\gradsp\,\cdot\,}$ and $A\doteq\dotp{\gradsp g_2}{\gradsp\,\cdot\,}$. There exists $C>0$, depending only on $n$, $\ell$, $c_1$, and $c_2$, such that, for every $N$ and all smooth $f:\SN\to\R$,
\begin{align*}
    \norm*{L^\ell f}_{\cG^n}\le C\norm{f}_{\cG^{n+2\ell}},
    \qquad
    \max_{x\in\SN}\abs{AL^\ell f(x)}\le C\norm{f}_{\cG^{2\ell+1}}.
\end{align*}
\end{lemma}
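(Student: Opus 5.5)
We prove the two groups of claims separately: first the two bounds for $\lapsp$ and $\dotp{\gradsp g}{\gradsp f}$, and then the bounds for $L^\ell$ and $AL^\ell$, by induction on $\ell$. The natural tool is to express each covariant derivative of a composite expression through covariant derivatives of its factors, using the Leibniz rule for covariant derivatives. We identify $\gradsp^j f(x)$ with the tensor $(u_1,\ldots,u_j)\mapsto\gradsp^jf(x)[Q_xu_1,\ldots,Q_xu_j]$ and track how the factors of $N^{\ell/2}$ in \cref{def:spherical-sobolev-norm} redistribute.

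\textbf{Laplacian bound.} Since the covariant derivative commutes with contraction by the metric, $\gradsp^j\lapsp f=\Tr_{1,2}$ applied in the two contracted slots of $\gradsp^{j+2}f$ (after commuting derivatives). On the sphere, commuting covariant derivatives produces curvature terms. The sectional curvature of $\SN$ is $1/N$, so each commutator is of relative size $N^{-1}$ times lower-order derivatives. A trace over the $N-1$ dimensional tangent space is bounded by $(N-1)\norm{\cdot}_{\mathrm{op}}$. This gives
\begin{align*}
N^{j/2}\norm{\gradsp^j\lapsp f}_{\mathrm{op}}\le N^{(j+2)/2}\norm{\gradsp^{j+2}f}_{\mathrm{op}}+\text{(curvature terms)},
\end{align*}
and the curvature terms are absorbed into the lower-order summands of $\norm{f}_{\cG^{n+2}}$. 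Getting constant exactly $1$, as stated, may require using $N-1<N$ to absorb these terms. If that is too tight, I would settle for a constant $C_n$, which is all that the later uses need.

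\textbf{Inner-product bound.} By Leibniz,
\begin{align*}
\gradsp^j\dotp{\gradsp g}{\gradsp f}=\sum_{a+b=j}\binom{j}{a}\,\mathrm{contr}\bigl(\gradsp^{a+1}g\otimes\gradsp^{b+1}f\bigr).
\end{align*}
The contraction over one index has operator norm at most the product of the operator norms, since contraction of two multilinear forms along one slot is just an inner product of vectors. Multiplying by $N^{j/2}=N^{-1}N^{(a+1)/2}N^{(b+1)/2}$ yields the factor $C_n/N$ together with the product of the $\cG^{n+1}$ norms.

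\textbf{Operator bounds by induction.} Write $Lf=\tfrac12\lapsp f+\dotp{\gradsp g_1}{\gradsp f}$. The two bounds above give
\begin{align*}
\norm{Lf}_{\cG^m}\le\tfrac12\norm{f}_{\cG^{m+2}}+\frac{C_m}{N}\norm{g_1}_{\cG^{m+1}}\norm{f}_{\cG^{m+1}}\le C\norm{f}_{\cG^{m+2}},
\end{align*}
using $\norm{g_1}_{\cG^{m+1}}\le c_1N$ for every $m+1\le n+2\ell$. Iterating this $\ell$ times with $m=n+2(\ell-1),\ldots,n$ gives $\norm{L^\ell f}_{\cG^n}\le C\norm{f}_{\cG^{n+2\ell}}$. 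The regularity hypothesis on $g_1$ is exactly what each step consumes. For $A$, note that $\abs{Ah}\le\norm{\gradsp g_2}\norm{\gradsp h}$, and
\begin{align*}
\norm{\gradsp g_2}\norm{\gradsp h}\le\Bigl(N^{-1/2}\norm{g_2}_{\cG^1}\Bigr)\Bigl(N^{-1/2}\norm{h}_{\cG^1}\Bigr)\le c_2\norm{h}_{\cG^1}.
\end{align*}
Applying this with $h=L^\ell f$ and the previous bound with $n=1$ gives $\max_x\abs{AL^\ell f}\le C\norm{f}_{\cG^{2\ell+1}}$. The $n=1$ case needs $\norm{g_1}_{\cG^{1+2\ell}}\le c_1N$, which the hypothesis provides when $n\ge1$; otherwise we would state the hypothesis accordingly.

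\textbf{Main obstacle.} The main obstacle is the first step: bookkeeping the curvature commutator terms in $\gradsp^j\lapsp f$ on a sphere of radius $\sqrt N$. I would handle these by writing covariant derivatives via Euclidean derivatives of the homogeneous extension and the projection formulas in \cref{eq:spherical-derivatives}, which makes each correction term carry an explicit factor $1/N$ or $x/N$.
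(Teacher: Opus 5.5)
Your inner-product bound (Leibniz rule, then Cauchy--Schwarz on the contracted slot, giving $C_n=2^{n+1}$), the iteration $\norm{Lh}_{\cG^m}\le C\norm{h}_{\cG^{m+2}}$, and the estimate $\abs{Ah}\le c_2\norm{h}_{\cG^1}$ are correct and standard. The paper itself does not prove this lemma; it cites \citep[Lemma~3.5]{arous2020algorithmic}. The gap is the Laplacian bound. The statement asserts constant exactly $1$, and your proposal does not deliver it, because of an unnecessary step: commuting covariant derivatives.

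Covariant differentiation commutes with metric contraction, so $\gradsp^j\lapsp f$ is exactly the trace of $\gradsp^{j+2}f$ over its last two slots (the original Hessian slots). No curvature terms appear. Since $\norm{\cdot}_{\mathrm{op}}$ is a supremum over all slots, it does not matter which two slots are traced. For unit $v_1,\ldots,v_j\in T_x\SN$ and an orthonormal basis $E_1,\ldots,E_{N-1}$ of $T_x\SN$,
\begin{align*}
\abs{\gradsp^j\lapsp f(x)[v_1,\ldots,v_j]}\le\sum_{a=1}^{N-1}\abs{\gradsp^{j+2}f(x)[v_1,\ldots,v_j,E_a,E_a]}\le(N-1)\norm{\gradsp^{j+2}f(x)}_{\mathrm{op}}.
\end{align*}
Hence $N^{j/2}\max_x\norm{\gradsp^j\lapsp f(x)}_{\mathrm{op}}\le N^{(j+2)/2}\max_x\norm{\gradsp^{j+2}f(x)}_{\mathrm{op}}$, and summing over $j\le n$ gives constant $1$.

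Your commuting route would also not recover constant $1$ via the triangle inequality. After tracing over $N-1$ directions, the curvature corrections are of order one, not $O(1/N)$. Already at $j=1$ you pick up $\mathrm{Ric}(\gradsp f)=\tfrac{N-2}{N}\gradsp f$. At $j=2$ the commutator is $\tfrac{2}{N}\bigl((N-1)\gradsp^2f-(\lapsp f)I\bigr)$, of size comparable to $\norm{\gradsp^2 f}_{\mathrm{op}}$. That term falls on a summand of $\norm{f}_{\cG^{n+2}}$ already spent on the $j=0$ term, where $N-1<N$ frees only a $\tfrac1N$-fraction. So the ``main obstacle'' you identify is an artifact of the commutation.

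A smaller point: your caveat about the $AL^\ell$ bound when $n=0$ is unfounded. In your iteration, the step from $\cG^{m+2}$ to $\cG^m$ consumes only $\norm{g_1}_{\cG^{m+1}}$. Bounding $\norm{L^\ell f}_{\cG^1}$ uses $m=1,3,\ldots,2\ell-1$, so only $\norm{g_1}_{\cG^{2\ell}}\le c_1N$ is needed. The hypothesis $\norm{g_1}_{\cG^{n+2\ell}}\le c_1N$ supplies this for every $n\ge0$, so the statement holds as written with no change to the hypotheses. With these two fixes your proof is complete.
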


The following lemma controls the Sobolev norms of the spherical pure $k$-spin Hamiltonian~\parencites{ben2020bounding}[Theorem~3.3]{arous2020algorithmic}.
\begin{lemma}[Sobolev norm concentration for the spherical pure $k$-spin Hamiltonian]
\label{lem:kspin-sobolev-norm-concentration}
Let $H_0$ be the spherical pure $k$-spin Hamiltonian defined in \cref{eq:tensor-pca-overlap-noise}. For any integer $n\geq0$, there exist $C_{k,n},c_{k,n}>0$ such that, for every $r>0$,
\begin{align*}
    \Prb\bigl[\norm{H_0}_{\cG^n}\ge C_{k,n}N+r\bigr]
    \le C_{k,n}\exp\left(-\frac{c_{k,n}r^2}{N}\right).
\end{align*}
\end{lemma}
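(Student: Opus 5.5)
The plan is to exploit that $H_0$ is \emph{linear} in the Gaussian tensor $G$. As a result, $G\mapsto\norm{H_0}_{\cG^n}$ is a seminorm of $G$. I would then combine a deterministic bound of the form $\norm{H_0}_{\cG^n}\le C_{k,n}\sqrt N\,\norm{G}_{\mathrm{op}}$ with standard concentration for the injective norm of a Gaussian tensor. Write $p(x)\doteq\dotp{G}{x^{\otimes k}}$, so that $H_0=N^{-\frac{k-1}{2}}p$ on $\SN$, and use the homogeneous polynomial $p$ itself as the smooth extension in \cref{eq:spherical-derivatives}.

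\textbf{Step 1 (structure of covariant derivatives).} I would show by induction on $\ell$ that, for $x\in\SN$, the tensor $\gradsp^\ell p(x)$ (identified with a tensor on $\R^N$ via the projections $Q_x$) is a finite linear combination, with coefficients depending only on $k,\ell$, of terms of the following form:
\begin{itemize}
\item first, a Euclidean derivative $\grad^jp(x)$ with $j\le\ell$;
\item then, some of its slots contracted against $x$, each such contraction carrying a compensating factor $N^{-1}$;
\item finally, all remaining free slots composed with $Q_x$.
\end{itemize}
The base cases $\ell=1,2$ are exactly \cref{eq:spherical-derivatives}. For the induction step, I would apply the Euclidean derivative to an extension of $\gradsp^\ell p$ and project. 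The curvature correction (the second fundamental form $-\tfrac{1}{N}\dotp{\cdot}{x}Q_x$, visible already in the formula for $\gradsp^2f$ and in \cref{eq:gradient-jacobian}) produces only terms of the same form. Differentiating a factor $Q_x=I-xx^\top/N$ also produces terms of this form, since it yields a factor $N^{-1}$ together with one further contraction against $x$ or $Q_x$.

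\textbf{Step 2 (deterministic bound).} For $j\le k$ and unit vectors $u_i$, we have
\[
\grad^jp(x)[u_1,\ldots,u_j]=\tfrac{k!}{(k-j)!}\,G(x,\ldots,x,u_1,\ldots,u_j),
\]
so $\abs{\grad^jp(x)[u_1,\ldots,u_j]}\le C_k\norm{G}_{\mathrm{op}}N^{(k-j)/2}$ on $\SN$. In a term from Step~1 that contributes to $\gradsp^\ell p$ and involves $\grad^jp$, exactly $j-\ell$ of the slots of $\grad^jp$ are contracted with $x$. Each such contraction costs $\norm{x}=\sqrt N$ and carries a factor $N^{-1}$, for a net factor $N^{-1/2}$. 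Hence every term is bounded by $C\norm{G}_{\mathrm{op}}N^{(k-j)/2}N^{-(j-\ell)/2}\le C\norm{G}_{\mathrm{op}}N^{(k-\ell)/2}$. Multiplying by $N^{\ell/2}N^{-\frac{k-1}{2}}$ and summing over $\ell\le n$ gives
\[
\norm{H_0}_{\cG^n}\le C_{k,n}\sqrt N\,\norm{G}_{\mathrm{op}}.
\]

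\textbf{Step 3 (probabilistic bound).} Write $G$ as the symmetrization of an i.i.d.\ tensor $\widetilde G$. Then $\norm{G}_{\mathrm{op}}\le C_k\norm{\widetilde G}_{\mathrm{op}}\le C_k\norm{\widetilde G}_F$, so $\widetilde G\mapsto\norm{G}_{\mathrm{op}}$ is $C_k$-Lipschitz in Frobenius norm. A standard $\epsilon$-net argument over $(\bS^{N-1})^k$ gives $\E\norm{G}_{\mathrm{op}}\le C_k\sqrt N$. Gaussian concentration then yields, for every $s>0$,
\[
\Prb\bigl[\norm{G}_{\mathrm{op}}\ge C_k\sqrt N+s\bigr]\le e^{-c_ks^2}.
\]
Taking $s=r/(C_{k,n}\sqrt N)$ and combining with Step~2 gives the claimed bound with exponent $c_{k,n}r^2/N$.

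I expect the main obstacle to be Step~1: keeping clean bookkeeping of the higher covariant derivatives on the sphere, so that every factor of $x$ is paired with a factor $N^{-1}$. Steps~2 and~3 are routine once that structure is established.
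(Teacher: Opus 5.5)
Your overall route is sound. Since the paper gives no argument of its own (it imports the lemma from Ben Arous--Gheissari--Jagannath), it would be a genuinely self-contained alternative. Linearity in $G$, the pathwise reduction $\norm{H_0}_{\cG^n}\le C_{k,n}\sqrt N\,\norm{G}_{\mathrm{op}}$, and Step~3 are fine. That reduction is true, and it controls all $n$ on a single event.

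The gap is in Steps~1--2, which you rightly flag as the crux: the structure you describe is wrong, and the exponent count built on it is inconsistent. You assume that every output slot of $\gradsp^\ell p$ is a free slot of some $\grad^jp$, and that every factor $N^{-1}$ accompanies a contraction with $x$. This misses the terms whose output slots are supplied by the metric. Already $\gradsp^2p=Q_x\grad^2p\,Q_x-\frac1N\dotp{\grad p(x)}{x}Q_x$ contains a term with $j=1<\ell=2$, whose two output slots both come from $Q_x$ viewed as a bilinear form. By Euler's identity this term is $-\frac{k}{N}p\,Q_x$, and the metric is parallel. Hence it contributes $-\frac{k}{N}\dotp{\grad p(x)}{w}\dotp{u}{v}$ to $\gradsp^3p(x)[u,v,w]$, a term that carries $N^{-1}$ but no contraction with $x$.

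So ``exactly $j-\ell$ slots are contracted with $x$'' cannot hold. That count is negative for the $j<\ell$ that Step~1 itself allows, and your inequality $N^{(k-j)/2}N^{-(j-\ell)/2}\le N^{(k-\ell)/2}$ is valid only when $j\ge\ell$. Moreover, if every term had $j\ge\ell$, then $\gradsp^\ell p$ would vanish for $\ell>k$, which is false. This is precisely the regime the paper uses ($n=4$ with $k=3$ in \cref{lem:null-hamiltonian-regularity}), where every surviving term has $j<\ell$.

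The missing idea is the following. Each factor $N^{-1}$ comes from the second fundamental form $\mathrm{II}(u,v)=-\frac{\dotp{u}{v}}{N}x$ (equivalently, from differentiating $Q_x$), and it is attached to a metric pairing of two output slots. Contractions with $x$, by contrast, cost nothing beyond $\norm{x}=\sqrt N$.

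To make this precise, induct using $(\nabla_wT)(u_1,\ldots,u_\ell)=D_w\bigl[\widetilde T(Q_x\,\cdot,\ldots,Q_x\,\cdot)\bigr](u_1,\ldots,u_\ell)$ for tangent $u_i,w$. Here $\widetilde T$ is any ambient extension and $D_w$ is the Euclidean directional derivative. One finds that, for $u_1,\ldots,u_\ell\in T_x\SN$, $\gradsp^\ell p(x)[u_1,\ldots,u_\ell]$ is a combination, with $N$-independent coefficients depending only on $\ell$, of terms
\[
N^{-d}\,\grad^jp(x)\bigl[x^{\otimes c},u_{\sigma(1)},\ldots,u_{\sigma(b)}\bigr]\prod_{i=1}^{d}\dotp{u_{\sigma(b+2i-1)}}{u_{\sigma(b+2i)}},
\qquad b+2d=\ell,\quad j=b+c.
\]
For unit $u_i$, each such term is zero if $j>k$ and otherwise at most $\frac{k!}{(k-j)!}\norm{G}_{\mathrm{op}}N^{(k-j+c)/2-d}=\frac{k!}{(k-j)!}\norm{G}_{\mathrm{op}}N^{(k-\ell)/2}$. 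This is exactly the bound Step~2 needs.

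A quicker equivalent is to rescale. Set $x=\sqrt Ny$ and $q(y)=\dotp{G}{y^{\otimes k}}$ on $\bS^{N-1}$. The Levi-Civita connection is unchanged by a constant rescaling of the metric, so $\norm{\gradsp^\ell p(x)}_{\mathrm{op}}=N^{(k-\ell)/2}\norm{\nabla^\ell q(y)}_{\mathrm{op}}$. The same formula with $N=1$ (no traces occur) then gives $\norm{\nabla^\ell q}_{\mathrm{op}}\le C_{k,\ell}\norm{G}_{\mathrm{op}}$ uniformly in $N$. With this repair, your argument is a complete proof.
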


\begin{remark}
\label{rem:kspin-laplacian-gradient-sobolev}
As noted in~\citep[Corollary~3.6]{arous2020algorithmic}, we conclude by a direct application of \cref{lem:ladder-relations,lem:kspin-sobolev-norm-concentration} that $\norm*{\lapsp H_0}_{\cG^n}\le C_{k,n}N$ and $\norm{\norm*{\gradsp H_0}^2}_{\cG^n}\le C_{k,n}N$ for any fixed integer $n\geq0$ with probability at least $1-\exp(-c_{k,n}N)$.
\end{remark}

The following lemma, which is a straightforward consequence of Eq.~(3.4) in~\citep{arous2020algorithmic}, controls the Sobolev norms of the correlation with an arbitrary unit vector and its powers.

\begin{lemma}[Sobolev norm bounds for the signal term]
\label{lem:overlap-sobolev-bounds}
For any $v\in\bS^{N-1}$, let ${g(x)\doteq\tfrac{1}{\sqrt N}\dotp{v}{x}}$. For any integer $n\geq0$, we have $\norm{g}_{\cG^n}\le C_n$ and $\norm*{g^k}_{\cG^n}\le C_{k,n}$. In particular, these bounds hold for the correlation function $m$ defined in \cref{eq:tensor-pca-overlap-noise}.
\end{lemma}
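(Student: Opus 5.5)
We plan to differentiate $g$ directly using the extrinsic formulas in \cref{eq:spherical-derivatives}, extended to all orders. We take the linear extension $\tilde g(x)=\tfrac{1}{\sqrt N}\dotp{v}{x}$ on $\R^N$. Its Euclidean gradient is the constant $v/\sqrt N$ and all higher Euclidean derivatives vanish. Hence $\abs{g}\le1$ on $\SN$ by Cauchy--Schwarz, and $\gradsp g(x)=Q_xv/\sqrt N$ has norm at most $N^{-1/2}$, so the $\ell=0,1$ terms of $\norm{g}_{\cG^n}$ are bounded by $1$ each. For the covariant Hessian, \cref{eq:spherical-derivatives} gives $\gradsp^2g(x)=-\tfrac{g(x)}{\sqrt N}Q_x$, which has operator norm at most $N^{-1/2}\abs{g}\le N^{-1}$, contributing at most $N\cdot N^{-1}=1$.

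For general $\ell$, we will prove by induction the following structural claim. The covariant derivative $\gradsp^\ell g(x)$, viewed as a tensor in $(\R^N)^{\otimes\ell}$ after projection by $Q_x$, is a finite sum, with coefficients depending only on $\ell$, of terms of the form
\[
N^{-j/2}\,g(x)^{a}\,\bigl(\tfrac{Q_xv}{\sqrt N}\bigr)^{\otimes b}\otimes Q_x^{\otimes c},
\]
symmetrized over slots. In each term the $\sqrt N$-scaling is such that its operator norm is at most $C_\ell N^{-\ell/2}$.

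The cleanest route to this claim is the classical identity for the sphere of radius $\sqrt N$. Covariantly differentiating $Q_x$ yields terms with $x/N$ contracted against tangent vectors, and these are killed by the projection onto $T_x\SN$ except through the second fundamental form $-\tfrac1N\dotp{\cdot}{\cdot}x$. Concretely, we use $\gradsp Q_xv=-\tfrac{1}{N}\dotp{x}{v}Q_x=-\tfrac{g}{\sqrt N}Q_x$, together with $\gradsp g=Q_xv/\sqrt N$.

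Each covariant differentiation then acts in one of two ways. It either hits $g$, which produces $Q_xv/\sqrt N$ and costs $N^{-1/2}$ in norm, or it hits a factor $Q_xv/\sqrt N$, which produces $-\tfrac{g}{N}Q_x$ and also costs $N^{-1/2}$ relative to the original factor's scaling. The tensor $Q_x$ is parallel, so $\gradsp Q_x=0$ as a tangent tensor field. In either case, each differentiation reduces the norm scale by $N^{-1/2}$, so $\norm{\gradsp^\ell g(x)}_{\mathrm{op}}\le C_\ell N^{-\ell/2}$. Summing $N^{\ell/2}$ times these bounds over $\ell\le n$ gives $\norm{g}_{\cG^n}\le C_n$.

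For $g^k$, we use the Leibniz rule for covariant derivatives:
\[
\gradsp^\ell(g^k)=\sum \binom{\ell}{\ell_1,\dots,\ell_k}\,\mathrm{Sym}\bigl(\gradsp^{\ell_1}g\otimes\cdots\otimes\gradsp^{\ell_k}g\bigr).
\]
The operator norm of a tensor product is at most the product of operator norms, so this is at most $C_{k,\ell}\prod_i N^{-\ell_i/2}=C_{k,\ell}N^{-\ell/2}$, where zeroth-order factors are bounded by $\abs{g}\le1$. Multiplying by $N^{\ell/2}$ and summing over $\ell\le n$ yields $\norm{g^k}_{\cG^n}\le C_{k,n}$. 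Equivalently, this is an algebra property $\norm{fh}_{\cG^n}\le C_n\norm{f}_{\cG^n}\norm{h}_{\cG^n}$ applied $k-1$ times. Finally, taking $v=\plant=\xopt/\sqrt N\in\bS^{N-1}$ gives $g=m$.

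The main obstacle is making the inductive covariant-derivative bookkeeping rigorous, namely that differentiating $Q_xv$ only produces the second-fundamental-form term with the stated scale. If this becomes messy, we will instead cite Eq.~(3.4) of \citep{arous2020algorithmic}. Alternatively, we can use rotation invariance to set $v=e_1$ and bound derivatives of the height function in normal coordinates.
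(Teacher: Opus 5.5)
Your argument is correct, but it takes a different route from the paper. The paper does no computation here: it calls the lemma a straightforward consequence of Eq.~(3.4) of \citep{arous2020algorithmic} and stops. You instead prove $\norm{\gradsp^\ell g(x)}_{\mathrm{op}}\le C_\ell N^{-\ell/2}$ by induction. Each covariant derivative either turns $g$ into $Q_xv/\sqrt N$ or turns $Q_xv/\sqrt N$ into $-\tfrac{g}{N}Q_x$. Either way the norm scale drops by $N^{-1/2}$ per added slot, while $Q_x$, being the metric, is parallel. You then treat $g^k$ through the Leibniz rule, i.e.\ the algebra property $\norm{fh}_{\cG^n}\le 2^n\norm{f}_{\cG^n}\norm{h}_{\cG^n}$. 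The citation buys brevity. Your route is self-contained, gives an explicit structure for $\gradsp^\ell g$, and yields a product rule that handles any polynomial in linear functions, such as $H_{\mathrm{sig}}$, with no extra work. The step you flag as the main obstacle is not really one. The covariant derivative of a tangent field is the tangential projection of its ambient derivative, and in $D_u(Q_xv)=-\tfrac{\dotp{u}{v}}{N}x-\tfrac{\dotp{x}{v}}{N}u$ the first term is normal and is discarded.

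Two small corrections. First, by \cref{eq:spherical-derivatives} and $\dotp{\grad g(x)}{x}=g(x)$, the Hessian is $\gradsp^2g(x)=-\tfrac{g(x)}{N}Q_x$, not $-\tfrac{g(x)}{\sqrt N}Q_x$. The latter is the covariant derivative of $Q_xv$, not of $Q_xv/\sqrt N$. Only the correct formula gives the bound $N^{-1}$ you state, since $\abs{g}\le1$ yields only $N^{-1/2}$ for the one you displayed. Second, iterated covariant derivatives on the sphere are not symmetric, so drop ``symmetrized'' and the multinomial-times-$\mathrm{Sym}$ form. For example, with the new slot placed last, $\gradsp^3g(u_1,u_2,u_3)=-\tfrac1N\dotp{\gradsp g}{u_3}\dotp{u_1}{u_2}$. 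The Leibniz rule should instead be a sum over subsets $S\subseteq[\ell]$, placing $\gradsp^{\abs{S}}f$ in the slots $S$ and $\gradsp^{\ell-\abs{S}}h$ in the remaining slots. Operator norms are invariant under permuting slots and there are at most $k^\ell$ such terms, so your bounds are unaffected.
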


\begin{remark}
\label{rem:tensor-pca-sobolev-bound}
By \cref{lem:kspin-sobolev-norm-concentration} with $r=N$ and \cref{lem:overlap-sobolev-bounds}, the tensor PCA Hamiltonian satisfies $\norm{H}_{\cG^n}\le C_{k,n}N(1+\lambda)$ for any fixed integer $n\geq0$ with probability at least $1-C_{k,n}\exp(-c_{k,n}N)$. In particular, on this event, \cref{def:spherical-sobolev-norm} yields $\max_{x\in\SN}\norm{\gradsp H(x)}\le C_{k,n}(1+\lambda)\sqrt N$ for $n\geq1$ and $\max_{x\in\SN}\norm*{\gradsp^2H(x)}_{\mathrm{op}}\le C_{k,n}(1+\lambda)$ for $n\geq2$.
\end{remark}

\subsection{Spherical Langevin Dynamics}
\label{subsec:spherical-langevin}

We use $(B_t)_{t\ge0}$ to denote the standard Brownian motion on $\R^N$, i.e., the stochastic process with continuous sample paths and independent increments satisfying $B_0=0$ and $B_t-B_s\sim\cN(0,(t-s)I)$ for any $0\le s<t$~\citep[cf.][Section~4.1]{stroock2007multidimensional}. Given a smooth Hamiltonian $V:\SN\to\R$ and an inverse temperature $\beta\ge0$, spherical Langevin dynamics with initial point $Z_0\in\SN$ is the unique strong solution $(Z_t)_{t\ge0}$ of the stochastic differential equation (SDE)
\begin{align}
    dZ_t=\Bigl(\beta\cdot Q_{Z_t}\grad V(Z_t)-\frac{N-1}{2N}Z_t\Bigr)\,dt+Q_{Z_t}\,dB_t,
    \label{eq:spherical-langevin-sde}
\end{align}
where the driving Brownian motion $(B_t)_{t\ge0}$ is independent of the initial point and of any randomness in $V$. When $V$ is random, all statements in this section are understood conditionally on $V$. Observe that the trajectory $(Z_t)_{t\ge0}$ is a deterministic function of the initial point and the driving Brownian motion.

Since $Q_xx=0$ and $\Tr(Q_x)=N-1$, It\^o's formula~\citep[Section~4.4]{stroock2007multidimensional} yields $d\norm{Z_t}^2=0$, and therefore $Z_t\in\SN$ for all $t\ge0$. The resulting trajectory is a reversible Markov process whose stationary distribution is the Gibbs measure $\mu_{V,\beta}(dx)\propto\exp(2\beta V(x))\,\pi(dx)$~\citep[cf.][p.~187]{bakry1985diffusions}. For $\beta=0$, the trajectory reduces to spherical Brownian motion on $\SN$.

The associated Markov semigroup $P_t^{(V,\beta)}$ describes the evolution of scalar observables along the trajectory. Specifically, for any bounded measurable $f:\SN\to\R$,
\begin{align*}
    P_t^{(V,\beta)}f(x)=\E[f(Z_t)\mid Z_0=x].
\end{align*}
The generator $\cL_{V,\beta}$, which characterizes the infinitesimal action of the semigroup, is given as follows for any smooth $f:\SN\to\R$:
\begin{align}
    \cL_{V,\beta}f(x)
    =\lim_{t\to0^+}\frac{P_t^{(V,\beta)}f(x)-f(x)}{t}
    =\frac{1}{2}\lapsp f(x)+\beta\dotp{\gradsp V(x)}{\gradsp f(x)},
    \label{eq:langevin-generator}
\end{align}
where the second equality follows from It\^o's formula applied to any smooth extension of $f$, together with \cref{eq:spherical-derivatives} and $\Tr(Q_x)=N-1$. We refer to $\tfrac12\lapsp f(x)$ and $\beta\dotp{\gradsp V(x)}{\gradsp f(x)}$ as the \emph{diffusion term} and the \emph{drift term} of the generator, respectively.

The Markov semigroup and its generator also satisfy the following Kolmogorov equations for every smooth $f:\SN\to\R$~\citep[cf.][Section~1.4]{bakry2013analysis}:
\begin{align}
    \frac{d}{dt}P_t^{(V,\beta)}f
    =\cL_{V,\beta}P_t^{(V,\beta)}f
    =P_t^{(V,\beta)}\cL_{V,\beta}f.
    \label{eq:langevin-kolmogorov}
\end{align}
Intuitively, for any smooth observable $f:\SN\to\R$, $\cL_{V,\beta}f(Z_t)$ represents the instantaneous expected rate of change of $f(Z_t)$ along the trajectory. This interpretation is made rigorous via Dynkin's formula~\citep[cf.][Section~4.4]{stroock2007multidimensional}.

\begin{lemma}[Dynkin's formula for spherical Langevin dynamics]
\label{lem:langevin-dynkin}
Let $(Z_t)_{t\ge0}$ denote the trajectory of spherical Langevin dynamics for the Hamiltonian $V$ at inverse temperature $\beta$, with driving Brownian motion $(B_t)_{t\ge0}$. For any smooth $f:\SN\to\R$, the process
\begin{align*}
    M_t^f=f(Z_t)-f(Z_0)-\int_0^t\cL_{V,\beta}f(Z_s)\,ds
    =\int_0^t\dotp{\gradsp f(Z_s)}{dB_s}
\end{align*}
is a continuous martingale with $M_0^f=0$ that is square integrable on every finite time interval, known as the Dynkin martingale. Its quadratic variation is
\begin{align*}
    \langle M^f\rangle_t=2\int_0^t\Gamma(f,f)(Z_s)\,ds,
\end{align*}
where $\Gamma$ denotes the carr\'e du champ operator associated with $\cL_{V,\beta}$, defined by
\begin{align*}
    \Gamma(f,g)(x)
    \doteq\frac12\bigl[\cL_{V,\beta}(fg)-f\cL_{V,\beta}g-g\cL_{V,\beta}f\bigr](x)
    =\frac12\dotp{\gradsp f(x)}{\gradsp g(x)}.
\end{align*}
In particular, $\Gamma$ does not depend on $V$ or $\beta$.
\end{lemma}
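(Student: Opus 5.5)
The plan is to apply It\^o's formula to a smooth extension of $f$ along the SDE \cref{eq:spherical-langevin-sde}, and then read off the martingale property and the quadratic variation. Let $\tilde f$ be any smooth extension of $f$ to a neighborhood of $\SN$; since $Z_t\in\SN$ for all $t$, we have $f(Z_t)=\tilde f(Z_t)$. The drift is $b(x)=\beta Q_x\grad V(x)-\tfrac{\kappa_N}{2}x$ and the diffusion matrix is $Q_x$, with $Q_xQ_x^\top=Q_x$. It\^o's formula then gives
\begin{align*}
d f(Z_t)=\dotp{\grad\tilde f(Z_t)}{b(Z_t)}\,dt+\tfrac12\Tr\bigl(Q_{Z_t}\grad^2\tilde f(Z_t)\bigr)\,dt+\dotp{\grad\tilde f(Z_t)}{Q_{Z_t}dB_t}.
\end{align*}

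The second step identifies the $dt$ term with $\cL_{V,\beta}f$ from \cref{eq:langevin-generator}. For the drift, $\dotp{\grad\tilde f}{\beta Q_x\grad V}=\beta\dotp{\gradsp V}{\gradsp f}$, since $Q_x$ is a symmetric idempotent. For the second-order part, we have $\tfrac12\Tr(Q_x\grad^2\tilde f)-\tfrac{\kappa_N}{2}\dotp{\grad\tilde f}{x}=\tfrac12\bigl(\Delta\tilde f-\tfrac1N\dotp{x}{\grad^2\tilde f\,x}-\kappa_N\dotp{\grad\tilde f}{x}\bigr)$, which equals $\tfrac12\lapsp f$ by \cref{eq:spherical-derivatives}. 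For the stochastic term, $\dotp{\grad\tilde f}{Q_xdB}=\dotp{Q_x\grad\tilde f}{dB}=\dotp{\gradsp f}{dB}$. Together these show that $M^f_t=\int_0^t\dotp{\gradsp f(Z_s)}{dB_s}$ and $M^f_0=0$.

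The third step establishes the martingale property and the bracket. Because $\SN$ is compact and $f$ is smooth, the integrand $\gradsp f(Z_s)$ is continuous and bounded by $\sup_{\SN}\norm{\gradsp f}<\infty$. Hence the It\^o integral is a continuous $L^2$ martingale on each interval $[0,t]$, with $\E[(M^f_t)^2]\le t\sup\norm{\gradsp f}^2$. Its quadratic variation is $\int_0^t\norm{\gradsp f(Z_s)}^2ds$.

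The last step verifies the carr\'e du champ formula. Write $\cL(fg)$ using the product rules: the Laplacian part contributes $f\lapsp g+g\lapsp f+2\dotp{\gradsp f}{\gradsp g}$, and the drift part is a derivation. After subtracting $f\cL g+g\cL f$, the only remaining term is $\tfrac12\cdot2\dotp{\gradsp f}{\gradsp g}$. Multiplying by $\tfrac12$ gives $\Gamma(f,g)=\tfrac12\dotp{\gradsp f}{\gradsp g}$, which does not depend on $V$ or $\beta$. Consequently $2\int_0^t\Gamma(f,f)(Z_s)\,ds$ equals the quadratic variation computed above.

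The main obstacle is only bookkeeping. The second-order It\^o term has to be matched exactly with the extrinsic formula for $\lapsp$, including the $\kappa_N$ correction coming from the normal drift $-\tfrac{\kappa_N}{2}x$. The result must also be independent of the choice of extension $\tilde f$, and this follows from \cref{eq:spherical-derivatives}.
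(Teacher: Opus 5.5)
Your proposal is correct and follows essentially the argument the paper relies on. The paper gives no separate proof of this lemma; it cites the standard Dynkin formula and obtains \cref{eq:langevin-generator} from It\^o's formula applied to a smooth extension of $f$, together with \cref{eq:spherical-derivatives} and $\Tr(Q_x)=N-1$. Your matching of the $dt$ term with $\tfrac12\lapsp f+\beta\dotp{\gradsp V}{\gradsp f}$, the bounded-integrand $L^2$ martingale argument, and the product-rule computation of $\Gamma$ fill in exactly those steps.
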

The following uniform concentration guarantee for Dynkin martingales shall be used often in our proofs.

\begin{lemma}[Concentration of Dynkin martingales]
\label{lem:dynkin-martingale-concentration}
Let $(Z_t)_{t\ge0}$ denote the trajectory of spherical Langevin dynamics for the Hamiltonian $V$ at inverse temperature $\beta$. Suppose that a smooth $f:\SN\to\R$ satisfies $\max_{x\in\SN}\norm*{\gradsp f(x)}\le A$ for some $A>0$, and let $M_t^f$ denote its Dynkin martingale. For every $T,r>0$, we have
\begin{align*}
    \Prb\left[\sup_{0\le t\le T}\abs{M_t^f}\ge r\right]
    \le2\exp\left(-\frac{r^2}{2A^2T}\right).
\end{align*}
\end{lemma}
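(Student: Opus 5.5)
The plan is to write $M_t^f$ as a stochastic integral and apply an exponential martingale argument together with Doob's maximal inequality. By \cref{lem:langevin-dynkin}, $M_t^f=\int_0^t\dotp{\gradsp f(Z_s)}{dB_s}$ is a continuous martingale with quadratic variation
\[
\langle M^f\rangle_t=2\int_0^t\Gamma(f,f)(Z_s)\,ds=\int_0^t\norm*{\gradsp f(Z_s)}^2\,ds\le A^2t,
\]
where the inequality uses the assumed uniform bound $\norm*{\gradsp f}\le A$ on $\SN$. The key feature is that this bound on $\langle M^f\rangle_t$ is deterministic, so it holds regardless of how the trajectory depends on $V$ or on the initial point.

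For $\theta\in\R$, consider the stochastic exponential $\cE_t^\theta\doteq\exp(\theta M_t^f-\tfrac{\theta^2}{2}\langle M^f\rangle_t)$. Since $\langle M^f\rangle_t$ is bounded by $A^2t$, Novikov's condition holds trivially, so $\cE^\theta$ is a true continuous martingale with $\E[\cE_t^\theta]=1$. On the event $\{\sup_{t\le T}M_t^f\ge r\}$ and for $\theta>0$, we have $\sup_{t\le T}\cE_t^\theta\ge\exp(\theta r-\tfrac{\theta^2}{2}A^2T)$, because $\langle M^f\rangle_t\le A^2T$ for all $t\le T$.

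Doob's maximal inequality for the nonnegative martingale $\cE^\theta$ then gives
\[
\Prb\Bigl[\sup_{t\le T}M_t^f\ge r\Bigr]\le\exp\Bigl(-\theta r+\tfrac{\theta^2}{2}A^2T\Bigr).
\]
Optimizing at $\theta=r/(A^2T)$ yields the bound $\exp(-r^2/(2A^2T))$. Applying the same argument to $-M^f$, which is the Dynkin martingale of $-f$ with the same gradient bound, and taking a union bound over the two one-sided events produces the factor $2$.

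No step here should be a real obstacle. The only point needing care is that the stochastic exponential is a true martingale and not merely a local one; the deterministic bound on the quadratic variation settles this.
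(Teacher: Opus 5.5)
Your proof is correct and is essentially the paper's argument. The paper uses the same deterministic bound $\langle M^f\rangle_t\le A^2T$ and the same union bound over $\pm M^f$, but instead of running the exponential martingale and Doob's inequality by hand it cites a line-crossing inequality for continuous martingales, $\Prb[\exists\,t\ge0:\pm M_t^f\ge a+b\langle M^f\rangle_t]\le e^{-2ab}$, with $a=r/2$ and $b=r/(2A^2T)$; that inequality rests on the same exponential-supermartingale argument you give and yields the identical constant $2\exp(-r^2/(2A^2T))$.
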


\begin{proof}
Since $2\Gamma(f,f)=\norm*{\gradsp f}^2\le A^2$, we infer from \cref{lem:langevin-dynkin} that $M^f$ is a continuous martingale with $M_0^f=0$ and $\langle M^f\rangle_t\le A^2T$ for every $t\le T$. Applying a line-crossing inequality for continuous martingales~\citep[cf.][Corollary~11(a)]{howard2020time} to $M^f$ and $-M^f$, we infer the following for any $a, b > 0$
\begin{align*}
    \Prb\bigl[\exists\,t\ge0:\pm M_t^f\ge a+b\langle M^f\rangle_t\bigr]\le e^{-2ab}.
\end{align*}
If $\sup_{0\le t\le T}\abs{M_t^f}\ge r$, then, by continuity, $\abs{M_t^f}\ge r$ for some $t\le T$, and one of these two events occurs with $a=\tfrac{r}{2}$ and $b=\tfrac{r}{2A^2T}$, since $a+b\langle M^f\rangle_t\le r$. The proof is concluded by a union bound, since $2ab=\tfrac{r^2}{2A^2T}$.
\end{proof}

\begin{remark*}
By \cref{def:spherical-sobolev-norm}, $\max_{x\in\SN}\norm*{\gradsp f(x)}\le N^{-1/2}\norm{f}_{\cG^1}$. Hence \cref{lem:dynkin-martingale-concentration} may always be applied with $A=N^{-1/2}\norm{f}_{\cG^1}$.
\end{remark*}

The Markov semigroup $P_t^{(V,\beta)}$ and the generator $\cL_{V,\beta}$ act entrywise on vector and matrix-valued observables. Consequently, the corresponding vector (resp. matrix) valued Dynkin martingale is obtained by applying \cref{lem:langevin-dynkin} to each entry, and we obtain the following uniform concentration guarantee for operator norms of rectangular matrix valued Dynkin martingales via a direct application of~\cref{lem:dynkin-martingale-concentration}.

\begin{lemma}[Concentration of matrix-valued Dynkin martingales]
\label{lem:matrix-dynkin-uniform-concentration}
Let $m,n\ge1$, and let $F:\SN\to\R^{m\times n}$ be a smooth function satisfying $\max_{x\in\SN}\norm*{\gradsp(u^\top Fv)(x)}\le A$ for every $u\in\bS^{m-1}$, $v\in\bS^{n-1}$ and some $A>0$. Let $(Z_t)_{t\ge0}$ denote the trajectory of spherical Langevin dynamics for the Hamiltonian $V$ at inverse temperature $\beta$, and let $M_t^F$ denote the associated Dynkin martingale. For every $T>0$ and $\delta\in(0,1)$, the following holds with probability at least $1-\delta$:
\begin{align*}
    \sup_{0\le t\le T}\norm*{M_t^F}_{\mathrm{op}}
    \le CA\sqrt{T\bigl(m+n+\log(\nicefrac{1}{\delta})\bigr)},
\end{align*}
where $C>0$ is a universal constant. In particular, taking $n=1$ yields the same guarantee for vector-valued observables, with the Euclidean norm in place of the operator norm.
\end{lemma}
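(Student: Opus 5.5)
The plan is an $\varepsilon$-net argument over the unit spheres of $\R^m$ and $\R^n$, combined with the scalar bound of \cref{lem:dynkin-martingale-concentration} and a union bound.

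For fixed unit vectors $u\in\bS^{m-1}$ and $v\in\bS^{n-1}$, I first observe that $u^\top M_t^F v=M_t^{u^\top Fv}$. This holds because the generator and the semigroup act entrywise and linearly. The scalar function $f=u^\top Fv$ satisfies $\max_x\norm{\gradsp f(x)}\le A$ by hypothesis. So \cref{lem:dynkin-martingale-concentration} gives
\begin{align*}
    \Prb\Bigl[\sup_{0\le t\le T}\abs{u^\top M_t^Fv}\ge r\Bigr]\le 2\exp\Bigl(-\frac{r^2}{2A^2T}\Bigr).
\end{align*}

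Next I fix $\tfrac14$-nets $\cN_m\subseteq\bS^{m-1}$ and $\cN_n\subseteq\bS^{n-1}$ of cardinality at most $9^m$ and $9^n$. For any matrix $M\in\R^{m\times n}$, the standard net inequality $\norm{M}_{\mathrm{op}}\le 2\max_{u\in\cN_m,v\in\cN_n}\abs{u^\top Mv}$ holds. One way to see it: take near-maximizers $u,v$ and approximate them by net points $u',v'$. Then
\begin{align*}
    \norm{M}_{\mathrm{op}}\le \abs{u'^\top Mv'}+\bigl(\tfrac14+\tfrac14\bigr)\norm{M}_{\mathrm{op}},
\end{align*}
which rearranges to the claim.

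The key point is that this inequality holds deterministically and simultaneously for every $t$. Hence
\begin{align*}
    \sup_{t\le T}\norm{M_t^F}_{\mathrm{op}}\le 2\max_{u,v}\sup_{t\le T}\abs{u^\top M_t^Fv}.
\end{align*}
No time discretization is needed, since the supremum over $t$ is already inside the scalar bound.

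A union bound over the at most $9^{m+n}$ pairs then gives failure probability at most $2\cdot 9^{m+n}\exp(-r^2/(2A^2T))$. Choosing
\begin{align*}
    r=A\sqrt{2T\bigl((m+n)\log 9+\log(2/\delta)\bigr)}
\end{align*}
makes this at most $\delta$. Since $\log(2/\delta)\le C(1+\log(1/\delta))$ and $m+n\ge 2$, the resulting bound is $\sup_t\norm{M_t^F}_{\mathrm{op}}\le CA\sqrt{T(m+n+\log(1/\delta))}$ with a universal constant $C$.

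For $n=1$ the operator norm equals the Euclidean norm, which gives the vector statement.

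There is no real obstacle here. The only step needing care is justifying the identity $u^\top M^Fv=M^{u^\top Fv}$, which follows from linearity of Dynkin's formula in \cref{lem:langevin-dynkin}.
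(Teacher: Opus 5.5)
Your proposal is correct and follows the paper's proof: you reduce to the scalar Dynkin martingales $u^\top M_t^F v = M_t^{u^\top F v}$ by linearity and apply \cref{lem:dynkin-martingale-concentration}. You then use the factor-$2$ comparison over $\tfrac14$-nets of sizes $9^m$ and $9^n$, and take a union bound with the same choice $s=(m+n)\log 9+\log(2/\delta)$.
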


\begin{proof}
Fix $u\in\bS^{m-1}$ and $v\in\bS^{n-1}$, and set $f_{u,v}(x)=u^\top F(x)v$. By linearity of the generator, $u^\top M_t^Fv=M_t^{f_{u,v}}$. Since $\max_{x\in\SN}\norm*{\gradsp f_{u,v}(x)}\le A$, we obtain from \cref{lem:dynkin-martingale-concentration} with $r=A\sqrt{2Ts}$ that, for every $s>0$,
\begin{align*}
    \Prb\left[\sup_{0\le t\le T}\abs{u^\top M_t^Fv}\ge A\sqrt{2Ts}\right]
    \le2e^{-s}.
\end{align*}
Let $\cN_m$ and $\cN_n$ be fixed $\nicefrac{1}{4}$-nets of $\bS^{m-1}$ and $\bS^{n-1}$ with $|\cN_m|\le9^m$ and $|\cN_n|\le9^n$. A standard $\eps$-net comparison implies that, for every $t\ge0$,
\begin{align*}
    \norm*{M_t^F}_{\mathrm{op}}
    \le2\max_{u\in\cN_m,\,v\in\cN_n}\abs{u^\top M_t^Fv}.
\end{align*}
The proof is concluded by taking $s=(m+n)\log9+\log(\nicefrac{2}{\delta})$, applying a union bound over $\cN_m\times\cN_n$, and adjusting constants appropriately.
\end{proof}

\begin{remark}
\label{rem:symmetric-matrix-dynkin-concentration}
If $m=n$ and $F$ takes values in $\Sym_n$, it suffices to assume that, for every $w\in\bS^{n-1}$, $\max_{x\in\SN}\norm*{\gradsp(w^\top Fw)(x)}\le A$.
\end{remark}

Finally, the following change of measure theorem for spherical Langevin dynamics is a direct consequence of the Girsanov theorem~\citep[Section~6.4]{stroock2007multidimensional}.

\begin{lemma}[Girsanov's Theorem for Spherical Langevin Dynamics]
\label{lem:langevin-change-measure}
Fix $T>0$, and let $(Z_t)_{0\le t\le T}$ denote the trajectory of spherical Langevin dynamics for the Hamiltonian $V$ at inverse temperature $\beta$, with driving Brownian motion $(B_t)_{0\le t\le T}$. Let $\Q^T$ denote the joint law of $Z_0$ and $(B_t)_{0\le t\le T}$, and let $(\theta_t)_{0\le t\le T}$ be an $\R^N$-valued process that is progressively measurable with respect to the joint filtration of the initial point and the driving Brownian motion, satisfying $\norm{\theta_t}\le C$ for a deterministic $C<\infty$ and all $t\le T$. For $0\le t\le T$, define
\begin{align*}
    L_t^\theta
    =\exp\left(\int_0^t\dotp{\theta_s}{dB_s}-\frac{1}{2}\int_0^t\norm{\theta_s}^2\,ds\right).
\end{align*}
Then $(L_t^\theta)_{0\le t\le T}$ is a mean-one martingale under $\Q^T$. Define the tilted law $\Q^{T,\theta}$ by $\frac{d\Q^{T,\theta}}{d\Q^T}=L_T^\theta$. Under $\Q^{T,\theta}$, the process $\widetilde B_t\doteq B_t-\int_0^t\theta_s\,ds$ is a standard Brownian motion on $[0,T]$ independent of $Z_0$, $Z_0$ has the same distribution as under $\Q^T$, and
\begin{align*}
    dZ_t=\Bigl(\beta\cdot Q_{Z_t}\grad V(Z_t)-\frac{N-1}{2N}Z_t+Q_{Z_t}\theta_t\Bigr)\,dt+Q_{Z_t}\,d\widetilde B_t.
\end{align*}
In particular, if $\theta_t=\beta\gradsp U(Z_t)$ for a smooth $U:\SN\to\R$, then, under $\Q^{T,\theta}$, $(Z_t)_{0\le t\le T}$ is spherical Langevin dynamics for the Hamiltonian $V+U$ at inverse temperature $\beta$, with driving Brownian motion $(\widetilde B_t)_{0\le t\le T}$.
\end{lemma}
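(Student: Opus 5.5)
The plan is to apply the general Girsanov theorem \citep[Section~6.4]{stroock2007multidimensional} on the path space of the initial point and the driving Brownian motion, and then identify the tilted dynamics by rewriting the SDE \cref{eq:spherical-langevin-sde} in terms of the new Brownian motion. First I will check that $L^\theta$ is a true martingale. It is a nonnegative local martingale, being the stochastic exponential of $\int_0^t\dotp{\theta_s}{dB_s}$. Since $\norm{\theta_s}\le C$ deterministically, Novikov's condition $\E\exp(\tfrac12\int_0^T\norm{\theta_s}^2ds)\le e^{C^2T/2}<\infty$ holds, so $L^\theta$ is a mean-one martingale on $[0,T]$ under $\Q^T$. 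Progressive measurability with respect to the joint filtration $\cF_t=\sigma(Z_0,B_s:s\le t)$ makes the stochastic integral well defined.

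Next, Girsanov's theorem, applied in the filtration $(\cF_t)$, gives that $\widetilde B_t=B_t-\int_0^t\theta_sds$ is an $(\cF_t)$-Brownian motion under $\Q^{T,\theta}$. Because $\widetilde B$ is Brownian with respect to a filtration whose initial $\sigma$-field is $\cF_0=\sigma(Z_0)$, its increments are independent of $\cF_0$, and hence $\widetilde B$ is independent of $Z_0$. The law of $Z_0$ is unchanged because $L^\theta_0=1$ and $L^\theta$ is a martingale. For any bounded $\cF_0$-measurable $g$,
\begin{align*}
\E_{\Q^{T,\theta}}[g]=\E_{\Q^T}\bigl[g\,\E[L_T^\theta\mid\cF_0]\bigr]=\E_{\Q^T}[g].
\end{align*}

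Finally, I will substitute $dB_t=d\widetilde B_t+\theta_tdt$ into \cref{eq:spherical-langevin-sde}. This substitution is valid pathwise, since the SDE holds $\Q^T$-a.s. and $\Q^{T,\theta}\ll\Q^T$, and stochastic integrals are preserved under absolutely continuous changes of measure. It yields the stated SDE with the extra drift $Q_{Z_t}\theta_t$.

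For the special case $\theta_t=\beta\gradsp U(Z_t)$, I first need the boundedness hypothesis. Smoothness of $U$ on the compact manifold $\SN$ gives $\sup\norm{\gradsp U}<\infty$. The process is progressively measurable because it is a continuous function of the continuous adapted process $Z$. Then $Q_{Z_t}\theta_t=\beta Q_{Z_t}Q_{Z_t}\grad U=\beta Q_{Z_t}\grad U(Z_t)$, so the drift becomes $\beta Q_{Z_t}\grad(V+U)(Z_t)-\tfrac{N-1}{2N}Z_t$. Thus $Z$ solves the spherical Langevin SDE for $V+U$ driven by $\widetilde B$, which is independent of $Z_0$. By pathwise uniqueness of strong solutions, this is spherical Langevin dynamics for $V+U$.

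The only delicate step is the independence of $\widetilde B$ and $Z_0$ together with the invariance of the initial law. Both are handled by working in the filtration that includes $\sigma(Z_0)$ at time zero, as above; everything else is routine.
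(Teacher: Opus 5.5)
Your proposal is correct and takes the same approach as the paper, which gives no separate argument and simply cites the Girsanov theorem \citep[Section~6.4]{stroock2007multidimensional}. Your check of Novikov's condition, your use of the filtration with $\cF_0=\sigma(Z_0)$ to get independence of $\widetilde B$ from $Z_0$ and invariance of the initial law, and your pathwise-uniqueness identification in the special case $\theta_t=\beta\gradsp U(Z_t)$ fill in exactly the details the paper leaves implicit.
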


\section{Averaged Spherical Derivatives of the Tensor PCA Hamiltonian}
\label{sec:averaged-spherical-derivatives}

In this section, we analyze the statistical properties of the average spherical gradient and Hessian of the tensor PCA Hamiltonian. In \cref{subsec:average-spherical-gradient}, we show that, for odd $k$, the average spherical gradient is a noisy observation of the planted vector. Analogously, in \cref{subsec:average-spherical-hessian}, we show that, for even $k$, the average spherical Hessian has a spiked Wigner type structure, with a rank-one spike in the planted direction.

\subsection{Analysis of the Average Spherical Gradient}
\label{subsec:average-spherical-gradient}

We now establish a signal-noise decomposition of the average spherical gradient of the tensor PCA Hamiltonian.

\begin{theorem}[Signal-Noise Decomposition of the Average Spherical Gradient]
\label{thm:avg-spherical-grad-snr}
Let $H$ denote the tensor PCA Hamiltonian defined in \cref{eq:tensor-pca-overlap-noise}, and define the average spherical gradient $u_H\doteq\E_\pi[\gradsp H(x)]$. Then, the following holds:
\begin{enumerate}
    \item If $k$ is even, then $u_H=0$ holds deterministically.
    \item If $k$ is odd, then $u_H=N\lambda\alpha_N\plant+\zeta$, where $\zeta\doteq\E_\pi[\gradsp H_0(x)]\sim\cN(0,\sigma_N^2I)$, $\alpha_N\asymp_kN^{-\nicefrac{k}{2}}$, and $\sigma_N^2=\kappa_N\sqrt N\alpha_N\asymp_kN^{-\tfrac{k-1}{2}}$.
\end{enumerate}
\end{theorem}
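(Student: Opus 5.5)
Via Stein's lemma (\cref{lem:spherical-stein}), I would rewrite $u_H=\kappa_N\E_\pi[xH(x)]$ and treat the signal and disorder separately using $H=N\lambda m^k+H_0$. For even $k$, $H(-x)=H(x)$ holds deterministically: both $m^k$ and $\dotp{G}{x^{\otimes k}}$ are even in $x$. Since $\pi$ is invariant under $x\mapsto-x$, the integrand $xH(x)$ is odd, so $u_H=0$. This settles the first item.

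For odd $k$, I would handle the signal term first. Write $m(x)=\dotp{\plant}{x}/\sqrt N$ and set $s=\dotp{\plant}{x}/\sqrt N$. The second claim of \cref{lem:sphere-moments}, applied with the odd function $g(s)=s$ and $w$ replaced by an orthogonal direction $z$, shows that $\E_\pi[x\,m(x)^k]$ has no component orthogonal to $\plant$. Its component along $\plant$ is $\E_\pi[\dotp{\plant}{x}m^k]=\sqrt N\,\E_\pi[s^{k+1}]$. Therefore
\begin{align*}
    \kappa_N N\lambda\,\E_\pi[x\,m(x)^k]=N\lambda\alpha_N\plant,
    \qquad
    \alpha_N\doteq\kappa_N\sqrt N\,\E_\pi[s^{k+1}].
\end{align*}
Since $k+1$ is even, \cref{lem:sphere-moments} gives $\E_\pi[s^{k+1}]=k!!\,N^{-\frac{k+1}{2}}(1+O_k(N^{-1}))$, and hence $\alpha_N\asymp_kN^{-k/2}$.

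For the disorder, $\zeta=\kappa_N\E_\pi[xH_0(x)]$ is a linear functional of the Gaussian process $H_0$, so it is a centered Gaussian vector in $\R^N$. I would justify exchanging expectation and integration by Fubini, which is fine because $H_0$ is a polynomial with Gaussian coefficients. Its covariance follows from \cref{eq:kspin-covariance}:
\begin{align*}
    \E[\zeta\zeta^\top]=\kappa_N^2\,\E_{x,x'\iidsim\pi}\Bigl[x x'^\top N\Bigl(\frac{\dotp{x}{x'}}{N}\Bigr)^k\Bigr].
\end{align*}
By rotation invariance of $\pi$ and of $H_0$, this matrix commutes with every $R\in O(N,\R)$, so \cref{lem:commutants-orthogonal} makes it $cI$. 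Taking the trace gives
\begin{align*}
    c=\frac{\kappa_N^2}{N}\,N\,\E\Bigl[N\Bigl(\frac{\dotp{x}{x'}}{N}\Bigr)^{k+1}\Bigr]=\kappa_N^2N\,\E\Bigl[\Bigl(\frac{\dotp{x}{x'}}{N}\Bigr)^{k+1}\Bigr].
\end{align*}
Next I condition on $x'$ and use $\dotp{x}{x'}/N=s'/\sqrt N$ with $s'=\dotp{x'/\sqrt N}{x}/\sqrt N$, where $x'/\sqrt N$ is a unit vector. The distribution of $s'$ is then that of $s$, so
\begin{align*}
    c=\kappa_N^2N\cdot N^{-\frac{k+1}{2}}\E_\pi[s^{k+1}]=\kappa_N^2 N^{-\frac{k-1}{2}}\E_\pi[s^{k+1}].
\end{align*}
Comparing with $\kappa_N\sqrt N\alpha_N=\kappa_N^2N\,\E_\pi[s^{k+1}]$, the identity I want is $\sigma_N^2=c$, but the powers of $N$ do not yet agree.

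I expect the main obstacle to be pinning down the exact normalization constants. I would therefore redo the bookkeeping with care. The coefficient in the $\dotp{x}{x'}$ computation is $\E[(\dotp{x}{x'}/N)^{k+1}]=\E[(s/\sqrt N)^{k+1}]=N^{-\frac{k+1}{2}}\E[s^{k+1}]$, which gives $c=\kappa_N^2N^{\frac{1-k}{2}}\E_\pi[s^{k+1}]$. At the same time $\E_\pi[s^{k+1}]\asymp N^{-\frac{k+1}{2}}$, so my current $c$ is of order $N^{-k}$. This contradicts the target $N^{-\frac{k-1}{2}}$, which means I am conflating two normalizations of $s$.

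The fix is that in \cref{lem:sphere-moments} the quantity $\dotp{w}{x}/\sqrt N$ with unit $w$ has moments $\asymp N^{-\ell/2}$, so $\dotp{x}{x'}/N$ is distributed exactly as that $s$ and not as $s/\sqrt N$. Correcting this gives $c=\kappa_N^2N\,\E_\pi[s^{k+1}]=\kappa_N\sqrt N\alpha_N\asymp_kN^{-\frac{k-1}{2}}$, which is the claimed $\sigma_N^2$. This also completes the proof that $\zeta\sim\cN(0,\sigma_N^2I)$.
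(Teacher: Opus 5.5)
Your proposal is correct and follows the paper's proof step for step. You use Stein's lemma to write $u_H=\kappa_N\E_\pi[xH(x)]$, parity for even $k$, the second claim of \cref{lem:sphere-moments} to show the signal term is parallel to $\plant$, and rotation invariance with \cref{lem:commutants-orthogonal} plus a trace computation for the covariance of $\zeta$. The one blemish is the detour in the trace step: with your own definition $s'=\dotp{x'/\sqrt N}{x}/\sqrt N$ one has $\dotp{x}{x'}/N=s'$ exactly, not $s'/\sqrt N$. So conditioning on $x'$ gives $\E[(\dotp{x}{x'}/N)^{k+1}]=\E_\pi[m(x)^{k+1}]$ directly, and hence $\sigma_N^2=\kappa_N^2N\,\E_\pi[m^{k+1}]=\kappa_N\sqrt N\alpha_N$, which is the corrected version you end with. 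In a write-up, drop the erroneous intermediate computation.
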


\begin{proof}
By Stein's lemma (\cref{lem:spherical-stein}) and \cref{eq:tensor-pca-overlap-noise}, we obtain
\begin{align}
\label{eq:average-gradient-decomposition}
    u_H=\kappa_N\E_\pi[xH(x)]
    =N\lambda\kappa_N\E_\pi[xm(x)^k]+\zeta,
    \qquad
    \zeta=\kappa_N\E_\pi[xH_0(x)].
\end{align}
If $k$ is even, then $H(-x)=H(x)$, and invariance of $\pi$ under $x\mapsto-x$ implies that $u_H=0$.

Henceforth, assume $k$ is odd. We first analyze the signal term in \cref{eq:average-gradient-decomposition}. For every $v\in\bS^{N-1}$ which is orthogonal to $\plant$, the second claim of \cref{lem:sphere-moments}, applied with $(w,z)=(v,\plant)$, $g(t)=t$, and $h(t)=t^k$, yields $\E_\pi[\dotp{v}{x}m(x)^k]=0$. Hence, $\E_\pi[xm(x)^k]$ is parallel to $\plant$. Taking the inner product with $\plant$ and using $\dotp{\plant}{x}=\sqrt N\,m(x)$, we obtain
\begin{align}
\label{eq:average-gradient-signal}
    \kappa_N\E_\pi\bigl[xm(x)^k\bigr]
    =\kappa_N\E_\pi\bigl[\dotp{\plant}{x}m(x)^k\bigr]\plant
    =\alpha_N\plant,
    \qquad
    \alpha_N\doteq\kappa_N\sqrt N\,\E_\pi\bigl[m(x)^{k+1}\bigr].
\end{align}
Since $k+1$ is even, \cref{lem:sphere-moments} implies that $\alpha_N\asymp_kN^{-\nicefrac{k}{2}}$. We conclude that the signal term in \cref{eq:average-gradient-decomposition} is $N\lambda\alpha_N\plant$.

We now analyze the noise term in \cref{eq:average-gradient-decomposition}. Since $\zeta$ is a linear function of the centered Gaussian tensor $G$, it is a centered Gaussian vector, and its law is uniquely determined by its covariance $\Sigma\doteq\E_G[\zeta\zeta^\top]$. Furthermore, since $\zeta=\kappa_N\E_\pi[xH_0(x)]$, we infer the following from \cref{eq:kspin-covariance}
\begin{align}
\label{eq:average-gradient-noise-covariance}
    \Sigma
    =\kappa_N^2\E_{x,y\,\iidsim\pi}\bigl[xy^\top\E_G[H_0(x)H_0(y)]\bigr]
    =\kappa_N^2N\,\E_{x,y\,\iidsim\pi}\left[xy^\top\left(\frac{\dotp{x}{y}}{N}\right)^k\right].
\end{align}
Fix $R\in O(N,\R)$. Since $\pi$ is invariant under $O(N,\R)$ and $\dotp{Rx}{Ry}=\dotp{x}{y}$, the change of variables $(x,y)\mapsto(Rx,Ry)$ in \cref{eq:average-gradient-noise-covariance} yields $R\Sigma R^\top=\Sigma$. Thus, $\Sigma$ commutes with every element of $O(N,\R)$, and \cref{lem:commutants-orthogonal} implies that $\Sigma=\sigma_N^2I$ for some $\sigma_N^2\ge0$.

Observe that by $O(N,\R)$ invariance of $\pi$, the following holds for any $x\in\SN$ that is independent of $y$:
\begin{align}
\label{eq:average-gradient-overlap-moment}
    \E_{y\sim\pi}\left[\left(\frac{\dotp{x}{y}}{N}\right)^{k+1}\right]
    =\E_{y\sim\pi}\bigl[m(y)^{k+1}\bigr].
\end{align}
Taking the trace in \cref{eq:average-gradient-noise-covariance} and using $\Tr(xy^\top)=\dotp{x}{y}$, we obtain the following from \cref{eq:average-gradient-signal,eq:average-gradient-overlap-moment}:
\begin{align*}
    \sigma_N^2
    &=\frac{1}{N}\Tr(\Sigma)
    =\kappa_N^2\E_{x,y\,\iidsim\pi}\left[\dotp{x}{y}\left(\frac{\dotp{x}{y}}{N}\right)^k\right]\\
    &=\kappa_N^2N\,\E_{x\sim\pi}\left[\E_{y\sim\pi}\left[\left(\frac{\dotp{x}{y}}{N}\right)^{k+1}\right]\right]
    =\kappa_N^2N\,\E_{y\sim\pi}\bigl[m(y)^{k+1}\bigr]\\
    &=\kappa_N\sqrt N\cdot\kappa_N\sqrt N\,\E_{y\sim\pi}\bigl[m(y)^{k+1}\bigr]
    =\kappa_N\sqrt N\alpha_N.
\end{align*}
Thus, we conclude that $\zeta\sim\cN(0,\sigma_N^2I)$ with $\sigma_N^2=\kappa_N\sqrt N\alpha_N\asymp_kN^{-\tfrac{k-1}{2}}$.
\end{proof}

\begin{remark}
\label{rem:average-spherical-gradient-snr}
We make the following observations from \cref{thm:avg-spherical-grad-snr}. For even $k$, the symmetry $x\mapsto-x$ forces $u_H=0$, so the average spherical gradient carries no information about the planted signal. For odd $k$, the identity $\sigma_N^2=\kappa_N\sqrt N\alpha_N$ implies that $u_H=\kappa_N^{-1}\lambda\sigma_N^2\xopt+\zeta$. Thus, $u_H$ is a noisy observation of the planted vector $\xopt$, with signal-to-noise ratio $\kappa_N^{-1}\lambda\sigma_N\asymp_k\lambda N^{-\tfrac{k-1}{4}}$. In particular, at the conjectured algorithmic threshold $\lambda\gtrsim_kN^{\tfrac{k-2}{4}}$, the SNR is at least $\Omega(N^{-\nicefrac{1}{4}})$. This is formalized in \cref{cor:avg-spherical-grad-correlation} below.
\end{remark}

\begin{corollary}[Norm and Correlation of the Average Spherical Gradient]
\label{cor:avg-spherical-grad-correlation}
Let $k\ge3$ be odd and $\lambda>0$, and define $\widehat u_H\doteq\tfrac{\sqrt N\,u_H}{\norm{u_H}}$. Then,
\begin{align*}
    \Prb\left[
        c_k\min\left\{1,\lambda N^{-\tfrac{k-1}{4}}\right\}
        \le m(\widehat u_H)
        \le C_k\min\left\{1,\lambda N^{-\tfrac{k-1}{4}}\right\}
    \right]
    \ge1-C\exp\left(-c_k\min\left\{N,\lambda^2N^{-\tfrac{k-3}{2}}\right\}\right).
\end{align*}
In particular, if $\lambda\ge C_kN^{\tfrac{k-2}{4}}$, then, for all sufficiently large $N$,
\begin{align*}
    \Prb\left[m(\widehat u_H)\ge N^{-\nicefrac{1}{4}}\right]
    \ge1-\exp(-c_k\sqrt N).
\end{align*}
Furthermore,
\begin{align*}
    \Prb\left[\norm{u_H}\ge c\bigl(N\lambda\alpha_N+\sigma_N\sqrt N\bigr)\right]
    \ge1-C\exp(-cN).
\end{align*}
\end{corollary}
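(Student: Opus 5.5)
We will work directly from the decomposition of \cref{thm:avg-spherical-grad-snr}: for odd $k$, $u_H=s\,\plant+\zeta$ with $s\doteq N\lambda\alpha_N\asymp_k\lambda N^{1-k/2}$ and $\zeta\sim\cN(0,\sigma_N^2I)$, $\sigma_N^2\asymp_kN^{-\frac{k-1}{2}}$. Since $\norm{\plant}=1$, we have
\begin{align*}
    m(\widehat u_H)=\frac{\dotp{\plant}{u_H}}{\norm{u_H}}=\frac{s+g}{\norm{u_H}},\qquad g\doteq\dotp{\plant}{\zeta}\sim\cN(0,\sigma_N^2).
\end{align*}
Decompose $\zeta=g\plant+\zeta_\perp$ with $\zeta_\perp$ independent of $g$ and $\norm{\zeta_\perp}^2\sim\sigma_N^2\chi^2_{N-1}$. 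Then $\norm{u_H}^2=(s+g)^2+\norm{\zeta_\perp}^2$. All three claims reduce to controlling $g$ and $\norm{\zeta_\perp}$.

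For the norm bound, we use $\chi^2$ concentration (Laurent--Massart). With probability $1-C e^{-cN}$, $\norm{\zeta_\perp}^2\in[\sigma_N^2N/2,\,2\sigma_N^2N]$ and $|g|\le\sigma_N\sqrt N/8$. Then $\norm{u_H}\ge\max\{s+g,\norm{\zeta_\perp}\}$. If $s\ge\sigma_N\sqrt N/4$, then $s+g\ge s/2$; otherwise $\norm{\zeta_\perp}\gtrsim\sigma_N\sqrt N\gtrsim s$. In either case $\norm{u_H}\ge c(s+\sigma_N\sqrt N)$, which is the third claim. On the same event, $\norm{u_H}\le s+|g|+\norm{\zeta_\perp}\le C(s+\sigma_N\sqrt N)$.

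For the correlation, observe that $s/(\sigma_N\sqrt N)\asymp_k\lambda N^{1-k/2}N^{\frac{k-1}{4}}N^{-1/2}=\lambda N^{-\frac{k-1}{4}}$. The upper bound follows from $m\le1$ together with $m\le (s+|g|)/\norm{\zeta_\perp}$. This gives $m\lesssim s/(\sigma_N\sqrt N)+|g|/(\sigma_N\sqrt N)$, and we need $|g|\lesssim s$. For the lower bound we need $s+g\ge s/2$, i.e. $|g|\le s/2$, and then $m\ge (s/2)/(C(s+\sigma_N\sqrt N))\gtrsim\min\{1,s/(\sigma_N\sqrt N)\}$. The Gaussian tail gives $\Prb[|g|>s/2]\le2\exp(-s^2/(8\sigma_N^2))$, and $s^2/\sigma_N^2\asymp_k\lambda^2N^{2-k}N^{\frac{k-1}{2}}=\lambda^2N^{-\frac{k-3}{2}}$, which matches the stated exponent. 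In the regime where $\lambda$ is large, $\lambda^2N^{-\frac{k-3}{2}}$ may exceed $N$, and then the $\chi^2$ event sets the rate $e^{-cN}$; this explains the $\min$.

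The upper bound also needs care when $|g|$ is not small relative to $s$. Since $|g|\le\sigma_N\sqrt N/8$ on the norm event, in the case $s\lesssim\sigma_N\sqrt N$ we only get $m\lesssim s/(\sigma_N\sqrt N)+1/8$, which is too weak when $s\ll\sigma_N\sqrt N$. So we instead intersect with the event $|g|\le s/2$, giving $m\le (3s/2)/\norm{\zeta_\perp}\lesssim s/(\sigma_N\sqrt N)$. This event is already available with the correct probability from the lower-bound step, so both bounds hold on the intersection of the $\chi^2$ event and $\{|g|\le s/2\}$.

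For the final claim, take $\lambda\ge C_kN^{\frac{k-2}{4}}$. Then $\lambda N^{-\frac{k-1}{4}}\ge C_kN^{-1/4}$, so choosing $C_k$ large relative to $1/c_k$ gives $m\ge N^{-1/4}$. In the exponent, $\lambda^2N^{-\frac{k-3}{2}}\ge C_k^2N^{\frac{k-2}{2}-\frac{k-3}{2}}=C_k^2\sqrt N$, so the failure probability is at most $C e^{-c_k\sqrt N}$, and the constant $C$ is absorbed for large $N$.

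The only delicate point is the bookkeeping of the two regimes $s\gtrless\sigma_N\sqrt N$, together with ensuring that the exponents of $N$ combine correctly using $\alpha_N\asymp N^{-k/2}$ and $\sigma_N^2=\kappa_N\sqrt N\alpha_N$.
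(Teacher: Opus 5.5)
Your proof is correct and follows essentially the same route as the paper: decompose $u_H$ along $\plant$ and its orthogonal complement, use $\chi^2$ concentration for $\norm{\zeta_\perp}$ and a Gaussian tail for the parallel component, and split into the cases $s\gtrless\sigma_N\sqrt N$ for the norm bound. The differences are only cosmetic. The paper controls the parallel component via the single event $\abs{Z}\le\tfrac12\sqrt N\min\{s_N,1\}$, which yields $m(\widehat u_H)\asymp s_N/\sqrt{1+s_N^2}$ directly, and it uses the one-sided event $Z\ge-\tfrac12\sqrt N$ for the norm bound; you instead use $\{\abs{g}\le s/2\}$ and $\{\abs{g}\le\sigma_N\sqrt N/8\}$, which give the same rates.
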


\begin{proof}
By \cref{thm:avg-spherical-grad-snr}, $u_H=N\lambda\alpha_N\plant+\zeta$, where $\zeta\sim\cN(0,\sigma_N^2I)$. Let $Z\doteq\sigma_N^{-1}\dotp{\zeta}{\plant}$ and $\zeta_\perp\doteq Q_{\plant}\zeta$, so that $Z\sim\cN(0,1)$ and $\sigma_N^{-2}\norm{\zeta_\perp}^2$ is a $\chi^2$ random variable with $N-1$ degrees of freedom. Since $u_H=(N\lambda\alpha_N+\sigma_NZ)\plant+\zeta_\perp$, we have
\begin{align}
\label{eq:average-gradient-orthogonal-decomposition}
    \dotp{u_H}{\plant}=N\lambda\alpha_N+\sigma_NZ,
    \qquad
    \norm{u_H}^2=(N\lambda\alpha_N+\sigma_NZ)^2+\norm{\zeta_\perp}^2.
\end{align}
By Gaussian concentration~\citep[cf.][Theorem~3.1.1]{vershynin2018high}, the event $\cE_\perp\doteq\{\tfrac14\sigma_N^2N\le\norm{\zeta_\perp}^2\le2\sigma_N^2N\}$ holds with probability at least $1-2\exp(-cN)$.

We now prove the correlation bound. Define $s_N\doteq\tfrac{N\lambda\alpha_N}{\sigma_N\sqrt N}$. Then, the identity $\sigma_N^2=\kappa_N\sqrt N\alpha_N$, together with $\kappa_N\asymp1$ and $\sigma_N\asymp_kN^{-\tfrac{k-1}{4}}$ yields the following:
\begin{align*}
    s_N=\frac{\lambda\sigma_N}{\kappa_N}\asymp_k\lambda N^{-\tfrac{k-1}{4}}.
\end{align*}
By a standard Gaussian tail bound applied to $Z$, and a union bound, we obtain
\begin{align*}
    \Prb\left[\cE_\perp\cap\left\{\abs{Z}\le\frac12\sqrt N\min\{s_N,1\}\right\}\right]
    \ge1-2\exp(-cN)-2\exp\left(-\frac18N\min\{1,s_N^2\}\right).
\end{align*}
On the above event, we have $\sigma_N\abs{Z}\le\tfrac12\sigma_N\sqrt Ns_N=\tfrac12N\lambda\alpha_N$, and \cref{eq:average-gradient-orthogonal-decomposition} yields
\begin{align*}
    \frac12N\lambda\alpha_N
    &\le\dotp{u_H}{\plant}
    \le\frac32N\lambda\alpha_N,\\
    \frac14\bigl(N^2\lambda^2\alpha_N^2+\sigma_N^2N\bigr)
    &\le\norm{u_H}^2
    \le\frac94\bigl(N^2\lambda^2\alpha_N^2+\sigma_N^2N\bigr).
\end{align*}
In particular, $u_H\ne0$, and since $m(\widehat u_H)=\tfrac{\dotp{u_H}{\plant}}{\norm{u_H}}$, we obtain
\begin{align*}
    \frac13\cdot\frac{s_N}{\sqrt{1+s_N^2}}
    \le m(\widehat u_H)
    \le3\cdot\frac{s_N}{\sqrt{1+s_N^2}}.
\end{align*}
Since $\tfrac{s}{\sqrt{1+s^2}}\asymp\min\{1,s\}$ for every $s>0$ and $s_N\asymp_k\lambda N^{-\tfrac{k-1}{4}}$, the correlation bound follows after adjusting constants appropriately. Moreover, if $\lambda\ge C_kN^{\tfrac{k-2}{4}}$, then $\lambda N^{-\tfrac{k-1}{4}}\ge C_kN^{-\nicefrac{1}{4}}$, so the second claim follows for sufficiently large $N$ by adjusting constants appropriately.

Finally, we prove the norm bound. By a standard Gaussian tail bound, the event $\cE_\perp\cap\{Z\ge-\tfrac12\sqrt N\}$ holds with probability at least $1-Ce^{-cN}$. On this event, we obtain the following from \cref{eq:average-gradient-orthogonal-decomposition}
\begin{align*}
    \norm{u_H}
    \ge\max\bigl\{N\lambda\alpha_N+\sigma_NZ,\norm{\zeta_\perp}\bigr\}
    \ge\max\Bigl\{N\lambda\alpha_N-\frac12\sigma_N\sqrt N,\frac12\sigma_N\sqrt N\Bigr\}
    \ge\frac14\bigl(N\lambda\alpha_N+\sigma_N\sqrt N\bigr),
\end{align*}
where the last inequality holds since $N\lambda\alpha_N-\tfrac12\sigma_N\sqrt N\ge\tfrac12N\lambda\alpha_N$ if $N\lambda\alpha_N\ge\sigma_N\sqrt N$, and $\tfrac12\sigma_N\sqrt N\ge\tfrac14(N\lambda\alpha_N+\sigma_N\sqrt N)$ otherwise. This concludes the proof.
\end{proof}

\subsection{Analysis of the Average Spherical Hessian}
\label{subsec:average-spherical-hessian}

We now characterize the structure of the average spherical Hessian. In particular, we observe that for even $k$, it exhibits a spiked Wigner type structure.

\begin{theorem}[Signal-Noise Decomposition of the Average Spherical Hessian]
\label{thm:avg-spherical-hess-snr}
Let $H$ denote the tensor PCA Hamiltonian defined in \cref{eq:tensor-pca-overlap-noise}, and define the average spherical Hessian $A_H\doteq\E_\pi[\gradsp^2H(x)]$. Then, the following holds:
\begin{enumerate}
    \item If $k$ is odd, then $A_H=0$ holds deterministically.
    \item If $k$ is even, then
    \begin{align*}
        A_H=\lambda\gamma_N\left(\plant(\plant)^\top-\frac{I}{N}\right)+W_N,
    \end{align*}
    where $\gamma_N\asymp_kN^{-\tfrac{k-2}{2}}$, and $W_N\doteq\E_\pi[\gradsp^2H_0(x)]$ satisfies
    \begin{align*}
        \Prb\left[\norm{W_N}_{\mathrm{op}}\ge C_kN^{-\tfrac{k-2}{4}}\right]
        \le\exp(-cN).
    \end{align*}
\end{enumerate}
\end{theorem}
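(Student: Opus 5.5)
We follow the template of \cref{thm:avg-spherical-grad-snr}, now using the second identity of Stein's lemma (\cref{lem:spherical-stein}), $A_H=\E_\pi[(xx^\top-I)H(x)]$. This splits $A_H$ into a signal term $N\lambda\E_\pi[(xx^\top-I)m(x)^k]$ and the noise term $W_N=\E_\pi[(xx^\top-I)H_0(x)]$.

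For odd $k$ we have $H(-x)=-H(x)$, while $xx^\top-I$ is even. Invariance of $\pi$ under $x\mapsto-x$ therefore gives $A_H=0$. For even $k$, we apply the last claim of \cref{lem:sphere-moments} with $w=\plant$ and $\ell=k$, and use $\E_\pi[I\,m^k]=\E_\pi[m^k]I$:
\begin{align*}
    N\E_\pi\bigl[(xx^\top-I)m(x)^k\bigr]
    =N\E_\pi[m^k]\Bigl(\frac{N}{N+k}(I+k\plant(\plant)^\top)-I\Bigr)
    =\frac{N^2k\,\E_\pi[m^k]}{N+k}\Bigl(\plant(\plant)^\top-\frac{I}{N}\Bigr).
\end{align*}
This identifies $\gamma_N\doteq\tfrac{N^2k}{N+k}\E_\pi[m^k]$. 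Since $\E_\pi[m^k]\asymp_kN^{-k/2}$ by \cref{lem:sphere-moments}, we get $\gamma_N\asymp_kN^{1-k/2}=N^{-\frac{k-2}{2}}$.

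For the noise, $W_N$ is a centered Gaussian symmetric matrix that is linear in $G$, so we bound its operator norm by Gaussian concentration plus a net argument. For a fixed unit $v$, the entry $v^\top W_Nv=\E_\pi[(\dotp{v}{x}^2-1)H_0(x)]$ is Gaussian with variance
\begin{align*}
    \E_{x,y\,\iidsim\pi}\Bigl[(\dotp{v}{x}^2-1)(\dotp{v}{y}^2-1)\,N\Bigl(\frac{\dotp{x}{y}}{N}\Bigr)^k\Bigr].
\end{align*}
We compute this variance using rotation invariance. Conditionally on $x$, the inner expectation over $y$ is $N^{1-k}\,x^{\otimes k}$ contracted against $\E_y[(\dotp{v}{y}^2-1)y^{\otimes k}]$, and \cref{lem:sphere-moments}-type moment formulas give each such term size $\lesssim_k N\cdot N^{-k/2}\cdot N^{-1}\cdot O(\dotp{v}{x}^2+1)$. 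A cleaner route is to check that the variance equals, up to $\kappa_N$ factors, $\gamma_N/N$ times a constant. This is done by the same trace trick as in \cref{thm:avg-spherical-grad-snr}: by invariance, the covariance operator of $W_N$ on $\Sym_N$ commutes with conjugation by $O(N,\R)$, so it acts as $aP_{\mathrm{tr}}+bP_0$, a combination of the projections onto multiples of $I$ and onto traceless matrices. We compute $b$ from the trace $\E\norm{W_N}_F^2=N\,\E_{x,y}[(\dotp{x}{y}^2/N^2\cdot N^2-2N+N)(\dotp{x}{y}/N)^k]$, suitably simplified, which should be $\asymp_k N^2\cdot N^{-k/2}$. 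Dividing by $\dim\Sym_N\asymp N^2$ gives per-direction variance $\sigma^2\lesssim_k N^{-k/2}$.

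Given this, \cref{lem:commutants-orthogonal}-style invariance is not even needed for the tail bound. Standard Gaussian concentration of the supremum over a $\tfrac14$-net of $\bS^{N-1}$ (size $9^N$) gives $\norm{W_N}_{\mathrm{op}}\lesssim\sigma\sqrt N\lesssim_k N^{-\frac{k-2}{4}}$ with probability $1-\exp(-cN)$. The main obstacle is the uniform variance bound $\sup_{v}\mathrm{Var}(v^\top W_Nv)\lesssim_kN^{-k/2}$. We will handle it by expanding $(\dotp{x}{y}/N)^k$ and integrating $y$ first with \cref{lem:sphere-moments}. The key cancellation is that the $-1$ in $\dotp{v}{y}^2-1$ kills the $O(N^{1-k/2})$ isotropic contribution, leaving only the $k\,vv^\top$-type term, which is smaller by a factor $1/N$.
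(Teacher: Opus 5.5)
Your proposal follows the paper's proof: Stein's lemma for the Hessian, the parity argument, and the last claim of \cref{lem:sphere-moments} for the signal. For the noise, your committed route of integrating out $y$ first is exactly the paper's. By rotation invariance, your signal computation (with $x/\sqrt N$ in place of $\plant$) gives $\E_{y\sim\pi}[(yy^\top-I)(\dotp{x}{y}/N)^k]=\tfrac{\gamma_N}{N^2}(xx^\top-I)$, hence $\mathrm{Var}(v^\top W_Nv)=\tfrac{\gamma_N}{N}\E_\pi[(\dotp{v}{x}^2-1)^2]=\tfrac{2(N-1)\gamma_N}{N(N+2)}$ exactly, without expanding $(\dotp{x}{y}/N)^k$, followed by the same net argument.

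One caveat on your alternative trace route. Dividing $\E\norm{W_N}_F^2$ by $\dim\Sym_N$ only bounds the traceless coefficient $b$, whereas $\mathrm{Var}(v^\top W_Nv)=a/N+b(1-1/N)$. So you must also note that $a=0$, which holds because $\Tr W_N=\E_\pi[(\norm{x}^2-N)H_0(x)]=0$ identically.
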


\begin{proof}
By Stein's lemma (\cref{lem:spherical-stein}) and \cref{eq:tensor-pca-overlap-noise}, we obtain
\begin{align}
\label{eq:average-hessian-decomposition}
    \begin{aligned}
        A_H
        &=\E_\pi[(xx^\top-I)H(x)]
        =A_{\mathrm{sig}}+W_N,\\
        A_{\mathrm{sig}}
        &\doteq N\lambda\E_\pi[(xx^\top-I)m(x)^k],
        \qquad
        W_N=\E_\pi[(xx^\top-I)H_0(x)].
    \end{aligned}
\end{align}
If $k$ is odd, then $H(-x)=-H(x)$ while $xx^\top-I$ is invariant under $x\mapsto-x$. Consequently, the invariance of $\pi$ under the orthogonal transformation $x\mapsto-x$ implies $A_H=0$.

Henceforth, assume $k$ is even. We first analyze the signal term in \cref{eq:average-hessian-decomposition}. Applying the last claim of \cref{lem:sphere-moments} with $w=\plant$ and $\ell=k$, we obtain
\begin{align}
\label{eq:average-hessian-signal}
    \begin{aligned}
        N\E_\pi\bigl[(xx^\top-I)m(x)^k\bigr]
        &=N\E_\pi\bigl[m(x)^k\bigr]\left[\frac{N}{N+k}\bigl(I+k\plant(\plant)^\top\bigr)-I\right]\\
        &=N\E_\pi\bigl[m(x)^k\bigr]\left[\frac{Nk}{N+k}\plant(\plant)^\top-\frac{k}{N+k}I\right]\\
        &=\gamma_N\left(\plant(\plant)^\top-\frac{I}{N}\right),
    \end{aligned}
\end{align}
where $\gamma_N$ is defined as follows:
\begin{align*}
    \gamma_N\doteq\frac{kN^2}{N+k}\E_\pi\bigl[m(x)^k\bigr].
\end{align*}
Since $k$ is even, \cref{lem:sphere-moments} implies that $\gamma_N\asymp_kN^{-\tfrac{k-2}{2}}$. We conclude that the signal term in \cref{eq:average-hessian-decomposition} is $A_{\mathrm{sig}}=\lambda\gamma_N\bigl(\plant(\plant)^\top-\tfrac{I}{N}\bigr)$.

We now analyze the noise term in \cref{eq:average-hessian-decomposition}. Since $W_N$ is a linear function of the centered Gaussian tensor $G$, the quadratic form $w^\top W_Nw$ is a centered Gaussian random variable for every $w\in\bS^{N-1}$. Fixing an arbitrary $w\in\bS^{N-1}$ and observing that $w^\top W_Nw=\E_\pi[(\dotp{w}{x}^2-1)H_0(x)]$, we infer the following from \cref{eq:kspin-covariance}:
\begin{align}
\label{eq:average-hessian-noise-variance}
    \begin{aligned}
        \E_G\bigl[(w^\top W_Nw)^2\bigr]
        &=\E_{x,y\,\iidsim\pi}\bigl[(\dotp{w}{x}^2-1)(\dotp{w}{y}^2-1)\E_G[H_0(x)H_0(y)]\bigr]\\
        &=N\,\E_{x,y\,\iidsim\pi}\left[(\dotp{w}{x}^2-1)(\dotp{w}{y}^2-1)\left(\frac{\dotp{x}{y}}{N}\right)^k\right].
    \end{aligned}
\end{align}
Observe that by $O(N,\R)$ invariance of $\pi$, \cref{eq:average-hessian-signal} implies the following for any $x\in\SN$ that is independent of $y$:
\begin{align}
\label{eq:average-hessian-overlap-moment}
    \E_{y\sim\pi}\left[(yy^\top-I)\left(\frac{\dotp{x}{y}}{N}\right)^k\right]
    =\frac{\gamma_N}{N^2}\bigl(xx^\top-I\bigr).
\end{align}
Using $\dotp{w}{y}^2-1=w^\top(yy^\top-I)w$ together with \cref{lem:sphere-moments}, we obtain the following from \cref{eq:average-hessian-noise-variance,eq:average-hessian-overlap-moment}:
\begin{align*}
    \E_G\bigl[(w^\top W_Nw)^2\bigr]
    &=N\,\E_{x\sim\pi}\left[(\dotp{w}{x}^2-1)\,w^\top\E_{y\sim\pi}\left[(yy^\top-I)\left(\frac{\dotp{x}{y}}{N}\right)^k\right]w\right]\\
    &=\frac{\gamma_N}{N}\E_\pi\bigl[(\dotp{w}{x}^2-1)^2\bigr]
    =\frac{\gamma_N}{N}\Bigl(\E_\pi\bigl[\dotp{w}{x}^4\bigr]-1\Bigr)
    =\frac{2(N-1)\gamma_N}{N(N+2)}.
\end{align*}
Thus, $w^\top W_Nw$ is a centered Gaussian random variable with variance at most $\tfrac{2\gamma_N}{N}$. By a Gaussian tail bound and a routine $\eps$-net argument on $\bS^{N-1}$~\citep[cf.][Chapter~4]{vershynin2018high}, we obtain
\begin{align*}
    \Prb\left[\norm{W_N}_{\mathrm{op}}\ge8\sqrt{\gamma_N}\right]
    \le e^{-N}.
\end{align*}
Since $\gamma_N\asymp_kN^{-\tfrac{k-2}{2}}$, the proof is concluded by adjusting constants appropriately.
\end{proof}

\begin{remark}
\label{rem:average-spherical-hessian-snr}
For even $k$, we observe from \cref{thm:avg-spherical-hess-snr} that the rank-one spike in the average spherical Hessian is precisely $\plant$. Indeed, $A_{\mathrm{sig}}$ has top eigenvector $\plant$ and eigengap $\lambda_1(A_{\mathrm{sig}})-\lambda_2(A_{\mathrm{sig}})=\lambda\gamma_N$. Relative to the noise scale $\sqrt{\gamma_N}$, the signal-to-noise ratio is therefore $\lambda\sqrt{\gamma_N}\asymp_k\lambda N^{-\tfrac{k-2}{4}}$. Hence, the Davis--Kahan theorem~\citep[Theorem~4.5.5]{vershynin2018high} suggests that a top unit eigenvector of $A_H$ achieves constant absolute correlation with $\plant$ with probability at least $1-\exp(-cN)$ whenever $\lambda$ is above the conjectured algorithmic threshold, namely when $\lambda\gtrsim N^{\tfrac{k-2}{4}}$.
\end{remark}

\section{Ergodic Averages of High Temperature Langevin Dynamics}
\label{sec:ergodic-averaging}

Our goal in this section is to establish the following recovery guarantee for ergodic averages of high temperature spherical Langevin dynamics.

\begin{theorem}[Recovery by Ergodic Averaging for Tensor PCA]
\label{thm:tpca-averaged-differential-recovery}
Let $\lambda\ge C_kN^{\tfrac{k-2}{4}}$, and let $(\tilX_t)_{t\ge0}$ denote spherical Langevin dynamics for the tensor PCA Hamiltonian $H$ at inverse temperature $\beta_0>0$, initialized at $\tilX_0\sim\pi$. Define
\begin{align*}
    \Xbar_T\doteq\frac{1}{T}\int_0^T\tilX_t\,dt,
    \qquad
    \Sigmabar_T\doteq\frac{1}{T}\int_0^T\tilX_t\tilX_t^\top\,dt,
\end{align*}
and write $u_H=\E_\pi[\gradsp H]$ and $A_H=\E_\pi[\gradsp^2H]$. For every fixed $\vartheta\in(0,1)$ and all sufficiently large $N$, the following hold whenever $0<\beta_0\lesssim_k\lambda^{-1}N^{-\lfloor k/2\rfloor-\vartheta}$ and $T_{\mathrm{av}}\gtrsim_k\beta_0^{-2}\lambda^{-2}N^{2\lfloor k/2\rfloor+2\vartheta}$, with sufficiently small and large implicit constants, respectively:
\begin{enumerate}
    \item
    If $k$ is odd, then, with probability at least $1-\exp(-c_kN)$, we have $\Xbar_{T_{\mathrm{av}}}\ne0$ and
    \begin{align*}
        \frac{1}{N}\norm{\widehat X_{T_{\mathrm{av}}}-\widehat u_H}^2
        \le N^{-2\vartheta}\min\{1,\lambda^2\sigma_N^2\},
        \qquad
        \widehat X_{T_{\mathrm{av}}}\doteq\frac{\sqrt N\,\Xbar_{T_{\mathrm{av}}}}{\norm{\Xbar_{T_{\mathrm{av}}}}},
    \end{align*}
    where $\widehat u_H$ is as defined in \cref{cor:avg-spherical-grad-correlation} and $\sigma_N\asymp_kN^{-\tfrac{k-1}{4}}$ is as defined in \cref{thm:avg-spherical-grad-snr}.

    \item If $k$ is even and $\widehat v$ is a top unit eigenvector of $\Sigmabar_{T_{\mathrm{av}}}$, then, with probability at least $1-\exp(-c_kN)$,
    \begin{align*}
        \bigl|m(\sqrt N\,\widehat v)\bigr|
        \ge 1-C_k\left(\frac{N^{\tfrac{k-2}{2}}}{\lambda^2}+N^{-2\vartheta}\right).
    \end{align*}
\end{enumerate}
\end{theorem}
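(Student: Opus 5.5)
The plan is to avoid the stationary measure and mixing time altogether, and instead compare the ergodic averages directly with the averaged derivatives through Dynkin's formula and a Poisson equation for spherical Brownian motion. First the error is reduced to a generic deterministic statement. Apply \cref{lem:langevin-dynkin} to the coordinate functions $f(x)=x$. Since $\lapsp x=-\kappa_Nx$ and $\gradsp x=Q_x$, we have $\cL_{H,\beta_0}x=-\tfrac{\kappa_N}{2}x+\beta_0\gradsp H(x)$. Integrating and dividing by $T$ gives
\begin{align*}
    \Xbar_T=\frac{2\beta_0}{\kappa_N}\cdot\frac1T\int_0^T\gradsp H(\tilX_t)\,dt-\frac{2}{\kappa_NT}\bigl(\tilX_T-\tilX_0-M_T^x\bigr).
\end{align*}
By \cref{lem:matrix-dynkin-uniform-concentration} with $A=1$, the boundary and martingale terms are $O(\sqrt{N}/T+\sqrt{N/T})$.

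The analogous computation with $f(x)=xx^\top$ uses $\lapsp(xx^\top)=2(I-xx^\top)-2\kappa_Nxx^\top+\ldots$ up to $O(1/N)$ corrections, together with Stein's lemma in the form $\cL(xx^\top)=(I-xx^\top)+\beta_0(x\gradsp H^\top+\gradsp Hx^\top)$. This expresses $\Sigmabar_T-I$ as $\beta_0$ times the ergodic average of $x\gradsp H^\top+\gradsp Hx^\top$, plus martingale and boundary errors of size $O(N/\sqrt T)$ in operator norm.

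The main step is to show that the ergodic average of a smooth observable $g$ (namely $\gradsp H$, or $x\gradsp H^\top+\gradsp Hx^\top$) along the perturbed diffusion is close to $\E_\pi[g]$. By \cref{lem:spherical-stein}, $\E_\pi[g]$ equals $u_H$ and $2A_H$ respectively. To do this I will solve the Poisson equation $\tfrac12\lapsp\phi=g-\E_\pi g$ for spherical Brownian motion. Spherical harmonics of degree $\ell$ are eigenfunctions of $\lapsp$ with eigenvalue $-\ell(\ell+N-2)/N$, so the Green operator has norm at most $2N/(N-1)$ on mean-zero functions. This is where the Bakry--\'Emery curvature-dimension bound enters: it gives $\norm{\gradsp P_t\phi}\le e^{-ct}\norm{\gradsp\phi}$, which yields gradient bounds $\norm{\gradsp\phi}_\infty\lesssim\norm{\gradsp g}_\infty$ via $\phi=-2\int_0^\infty P_t(g-\E_\pi g)\,dt$. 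Applying Dynkin's formula to $\phi$ under the tilted generator $\cL_{H,\beta_0}=\tfrac12\lapsp+\beta_0\dotp{\gradsp H}{\gradsp\cdot}$ gives
\begin{align*}
    \frac1T\int_0^T g(\tilX_t)\,dt-\E_\pi g=\frac1T\bigl[\phi(\tilX_T)-\phi(\tilX_0)-M_T^\phi\bigr]-\frac{\beta_0}{T}\int_0^T\dotp{\gradsp H}{\gradsp\phi}(\tilX_t)\,dt.
\end{align*}
The last term is the bias. Its naive bound $\beta_0\norm{\gradsp H}_\infty\norm{\gradsp\phi}_\infty$ is too crude, and controlling it is the main obstacle. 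The $\beta_0$-bias must be small compared with the signal scale $\norm{u_H}\gtrsim N\lambda\alpha_N+\sigma_N\sqrt N$, which is tiny. I would iterate the Poisson trick once more, writing $\dotp{\gradsp H}{\gradsp\phi}$ as its $\pi$-mean plus a fluctuation. The fluctuation is handled again by the Green operator, with the ladder relations (\cref{lem:ladder-relations}) and \cref{rem:tensor-pca-sobolev-bound} controlling $\cG^n$ norms at cost $N(1+\lambda)$. The $\pi$-mean $\beta_0\E_\pi[\dotp{\gradsp H}{\gradsp\phi}]$ is the second-order term in $\beta_0$; by parity of $k$ it is $O(\beta_0\lambda N^{\lfloor k/2\rfloor}\cdot\text{signal})$. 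This is what forces $\beta_0\lesssim\lambda^{-1}N^{-\lfloor k/2\rfloor-\vartheta}$, and the martingale fluctuations $\sqrt{N/T}$ relative to $\beta_0\norm{u_H}$ force the stated $T_{\mathrm{av}}$. Both conditions are read off by matching the error against $N^{-\vartheta}$ times the signal size.

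Finally, for odd $k$, I combine $\norm{\Xbar_T-\tfrac{2\beta_0}{\kappa_N}u_H}\le N^{-\vartheta}\tfrac{2\beta_0}{\kappa_N}\norm{u_H}$ with the lower bound on $\norm{u_H}$ from \cref{cor:avg-spherical-grad-correlation} and apply \cref{lem:normalization-lipschitz}. This shows $\Xbar_T\neq 0$ and bounds $\tfrac1N\norm{\widehat X-\widehat u_H}^2$. The factor $\min\{1,\lambda^2\sigma_N^2\}$ comes from measuring the error against the signal component rather than $\norm{u_H}$, which I would absorb into the choice of constants. For even $k$, $\Sigmabar_T=I+2\beta_0(A_H+E)$ with $\norm{E}_{\mathrm{op}}\le N^{-\vartheta}\lambda\gamma_N$. \cref{thm:avg-spherical-hess-snr} gives eigengap $\lambda\gamma_N$ and noise $\sqrt{\gamma_N}$, so Davis--Kahan yields $1-m^2\lesssim(\sqrt{\gamma_N}+N^{-\vartheta}\lambda\gamma_N)^2/(\lambda\gamma_N)^2$, which is the claimed bound since $\gamma_N\asymp N^{-(k-2)/2}$.
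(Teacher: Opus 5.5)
Your skeleton is the paper's proof. You use Dynkin's formula for $x$ and $xx^\top$ to replace $\Xbar_T$ and $\Sigmabar_T-I$ by $\tfrac{2\beta_0}{\kappa_N}$ (resp.\ $\beta_0$) times ergodic averages of $\gradsp H$ and $x\gradsp H^\top+\gradsp H\,x^\top$. You then solve a Poisson equation for spherical Brownian motion, control the gradient of its solution by Bakry--\'Emery, apply the tilted Dynkin decomposition, and finish with \cref{lem:normalization-lipschitz} and Davis--Kahan. Two points need correcting.

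\textbf{The bias term is not an obstacle.} You declare $\tfrac{\beta_0}{T}\int_0^T\dotp{\gradsp H}{\gradsp\phi}(\tilX_t)\,dt$ the main obstacle, but the naive bound is exactly what the paper uses, and the hypothesis on $\beta_0$ is calibrated to it.
\begin{itemize}
\item On the event $\norm{H}_{\cG^2}\lesssim_k\lambda N$ (\cref{rem:tensor-pca-sobolev-bound}) you have $\norm{\gradsp H}\lesssim_k\lambda\sqrt N$ and $\norm{\gradsp\phi}_{\mathrm{op}}\lesssim_k\lambda$.
\item For odd $k$ the bias is therefore $\lesssim_k\beta_0\lambda^2\sqrt N$. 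Once $\beta_0\le c\lambda^{-1}N^{-\frac{k-1}{2}-\vartheta}$, this is at most $c\,\lambda N^{-\frac{k-2}{2}-\vartheta}\asymp_k c\,N^{-\vartheta}\cdot N\lambda\alpha_N$.
\item In the matrix case, $\norm{\gradsp(w^\top\phi w)}\lesssim_k\lambda\sqrt N$ for unit $w$, so the bias is $\lesssim_k\beta_0\lambda^2N$. Once $\beta_0\le c\lambda^{-1}N^{-\frac k2-\vartheta}$, this is at most $c\,N^{-\vartheta}\lambda\gamma_N$.
\end{itemize}
In fact, the size you assign to the $\pi$-mean in your second iteration is $\beta_0\lambda N^{\lfloor k/2\rfloor}$ times a signal of order $\lambda N^{-\frac{k-2}{2}}$. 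That is $\beta_0\lambda^2\sqrt N$ for odd $k$ and $\beta_0\lambda^2N$ for even $k$, i.e.\ literally the naive bound, so the iteration gains nothing under the stated hypotheses.

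The iteration is also not supported as written. Bounding the Poisson solution for the fluctuation of $\dotp{\gradsp H}{\gradsp\phi}$ via \cref{lem:bm-green-operator} requires control of $\gradsp^2\phi$. Neither the Bakry--\'Emery gradient estimate nor \cref{lem:ladder-relations} provides this; the latter concerns finite powers $L^\ell f$, not $\cR f$. Drop the iteration and bound the bias directly.

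\textbf{The factor $\min\{1,\lambda^2\sigma_N^2\}$ cannot be absorbed into constants.} Near the threshold $\lambda\sigma_N\asymp N^{-1/4}$. This factor is exactly what makes the output a valid warm start for \cref{thm:critical-recovery-warm-start}, whose radius is $\lambda\sigma_NN^{-\vartheta}$. Your intermediate bound $\norm{\Xbar_T-\tfrac{2\beta_0}{\kappa_N}u_H}\le N^{-\vartheta}\tfrac{2\beta_0}{\kappa_N}\norm{u_H}$ does not produce it.

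Instead, multiply by $\tfrac{\kappa_N}{2\beta_0}$ and observe that every error term is at most $cN^{-\vartheta}N\lambda\alpha_N$. This covers the bias above and also $\beta_0^{-1}\sqrt{N/T_{\mathrm{av}}}$ under the $T_{\mathrm{av}}$ hypothesis. The errors are thus proportional to the signal component rather than to $\norm{u_H}$. Combine this with $\norm{u_H}\gtrsim N\lambda\alpha_N+\sigma_N\sqrt N$ from \cref{cor:avg-spherical-grad-correlation} and $\sigma_N^2=\kappa_N\sqrt N\alpha_N$. The relative error is then $\lesssim cN^{-\vartheta}\min\{1,\lambda\sigma_N/\kappa_N\}$, and \cref{lem:normalization-lipschitz} gives the claim.

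Your even-$k$ Davis--Kahan step is fine. Two minor slips: $\lapsp(xx^\top)=2(I-xx^\top)$ holds exactly, with no correction terms. Since $P_t$ is generated by $\tfrac12\lapsp$, the solution of $\tfrac12\lapsp\phi=g-\E_\pi g$ is $-\int_0^\infty P_t(g-\E_\pi g)\,dt$, without the factor $2$.
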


\Cref{thm:tpca-averaged-differential-recovery}, which is proved in \cref{subsec:tpca-recovery-proof}, is obtained by specializing our results on ergodic averages of spherical Langevin dynamics for smooth potentials to tensor PCA. These results, presented in \cref{subsec:recovering-average-spherical-gradient,subsec:recovering-average-spherical-hessian}, extend the framework of Part~I~\citep{das2026averaged} from Glauber dynamics on the hypercube to spherical Langevin dynamics: in both settings, we view the high temperature dynamics as a perturbation of its infinite temperature counterpart, and control the ergodic averages via their associated Green operators, as elucidated in \cref{subsec:bm-functional-analysis}. We believe that these results and their proof techniques, which require minor technical modifications for extending from the hypercube to the sphere, may be of independent interest.

\subsection{Functional Analysis of Spherical Brownian Motion}
\label{subsec:bm-functional-analysis}

We use $(S_t)_{t\ge0}$ to denote spherical Brownian motion on $\SN$, i.e., the unique strong solution of the SDE
\begin{align*}
    dS_t=-\frac{N-1}{2N}S_t\,dt+Q_{S_t}\,dB_t,
    \qquad
    S_0\in\SN,
\end{align*}
which corresponds to spherical Langevin dynamics with $\beta=0$. As noted in \cref{subsec:spherical-langevin}, $S_t\in\SN$ for all $t\ge0$, and $(S_t)_{t\ge0}$ is a reversible Markov process whose stationary distribution is $\pi$. We use $(P_t)_{t\ge0}$ to denote its Markov semigroup, i.e., $P_tf(x)=\E[f(S_t)\mid S_0=x]$ for any bounded measurable $f:\SN\to\R$. By \cref{eq:langevin-generator,lem:langevin-dynkin}, the associated generator $\cL$ and carr\'e du champ operator $\Gamma$ satisfy the following for any smooth $f,g:\SN\to\R$:
\begin{align*}
    \cL f=\frac12\lapsp f,
    \qquad
    \Gamma(f,g)=\frac12\dotp{\gradsp f}{\gradsp g}.
\end{align*}

An essential ingredient in our proofs is the following gradient estimate for the spherical Brownian semigroup, which is a corollary of the celebrated Bakry--\'Emery estimate for Markov diffusion semigroups on manifolds satisfying a curvature-dimension condition~\citep[Section~3.2.3]{bakry2013analysis}. Throughout, we write $\rho_N\doteq\tfrac{N-2}{2N}$, and note that $\rho_N\ge\tfrac16$ since $N\ge3$.

\begin{lemma}[Bakry--\'Emery Gradient Estimate for Spherical Brownian Motion]
\label{lem:sphere-bakry-emery}
The following holds for every smooth $f:\SN\to\R^m$, $x\in\SN$, and $t\ge0$:
\begin{align*}
    \norm{\gradsp(P_tf)(x)}_{\mathrm{op}}\le e^{-\rho_Nt}P_t\bigl(\norm{\gradsp f}_{\mathrm{op}}\bigr)(x).
\end{align*}
Moreover, if $\max_{x\in\SN}\norm{\gradsp f(x)}_{\mathrm{op}}\le L$, then
\begin{align*}
    \max_{x\in\SN}\norm{\gradsp(P_tf)(x)}_{\mathrm{op}}\le e^{-\rho_Nt}L
    \qquad\forall\ t\ge0.
\end{align*}
\end{lemma}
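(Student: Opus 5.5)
The plan is to obtain the estimate from the Bakry--\'Emery curvature-dimension condition $\mathrm{CD}(\rho,n)$ for the generator $\cL=\frac12\lapsp$ on $\SN$. The sphere of radius $\sqrt N$ in dimension $N-1$ has Ricci curvature $\mathrm{Ric}=\frac{N-2}{N}\,g$, since the sectional curvature is $1/N$. Bochner's formula gives
\[
\tfrac12\lapsp\norm{\gradsp f}^2-\dotp{\gradsp f}{\gradsp\lapsp f}=\norm{\gradsp^2f}_F^2+\mathrm{Ric}(\gradsp f,\gradsp f).
\]
With the normalization $\Gamma(f,f)=\frac12\norm{\gradsp f}^2$, define $\Gamma_2(f)=\frac12[\cL\Gamma(f,f)-2\Gamma(f,\cL f)]$. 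Bochner's formula then shows $\Gamma_2(f)\ge\rho_N\Gamma(f,f)$ with $\rho_N=\frac{N-2}{2N}$, the factor $\frac12$ coming from the generator. By the standard equivalence of $\mathrm{CD}(\rho,\infty)$ with the strong gradient bound for diffusions \citep[Section~3.2.3]{bakry2013analysis}, this yields $\sqrt{\Gamma(P_tf)}\le e^{-\rho_Nt}P_t\sqrt{\Gamma(f)}$ for scalar smooth $f$. This is the $m=1$ case of the first claim.

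Next I would pass to vector-valued $f:\SN\to\R^m$. Write $\norm{\gradsp(P_tf)(x)}_{\mathrm{op}}=\sup_{u\in\bS^{m-1}}\norm{\gradsp(P_t\dotp{u}{f})(x)}$, using linearity $P_t\dotp uf=\dotp u{P_tf}$ and the fact that the spherical Jacobian acts columnwise. For each fixed $u$, the scalar estimate gives
\[
\norm{\gradsp(P_t\dotp uf)(x)}\le e^{-\rho_Nt}P_t\bigl(\norm{\gradsp\dotp uf}\bigr)(x)\le e^{-\rho_Nt}P_t\bigl(\norm{\gradsp f}_{\mathrm{op}}\bigr)(x),
\]
because $\gradsp\dotp uf=(\gradsp f)u$ pointwise. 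Taking the supremum over $u$ gives the first claim. Since the bound on the right does not depend on $u$, no interchange of supremum and expectation is needed beyond monotonicity of $P_t$. The second claim follows at once from $P_t$ being Markov: $P_t g\le\max g\le L$.

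The main obstacle is the strong (non-squared) form $\sqrt{\Gamma(P_tf)}\le e^{-\rho t}P_t\sqrt{\Gamma(f)}$, as opposed to the weak $\Gamma(P_tf)\le e^{-2\rho t}P_t\Gamma(f)$. I would argue it by the interpolation $\Lambda(s)=e^{-\rho s}P_s\bigl(\sqrt{\Gamma(P_{t-s}f)+\eps}\bigr)$ and differentiate in $s$. The derivative is nonnegative provided $\Gamma_2(g)-\rho\Gamma(g)\ge\frac{\Gamma(\Gamma(g))}{4\Gamma(g)}$. For diffusions this reinforced inequality follows from $\mathrm{CD}(\rho,\infty)$ via the Kato-type bound $\norm{\gradsp\norm{\gradsp g}}\le\norm{\gradsp^2g}_F$, which is available in the manifold setting and is cited from \citep{bakry2013analysis}. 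Compactness of $\SN$ and smoothness then justify the differentiation, and letting $\eps\to0$ completes the argument. Alternatively, I would simply invoke \citep[Theorem~3.3.18]{bakry2013analysis}-type statements directly, checking only the curvature constant $\rho_N$.
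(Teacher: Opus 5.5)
Your proposal is correct and follows the paper's proof essentially step for step. Both verify $\mathrm{CD}(\rho_N,\infty)$ from the Ricci curvature $\tfrac{N-2}{N}$ of $\SN$; your Bochner computation confirms the constant $\rho_N=\tfrac{N-2}{2N}$ under the normalization $\cL=\tfrac12\lapsp$. Both then apply the strong gradient bound \citep[Theorem~3.2.4]{bakry2013analysis} to $g=\dotp{u}{f}$, take the supremum over $u\in\bS^{m-1}$, and obtain the second claim from the Markov property. Your extra sketch of the strong gradient bound, via $\eps$-regularized interpolation and Kato's inequality, goes beyond the paper, which simply cites it. The only caveat there is that the regularization gives a derivative bounded below by $-\rho\sqrt{\eps}$ rather than an exactly nonnegative one, which costs a harmless error of order $\rho\sqrt{\eps}\,t$ that vanishes as $\eps\to0$.
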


\begin{proof}
Since $\SN$ is a sphere of dimension $N-1$ and radius $\sqrt N$, it has constant positive Ricci curvature $\tfrac{N-2}{N}$~\citep[cf.][Section~2.2.4]{bakry2013analysis}. Consequently, $\cL=\tfrac12\lapsp$ satisfies the Bakry--\'Emery curvature-dimension condition $\mathrm{CD}(\rho_N,\infty)$~\citep[cf.][Section~C.6]{bakry2013analysis}. Fix $u\in\bS^{m-1}$ and set $g(x)=\dotp{f(x)}{u}$. Since $P_t$ acts entrywise, $P_tg=\dotp{P_tf}{u}$, and hence $\gradsp(P_tg)=\gradsp(P_tf)\,u$ and $\gradsp g=(\gradsp f)\,u$. By the strong gradient bound~\citep[Theorem~3.2.4]{bakry2013analysis} applied to $g$ and the fact that $\Gamma(g,g)=\tfrac12\norm{\gradsp g}^2$, we conclude that
\begin{align*}
    \norm{\gradsp(P_tf)(x)\,u}
    =\norm{\gradsp(P_tg)(x)}
    \le e^{-\rho_Nt}P_t\bigl(\norm{(\gradsp f)\,u}\bigr)(x)
    \le e^{-\rho_Nt}P_t\bigl(\norm{\gradsp f}_{\mathrm{op}}\bigr)(x).
\end{align*}
Since $u\in\bS^{m-1}$ was arbitrary, this proves the first claim.

Finally, suppose $\norm{\gradsp f(z)}_{\mathrm{op}}\le L$ for every $z\in\SN$. Since $P_th(x)=\E[h(S_t)\mid S_0=x]$ for any bounded measurable $h:\SN\to\R$, the first claim implies the following for every $x\in\SN$ and $t\ge0$:
\begin{align*}
    \norm{\gradsp(P_tf)(x)}_{\mathrm{op}}
    \le e^{-\rho_Nt}\E\bigl[\norm{\gradsp f(S_t)}_{\mathrm{op}}\bigm|S_0=x\bigr]
    \le e^{-\rho_Nt}L,
\end{align*}
which proves the second claim.
\end{proof}

We now introduce the Green operator (or restricted inverse of $-\cL$) for spherical Brownian motion. Let $L^2(\pi)\doteq\{f:\SN\to\R:\E_\pi[\abs{f}^2]<\infty\}$ denote the Hilbert space of square integrable functions on $\SN$, and let $L^2_0(\pi)\doteq\{f\in L^2(\pi):\E_\pi[f]=0\}$ denote the subspace of zero mean functions on $\SN$. The Green operator $\cR:L^2_0(\pi)\to L^2_0(\pi)$ is then defined as follows:
\begin{align}
\label{eq:bm-green-operator}
    \cR f=\int_0^\infty P_tf\,dt
    \qquad\forall\ f\in L^2_0(\pi).
\end{align}
For centered vector and matrix valued observables, $\cR$ acts entrywise. The following lemma shows that $\cR$ is a well-defined bounded linear operator and establishes several properties that will be vital to our analysis.

\begin{lemma}[Properties of the Green operator]
\label{lem:bm-green-operator}
The Green operator $\cR$ is a well-defined bounded linear operator on $L^2_0(\pi)$. Moreover, for every smooth $f\in L^2_0(\pi)$, $\cR f$ is a solution to the Poisson equation
\begin{align}
\label{eq:bm-poisson-equation}
    \cL(\cR f)=-f.
\end{align}
Furthermore, $\max_{x\in\SN}\norm{\gradsp(\cR f)(x)}\le\rho_N^{-1}\max_{x\in\SN}\norm{\gradsp f(x)}$ for every smooth $f\in L^2_0(\pi)$.
\end{lemma}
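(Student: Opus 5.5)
The plan is to get exponential $L^2(\pi)$ decay of $P_t$ on $L^2_0(\pi)$ from a spectral gap, and to read off the Poisson equation and the gradient bound by integrating the Kolmogorov equation and \cref{lem:sphere-bakry-emery} in time.

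\textbf{Boundedness.} The curvature-dimension condition $\mathrm{CD}(\rho_N,\infty)$ used in \cref{lem:sphere-bakry-emery} yields a Poincar\'e inequality $\mathrm{Var}_\pi(f)\le\rho_N^{-1}\E_\pi[\Gamma(f,f)]$ \citep[Section~4.8]{bakry2013analysis}. Since $\pi$ is invariant, $P_t$ preserves $L^2_0(\pi)$. For $f\in L^2_0(\pi)$ we have
\begin{align*}
\frac{d}{dt}\E_\pi[(P_tf)^2]=-2\E_\pi[\Gamma(P_tf,P_tf)]\le-2\rho_N\E_\pi[(P_tf)^2],
\end{align*}
so $\norm{P_tf}_{L^2(\pi)}\le e^{-\rho_Nt}\norm{f}_{L^2(\pi)}$. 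We first prove this for smooth $f$ and then extend by density, using that $P_t$ is an $L^2$ contraction. The integral $\int_0^\infty P_tf\,dt$ therefore converges absolutely as a Bochner integral in $L^2_0(\pi)$, with $\norm{\cR f}\le\rho_N^{-1}\norm{f}$. Linearity is immediate, and $\E_\pi[\cR f]=\int_0^\infty\E_\pi[P_tf]\,dt=0$.

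\textbf{Poisson equation.} For smooth centered $f$, the Kolmogorov equation \cref{eq:langevin-kolmogorov} gives
\begin{align*}
\cL\int_0^T P_tf\,dt=\int_0^T\frac{d}{dt}P_tf\,dt=P_Tf-f.
\end{align*}
Pointwise, $P_Tf\to\E_\pi f=0$ uniformly. This follows because $P_Tf$ is Lipschitz with constant $e^{-\rho_NT}L$ by \cref{lem:sphere-bakry-emery} and has zero mean under $\pi$, so $\sup|P_Tf|\le e^{-\rho_NT}L\cdot\mathrm{diam}(\SN)\to0$.

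To pass $\cL$ through the limit $T\to\infty$, I would argue weakly. For any smooth $g$, self-adjointness gives
\begin{align*}
\E_\pi\Bigl[g\,\cL\int_0^TP_tf\,dt\Bigr]=\E_\pi\Bigl[(\cL g)\int_0^TP_tf\,dt\Bigr]\to\E_\pi[(\cL g)\cR f].
\end{align*}
Hence $\cL\cR f=-f$ in the distributional sense. Elliptic regularity of $\lapsp$ on the compact manifold then makes $\cR f$ smooth, so the identity holds classically.

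\textbf{Gradient bound.} Set $L=\max_x\norm{\gradsp f(x)}$. By \cref{lem:sphere-bakry-emery}, $\norm{\gradsp P_tf}\le e^{-\rho_Nt}L$ uniformly in $x$, so the truncations $R_T\doteq\int_0^TP_tf\,dt$ are uniformly $\rho_N^{-1}L$-Lipschitz in the geodesic distance. By the argument above, $R_T$ also converges uniformly, since $\sup|P_tf|\lesssim e^{-\rho_Nt}$ is integrable. The limit $\cR f$ is therefore $\rho_N^{-1}L$-Lipschitz in geodesic distance. Because $\cR f$ is smooth, this gives $\norm{\gradsp\cR f}\le\rho_N^{-1}L$: the norm of the spherical gradient at a point equals the local Lipschitz constant there.

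The main technical obstacle is the regularity interchange, namely justifying that $\cR f$ is smooth and that $\cL$ commutes with the time integral. The distributional argument combined with elliptic regularity settles it. An alternative is to expand $f$ in spherical harmonics, on which $P_t$ acts diagonally with eigenvalues $e^{-\ell(\ell+N-2)t/(2N)}$. Then $\cR$ multiplies the degree-$\ell$ component by $2N/(\ell(\ell+N-2))$, and smoothness follows from rapid decay of the coefficients.
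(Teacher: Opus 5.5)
Your proposal is correct and follows the same route as the paper. Boundedness comes from exponential $L^2(\pi)$ decay under $\mathrm{CD}(\rho_N,\infty)$, the Poisson equation from integrating the Kolmogorov equation in time, and the gradient bound from integrating the Bakry--\'Emery estimate of \cref{lem:sphere-bakry-emery}. The only difference is that you justify moving $\cL$ and $\gradsp$ through the improper time integral, via the weak formulation with elliptic regularity and via uniform limits of Lipschitz functions, whereas the paper performs these interchanges formally.
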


\begin{proof}
As noted in the proof of \cref{lem:sphere-bakry-emery}, $\cL$ satisfies the curvature-dimension condition $\mathrm{CD}(\rho_N,\infty)$. Since $\mathrm{CD}(\rho_N,\infty)$ implies exponential decay of variance, the following holds for every $f\in L^2_0(\pi)$ and $t\ge0$~\citep[Proposition~4.8.1 and Theorem~4.2.5]{bakry2013analysis}:
\begin{align*}
    \norm{P_tf}_{L^2(\pi)}\le e^{-\rho_Nt}\norm{f}_{L^2(\pi)}.
\end{align*}
Integrating the above inequality over $t\ge0$, we obtain the following for every $f\in L^2_0(\pi)$:
\begin{align*}
    \norm{\cR f}_{L^2(\pi)}
    \le\int_0^\infty\norm{P_tf}_{L^2(\pi)}\,dt
    \le\norm{f}_{L^2(\pi)}\int_0^\infty e^{-\rho_Nt}\,dt
    =\rho_N^{-1}\norm{f}_{L^2(\pi)}.
\end{align*}
Hence, $\cR$ is a well-defined bounded linear operator on $L^2_0(\pi)$.

Now fix a smooth $f\in L^2_0(\pi)$. By the Kolmogorov equations~\cref{eq:langevin-kolmogorov}, we obtain
\begin{align*}
    \cL(\cR f)
    =\lim_{T\to\infty}\int_0^T\cL(P_tf)\,dt
    =\lim_{T\to\infty}\int_0^T\frac{d}{dt}P_tf\,dt
    =\lim_{T\to\infty}P_Tf-f
    =\E_\pi[f]-f
    =-f,
\end{align*}
where we use the fact that $\pi$ is the stationary distribution of $P_t$ and $f\in L^2_0(\pi)$. This proves \cref{eq:bm-poisson-equation}. Finally, we infer the following from \cref{lem:sphere-bakry-emery} for every $x\in\SN$:
\begin{align*}
    \norm{\gradsp(\cR f)(x)}
    \le\int_0^\infty\norm{\gradsp(P_tf)(x)}\,dt
    \le\max_{z\in\SN}\norm{\gradsp f(z)}\int_0^\infty e^{-\rho_Nt}\,dt
    =\rho_N^{-1}\max_{z\in\SN}\norm{\gradsp f(z)},
\end{align*}
which proves the gradient bound.
\end{proof}

\begin{remark}
\label{rem:bm-green-operator-provenance}
By \cref{eq:bm-poisson-equation}, we may interpret $\cR$ as the inverse of $-\cL$ restricted to $L^2_0(\pi)$. Solutions to the Poisson equation are a canonical tool for proving central limit theorems for ergodic averages of Markov processes~\citep{komorowski2012fluctuations,eberle2025markov} (see also~\citep[Section~4.1]{chatterjee2005concentration} for connections to Stein's method of exchangeable pairs). In fact, Part~I of our work~\citep{das2026averaged} develops an analogous framework for the simple random walk on the hypercube. In subsequent sections, we shall use the Green operator to analyze ergodic averages along high-temperature spherical Langevin dynamics for Hamiltonians with bounded $\cG^2$ norm.
\end{remark}

\subsection{Recovering the Average Spherical Gradient}
\label{subsec:recovering-average-spherical-gradient}

We now show that the ergodic average of high-temperature spherical Langevin dynamics approximates a scalar multiple of the average spherical gradient under $\pi$. For a smooth potential $V:\SN\to\R$, let $(Z_t)_{t\ge0}$ denote the trajectory of spherical Langevin dynamics at inverse temperature $\beta_0\ge0$, initialized at $Z_0\sim\pi$. For any averaging time $T>0$, define its ergodic average by
\begin{align}
\label{eq:langevin-ergodic-average}
    \Zavg_T\doteq\frac{1}{T}\int_0^TZ_t\,dt.
\end{align}

In \cref{subsec:sketch-ergodic-averaging}, we argued heuristically that, for $V=H$, the ergodic average $\Zavg_T$ converges to the mean of the Gibbs measure $\mu_{V,\beta_0}$ as $T\to\infty$, and that this mean equals $\tfrac{2\beta_0}{\kappa_N}\E_\pi[\gradsp V]+O(\beta_0^2)$. We formalize this heuristic for general potentials in \cref{thm:langevin-recovers-average-gradient} below.

\begin{theorem}[Recovery of the Average Spherical Gradient]
\label{thm:langevin-recovers-average-gradient}
Suppose that $\norm{V}_{\cG^2}\le BN$ for some $B>0$. For every $T\ge1$ and $\delta\in(0,1)$, the following holds with probability at least $1-\delta$:
\begin{align*}
    \norm{\Zavg_T-\frac{2\beta_0}{\kappa_N}\E_\pi[\gradsp V]}
    \lesssim(1+\beta_0B)\sqrt{\frac{N+\log(\nicefrac{1}{\delta})}{T}}+\beta_0^2B^2\sqrt N.
\end{align*}
\end{theorem}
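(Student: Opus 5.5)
The plan is to apply Dynkin's formula twice: first to the coordinate functions, then to a solution of the Poisson equation for spherical Brownian motion. Throughout, write $\cL_{V,\beta_0}=\cL+\beta_0\dotp{\gradsp V}{\gradsp\,\cdot\,}$ with $\cL=\tfrac12\lapsp$.

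\textbf{Step 1: reduce to the ergodic average of the gradient.} From the proof of \cref{lem:spherical-stein} we have $\gradsp x=Q_x$ and $\lapsp x=-\kappa_N x$. Hence, acting entrywise on $F(x)=x$,
\begin{align*}
    \cL_{V,\beta_0}x=-\tfrac{\kappa_N}{2}x+\beta_0\gradsp V(x).
\end{align*}
Dynkin's formula (\cref{lem:langevin-dynkin}) then gives $Z_T-Z_0=\int_0^T\bigl(-\tfrac{\kappa_N}{2}Z_t+\beta_0\gradsp V(Z_t)\bigr)dt+M_T^x$. Rearranging,
\begin{align*}
    \Zavg_T=\frac{2\beta_0}{\kappa_N}\cdot\frac1T\int_0^T\gradsp V(Z_t)\,dt-\frac{2}{\kappa_N T}\bigl(Z_T-Z_0-M^x_T\bigr).
\end{align*}
The boundary term is at most $4\sqrt N/(\kappa_N T)\lesssim\sqrt{N/T}$, using $T\ge1$. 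For the martingale, $\gradsp\dotp{u}{x}=Q_xu$ has norm at most $1$. So \cref{lem:matrix-dynkin-uniform-concentration} with $n=1$ and $A=1$ gives $\norm{M_T^x}/T\lesssim\sqrt{(N+\log(1/\delta))/T}$.

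\textbf{Step 2: Poisson equation for the centered gradient.} Let $g\doteq\gradsp V-\E_\pi[\gradsp V]$, which is centered, and let $h\doteq\cR g$ be defined entrywise. By \cref{lem:bm-green-operator}, $\cL h=-g$, and therefore $\cL_{V,\beta_0}h=-g+\beta_0(\gradsp h)^\top\gradsp V$. Applying Dynkin's formula to $h$,
\begin{align*}
    \frac1T\int_0^Tg(Z_t)\,dt=-\frac{h(Z_T)-h(Z_0)}{T}+\frac{M^h_T}{T}+\frac{\beta_0}{T}\int_0^T(\gradsp h)^\top\gradsp V(Z_t)\,dt.
\end{align*}
Everything now hinges on a uniform bound on the Jacobian $\gradsp g=\gradsp(\gradsp V)$. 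By \cref{eq:gradient-jacobian}, this Jacobian equals $\gradsp^2V-\tfrac1N\gradsp V\,x^\top$. The $\cG^2$ assumption gives $\norm{\gradsp^2V}_{\mathrm{op}}\le B$ and $\norm{\gradsp V}\le B\sqrt N$, and hence $\norm{\gradsp g}_{\mathrm{op}}\le 2B$ uniformly on $\SN$.

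Apply the gradient bound of \cref{lem:bm-green-operator} to each scalar function $\dotp{u}{g}$. This yields $\norm{\gradsp\dotp{u}{h}}\le 2\rho_N^{-1}B\lesssim B$ uniformly in $u\in\bS^{N-1}$, i.e.\ $\norm{\gradsp h}_{\mathrm{op}}\lesssim B$. This single estimate controls all three terms:
\begin{itemize}
    \item \emph{Boundary term.} By \cref{eq:spherical-lipschitz}, $\norm{h(Z_T)-h(Z_0)}\lesssim B\sqrt N$, so this term is $\lesssim B\sqrt{N}/T\le B\sqrt{N/T}$.
    \item \emph{Martingale term.} \cref{lem:matrix-dynkin-uniform-concentration} with $A\lesssim B$ gives $\norm{M_T^h}/T\lesssim B\sqrt{(N+\log(1/\delta))/T}$.
    \item \emph{Drift remainder.} Its norm is at most $\beta_0\norm{\gradsp h}_{\mathrm{op}}\norm{\gradsp V}\lesssim\beta_0B^2\sqrt N$.
\end{itemize}

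\textbf{Step 3: combine.} Multiply the Step 2 bounds by $2\beta_0/\kappa_N\lesssim\beta_0$ and add the Step 1 errors. A union bound over the two martingale events, each at level $\delta/2$, then yields
\begin{align*}
    (1+\beta_0B)\sqrt{(N+\log(1/\delta))/T}+\beta_0^2B^2\sqrt N.
\end{align*}

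The main obstacle is the uniform operator-norm bound on $\gradsp(\gradsp V)$. It needs the Jacobian identity \cref{eq:gradient-jacobian} together with care that the $\cG^2$ norm controls both the covariant Hessian and the $\tfrac1N\gradsp V\,x^\top$ correction. Once this is in hand, Bakry--\'Emery, via \cref{lem:bm-green-operator}, transfers it to $h$, and the rest is bookkeeping.
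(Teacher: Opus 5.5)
Your proof is correct and follows the paper's argument essentially step for step, with the two steps in the opposite order. The paper first solves the Poisson equation for $g=\gradsp V-\E_\pi[\gradsp V]$ and bounds $\norm{\gradsp(\cR g)}_{\mathrm{op}}\le 2B/\rho_N$ via Bakry--\'Emery, which controls the endpoint, martingale and drift terms, and then uses Dynkin's formula on the coordinate functions to compare $\Zavg_T$ with $\tfrac{2\beta_0}{\kappa_N}$ times the gradient ergodic average, concluding with the same union bound at level $\delta/2$ each.
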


\begin{proof}
The proof proceeds in two steps. The first step combines the Poisson equation from \cref{lem:bm-green-operator} with \cref{eq:langevin-generator} to prove that the ergodic average of $\gradsp V(Z_t)$ concentrates around $\E_\pi[\gradsp V]$. The second step uses the action of the generator on the coordinate functions to show that the ergodic averages of $Z_t$ and $\tfrac{2\beta_0}{\kappa_N}\gradsp V(Z_t)$ are close to each other.

Define $F(x)=\gradsp V(x)-\E_\pi[\gradsp V]$ and $\Phi=\cR F$. For any $w\in\bS^{N-1}$, the function $f_w=\dotp{F}{w}$ is smooth and has zero mean under $\pi$. Since $f_w(x)=\dotp{Q_x\grad V(x)}{w}-\dotp{\E_\pi[\gradsp V]}{w}$, a direct computation using \cref{eq:spherical-derivatives} yields
\begin{align*}
    \gradsp f_w(x)=\gradsp^2V(x)\,w-\frac{\dotp{x}{w}}{N}\gradsp V(x).
\end{align*}
Since $\norm{V}_{\cG^2}\le BN$, we have $\norm{\gradsp^2V(x)}_{\mathrm{op}}\le B$ and $\norm{\gradsp V(x)}\le B\sqrt N$ for every $x\in\SN$, and hence $\norm{\gradsp f_w(x)}\le2B$. Since $\dotp{\Phi}{w}=\cR f_w$ by linearity of $\cR$, we obtain the following from \cref{lem:bm-green-operator} and $\rho_N\ge\tfrac16$:
\begin{align}
\label{eq:gradient-recovery-green-bound}
    \max_{x\in\SN}\norm{\gradsp\Phi(x)}_{\mathrm{op}}
    =\max_{x\in\SN}\,\sup_{w\in\bS^{N-1}}\norm{\gradsp\bigl(\dotp{\Phi}{w}\bigr)(x)}
    \le\frac{2B}{\rho_N}
    \le12B.
\end{align}

The Poisson equation~\cref{eq:bm-poisson-equation} from \cref{lem:bm-green-operator} implies $\cL\Phi=\tfrac12\lapsp\Phi=-F$. Combining this with \cref{eq:langevin-generator}, we obtain
\begin{align}
\label{eq:gradient-recovery-generator}
    \cL_{V,\beta_0}\Phi(x)
    =-F(x)+\beta_0\gradsp\Phi(x)^\top\gradsp V(x).
\end{align}
Let $M_t^\Phi$ denote the Dynkin martingale associated with $\Phi$ along $(Z_t)_{t\ge0}$. By \cref{lem:langevin-dynkin},
\begin{align*}
    M_T^\Phi
    =\Phi(Z_T)-\Phi(Z_0)-\int_0^T\cL_{V,\beta_0}\Phi(Z_t)\,dt.
\end{align*}
By rearranging terms and using \cref{eq:gradient-recovery-generator}, we obtain
\begin{align}
\label{eq:gradient-recovery-dynkin-decomposition}
    \frac{1}{T}\int_0^T\gradsp V(Z_t)\,dt-\E_\pi[\gradsp V]
    =\frac{\Phi(Z_0)-\Phi(Z_T)}{T}
      +\frac{M_T^\Phi}{T}
      +\frac{\beta_0}{T}\int_0^T\gradsp\Phi(Z_t)^\top\gradsp V(Z_t)\,dt.
\end{align}
We control the three terms separately. Since $\norm{Z_0-Z_T}\le2\sqrt N$ and $T\ge1$, we infer the following from \cref{eq:spherical-lipschitz,eq:gradient-recovery-green-bound}:
\begin{align}
\label{eq:gradient-recovery-endpoint-bound}
    \norm{\frac{\Phi(Z_0)-\Phi(Z_T)}{T}}
    \lesssim\frac{B\sqrt N}{T}
    \lesssim B\sqrt{\frac{N}{T}}.
\end{align}
For the drift term, since $\norm{\gradsp V(x)}\le B\sqrt N$, we obtain the following from \cref{eq:gradient-recovery-green-bound} for every $x\in\SN$:
\begin{align}
\label{eq:gradient-recovery-drift-bound}
    \norm{\beta_0\gradsp\Phi(x)^\top\gradsp V(x)}
    \le\beta_0\norm{\gradsp\Phi(x)}_{\mathrm{op}}\norm{\gradsp V(x)}
    \le12\beta_0B^2\sqrt N.
\end{align}

It remains to control the martingale term. By \cref{eq:gradient-recovery-green-bound}, $\max_{x\in\SN}\norm{\gradsp\bigl(\dotp{\Phi}{w}\bigr)(x)}\le12B$ for every $w\in\bS^{N-1}$. Thus, applying \cref{lem:matrix-dynkin-uniform-concentration} with $m=N$ and $n=1$, we obtain the following with probability at least $1-\delta/2$:
\begin{align}
\label{eq:gradient-recovery-martingale-bound}
    \frac{\norm{M_T^\Phi}}{T}
    \lesssim B\sqrt{\frac{N+\log(\nicefrac{1}{\delta})}{T}}.
\end{align}
Substituting \cref{eq:gradient-recovery-endpoint-bound,eq:gradient-recovery-drift-bound,eq:gradient-recovery-martingale-bound} into \cref{eq:gradient-recovery-dynkin-decomposition}, we obtain the following with probability at least $1-\delta/2$:
\begin{align}
\label{eq:gradient-recovery-gradient-average-bound}
    \norm{\frac{1}{T}\int_0^T\gradsp V(Z_t)\,dt-\E_\pi[\gradsp V]}
    \lesssim B\sqrt{\frac{N+\log(\nicefrac{1}{\delta})}{T}}+\beta_0B^2\sqrt N.
\end{align}

It remains to compare the respective ergodic averages of $Z_t$ and $\gradsp V(Z_t)$. To this end, applying \cref{eq:spherical-derivatives} to the coordinate functions, we obtain $\gradsp x=Q_x$ and $\lapsp x=-\kappa_Nx$. Since $Q_x\gradsp V(x)=\gradsp V(x)$, we infer the following from \cref{eq:langevin-generator}:
\begin{align*}
    \cL_{V,\beta_0}x=-\frac{\kappa_N}{2}x+\beta_0\gradsp V(x).
\end{align*}
Let $M_t^x$ denote the associated Dynkin martingale. By the above identity and \cref{lem:langevin-dynkin},
\begin{align*}
    M_T^x
    =Z_T-Z_0-\int_0^T\cL_{V,\beta_0}x(Z_t)\,dt
    =Z_T-Z_0-\int_0^T\Bigl[\beta_0\gradsp V(Z_t)-\frac{\kappa_N}{2}Z_t\Bigr]\,dt.
\end{align*}
By rearranging terms and using \cref{eq:langevin-ergodic-average}, we obtain
\begin{align}
\label{eq:gradient-recovery-state-decomposition}
    \Zavg_T-\frac{2\beta_0}{\kappa_N}\cdot\frac{1}{T}\int_0^T\gradsp V(Z_t)\,dt
    =\frac{2}{\kappa_N}\left[\frac{Z_0-Z_T}{T}+\frac{M_T^x}{T}\right].
\end{align}
Since $Z_0,Z_T\in\SN$, $\kappa_N\ge\tfrac23$ and $T\ge1$, the endpoint term is at most $6\sqrt N/T\le6\sqrt{N/T}$. Moreover, $\gradsp\dotp{x}{w}=Q_xw$ has norm at most $1$ for every $w\in\bS^{N-1}$. Thus, applying \cref{lem:matrix-dynkin-uniform-concentration} with $m=N$ and $n=1$, we obtain the following with probability at least $1-\delta/2$:
\begin{align*}
    \frac{\norm{M_T^x}}{T}
    \lesssim\sqrt{\frac{N+\log(\nicefrac{1}{\delta})}{T}}.
\end{align*}
Combining these estimates with \cref{eq:gradient-recovery-state-decomposition}, we conclude that, with probability at least $1-\delta/2$,
\begin{align}
\label{eq:gradient-recovery-state-average-bound}
    \norm{\Zavg_T-\frac{2\beta_0}{\kappa_N}\cdot\frac{1}{T}\int_0^T\gradsp V(Z_t)\,dt}
    \lesssim\sqrt{\frac{N+\log(\nicefrac{1}{\delta})}{T}}.
\end{align}
The proof is concluded by combining \cref{eq:gradient-recovery-gradient-average-bound,eq:gradient-recovery-state-average-bound} via the triangle inequality and a union bound, since $\tfrac{2\beta_0}{\kappa_N}\le3\beta_0$.
\end{proof}

\subsection{Recovering the Average Spherical Hessian}
\label{subsec:recovering-average-spherical-hessian}

Borrowing the notation from \cref{subsec:recovering-average-spherical-gradient}, we use $(Z_t)_{t\ge0}$ to denote spherical Langevin dynamics for a smooth potential $V:\SN\to\R$ at inverse temperature $\beta_0\ge0$, initialized at $Z_0\sim\pi$. For any averaging time $T>0$, define its uncentered ergodic second moment by
\begin{align*}
    \Zavg^{(2)}_T\doteq\frac{1}{T}\int_0^TZ_tZ_t^\top\,dt.
\end{align*}
The next theorem establishes a concentration guarantee for the ergodic second moment for potentials with bounded Sobolev norm.

\begin{theorem}[Recovery of the Average Spherical Hessian]
\label{thm:langevin-recovers-average-hessian}
Suppose that $\norm{V}_{\cG^2}\le BN$ for some $B>0$. For every $T\ge1$ and $\delta\in(0,1)$, the following holds with probability at least $1-\delta$:
\begin{align*}
    \norm{\Zavg^{(2)}_T-I-2\beta_0\E_\pi[\gradsp^2V]}_{\mathrm{op}}
    \lesssim(1+\beta_0B)\sqrt N\sqrt{\frac{N+\log(\nicefrac{1}{\delta})}{T}}+\beta_0^2B^2N.
\end{align*}
\end{theorem}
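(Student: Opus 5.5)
The plan mirrors the proof of \cref{thm:langevin-recovers-average-gradient}, with the coordinate observable $x$ replaced by the matrix observable $xx^\top$ and the gradient replaced by the symmetrized quantity $\tfrac12(x\gradsp V^\top+\gradsp V\,x^\top)$. By the second identity of \cref{lem:spherical-stein}, $\E_\pi[\gradsp^2V]=\tfrac12\E_\pi[x\gradsp V(x)^\top+\gradsp V(x)x^\top]$. This identifies the correct centering.

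\textbf{Step 1: relate $Z_tZ_t^\top$ to the symmetrized gradient.} First compute the generator on $xx^\top$. From $\gradsp x=Q_x$ and $\lapsp x=-\kappa_Nx$, the product rule gives
\begin{align*}
\tfrac12\lapsp(x_ix_j)=-\kappa_Nx_ix_j+(Q_x)_{ij},
\qquad
Q_x=I-\tfrac{xx^\top}{N}.
\end{align*}
Hence
\begin{align*}
\cL_{V,\beta_0}(xx^\top)=I-xx^\top+\beta_0\bigl(x\gradsp V^\top+\gradsp V\,x^\top\bigr),
\end{align*}
using $-\kappa_N-\tfrac1N=-1$. Applying Dynkin's formula (\cref{lem:langevin-dynkin}) entrywise gives
\begin{align*}
\Zavg^{(2)}_T-I-\frac{2\beta_0}{T}\int_0^T G(Z_t)\,dt=\frac{Z_0Z_0^\top-Z_TZ_T^\top-M_T^{xx^\top}}{T},
\qquad
G\doteq\tfrac12\bigl(x\gradsp V^\top+\gradsp V\,x^\top\bigr).
\end{align*}
The endpoint term has operator norm at most $2N/T\le 2\sqrt N\sqrt{N/T}$. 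For the martingale, fix $w\in\bS^{N-1}$. Then $\norm{\gradsp\dotp{w}{x}^2}=2\abs{\dotp{w}{x}}\norm{Q_xw}\le2\sqrt N$, so \cref{lem:matrix-dynkin-uniform-concentration} together with \cref{rem:symmetric-matrix-dynkin-concentration} gives $\norm{M_T}_{\mathrm{op}}/T\lesssim\sqrt N\sqrt{(N+\log(1/\delta))/T}$.

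\textbf{Step 2: concentration of the ergodic average of $G$ via the Green operator.} Set $F=G-\E_\pi[G]$ and $\Psi=\cR F$, acting entrywise. For $w\in\bS^{N-1}$ the quadratic form is $w^\top G w=\dotp{w}{x}\dotp{w}{\gradsp V}$. Its spherical gradient is
\begin{align*}
\dotp{w}{\gradsp V}\,Q_xw+\dotp{w}{x}\Bigl(\gradsp^2V\,w-\tfrac{\dotp{x}{w}}{N}\gradsp V\Bigr),
\end{align*}
which has norm at most $3B\sqrt N$, since $\abs{\dotp{w}{x}}\le\sqrt N$, $\norm{\gradsp V}\le B\sqrt N$ and $\norm{\gradsp^2 V}_{\mathrm{op}}\le B$. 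By \cref{lem:bm-green-operator} and linearity, $\norm{\gradsp(w^\top\Psi w)}\le18B\sqrt N$ uniformly.

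Next apply the Poisson equation:
\begin{align*}
\cL_{V,\beta_0}\Psi=-F+\beta_0\,\dotp{\gradsp V}{\gradsp\Psi},
\end{align*}
with the inner product taken entrywise. Dynkin's formula then writes $\frac1T\int_0^TG(Z_t)\,dt-\E_\pi[G]$ as the sum of three pieces:
\begin{itemize}
\item an endpoint term $(\Psi(Z_0)-\Psi(Z_T))/T$;
\item a martingale term $M^\Psi_T/T$;
\item a drift term $\frac{\beta_0}{T}\int_0^T\dotp{\gradsp V}{\gradsp\Psi}(Z_t)\,dt$.
\end{itemize}
For the endpoint, \cref{eq:spherical-lipschitz} applied to each quadratic form gives $\abs{w^\top(\Psi(Z_0)-\Psi(Z_T))w}\lesssim B\sqrt N\cdot\sqrt N$. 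Since the matrix is symmetric, its operator norm is $\lesssim BN/T\le B\sqrt N\sqrt{N/T}$. The martingale gives $B\sqrt N\sqrt{(N+\log(1/\delta))/T}$ by \cref{lem:matrix-dynkin-uniform-concentration} with \cref{rem:symmetric-matrix-dynkin-concentration}. For the drift, $\abs{w^\top\dotp{\gradsp V}{\gradsp\Psi}w}=\abs{\dotp{\gradsp V}{\gradsp(w^\top\Psi w)}}\le B\sqrt N\cdot18B\sqrt N$, so the drift contributes $\lesssim\beta_0B^2N$ in operator norm.

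\textbf{Step 3: combine.} Multiply Step 2 by $2\beta_0$ and add it to Step 1 via the triangle inequality and a union bound. The result is
\begin{align*}
(1+\beta_0B)\sqrt N\sqrt{\frac{N+\log(1/\delta)}{T}}+\beta_0^2B^2N,
\end{align*}
as claimed.

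The main obstacle is the Lipschitz bound on the entries of $\Psi$ in Step 2, since $G$ involves products whose gradients are a factor $\sqrt N$ larger than in the first-moment case. The key check is that quadratic forms suffice for symmetric matrices at every stage: in the Bakry--\'Emery estimate (\cref{lem:sphere-bakry-emery} applies to each scalar $w^\top Fw$), in the Dynkin concentration, and in the operator-norm bounds on the endpoint and drift terms. This holds because $\Psi$ and the drift matrix are symmetric, so their operator norms are controlled by suprema of quadratic forms. With that in place, the extra $\sqrt N$ is exactly what appears in the statement.
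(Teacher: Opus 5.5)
Your proposal is correct and is essentially the paper's proof. It uses the same symmetrized observable (the paper's $J_V$, your $G$), the same Green-operator/Poisson decomposition with the $18B\sqrt N$ bound on quadratic-form gradients, and the same generator identity $\cL_{V,\beta_0}(xx^\top-I)=-(xx^\top-I)+2\beta_0J_V$, only with the two halves in the opposite order. The one slip is the sign of the martingale in your Step~1 identity: Dynkin's formula gives $+M_T/T$, not $-M_T/T$, which is immaterial for the operator-norm bound.
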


\begin{proof}
Define the matrix-valued observable $J_V:\SN\to\Sym_N$ by
\begin{align}
\label{eq:hessian-recovery-observable}
    J_V(x)\doteq\frac12\bigl(x\gradsp V(x)^\top+\gradsp V(x)x^\top\bigr).
\end{align}
By Stein's lemma (\cref{lem:spherical-stein}), $\E_\pi[J_V]=\E_\pi[\gradsp^2V]$. As in the proof of \cref{thm:langevin-recovers-average-gradient}, we first combine the Poisson equation from \cref{lem:bm-green-operator} with \cref{eq:langevin-generator} to show that the ergodic average of $J_V(Z_t)$ concentrates around $\E_\pi[\gradsp^2V]$. We then use the (entrywise) action of the generator on $xx^\top-I$ to compare $\Zavg^{(2)}_T-I$ with the ergodic average of $2\beta_0J_V(Z_t)$.

Define $F(x)=J_V(x)-\E_\pi[\gradsp^2V]$ and $\Phi=\cR F$, which takes values in $\Sym_N$. For any $w\in\bS^{N-1}$, the function $f_w=w^\top Fw$ is smooth and has zero mean under $\pi$. Since $w^\top J_V(x)w=\dotp{x}{w}\dotp{Q_x\grad V(x)}{w}$, a direct computation using \cref{eq:spherical-derivatives} and the product rule yields
\begin{align*}
    \gradsp f_w(x)
    =\dotp{\gradsp V(x)}{w}Q_xw+\dotp{x}{w}\left(\gradsp^2V(x)\,w-\frac{\dotp{x}{w}}{N}\gradsp V(x)\right).
\end{align*}
Since $\norm{V}_{\cG^2}\le BN$, we have $\norm{\gradsp^2V(x)}_{\mathrm{op}}\le B$ and $\norm{\gradsp V(x)}\le B\sqrt N$ for every $x\in\SN$, and hence $\norm{\gradsp f_w(x)}\le3B\sqrt N$. Since $w^\top\Phi w=\cR f_w$ by linearity of $\cR$, we obtain the following from \cref{lem:bm-green-operator} and $\rho_N\ge\tfrac16$:
\begin{align}
\label{eq:hessian-recovery-green-bound}
    \max_{x\in\SN}\,\sup_{w\in\bS^{N-1}}\norm{\gradsp(w^\top\Phi w)(x)}
    \le\frac{3B\sqrt N}{\rho_N}
    \le18B\sqrt N.
\end{align}

The Poisson equation~\cref{eq:bm-poisson-equation} from \cref{lem:bm-green-operator} implies $\cL\Phi=\tfrac12\lapsp\Phi=-F$. Combining this with \cref{eq:langevin-generator}, we obtain
\begin{align}
\label{eq:hessian-recovery-generator}
    \cL_{V,\beta_0}\Phi(x)
    =-F(x)+\beta_0\,\mathrm{D}\Phi(x)[\gradsp V(x)],
\end{align}
where $\mathrm{D}\Phi(x)[v]\in\Sym_N$ denotes the differential of $\Phi$ at $x$ in the direction $v\in T_x\SN$, i.e., the matrix with entries $\dotp{\gradsp\Phi_{ij}(x)}{v}$. Now, let $M_t^\Phi$ denote the Dynkin martingale associated with $\Phi$ along $(Z_t)_{t\ge0}$. Then, by \cref{lem:langevin-dynkin},
\begin{align*}
    M_T^\Phi
    =\Phi(Z_T)-\Phi(Z_0)-\int_0^T\cL_{V,\beta_0}\Phi(Z_t)\,dt.
\end{align*}
By rearranging terms and using \cref{eq:hessian-recovery-generator}, we obtain
\begin{align}
\label{eq:hessian-recovery-dynkin-decomposition}
    \frac{1}{T}\int_0^TJ_V(Z_t)\,dt-\E_\pi[\gradsp^2V]
    =\frac{\Phi(Z_0)-\Phi(Z_T)}{T}+\frac{M_T^\Phi}{T}
      +\frac{\beta_0}{T}\int_0^T\mathrm{D}\Phi(Z_t)[\gradsp V(Z_t)]\,dt.
\end{align}
We control the three terms separately. Since $\Phi$ takes values in $\Sym_N$, we have $\norm{\Phi(Z_0)-\Phi(Z_T)}_{\mathrm{op}}=\sup_{w\in\bS^{N-1}}\abs{w^\top\Phi(Z_0)w-w^\top\Phi(Z_T)w}$. Hence, since $\norm{Z_0-Z_T}\le2\sqrt N$ and $T\ge1$, we infer the following from \cref{eq:spherical-lipschitz,eq:hessian-recovery-green-bound}:
\begin{align}
\label{eq:hessian-recovery-endpoint-bound}
    \norm{\frac{\Phi(Z_0)-\Phi(Z_T)}{T}}_{\mathrm{op}}
    \lesssim\frac{BN}{T}
    \lesssim B\sqrt N\sqrt{\frac{N}{T}}.
\end{align}
For the drift term, observe that $w^\top\mathrm{D}\Phi(x)[\gradsp V(x)]\,w=\dotp{\gradsp(w^\top\Phi w)(x)}{\gradsp V(x)}$ for every $w\in\bS^{N-1}$. Since $\norm{\gradsp V(x)}\le B\sqrt N$, we obtain the following from Cauchy--Schwarz and \cref{eq:hessian-recovery-green-bound} for every $x\in\SN$:
\begin{align}
\label{eq:hessian-recovery-drift-bound}
    \norm{\beta_0\,\mathrm{D}\Phi(x)[\gradsp V(x)]}_{\mathrm{op}}
    =\beta_0\sup_{w\in\bS^{N-1}}\abs{\dotp{\gradsp(w^\top\Phi w)(x)}{\gradsp V(x)}}
    \le18\beta_0B^2N.
\end{align}

It remains to control the martingale term. By \cref{eq:hessian-recovery-green-bound} and \cref{rem:symmetric-matrix-dynkin-concentration}, applying \cref{lem:matrix-dynkin-uniform-concentration} with $m=n=N$ and $A=18B\sqrt N$, we obtain the following with probability at least $1-\delta/2$:
\begin{align}
\label{eq:hessian-recovery-martingale-bound}
    \frac{\norm{M_T^\Phi}_{\mathrm{op}}}{T}
    \lesssim B\sqrt N\sqrt{\frac{N+\log(\nicefrac{1}{\delta})}{T}}.
\end{align}
Substituting \cref{eq:hessian-recovery-endpoint-bound,eq:hessian-recovery-drift-bound,eq:hessian-recovery-martingale-bound} into \cref{eq:hessian-recovery-dynkin-decomposition}, we obtain the following with probability at least $1-\delta/2$:
\begin{align}
\label{eq:hessian-recovery-observable-average-bound}
    \norm{\frac{1}{T}\int_0^TJ_V(Z_t)\,dt-\E_\pi[\gradsp^2V]}_{\mathrm{op}}
    \lesssim B\sqrt N\sqrt{\frac{N+\log(\nicefrac{1}{\delta})}{T}}+\beta_0B^2N.
\end{align}

It remains to compare $\Zavg^{(2)}_T-I$ with the ergodic average of $2\beta_0J_V(Z_t)$. To this end, define $K(x)=xx^\top-I$. Applying \cref{eq:spherical-derivatives} to each entry of $xx^\top$, we infer the following from $Q_x\gradsp V(x)=\gradsp V(x)$ and \cref{eq:hessian-recovery-observable}:
\begin{align*}
    \frac12\lapsp K(x)=I-xx^\top=-K(x),
    \qquad
    \mathrm{D}K(x)[\gradsp V(x)]=x\gradsp V(x)^\top+\gradsp V(x)x^\top=2J_V(x).
\end{align*}
Combining the above identities with \cref{eq:langevin-generator}, where $\mathrm{D}K(x)[v]\in\Sym_N$ denotes the differential of $K$ at $x$ in the direction $v\in T_x\SN$, we obtain
\begin{align}
\label{eq:hessian-recovery-quadratic-generator}
    \cL_{V,\beta_0}K(x)
    =\frac12\lapsp K(x)+\beta_0\,\mathrm{D}K(x)[\gradsp V(x)]
    =-K(x)+2\beta_0J_V(x).
\end{align}
Let $M_t^K$ denote the associated Dynkin martingale. By \cref{eq:hessian-recovery-quadratic-generator} and \cref{lem:langevin-dynkin},
\begin{align*}
    M_T^K
    =K(Z_T)-K(Z_0)-\int_0^T\cL_{V,\beta_0}K(Z_t)\,dt
    =K(Z_T)-K(Z_0)-\int_0^T\bigl[2\beta_0J_V(Z_t)-K(Z_t)\bigr]\,dt.
\end{align*}
By rearranging terms and using $\tfrac{1}{T}\int_0^TK(Z_t)\,dt=\Zavg^{(2)}_T-I$, we obtain
\begin{align}
\label{eq:hessian-recovery-second-moment-decomposition}
    \Zavg^{(2)}_T-I-\frac{2\beta_0}{T}\int_0^TJ_V(Z_t)\,dt
    =\frac{K(Z_0)-K(Z_T)}{T}+\frac{M_T^K}{T}.
\end{align}
Since $\norm{K(x)}_{\mathrm{op}}\le N$ for every $x\in\SN$ and $T\ge1$, the endpoint term is at most $2N/T$. Moreover, for every $w\in\bS^{N-1}$, $\gradsp(w^\top Kw)(x)=2\dotp{x}{w}Q_xw$ has norm at most $2\sqrt N$. Thus, by \cref{rem:symmetric-matrix-dynkin-concentration}, applying \cref{lem:matrix-dynkin-uniform-concentration} with $m=n=N$, we obtain the following with probability at least $1-\delta/2$:
\begin{align*}
    \frac{\norm{M_T^K}_{\mathrm{op}}}{T}
    \lesssim\sqrt N\sqrt{\frac{N+\log(\nicefrac{1}{\delta})}{T}}.
\end{align*}
Combining these estimates with \cref{eq:hessian-recovery-second-moment-decomposition}, we conclude that, with probability at least $1-\delta/2$,
\begin{align}
\label{eq:hessian-recovery-second-moment-bound}
    \norm{\Zavg^{(2)}_T-I-\frac{2\beta_0}{T}\int_0^TJ_V(Z_t)\,dt}_{\mathrm{op}}
    \lesssim\sqrt N\sqrt{\frac{N+\log(\nicefrac{1}{\delta})}{T}}.
\end{align}
The proof is concluded by combining \cref{eq:hessian-recovery-observable-average-bound,eq:hessian-recovery-second-moment-bound} via the triangle inequality and a union bound.
\end{proof}

\subsection{Proof of \texorpdfstring{\cref{thm:tpca-averaged-differential-recovery}}{Theorem~\ref{thm:tpca-averaged-differential-recovery}}}
\label{subsec:tpca-recovery-proof}

\begin{proof}[Proof of \cref{thm:tpca-averaged-differential-recovery}]
\leavevmode\par
Let $\cE=\{\norm{H}_{\cG^2}\le C_k\lambda N\}$. Since $\lambda\ge1$, \cref{rem:tensor-pca-sobolev-bound} implies $\Prb[\cE]\ge1-\exp(-c_kN)$. Since our assumptions imply $T_{\mathrm{av}}\ge1$, for every realization of $H$ in $\cE$, we may apply \cref{thm:langevin-recovers-average-gradient,thm:langevin-recovers-average-hessian} with $V=H$, $B\asymp_k\lambda$, and $\delta=\exp(-N)$. Since $\kappa_N\le1$, taking a union bound, we obtain the following jointly with $\cE$, with probability at least $1-\exp(-c_kN)$:
\begin{align}
\label{eq:tpca-averaging-normalized-errors}
\begin{aligned}
    \norm{\frac{\kappa_N}{2\beta_0}\Xbar_{T_{\mathrm{av}}}-u_H}
    &\le\eps_N,
    \qquad
    \norm{\frac{\Sigmabar_{T_{\mathrm{av}}}-I}{2\beta_0}-A_H}_{\mathrm{op}}
    \le\sqrt N\,\eps_N,\\
    \eps_N
    &\doteq C_k\left[(\beta_0^{-1}+\lambda)\sqrt{\frac{N}{T_{\mathrm{av}}}}+\beta_0\lambda^2\sqrt N\right].
\end{aligned}
\end{align}
Since $\lambda\le\beta_0^{-1}$ under our assumptions, the bounds on $\beta_0$ and $T_{\mathrm{av}}$ imply
\begin{align}
\label{eq:tpca-averaging-error-scale}
    \eps_N
    \le2C_k\left[\beta_0^{-1}\sqrt{\frac{N}{T_{\mathrm{av}}}}+\beta_0\lambda^2\sqrt N\right]
    \le c_k\lambda N^{\nicefrac{1}{2}-\lfloor k/2\rfloor-\vartheta}.
\end{align}
Here $c_k>0$ can be made arbitrarily small by choosing the leading constant in the upper bound on $\beta_0$ sufficiently small and that in the lower bound on $T_{\mathrm{av}}$ sufficiently large.

For odd $k$, we have $\lfloor k/2\rfloor=\tfrac{k-1}{2}$, so by \cref{eq:tpca-averaging-error-scale} we have $\eps_N\le c_k\lambda N^{-\tfrac{k-2}{2}-\vartheta}$. By \cref{cor:avg-spherical-grad-correlation}, the following holds with probability at least $1-C\exp(-cN)$:
\begin{align}
\label{eq:tpca-gradient-norm-lower-bound}
    \norm{u_H}\ge c\bigl(N\lambda\alpha_N+\sigma_N\sqrt N\bigr),
\end{align}
where $\alpha_N$ and $\sigma_N$ are as defined in \cref{thm:avg-spherical-grad-snr}. Henceforth, we assume that both \cref{eq:tpca-averaging-normalized-errors,eq:tpca-gradient-norm-lower-bound} hold, which occurs with probability at least $1-\exp(-c_kN)$ by a union bound.

Combining \cref{eq:tpca-averaging-error-scale,eq:tpca-gradient-norm-lower-bound} with $\alpha_N\asymp_kN^{-\nicefrac{k}{2}}$ and $\sigma_N^2=\kappa_N\sqrt N\alpha_N$, we obtain
\begin{align}
\label{eq:tpca-gradient-relative-error}
\begin{aligned}
    \frac{\eps_N}{\norm{u_H}}
    &\le\frac{c_k\lambda N^{-\tfrac{k-2}{2}-\vartheta}}{c\bigl(N\lambda\alpha_N+\sigma_N\sqrt N\bigr)}
    \lesssim_k\frac{c_kN^{-\vartheta}\,N\lambda\alpha_N}{N\lambda\alpha_N+\sigma_N\sqrt N}\\
    &\le c_kN^{-\vartheta}\min\left\{1,\frac{N\lambda\alpha_N}{\sigma_N\sqrt N}\right\}
    =c_kN^{-\vartheta}\min\left\{1,\frac{\lambda\sigma_N}{\kappa_N}\right\},
\end{aligned}
\end{align}
where the implicit constant does not depend on $c_k$. In particular, since $c_k$ can be made arbitrarily small, we have $\eps_N<\norm{u_H}$, and hence $\Xbar_{T_{\mathrm{av}}}\ne0$ by \cref{eq:tpca-averaging-normalized-errors}. Now, applying \cref{lem:normalization-lipschitz} to $\tfrac{\kappa_N}{2\beta_0}\Xbar_{T_{\mathrm{av}}}$ and $u_H$, we infer the following from \cref{eq:tpca-averaging-normalized-errors,eq:tpca-gradient-relative-error}:
\begin{align*}
    \frac{1}{\sqrt N}\norm{\widehat X_{T_{\mathrm{av}}}-\widehat u_H}
    =\norm{\frac{\Xbar_{T_{\mathrm{av}}}}{\norm{\Xbar_{T_{\mathrm{av}}}}}-\frac{u_H}{\norm{u_H}}}
    \le\frac{2\eps_N}{\norm{u_H}}
    \lesssim_kc_kN^{-\vartheta}\min\left\{1,\frac{\lambda\sigma_N}{\kappa_N}\right\}.
\end{align*}
Since $\kappa_N\ge\tfrac23$ and $c_k$ can be made arbitrarily small, squaring proves the claim for odd $k$.

For even $k$, we infer from \cref{thm:avg-spherical-hess-snr} that
\begin{align}
\label{eq:tpca-hessian-signal-noise}
    A_H=A_{\mathrm{sig}}+W_N,
    \qquad
    A_{\mathrm{sig}}\doteq\lambda\gamma_N\left(\plant(\plant)^\top-\frac{I}{N}\right),
    \qquad
    \gamma_N\asymp_kN^{-\tfrac{k-2}{2}},
\end{align}
where $\norm{W_N}_{\mathrm{op}}\le C_kN^{-\tfrac{k-2}{4}}$ with probability at least $1-\exp(-cN)$. Since $\lfloor k/2\rfloor=\tfrac{k}{2}$, \cref{eq:tpca-averaging-error-scale} implies ${\sqrt N\,\eps_N\le c_k\lambda N^{-\tfrac{k-2}{2}-\vartheta}}$. Combining \cref{eq:tpca-averaging-normalized-errors,eq:tpca-hessian-signal-noise} and taking a union bound, we obtain the following with probability at least $1-\exp(-c_kN)$:
\begin{align}
\label{eq:tpca-hessian-perturbation-bound}
    \frac{\norm{\Sigmabar_{T_{\mathrm{av}}}-(I+2\beta_0A_{\mathrm{sig}})}_{\mathrm{op}}}{2\beta_0\lambda\gamma_N}
    \le\frac{\sqrt N\,\eps_N+\norm{W_N}_{\mathrm{op}}}{\lambda\gamma_N}
    \lesssim_kN^{-\vartheta}+\frac{N^{\tfrac{k-2}{4}}}{\lambda}.
\end{align}
The matrix $I+2\beta_0A_{\mathrm{sig}}$ has top eigenvector $\plant$ and eigengap $2\beta_0\lambda\gamma_N$. Applying the Davis--Kahan theorem~\citep[Theorem~4.5.5]{vershynin2018high} to $\Sigmabar_{T_{\mathrm{av}}}$ and $I+2\beta_0A_{\mathrm{sig}}$, and using \cref{eq:tpca-hessian-perturbation-bound}, we obtain
\begin{align*}
    \min_{s\in\{\pm1\}}\norm{\widehat v-s\plant}
    \lesssim\frac{\norm{\Sigmabar_{T_{\mathrm{av}}}-(I+2\beta_0A_{\mathrm{sig}})}_{\mathrm{op}}}{2\beta_0\lambda\gamma_N}
    \lesssim_kN^{-\vartheta}+\frac{N^{\tfrac{k-2}{4}}}{\lambda}.
\end{align*}
Since $m(\sqrt N\,\widehat v)=\dotp{\plant}{\widehat v}$ and $\norm{\widehat v-s\plant}^2=2-2s\dotp{\plant}{\widehat v}$ for every $s\in\{\pm1\}$, we conclude that
\begin{align*}
    1-\bigl|m(\sqrt N\,\widehat v)\bigr|
    =\frac12\min_{s\in\{\pm1\}}\norm{\widehat v-s\plant}^2
    \lesssim_k\frac{N^{\tfrac{k-2}{2}}}{\lambda^2}+N^{-2\vartheta},
\end{align*}
which proves the claim for even $k$.
\end{proof}

\section{Strong Recovery at the Algorithmic Threshold for Odd \texorpdfstring{$k$}{k}}
\label{sec:strong-recovery-odd}

Throughout this section, we assume that $k$ is odd. We use $(X_t)_{t\ge0}$ to denote the trajectory of spherical Langevin dynamics for the tensor PCA Hamiltonian $H$ at a constant inverse temperature $\beta>0$, and assume that the signal strength satisfies $\lambda\ge C_{\beta,k}N^\gamma$, where $\gamma\ge\tfrac{k-2}{4}$ and $C_{\beta,k}$ is a sufficiently large constant.

Our goal in this section is to prove the following theorem, which is a formal version of \cref{thm:langevin-odd-k-informal}.

\begin{theorem}[Persistent Strong Recovery above the Algorithmic Threshold]
\label{thm:persistent-strong-recovery-algorithmic-threshold}
Let $k\ge3$ be odd, let $\beta>0$, and assume $\lambda\ge C_{\beta,k}N^{\tfrac{k-2}{4}}$, where $C_{\beta,k}$ is a sufficiently large constant. Fix $\vartheta\in(0,1)$ and ${\eta\in(0,\min\{\nicefrac{1}{2},\tfrac{k-2}{4}\})}$, and let $(X_t)_{t\ge0}$ denote the trajectory of the second phase of \cref{alg:averaged-annealed-langevin}, initialized at $\tilX_0\sim\pi$ with inverse temperatures $\beta_0$ and $\beta$ for the two respective phases, where $\beta_0\asymp_k\lambda^{-1}N^{-\tfrac{k-1}{2}-\vartheta}$ and $T_{\mathrm{av}}\asymp_kN^{2k-2+4\vartheta}$, with sufficiently small and large implicit constants, respectively. Then, there exists a constant $T_{\mathrm{str}}>0$, depending only on $\beta$ and $k$, such that the following holds for all sufficiently large $N$ and every deterministic $T$ satisfying $T_{\mathrm{str}}\le T\le\poly(N)$:
\begin{align*}
    \Prb\left[\inf_{t\in[T_{\mathrm{str}},T]}m(X_t)\ge1-N^{-\eta}\right]
    \ge1-\exp\left(-c_{\beta,k}N^{\min\{1/2,\,1-2\eta\}}\right).
\end{align*}
\end{theorem}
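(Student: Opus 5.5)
The argument follows the outline of \cref{subsec:sketch-reverse-planting}. It has three pieces: a warm-start estimate, a worst-case phase above the critical correlation, and a reverse-planting phase below it.

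\textbf{Warm start.} Combine \cref{thm:tpca-averaged-differential-recovery} (odd case) with \cref{cor:avg-spherical-grad-correlation}. With probability $1-\exp(-c_kN)$ this gives
\[
\norm{X_0-\widehat u_H}\le N^{1/2-\vartheta}\min\{1,\lambda\sigma_N\},
\qquad
m(\widehat u_H)\asymp_k\min\{1,\lambda N^{-\frac{k-1}{4}}\}.
\]
Hence $m_0\doteq m(X_0)\asymp_k\min\{1,\lambda N^{-\frac{k-1}{4}}\}$. Since $\lambda\ge C_{\beta,k}N^{\frac{k-2}{4}}$, this yields $\beta k\lambda m_0^{k-2}\ge C'$, with $C'$ large when $C_{\beta,k}$ is large. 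So the escape condition holds with room to spare. Conditionally on the first phase, the second phase is spherical Langevin dynamics with an independent Brownian motion.

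\textbf{Worst-case phase above $Cr_N$.} Let $r_N=\lambda^{-1/(k-1)}$. On the event $\norm{H_0}_{\cG^2}\le C_kN$ from \cref{lem:kspin-sobolev-norm-concentration}, the disorder drift $\frac{\beta}{N}\dotp{\xopt}{\gradsp H_0}$ is $O(1)$ uniformly. Apply Dynkin's formula (\cref{lem:langevin-dynkin}) to $m$; its martingale part has gradient norm $\le N^{-1/2}$, so \cref{lem:dynkin-martingale-concentration} makes it $O(N^{-1/4})$ over $\poly(N)$ windows, up to probability $\exp(-cN^{1/2})$. Once $m_t\ge Cr_N$, the signal drift $\beta k\lambda m^{k-1}(1-m^2)$ dominates. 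A comparison ODE then drives $m_t$ to $1-\varepsilon$ in $O(1)$ time.

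Near $m=1$, I would switch to the observable $1-m$. Its drift is $\le -c\beta\lambda(1-m)+O(1)$, which gives an equilibrium level $1-m\lesssim\lambda^{-1}+{}$fluctuations. To get persistence at level $N^{-\eta}$ over $[T_{\mathrm{str}},T]$, I would chop time into unit intervals and union bound over $\poly(N)$ of them. On each interval the exponential-rate drift restores $m$ before the martingale, whose quadratic variation is $\lesssim(1-m)/N$, can push $1-m$ above $N^{-\eta}$. This yields probability $\exp(-cN^{1-2\eta})$ per interval. The exponent $\min\{1/2,1-2\eta\}$ in the statement is where this comes from.

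\textbf{Main obstacle: the subcritical window $m_0\ll Cr_N$.} Here I would use reverse planting. First, a gauge step: apply a Haar rotation fixing the realized $\widehat u_H$ direction, so that the planted vector is effectively random given the warm start. Second, use the decomposition $H_0=\kappa_N^{-1}\dotp{\zeta}{x}+H_0^{\perp}$ with $H_0^\perp$ independent of $\zeta$. The null dynamics runs on $H_0$ from the normalized $\zeta$ (perturbed to $X_0$). Under the null law, $\xopt$ is independent of the trajectory, so the null drift projected onto $\xopt$ is $O(\sqrt{\log N/N})$ at all grid times by spherical concentration.

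To transfer this to the planted law, apply \cref{lem:langevin-change-measure} with $U=N\lambda m^k$, stopped at $\tau=\inf\{t: m_t\ge Cr_N \text{ or failure}\}$. The log-likelihood is $\int_0^\tau\beta\dotp{\gradsp U}{dB}-\frac{\beta^2}{2}\int_0^\tau\norm{\gradsp U}^2\,dt$, with $\norm{\gradsp U}\lesssim\lambda\sqrt N m^{k-1}$. I would bound the cost of a small event $E$ via Hölder: $\Prb_{\mathrm{pl}}[E]\le\Prb_{\mathrm{null}}[E]^{1/2}\exp\bigl(O(\beta^2\lambda^2N\int_0^\tau m^{2k-2})\bigr)$. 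Before $\tau$, $m$ grows at an accelerating rate: $dm\gtrsim\lambda m^{k-1}dt$. So $\int_0^\tau\lambda^2 m^{2k-2}\,dt\lesssim\lambda\int m^{k-1}\,dm\lesssim\lambda(Cr_N)^k$, i.e.\ the cost is $\exp(O(N\lambda r_N^k))=\exp(O(Nr_N))$. The null failure probability must beat this, so its tail must be $\exp(-\omega(Nr_N))$.

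I expect this balance to be the crux. I would first try to prove the failure bound at level $\exp(-cN m_t^2)$-type, reflecting that the failure threshold scales with the current signal drift. The warm-start perturbation and the $\zeta$ correction to the initialization would also need to be absorbed into this tilt. A stopped martingale argument then shows that $\tau$ occurs at $m=Cr_N$ within $O(1)$ time, after which the worst-case phase applies.
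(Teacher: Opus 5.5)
Your architecture matches the paper's: a warm start from the first phase, reverse planting to reach correlation $Cr_N$, then the worst-case boost. The warm-start and worst-case pieces are fine. The gap is in the crux of the reverse-planting step.

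\textbf{The cost balance is set up at the wrong level.} You bound the Girsanov cost by its value at the critical level, $\exp(O(Nr_N))$, and require the null failure probability to beat it. That cannot work. Failure may occur right after the start, at $m\approx a_0\asymp\lambda\sigma_N$, the warm-start correlation. There the null tail of the drift at the failure threshold is only $\exp(-cN\lambda^2a_0^{2k-2})$, which is $\exp(-\Theta(\sqrt N))$ at $\lambda\asymp N^{(k-2)/4}$. This is far from $\exp(-\omega(Nr_N))$, since $Nr_N\gtrsim N^{3/4}$.

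Your fallback, a failure bound of type $\exp(-cNm_t^2)$, is also at the wrong scale. The cost accumulated up to correlation $m$ is $\asymp N\lambda m^k=Nm^2\cdot\lambda m^{k-2}\gg Nm^2$.

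The missing idea is to pair cost and failure probability at the same level. Failure at correlation $m$ means the null drift exceeds $\eta\lambda m^{k-1}$, which under $\Q$ has probability about $\exp(-cN\lambda^2m^{2k-2})$. The cost up to that time is $\lesssim N\lambda m^k=N\lambda^2m^{2k-2}/(\lambda m^{k-2})$. The escape condition $\lambda a_0^{k-2}\gg1$ is exactly what makes the failure probability win.

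The paper implements this in three moves. It keeps the cost inside the null expectation (\cref{thm:adaptive-contiguity}). It dominates the cost pathwise on the failure event by $c_*N\sup_t|(\cL_{\mathrm{null}}m_\phi)(Y_t)|^2$ minus a gain $c_3N\lambda(a_0/2)^k$. It then uses an exponential-moment bound for the null drift along the entire trajectory (\cref{lem:null-trajectory-uniform-concentration}). Your typical-size estimate $O(\sqrt{\log N/N})$ at grid times is not enough; you need subgaussian tails at scale $\lambda m^{k-1}\gg N^{-1/2}$.

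\textbf{The initial condition cannot be absorbed into a Girsanov tilt.} The planted and null laws differ in the warm start itself: $\widehat u_H$ versus $\sqrt N\zeta/\norm{\zeta}$. A drift tilt cannot change an initial condition. The paper instead parametrizes the environment by $U=u_H$ and puts the whole linear component of $H$ into $H_{\mathrm{null}}$. The dynamical tilt is then $H_{\mathrm{sig}}=N\lambda m_\phi^k-N\lambda\kappa_N^{-2}\sigma_N^2m_\phi$, not $N\lambda m^k$.

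The paper also pays a Gaussian likelihood ratio with $\E_\Q[Z_{\mathrm{env}}^2]=\exp(N\lambda^2\sigma_N^2/\kappa_N^2)$. This cost is of order $\exp(O(Na_0^2))$, the same order as your $\exp(-cNm^2)$ at $m\approx a_0$, so that bound could not cover it. It is beaten only by the gain term $\tfrac{c_3}{2}N\lambda(a_0/2)^k\asymp\lambda a_0^{k-2}\cdot Na_0^2$.

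\textbf{Independence of $\phi$ needs a coupling.} Even under $\Q$, the actual gauge-transformed start $\Phi X_0$ may depend on $\phi$. Independence of $\phi$ therefore holds only for the trajectory started at $\sqrt Ne_1$. The paper transfers the bound to nearby starts by driving both null trajectories with common random rotations. This keeps their distance $O(\delta\sqrt N)$ by Gr\"onwall and costs only a factor $\exp(C_*N\delta^2)$, with $N\delta^2=N^{-2\vartheta}N\lambda^2\sigma_N^2$. This step is a coupling, not part of the tilt.
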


The behavior of the second phase depends on how warm its initialization is. Once the correlation exceeds a large constant multiple of the critical correlation $\lambda^{-\tfrac{1}{k-1}}$, the signal drift dominates the disorder drift uniformly over $\SN$, and the dynamics achieves persistent strong recovery within constant time, as we show in \cref{subsec:boosting-critical-to-strong-recovery}. However, for $\lambda$ just above the algorithmic threshold, the warm start computed by the first phase of \cref{alg:averaged-annealed-langevin} attains only subcritical correlation, at which the signal drift need not dominate the disorder drift uniformly. This subcritical regime is the topic of \cref{subsec:critical-recovery-warm-start,subsec:average-gradient-gauge,subsec:reverse-planting,subsec:critical-recovery-reverse-planting}, where we show that the warm start depends on the disorder only in a benign way, so that the signal drift still dominates the disorder drift along the trajectory until the dynamics achieves critical recovery. The complete proof is presented in \cref{subsec:proof-persistent-strong-recovery-algorithmic-threshold}.

\subsection{Boosting Critical Recovery to Persistent Strong Recovery}
\label{subsec:boosting-critical-to-strong-recovery}

Recall from \cref{subsec:sketch-reverse-planting} that the signal drift dominates both the entropic and disorder drifts, even in the worst case, whenever the correlation exceeds a large constant multiple of the \emph{critical correlation} $\lambda^{-\tfrac{1}{k-1}}$. Indeed, the signal drift is of order $\lambda m(X_t)^{k-1}$, whereas the disorder drift is of order one uniformly over $\SN$. We formalize this worst-case argument in \cref{thm:strong-recovery-after-critical-correlation} below, which shows that, once the process has attained critical correlation, running spherical Langevin dynamics for an additional constant time suffices to achieve $1-\poly(\nicefrac{1}{N})$ correlation with high probability. Beyond that, the correlation stays above this level throughout any $\poly(N)$ time horizon.

\begin{theorem}[Persistent Strong Recovery after Critical Correlation]
\label{thm:strong-recovery-after-critical-correlation}
Fix $\eta\in(0,\min\{\nicefrac{1}{2},\tfrac{k-2}{4}\})$ and $s\in(0,\nicefrac{1}{2})$, define $\bar r_N\doteq N^{-s}$, and assume that $\lambda\bar r_N^{k-1}\ge C_{\beta,k}$. Define the stopping time
\begin{align*}
    \tau_{C\bar r_N}\doteq\inf\{t\ge0:m(X_t)\ge C\bar r_N\},
\end{align*}
where $C\geq 1$ is a constant. Then, the following holds for all sufficiently large $N$, every deterministic $T$ satisfying $2\le T\le\poly(N)$, and any initial state $X_0$, possibly depending on $H$,
\begin{align*}
    \Prb\left[\inf_{t\in[2,T]}m(X_{\tau_{C\bar r_N}+t})\ge1-N^{-\eta}\right]
    \ge1-\exp(-c_kN)-2\exp\left(-cN\min\{\bar r_N^2,N^{-2\eta}\}\right).
\end{align*}
\end{theorem}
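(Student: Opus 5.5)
The plan is to work with the scalar process $m_t\doteq m(X_t)$ through its Dynkin decomposition (\cref{lem:langevin-dynkin}). I will use only worst-case control of the disorder, and restart the argument at $\tau_{C\bar r_N}$ via the strong Markov property; because the bounds are worst case, this is legitimate even when $X_0$ depends on $H$. If $\tau_{C\bar r_N}=\infty$ the event is read as vacuous, or we simply work on $\{\tau_{C\bar r_N}<\infty\}$.

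\textbf{Step 1: drift and noise of $m_t$.} Using $\gradsp m(x)=Q_x\plant/\sqrt N$, $\lapsp m=-\kappa_N m$ and \cref{eq:langevin-generator}, the generator acts on $m$ as
\begin{align*}
    \cL_{H,\beta}m(x)=\beta k\lambda\,m^{k-1}(1-m^2)-\frac{\kappa_N}{2}m+\frac{\beta}{\sqrt N}\dotp{Q_x\plant}{\gradsp H_0(x)}.
\end{align*}
On the event $\{\norm{H_0}_{\cG^1}\le C_kN\}$, which has probability at least $1-\exp(-c_kN)$ by \cref{lem:kspin-sobolev-norm-concentration}, the last term is at most $\beta C_k$ uniformly over $\SN$. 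The Dynkin martingale $M^m$ has $\norm{\gradsp m}^2=(1-m^2)/N\le 1/N$. Hence \cref{lem:dynkin-martingale-concentration} gives $\sup_{t\le \tau+2}\abs{M^m_t-M^m_\tau}\le \bar r_N/4$ with probability at least $1-2\exp(-cN\bar r_N^2)$.

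\textbf{Step 2: climbing from $C\bar r_N$ to $1/2$.} By continuity, $m_{\tau}=C\bar r_N\ge\bar r_N$. While $m_t\in[\bar r_N/2,1/2]$, the signal drift is at least $c\beta k\lambda m_t^{k-1}$. Because $\lambda\bar r_N^{k-1}\ge C_{\beta,k}$ is large, this dominates the entropic drift $m/2$ and the disorder drift $\beta C_k$. On the martingale event of Step~1, a comparison argument then shows that $m$ never falls below $\bar r_N/2$. The comparison is the integral inequality $m_t\ge m_\tau-\bar r_N/4+\int_\tau^t c\beta k\lambda m_s^{k-1}\,ds$, compared with the corresponding ODE. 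The same inequality shows that $m$ reaches $1/2$ within time $O(1/(\beta\lambda\bar r_N^{k-2}))\le 1$.

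\textbf{Step 3: convergence and persistence near $1$.} Set $\eps_t=1-m_t$. For $m\ge 1/2$ its drift satisfies
\begin{align*}
    \cL\eps\le -c\beta\lambda\,\eps+\tfrac12+\beta C_k.
\end{align*}
Its martingale has quadratic variation rate $(1-m^2)/N\le 2\eps/N$. I split $[\tau+1,\tau+T]$ into unit windows, of which there are $\poly(N)$. In each window, Duhamel/Gronwall for the drift together with \cref{lem:dynkin-martingale-concentration} gives
\begin{align*}
    \eps_t\le e^{-c\beta\lambda(t-t_0)}\eps_{t_0}+O(1/\lambda)+\tfrac14 N^{-\eta},
\end{align*}
with failure probability $2\exp(-cN^{1-2\eta})$. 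The standing assumption $\lambda\gtrsim N^{(k-2)/4}\gg N^{\eta}$ makes $O(1/\lambda)\le \tfrac14N^{-\eta}$. The first window, starting from $\eps\le 1/2$, brings $\eps$ below $N^{-\eta}/2$ by time $\tau+2$, since $\lambda$ is large. Each later window keeps $\eps\le N^{-\eta}$ throughout and returns it to at most $N^{-\eta}/2$. A union bound over the windows gives the claimed probability, up to adjusting constants.

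\textbf{Main obstacle.} The delicate part is Step~3: the uniform-in-time persistence over $\poly(N)$ horizons. The martingale increments must be controlled at scale $N^{-\eta}$ rather than $\bar r_N$, and the self-referential window argument must close. To make it close, I would stop the process at the first exit of $\eps$ above $N^{-\eta}$. This keeps the variance bound $2\eps/N\le 2N^{-\eta}/N$ valid, or one can just use the cruder bound $1/N$. That yields the exponent $N\cdot N^{-2\eta}$, consistent with the stated $\exp(-cN\min\{\bar r_N^2,N^{-2\eta}\})$.
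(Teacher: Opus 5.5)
Your proposal is correct and follows essentially the paper's argument: the strong Markov property at $\tau_{C\bar r_N}$, a worst-case bound on $\max_x\norm{\gradsp H_0(x)}$, unit-window control of the Dynkin martingale of $m$ at scale $\min\{\bar r_N,N^{-\eta}\}$, a Duhamel bound for $1-m$, and a union bound over $\lceil T\rceil$ windows. The one point to fix is the case $\tau_{C\bar r_N}=\infty$: rather than treating it as vacuous, argue as the paper does via the support theorem that $\tau_{C\bar r_N}<\infty$ almost surely. The remaining difference is organizational: the paper writes $\cL_{H,\beta}m=\nu(1-m)-b$, with $b\le\nu/16$ on all of $\{m\ge C\bar r_N/2\}$ and $b\le N^{-\eta}\nu/64$ on $\{m\ge1/3\}$, and uses a single integrating factor over $[0,T]$, so your separate power-law climbing phase and window restarts are not needed.
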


\begin{proof}
Write $q_N\doteq N^{-\eta}$ and $\eps_N\doteq\tfrac{1}{32}\min\{\bar r_N,q_N\}$. Throughout the proof, we assume that $N$ is sufficiently large and $C_{\beta,k}$ is a sufficiently large constant. Since $\SN$ is a compact manifold and spherical Langevin dynamics is a Markov diffusion process, $\tau_{C\bar r_N}<\infty$ holds almost surely by the support theorem~\citep{millet1994simple}. Hence, by the strong Markov property~\citep[Theorem~6.2.2]{stroock2007multidimensional} applied at $\tau_{C\bar r_N}$, it suffices to consider an initial state $X_0$ with $m(X_0)\ge C\bar r_N$, possibly depending on $H$.

Since $m(x)$ is linear, \cref{eq:spherical-derivatives} implies $\gradsp m(x)=N^{-1/2}Q_x\plant$ and $\lapsp m=-\kappa_Nm$, and hence $N\norm{\gradsp m(x)}^2=\norm{Q_x\plant}^2=1-m(x)^2$. Thus, we obtain the following from \cref{eq:langevin-generator}
\begin{align}
    \cL_{H,\beta}m(x)
    &=\frac12\lapsp m(x)+\beta\dotp{\gradsp H(x)}{\gradsp m(x)}\nonumber\\*
    &=-\frac{\kappa_N}{2}m(x)+\beta\dotp{N\lambda km(x)^{k-1}\gradsp m(x)+\gradsp H_0(x)}{\gradsp m(x)}\nonumber\\*
    &=-\frac{\kappa_N}{2}m(x)+\beta\lambda km(x)^{k-1}\bigl(1-m(x)^2\bigr)+\beta\dotp{\gradsp H_0(x)}{\gradsp m(x)}\nonumber\\*
    &=\nu(x)\bigl(1-m(x)\bigr)-b(x),
    \label{eq:strong-recovery-overlap-drift}
\end{align}
where $\nu(x)$ and $b(x)$ are defined as follows:
\begin{align*}
    \nu(x)\doteq\beta\lambda km(x)^{k-1}\bigl(1+m(x)\bigr),
    \qquad
    b(x)\doteq\frac{\kappa_N}{2}m(x)-\beta\dotp{\gradsp H_0(x)}{\gradsp m(x)}.
\end{align*}
Let $M_t^m$ denote the Dynkin martingale associated with $m$. By \cref{lem:langevin-dynkin},
\begin{align}
\label{eq:strong-recovery-dynkin}
    M_t^m=m(X_t)-m(X_0)-\int_0^t\cL_{H,\beta}m(X_s)\,ds.
\end{align}
Consider the events
\begin{align*}
    \cE\doteq\left\{\max_{x\in\SN}\norm{\gradsp H_0(x)}\le C_1\sqrt N\right\},
    \qquad
    \calM_j\doteq\left\{\sup_{j\le t\le j+1}\bigl|M_t^m-M_j^m\bigr|\le\eps_N\right\},
\end{align*}
and let $\calM\doteq\bigcap_{j=0}^{\lceil T\rceil-1}\calM_j$. By \cref{lem:kspin-sobolev-norm-concentration} with $n=1$ and $r=N$, together with \cref{def:spherical-sobolev-norm}, we can choose $C_1$ such that $\Prb[\cE]\ge1-\exp(-c_kN)$. Moreover, since $\norm{\gradsp m(x)}\le N^{-1/2}$, we infer from the Markov property, \cref{lem:dynkin-martingale-concentration}, and a union bound that
\begin{align}
\label{eq:strong-recovery-martingale-bound}
    \Prb[\calM^c]
    \le\sum_{j=0}^{\lceil T\rceil-1}\Prb[\calM_j^c]
    \le2\lceil T\rceil\exp\left(-\frac{N\eps_N^2}{2}\right).
\end{align}
Henceforth, we work on the event $\cE\cap\calM$.

We first express the deficit $1-m(X_t)$ in terms of $b(X_t)$. To this end, we define the integrating factor ${R_t\doteq\int_0^t\nu(X_s)\,ds}$. By \cref{eq:strong-recovery-overlap-drift,eq:strong-recovery-dynkin},
\begin{align*}
    1-m(X_t)+M_t^m
    =1-m(X_0)-\int_0^t\Bigl[\nu(X_s)\bigl(1-m(X_s)\bigr)-b(X_s)\Bigr]\,ds.
\end{align*}
By the continuity of Brownian sample paths, the integrand is continuous in $s$ and thus, the right-hand side is continuously differentiable in $t$. It follows that,
\begin{align}
    \frac{d}{dt}\Bigl[e^{R_t}\bigl(1-m(X_t)+M_t^m\bigr)\Bigr]
    &=e^{R_t}\nu(X_t)\bigl(1-m(X_t)+M_t^m\bigr)
      -e^{R_t}\Bigl[\nu(X_t)\bigl(1-m(X_t)\bigr)-b(X_t)\Bigr]\nonumber\\
    &=e^{R_t}\bigl[\nu(X_t)M_t^m+b(X_t)\bigr].
    \label{eq:strong-recovery-derivative}
\end{align}
Integrating \cref{eq:strong-recovery-derivative} from $0$ to $t$, using $M_0^m=0$, and multiplying both sides by $e^{-R_t}$, we obtain
\begin{align}
\label{eq:strong-recovery-duhamel}
    1-m(X_t)
    =e^{-R_t}\bigl(1-m(X_0)\bigr)
     +\int_0^te^{-(R_t-R_s)}b(X_s)\,ds
     +\xi_t,
    \qquad0\le t\le T,
\end{align}
where the martingale error term $\xi_t$ is given by
\begin{align}
\label{eq:strong-recovery-error-term}
    \xi_t\doteq\int_0^t\nu(X_s)e^{-(R_t-R_s)}M_s^m\,ds-M_t^m.
\end{align}
To control the integral involving $b$ and the martingale error term $\xi_t$ in \cref{eq:strong-recovery-duhamel}, we shall repeatedly use the following identity, which holds since $R_0=0$ and $\tfrac{d}{ds}e^{-(R_t-R_s)}=\nu(X_s)e^{-(R_t-R_s)}$:
\begin{align}
\label{eq:strong-recovery-telescoping}
    \int_0^t\nu(X_s)e^{-(R_t-R_s)}\,ds=1-e^{-R_t}.
\end{align}
Multiplying \cref{eq:strong-recovery-telescoping} by $M_t^m$, we can write
\begin{align}
\label{eq:strong-recovery-martingale-rewrite}
    M_t^m=\int_0^t\nu(X_s)e^{-(R_t-R_s)}M_t^m\,ds+e^{-R_t}M_t^m.
\end{align}
Substituting \cref{eq:strong-recovery-martingale-rewrite} into \cref{eq:strong-recovery-error-term} and using $M_0^m=0$, we express the martingale error term in terms of the increments of $M^m$:
\begin{align}
\label{eq:strong-recovery-martingale-error}
    \xi_t
    =\int_0^t\nu(X_s)e^{-(R_t-R_s)}\bigl(M_s^m-M_t^m\bigr)\,ds-e^{-R_t}\bigl(M_t^m-M_0^m\bigr).
\end{align}
Since we work on $\calM$, we infer from the triangle inequality that
\begin{align}
\label{eq:strong-recovery-increment-bound}
\begin{aligned}
    \bigl|M_t^m-M_s^m\bigr|
    &\le\bigl|M_t^m-M_{\lfloor t\rfloor}^m\bigr|
      +\sum_{l=\lfloor s\rfloor+1}^{\lfloor t\rfloor}\bigl|M_l^m-M_{l-1}^m\bigr|
      +\bigl|M_s^m-M_{\lfloor s\rfloor}^m\bigr|\\
    &\le\bigl(\lfloor t\rfloor-\lfloor s\rfloor+2\bigr)\eps_N
    \le(t-s+3)\eps_N,
    \qquad0\le s\le t\le T.
\end{aligned}
\end{align}

We now bound $\nu(x)$ and $b(x)$ in terms of the correlation $m(x)$. Since we work on $\cE$, $\kappa_N\le1$, and $\norm{\gradsp m(x)}\le N^{-1/2}$, the Cauchy--Schwarz inequality implies
\begin{align}
\label{eq:strong-recovery-b-bound}
    b(x)
    \le\frac{\kappa_N}{2}+\beta\norm{\gradsp H_0(x)}\,\norm{\gradsp m(x)}
    \le\frac12+\beta C_1,
    \qquad x\in\SN.
\end{align}
Moreover, $\nu(x)\ge\beta\lambda k\,m(x)^{k-1}$ whenever $m(x)>0$. Hence, if $m(x)\ge C\bar r_N/2$, we infer from $\lambda\bar r_N^{k-1}\ge C_{\beta,k}$ and $C\ge1$ that
\begin{align}
\label{eq:strong-recovery-nu-crude}
    \nu(x)
    \ge\beta\lambda k\Bigl(\frac{C\bar r_N}{2}\Bigr)^{k-1}
    \ge\frac{\beta k}{2^{k-1}}C_{\beta,k}
    \ge\max\Bigl\{1,16\Bigl(\frac12+\beta C_1\Bigr)\Bigr\},
\end{align}
and if $m(x)\ge\nicefrac{1}{3}$, we infer from $\lambda\ge C_{\beta,k}N^\gamma$ that
\begin{align}
\label{eq:strong-recovery-nu-sharp}
    \nu(x)
    \ge\nu_\star\doteq\frac{\beta\lambda k}{3^{k-1}}
    \ge\frac{\beta k}{3^{k-1}}C_{\beta,k}N^\gamma
    \ge\max\Bigl\{1,\frac{64}{q_N}\Bigl(\frac12+\beta C_1\Bigr)\Bigr\}.
\end{align}
Here, the last inequalities in \cref{eq:strong-recovery-nu-crude,eq:strong-recovery-nu-sharp} hold since $C_{\beta,k}$ is sufficiently large and, for \cref{eq:strong-recovery-nu-sharp}, since $q_N^{-1}=N^\eta$ with $\eta<\tfrac{k-2}{4}\le\gamma$ and $N$ is sufficiently large. Combining \cref{eq:strong-recovery-nu-crude,eq:strong-recovery-nu-sharp} with \cref{eq:strong-recovery-b-bound}, we conclude that
\begin{align}
\label{eq:strong-recovery-rate-bounds}
\begin{aligned}
    &\nu(x)\ge1
    &&\text{and}\quad b(x)\le\frac{1}{16}\,\nu(x)
    &&\text{whenever }m(x)\ge\frac{C\bar r_N}{2},\\
    &\nu(x)\ge\nu_\star\ge1
    &&\text{and}\quad b(x)\le\frac{q_N}{64}\,\nu(x)
    &&\text{whenever }m(x)\ge\frac13.
\end{aligned}
\end{align}

We now show that the correlation stays above $C\bar r_N/2$ throughout $[0,T]$. Let $\rho\doteq\inf\{t\ge0:m(X_t)\le C\bar r_N/2\}$, and note that $\rho>0$ since $m(X_0)\ge C\bar r_N$. Fix $t\le\rho\wedge T$. Then $m(X_s)\ge C\bar r_N/2$ for all $s\in[0,t]$, and hence, by \cref{eq:strong-recovery-rate-bounds},
\begin{align}
\label{eq:strong-recovery-crude-rate}
    \nu(X_s)\ge1
    \qquad\text{and}\qquad
    b(X_s)\le\frac{1}{16}\,\nu(X_s),
    \qquad0\le s\le t.
\end{align}
We use the two bounds in \cref{eq:strong-recovery-crude-rate} to control the martingale error term and the integral in \cref{eq:strong-recovery-duhamel}, respectively. First, since $\nu(X_s)\ge1$ for $s\in[0,t]$, we have $R_t-R_s\ge t-s$, and hence \cref{eq:strong-recovery-increment-bound} implies
\begin{align}
\label{eq:strong-recovery-increment-clock}
    \bigl|M_t^m-M_s^m\bigr|\le(R_t-R_s+3)\eps_N,
    \qquad0\le s\le t.
\end{align}
Substituting \cref{eq:strong-recovery-increment-clock} into \cref{eq:strong-recovery-martingale-error} and changing variables to $u=R_t-R_s$, for which $du=-\nu(X_s)\,ds$, we obtain
\begin{align}
\label{eq:strong-recovery-error-bound}
\begin{aligned}
    |\xi_t|
    &\le\eps_N\int_0^t\nu(X_s)e^{-(R_t-R_s)}\bigl(R_t-R_s+3\bigr)\,ds+\eps_Ne^{-R_t}(R_t+3)\\
    &=\eps_N\int_0^{R_t}e^{-u}(u+3)\,du+\eps_Ne^{-R_t}(R_t+3)\\
    &\le\eps_N\int_0^\infty e^{-u}(u+3)\,du+\eps_N\sup_{v\ge0}e^{-v}(v+3)
    =4\eps_N+3\eps_N=7\eps_N.
\end{aligned}
\end{align}
The bound on $b$ in \cref{eq:strong-recovery-crude-rate}, together with \cref{eq:strong-recovery-telescoping}, implies the following
\begin{align}
\label{eq:strong-recovery-source-bound}
    \int_0^te^{-(R_t-R_s)}b(X_s)\,ds
    \le\frac{1}{16}\int_0^t\nu(X_s)e^{-(R_t-R_s)}\,ds
    =\frac{1}{16}\bigl(1-e^{-R_t}\bigr).
\end{align}
Substituting \cref{eq:strong-recovery-error-bound,eq:strong-recovery-source-bound} into \cref{eq:strong-recovery-duhamel}, we obtain the following for every $t\le\rho\wedge T$:
\begin{align}
    1-m(X_t)
    &\le e^{-R_t}\bigl(1-m(X_0)\bigr)+\frac{1}{16}\bigl(1-e^{-R_t}\bigr)+7\eps_N
    \label{eq:strong-recovery-crude-deficit}\\
    &\le1-C\bar r_N+7\eps_N<1-\frac{C\bar r_N}{2},
    \label{eq:strong-recovery-no-exit}
\end{align}
where the second inequality uses $1-m(X_0)\le1-C\bar r_N$ and $\nicefrac{1}{16}\le1-C\bar r_N$, and the last inequality uses $7\eps_N\le\tfrac{7}{32}\bar r_N<\tfrac{C\bar r_N}{2}$.
If $\rho\le T$, then $m(X_\rho)=C\bar r_N/2$ by continuity, which contradicts \cref{eq:strong-recovery-no-exit} at $t=\rho$. Hence $\rho>T$, and \cref{eq:strong-recovery-crude-rate,eq:strong-recovery-error-bound,eq:strong-recovery-crude-deficit} hold for every $t\in[0,T]$. In particular, $R_t\ge t$, and \cref{eq:strong-recovery-crude-deficit} implies the following:
\begin{align}
\label{eq:strong-recovery-one-third}
    1-m(X_t)\le e^{-1}+\frac{1}{16}+7\eps_N<\frac23,
    \qquad1\le t\le T.
\end{align}
Consequently, $m(X_t)>\nicefrac{1}{3}$ for every $t\in[1,T]$.

Finally, we show that the deficit falls below $q_N$ by time $t=2$ and stays below $q_N$ up to time $T$. Let $t\in[2,T]$. Since $m(X_s)>\nicefrac{1}{3}$ for all $s\in[1,t]$ by \cref{eq:strong-recovery-one-third}, we infer from \cref{eq:strong-recovery-rate-bounds} that
\begin{align}
\label{eq:strong-recovery-sharp-rate}
    \nu(X_s)\ge\nu_\star
    \qquad\text{and}\qquad
    b(X_s)\le\frac{q_N}{64}\,\nu(X_s),
    \qquad1\le s\le t.
\end{align}
In particular, $R_t-R_1=\int_1^t\nu(X_s)\,ds\ge\nu_\star(t-1)\ge\nu_\star$, since $t\ge2$. To use the bound on $b$ in \cref{eq:strong-recovery-sharp-rate}, we split the integral in \cref{eq:strong-recovery-duhamel} at $s=1$. As in \cref{eq:strong-recovery-telescoping}, the identity $\tfrac{d}{ds}e^{-(R_t-R_s)}=\nu(X_s)e^{-(R_t-R_s)}$ implies
\begin{align}
\label{eq:strong-recovery-split-integrals}
    \int_0^1\nu(X_s)e^{-(R_t-R_s)}\,ds=e^{-(R_t-R_1)}-e^{-R_t},
    \qquad
    \int_1^t\nu(X_s)e^{-(R_t-R_s)}\,ds=1-e^{-(R_t-R_1)}.
\end{align}
Combining \cref{eq:strong-recovery-duhamel} with the bounds on $b$ in \cref{eq:strong-recovery-crude-rate,eq:strong-recovery-sharp-rate}, $1-m(X_0)\le1$, and $|\xi_t|\le7\eps_N\le\tfrac{7q_N}{32}$ from \cref{eq:strong-recovery-error-bound}, we obtain
\begin{align}
\label{eq:strong-recovery-persistence}
\begin{aligned}
    1-m(X_t)
    &=e^{-R_t}\bigl(1-m(X_0)\bigr)
      +\int_0^1e^{-(R_t-R_s)}b(X_s)\,ds
      +\int_1^te^{-(R_t-R_s)}b(X_s)\,ds
      +\xi_t\\
    &\le e^{-R_t}
      +\frac{1}{16}\int_0^1\nu(X_s)e^{-(R_t-R_s)}\,ds
      +\frac{q_N}{64}\int_1^t\nu(X_s)e^{-(R_t-R_s)}\,ds
      +\frac{7q_N}{32}\\
    &=\frac{15}{16}e^{-R_t}
      +\frac{1}{16}e^{-(R_t-R_1)}
      +\frac{q_N}{64}\bigl(1-e^{-(R_t-R_1)}\bigr)
      +\frac{7q_N}{32}\\
    &\le e^{-\nu_\star}+\frac{q_N}{64}+\frac{7q_N}{32}
    <q_N.
\end{aligned}
\end{align}
Here, the second equality follows from \cref{eq:strong-recovery-split-integrals}, the second inequality uses $e^{-R_t}\le e^{-(R_t-R_1)}\le e^{-\nu_\star}$, and the last inequality holds for all sufficiently large $N$, since $\nu_\star\ge\beta k3^{1-k}C_{\beta,k}N^\gamma$ implies $e^{-\nu_\star}=o(N^{-\eta})$.

By \cref{eq:strong-recovery-persistence}, we have $m(X_t)>1-q_N$ for every $t\in[2,T]$ on $\cE\cap\calM$. Taking a union bound and using \cref{eq:strong-recovery-martingale-bound}, we obtain
\begin{align*}
    \Prb\left[\inf_{t\in[2,T]}m(X_t)<1-q_N\right]
    \le\Prb[\cE^c]+\Prb[\calM^c]
    \le\exp(-c_kN)+2\lceil T\rceil\exp\left(-cN\min\{\bar r_N^2,q_N^2\}\right).
\end{align*}
Since $T\le\poly(N)$ and $N\min\{\bar r_N^2,q_N^2\}/\log N\to\infty$ for $s,\eta<\nicefrac{1}{2}$, the proof is concluded by absorbing the factor $\lceil T\rceil$ after adjusting constants appropriately.
\end{proof}

\begin{remark}
\label{rem:strong-recovery-parity-free}
Although this section assumes that $k$ is odd, the proof of \cref{thm:strong-recovery-after-critical-correlation} does not use this assumption. In \cref{subsec:boosting-weak-to-strong-recovery}, we use this fact to show that, for even $k$, spherical Langevin dynamics boosts weak recovery of the planted signal, up to a global sign, to persistent strong recovery.
\end{remark}

A direct application of \cref{thm:tpca-averaged-differential-recovery,thm:strong-recovery-after-critical-correlation} establishes persistent strong recovery for \cref{alg:averaged-annealed-langevin} in the regime $\lambda\gtrsim N^{\tfrac{k-2}{4}+\tfrac{1}{4k}}$, which is above the algorithmic threshold by a factor of $N^{\tfrac{1}{4k}}$. Specifically, we prove the following.

\begin{corollary}[Persistent Strong Recovery Slightly above the Algorithmic Threshold]
\label{cor:averaged-langevin-persistent-strong-recovery}
Suppose $\gamma\ge\tfrac{k-2}{4}+\tfrac{1}{4k}$. Fix $\vartheta\in(0,1)$ and $\eta\in(0,\min\{\nicefrac{1}{2},\tfrac{k-2}{4}\})$. Let $(X_t)_{t\ge0}$ denote the trajectory of the second phase of \cref{alg:averaged-annealed-langevin}, initialized at $\tilX_0\sim\pi$ with inverse temperatures $\beta_0$ and $\beta$ for the two respective phases, where $\beta_0$ and $T_{\mathrm{av}}$ are chosen as in \cref{thm:tpca-averaged-differential-recovery} for this choice of $\vartheta$. Then, for all sufficiently large $N$ and every deterministic $T$ satisfying $2\le T\le\poly(N)$,
\begin{align*}
    \Prb\left[\inf_{t\in[2,T]}m(X_t)\ge1-N^{-\eta}\right]
    \ge1-\exp\left(-c_kN^{\min\{(k+1)/(2k),\,1-2\eta\}}\right).
\end{align*}
\end{corollary}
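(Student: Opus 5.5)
The plan is to chain \cref{thm:tpca-averaged-differential-recovery} (odd case) into \cref{thm:strong-recovery-after-critical-correlation} with $C=1$. We choose $\bar r_N=N^{-s}$ so that the warm start already lies above $\bar r_N$, which makes $\tau_{\bar r_N}=0$. The natural choice is $s=\tfrac{k-1}{4k}$ when $\gamma<\tfrac{k-1}{4}$, and a smaller $s$ otherwise.

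First we check the hypotheses of \cref{thm:strong-recovery-after-critical-correlation}. We have $s<\tfrac12$, and
\[
\lambda\bar r_N^{k-1}\ge C_{\beta,k}N^{\gamma-\tfrac{(k-1)^2}{4k}}.
\]
Since $\gamma\ge\tfrac{k-2}{4}+\tfrac1{4k}=\tfrac{(k-1)^2}{4k}$, the exponent is nonnegative, so the required bound holds once $C_{\beta,k}$ is large.

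Second, we lower bound the correlation of the warm start $X_0=\widehat X_{T_{\mathrm{av}}}$. By \cref{cor:avg-spherical-grad-correlation}, with probability at least $1-C\exp(-c_k\min\{N,\lambda^2N^{-\tfrac{k-3}{2}}\})$ we have
\[
m(\widehat u_H)\ge c_k\min\{1,\lambda N^{-\tfrac{k-1}{4}}\}\ge c_k\min\{1,C_{\beta,k}N^{-\tfrac{k-1}{4k}}\}.
\]
By \cref{thm:tpca-averaged-differential-recovery}, with probability at least $1-\exp(-c_kN)$,
\[
|m(\widehat X_{T_{\mathrm{av}}})-m(\widehat u_H)|\le N^{-1/2}\norm{\widehat X_{T_{\mathrm{av}}}-\widehat u_H}\le N^{-\vartheta}\min\{1,\lambda\sigma_N\}.
\]
Because $\sigma_N\asymp_kN^{-\tfrac{k-1}{4}}$, this error is a factor $N^{-\vartheta}$ smaller than the lower bound on $m(\widehat u_H)$. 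Hence
\[
m(X_0)\ge \tfrac{c_k}{2}\min\{1,\lambda N^{-\tfrac{k-1}{4}}\}\ge \tfrac{c_k}{2}C_{\beta,k}N^{-\tfrac{k-1}{4k}}\ge N^{-s}=\bar r_N.
\]
In particular $\tau_{\bar r_N}=0$, and the event in \cref{thm:strong-recovery-after-critical-correlation} becomes exactly $\inf_{t\in[2,T]}m(X_t)\ge1-N^{-\eta}$.

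That theorem permits an initial state depending on $H$. Its proof, however, applies the strong Markov property and bounds the Dynkin martingale using only the driving Brownian motion of the second phase, which is independent of the first phase. So we may condition on the first phase, including $X_0$ and $H$, and apply the theorem verbatim.

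Finally we collect the failure probabilities. The exponent $N\min\{\bar r_N^2,N^{-2\eta}\}$ equals $N^{\min\{1-2s,1-2\eta\}}$, and $1-2s=\tfrac{k+1}{2k}$, which matches the claimed bound. The remaining terms are smaller: the $\exp(-c_kN)$ terms trivially, and the corollary's term because $\lambda^2N^{-\tfrac{k-3}{2}}\ge N^{1/2+1/(2k)}=N^{(k+1)/(2k)}$. A union bound then gives the result.

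The main subtlety I expect is the regime $\lambda\gg N^{\tfrac{k-1}{4}}$. There $m(X_0)\asymp1$, so the choice $s=\tfrac{k-1}{4k}$ still works, since $\bar r_N\le m(X_0)$ holds trivially. It remains to confirm that the perturbation from \cref{thm:tpca-averaged-differential-recovery} is relatively small in both regimes, which is exactly what its $\min\{1,\lambda^2\sigma_N^2\}$ factor provides.
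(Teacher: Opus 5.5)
Your proposal is correct and follows the paper's proof essentially step for step. You take $s=\tfrac{k-1}{4k}$ and $C=1$ in \cref{thm:strong-recovery-after-critical-correlation}, and combine \cref{cor:avg-spherical-grad-correlation}, the $N^{-1/2}$-Lipschitzness of $m$, and \cref{thm:tpca-averaged-differential-recovery} to get $m(X_0)\ge N^{-s}$, so that $\tau_{\bar r_N}=0$; you then conclude by a union bound using $N\bar r_N^2=N^{\frac{k+1}{2k}}$, exactly as the paper does. Your initial hedge about a smaller $s$ when $\gamma\ge\tfrac{k-1}{4}$ is unnecessary, as you note yourself at the end.
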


\begin{proof}
Define $a\doteq\tfrac{k-1}{4k}$. Since $\tfrac{k-2}{4}+\tfrac{1}{4k}=(k-1)a$, we have $\gamma\ge(k-1)a$, and in particular $\gamma-\tfrac{k-1}{4}\ge-a$. Since $\lambda\ge C_{\beta,k}N^\gamma$ and $\sigma_N\asymp_kN^{-\tfrac{k-1}{4}}$ by \cref{thm:avg-spherical-grad-snr}, it follows that $\lambda\sigma_N\gtrsim_kC_{\beta,k}N^{\gamma-\tfrac{k-1}{4}}\ge C_{\beta,k}N^{-a}$.

On the event of \cref{thm:tpca-averaged-differential-recovery}, we have $\Xbar_{T_{\mathrm{av}}}\ne0$, and the initial state of the second phase is $X_0=\widehat X_{T_{\mathrm{av}}}$. Moreover, $|m(x)-m(y)|\le N^{-1/2}\norm{x-y}$ for all $x,y\in\R^N$ by the Cauchy--Schwarz inequality. Hence, we infer from \cref{thm:tpca-averaged-differential-recovery,cor:avg-spherical-grad-correlation} and a union bound that the following holds with probability at least $1-\exp(-c_kN^{(k+1)/(2k)})$, after choosing $C_{\beta,k}$ sufficiently large:
\begin{align}
\label{eq:averaged-langevin-warm-start}
    m(X_0)
    \ge m(\widehat u_H)-\frac{1}{\sqrt N}\norm{\widehat X_{T_{\mathrm{av}}}-\widehat u_H}
    \ge\bigl(c_k-N^{-\vartheta}\bigr)\min\{1,\lambda\sigma_N\}
    \ge N^{-a}.
\end{align}
Here, the second inequality combines the lower bound on $m(\widehat u_H)$ from \cref{cor:avg-spherical-grad-correlation}, where we use $\sigma_N\asymp_kN^{-\tfrac{k-1}{4}}$, with the bound on $\norm{\widehat X_{T_{\mathrm{av}}}-\widehat u_H}$ from \cref{thm:tpca-averaged-differential-recovery}, and the last inequality uses $\lambda\sigma_N\gtrsim_kC_{\beta,k}N^{-a}$.

By construction, $(X_t)_{t\ge0}$ is spherical Langevin dynamics for $H$ at inverse temperature $\beta$, started from $X_0=\widehat X_{T_{\mathrm{av}}}$. As \cref{thm:strong-recovery-after-critical-correlation} allows the initial state to depend on $H$, we may apply it with $s=a$ and $C=1$. Since $\gamma\ge(k-1)a$, the correlation level $\bar r_N=N^{-a}$ satisfies $\lambda\bar r_N^{k-1}\ge C_{\beta,k}$, and $\tau_{\bar r_N}=0$ on the event in \cref{eq:averaged-langevin-warm-start}. Finally, since $N\bar r_N^2=N^{1-2a}=N^{\tfrac{k+1}{2k}}$, the proof is concluded via a union bound after adjusting constants appropriately.
\end{proof}

\subsection{Achieving Critical Recovery above the Algorithmic Threshold from a Sufficiently Warm Start}
\label{subsec:critical-recovery-warm-start}

We now focus our attention on strong recovery in the regime $N^{\tfrac{k-2}{4}}\lesssim\lambda\lesssim N^{\tfrac{k-2}{4}+\tfrac{1}{4k}}$. Specifically, we set $\lambda=C_{\beta,k}N^\gamma$ with $\tfrac{k-2}{4}\le\gamma<\tfrac{k-2}{4}+\tfrac{1}{4k}$, and let ${r_N\doteq\lambda^{-\tfrac{1}{k-1}}\asymp N^{-\tfrac{\gamma}{k-1}}}$ denote the critical correlation parameter. In this regime, \cref{cor:avg-spherical-grad-correlation} suggests that the warm start computed by the first phase of \cref{alg:averaged-annealed-langevin} attains only subcritical correlation, of order $\lambda N^{-\tfrac{k-1}{4}}\ll r_N$.

In view of \cref{thm:strong-recovery-after-critical-correlation}, it suffices to show that the second phase of \cref{alg:averaged-annealed-langevin} attains critical correlation in polynomial time. To this end, we prove a stronger guarantee in \cref{thm:critical-recovery-warm-start}, which establishes that, for any initialization sufficiently close to the normalized average gradient, possibly depending on the disorder, spherical Langevin dynamics attains critical correlation within $O(1)$ time.

For $e\in\SN$, let $(X_t(e))_{t\ge0}$ denote the trajectory of spherical Langevin dynamics for $H$ at inverse temperature $\beta$, initialized at $X_0(e)=e$. As noted in \cref{subsec:spherical-langevin}, this trajectory is a deterministic function of its initial point and its driving Brownian motion, and hence this definition extends to any random initial state $e$, possibly depending on $H$. Since $k$ is odd, we recall from \cref{thm:avg-spherical-grad-snr} that the average spherical gradient $u_H=\E_\pi[\gradsp H]$ is distributed as $\cN(N\lambda\alpha_N\plant,\sigma_N^2I)$, where ${\sigma_N^2=\kappa_N\sqrt N\alpha_N\asymp_kN^{-\tfrac{k-1}{2}}}$.

\begin{theorem}[Critical Recovery from a Sufficiently Warm Start]
\label{thm:critical-recovery-warm-start}
Suppose $\tfrac{k-2}{4}\le\gamma\le\tfrac{k-2}{4}+\tfrac{1}{4k}$. Fix $\vartheta\in(0,1)$, let $u_H=\E_\pi[\gradsp H]$ denote the average spherical gradient, and let ${\widehat u_H=\tfrac{\sqrt N\,u_H}{\norm{u_H}}}$ denote its normalization. For every $e\in\SN$, define the stopping time
\begin{align*}
    T_*(e)\doteq\inf\bigl\{t\ge0:m(X_t(e))\ge Cr_N\bigr\},
\end{align*}
where $C>0$ is a constant. Then, the following holds for all sufficiently large $N$ and any initial state $e$, possibly depending on $H$:
\begin{align*}
    \Prb\left[\frac{1}{\sqrt N}\norm{e-\widehat u_H}\le\lambda\sigma_NN^{-\vartheta}
    \ \ \text{and}\ \
    T_*(e)>C_1N^{-(k-1)\left(\gamma-\tfrac{k-2}{4}\right)}\right]
    \le\exp\bigl(-cN\lambda^2\sigma_N^2\bigr),
\end{align*}
where $C_1,c>0$ are constants depending only on $\beta$ and $k$.
\end{theorem}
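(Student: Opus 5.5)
We follow the reverse planting strategy of \cref{subsec:sketch-reverse-planting}. The first step is a gauge transformation that makes the warm start independent of the bulk of the disorder. By \cref{thm:avg-spherical-grad-snr}, $\zeta=\kappa_N\E_\pi[xH_0(x)]$ is the (rescaled) degree-one component of $H_0$. We decompose $H_0(x)=\kappa_N^{-1}\langle\zeta,x\rangle+H_0^{\perp}(x)$, where $H_0^\perp$ is independent of $\zeta$ by Gaussian conditioning and rotation invariance, so $u_H$ and $\widehat u_H$ are independent of $H_0^\perp$. We then draw a random rotation $R$, measurable with respect to $u_H$, that maps $\widehat u_H$ to a fixed point $e_\star\in\SN$, and rotate the whole dynamics by $R$. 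Rotation invariance of the law of $H_0^\perp$ and of spherical Brownian motion implies that, conditionally on $u_H$, the rotated Hamiltonian is a fresh copy of $H_0^\perp$ plus the explicit terms $N\lambda m(R^\top x)^k+\kappa_N^{-1}\langle R\zeta,x\rangle$. In rotated coordinates, the planted direction $R\plant$ is a random vector determined by $u_H$, with correlation $m_0\asymp\min\{1,\lambda\sigma_N\}$ with $e_\star$ by \cref{cor:avg-spherical-grad-correlation}, and it is otherwise uniform on the orthogonal complement.

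Next, we set up the null law: spherical Langevin dynamics for $H_0^\perp$ alone, plus the linear field, started at (the rotated) $e$. Here the correlation with the orthogonal part of $R\plant$ is a projection onto a uniform direction independent of the null trajectory. Applying concentration on the sphere to the null drift $\frac1N\langle R\plant,\beta\gradsp H_0^\perp(X_t)-\tfrac{\kappa_N}{2}X_t\rangle$, we show that on a polynomial net of times it is $O(N^{-1/2}\sqrt{\log})$ plus the explicit contribution along $e_\star$. We then use \cref{lem:langevin-change-measure} with $\theta_t=\beta\gradsp(\text{signal})(X_t)$ to transfer this to the planted law up to the stopping time $\tau=T_*\wedge\tau_{\mathrm{fail}}\wedge t_0$. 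This gives the \emph{adaptive contiguity} estimate: by Cauchy--Schwarz on $L_\tau$, $\Prb_{\mathrm{pl}}[A]\le\Prb_{\mathrm{null}}[A]^{1/2}\exp(O(\int_0^\tau\beta^2\|\gradsp(N\lambda m^k)\|^2ds))$. Since $\|\gradsp(N\lambda m^k)\|^2\lesssim N\lambda^2m^{2k-2}$ and $m\le Cr_N$ before $\tau$, the cost is at most $\exp(O(N\lambda^2r_N^{2k-2}t_0))=\exp(O(Nt_0))$. This is too large, so we refine it by bounding $\int_0^\tau m_s^{2k-2}ds$ via the accelerating growth $dm/dt\gtrsim\lambda m^{k-1}$. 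This yields a cost of order $N\lambda\, m_\tau^{k}\lesssim N\lambda r_N^k$, which we must compare with the null failure probability $\exp(-cN\lambda^2\sigma_N^2)$. The closeness hypothesis $\|e-\widehat u_H\|\le\sqrt N\lambda\sigma_N N^{-\vartheta}$ is absorbed by Lipschitz control of the initial correlation and of the rotated field.

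The final step is a stopped ODE comparison. Before $\tau_{\mathrm{fail}}$, the null drift is at most half the signal drift $\beta k\lambda m^{k-1}(1-m^2)$, and the entropic term $\frac{\kappa_N}{2}m$ is dominated because $\lambda m_0^{k-2}\gtrsim C_{\beta,k}$, which is the escape condition. The Dynkin martingale of $m$ has fluctuations of order $N^{-1/2}\ll m_0$ by \cref{lem:dynkin-martingale-concentration}. Hence $m_t$ dominates the solution of $\dot y=c\lambda y^{k-1}$ from $y_0\asymp m_0$, which blows up by time $\asymp(\lambda m_0^{k-2})^{-1}\asymp N^{-(k-1)(\gamma-\frac{k-2}{4})}$; this matches the time bound $C_1N^{-(k-1)(\gamma-\frac{k-2}{4})}$ in the statement. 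So $T_*$ occurs before $t_0$ unless failure occurs.

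The main obstacle is balancing the Girsanov cost against the null-law probability bound. The crude estimate loses, so we need the adaptive bound in which the cost is controlled by the signal accumulated up to $\tau$. We also need the null failure event to have probability $\exp(-cN\lambda^2\sigma_N^2)$ rather than merely $\exp(-c\log N)$; for this we first try a union bound over a time net combined with large deviations of uniform projections at level $\sim m_0\lambda m_0^{k-2}$.
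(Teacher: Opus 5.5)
\textbf{The balancing step fails as proposed.} Your architecture is the paper's: degree-one Gaussian conditioning, a gauge rotation sending $\widehat u_H$ to a fixed point, a null law in which the planted direction is a uniform projection independent of the trajectory, Girsanov up to a stopping time, and the power-law comparison of \cref{lem:power-law-gronwall}. But the step you flag as the main obstacle does not close as written, and it is where the proof actually lives.

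You bound the change-of-measure cost by its worst case, $N\lambda m_\tau^k\lesssim N\lambda r_N^k\asymp Nr_N$. You then try to beat it with a null failure probability of order $\exp(-cN\lambda^2\sigma_N^2)$, or $\exp(-cN\lambda^2m_0^{2k-2})$ from large deviations at the fixed level $\lambda m_0^{k-1}$. This loses:
\begin{itemize}
\item Throughout the window, $Nr_N\asymp N^{1-\gamma/(k-1)}\gtrsim N^{3/4}$, whereas $N\lambda^2\sigma_N^2\lesssim N^{(k+1)/(2k)}\le N^{2/3}$.
\item At the threshold $\gamma=\frac{k-2}{4}$, both of your exponents are $\asymp\sqrt N$.
\end{itemize}
So $\Prb(E)\le\exp(\tfrac12 CNr_N)\,\Q(E)^{1/2}$ is vacuous. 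No single-level tail bound can fix this, because failure may occur late, at $m_\tau\gg m_0$, where the cost far exceeds $N\lambda^2m_0^{2k-2}$.

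\textbf{The missing idea} is to compare cost and failure at the \emph{same} level $m_\tau$, pathwise. Write $m_\phi$ for the correlation with the rotated planted vector along the rotated trajectory $Y_t$, and $H_{\mathrm{sig}}$ for the signal part of the rotated Hamiltonian. At the failure time you have $\abs{\cL_{\mathrm{null}}m_\phi(Y_\tau)}\ge\eta\lambda m_\tau^{k-1}$. Your growth estimate bounds the cost by $\lesssim N\lambda m_\tau^{k-1}(m_\tau-m_0/2)$. This is smaller than $N\eta^2\lambda^2m_\tau^{2k-2}$ by the factor $\lambda m_\tau^{k-2}\ge\lambda(m_0/2)^{k-2}\gg1$. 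Using $m_\tau^{k-1}(m_\tau-\frac{m_0}{2})\le(\frac{m_0}{2})^{-(k-2)}\bigl[m_\tau^{2k-2}-(\frac{m_0}{2})^{2k-2}\bigr]$, you get on the failure event
\[
\beta^2\int_0^\tau\norm{\gradsp H_{\mathrm{sig}}(Y_t)}^2\,dt\le c_*N\sup_{t\le t_0}\abs{\cL_{\mathrm{null}}m_\phi(Y_t)}^2-cN\lambda\Bigl(\frac{m_0}{2}\Bigr)^k .
\]
The problem then reduces to the exponential-moment bound $\E_\Q\bigl[\exp\bigl(c_*N\sup_{t\le1}\abs{\cL_{\mathrm{null}}m_\phi(Y_t)}^2\bigr)\mathbf{1}_{\mathrm{reg}}\bigr]\le C$. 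This is a sub-Gaussian estimate at scale $N^{-1/2}$ valid at all levels simultaneously. The paper proves it in \cref{lem:null-trajectory-uniform-concentration} via a Dynkin decomposition of $\cL_{\mathrm{null}}m_\phi$ along the trajectory, not a time net. The deterministic surplus $cN\lambda(m_0/2)^k\gtrsim\lambda m_0^{k-2}\cdot N\lambda^2\sigma_N^2$ then gives the decay $\exp(-cN\lambda^2\sigma_N^2)$.

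\textbf{The null Hamiltonian must not see the planted direction.} It cannot contain $\kappa_N^{-1}\dotp{R\zeta}{\cdot}$, because $R\zeta=\frac{\norm{u_H}}{\sqrt N}e_\star-N\lambda\alpha_NR\plant$ depends on the planted direction and breaks the independence you need. Move the term $-N\lambda\kappa_N^{-2}\sigma_N^2\,m(R^\top\cdot)$ into the signal. You also need an independent Haar rotation fixing $e_\star$ for the orthogonal part of $R\plant$ to be uniform. With these fixes, keeping the planted environment and decoupling only that orthogonal part is a legitimate variant. The $e_\star$-component contributes $O(m_0)\ll\eta\lambda(m_0/2)^{k-1}$ to the null drift, and you avoid the factor $\exp(N\lambda^2\sigma_N^2/(2\kappa_N^2))$ that the paper pays for changing the environment. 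It does not remove the need for the adaptive balancing above.

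\textbf{The closeness hypothesis needs a coupling.} The rotated start $Re$ may depend on $R\plant$, so independence holds only for the trajectory started at $e_\star$. Lipschitz control of the initial correlation is not enough. You need a coupling that keeps the trajectories from $Re$ and $e_\star$ within $O(\delta\sqrt N)$ on $[0,1]$; the paper drives both by common infinitesimal random rotations and applies Gr\"onwall. The Lipschitz bound on $\cL_{\mathrm{null}}m_\phi$ then costs only a factor $\exp(CN\delta^2)$, with $N\delta^2=N^{-2\vartheta}N\lambda^2\sigma_N^2$.
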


The key challenge in establishing \cref{thm:critical-recovery-warm-start} is that the initialization may depend nontrivially on the disorder $H_0$. For $\tfrac{k-2}{4}\le\gamma<\tfrac{k-2}{4}+\tfrac{1}{4k}$, this dependence can significantly complicate the analysis, since the correlation attained by the warm start is too weak to ensure that the contribution of the signal term $N\lambda m^k$ uniformly dominates that of $H_0$. Indeed, as we already discuss in \cref{subsec:sketch-reverse-planting}, one can imagine a potentially dangerous initialization with nontrivial but subcritical correlation that lies close to one of the exponentially many local maxima of $H_0$~\citep{auffinger2013random,benarous2019landscape,ros2019complex}. In the local neighborhood of such an initialization, the drift induced by $H_0$ could dominate that of the signal, thereby trapping the dynamics at polynomial timescales.

Our analysis rules out such a scenario for any initialization sufficiently close to $\widehat u_H$ by proving that the dependence of $u_H$ on the disorder is benign. Indeed, by Stein's lemma (\cref{lem:spherical-stein}), $u_H$ depends on $H_0$ only through the linear component of $H_0$, i.e., its degree-one spherical harmonic projection. Since $H_0$ is an isotropic Gaussian process, a Gaussian conditioning argument shows that $u_H$ is independent of the higher order spherical harmonic projections of $H$, which carry the bulk of the disorder. Our proof exploits this independence through an intricate dynamical change of measure argument, which proves that for any initialization sufficiently close to $\widehat u_H$, the drift contribution of the signal term dominates that of the disorder until the dynamics achieves critical recovery. In \cref{subsec:average-gradient-gauge}, we formalize the independence of $u_H$ from the higher order spherical harmonic projections of $H$ via Gaussian conditioning. \Cref{subsec:reverse-planting} develops the dynamical change of measure argument which we call the reverse planting trick. Finally, we prove \cref{thm:critical-recovery-warm-start} in \cref{subsec:critical-recovery-reverse-planting}.

Our proof of \cref{thm:critical-recovery-warm-start} extends the techniques that Part~I~\citep[Section~6]{das2026averaged} develops to prove the analogous critical recovery guarantee for Glauber dynamics for Ising tensor PCA~\citep[Theorem~6.4]{das2026averaged}. As in Part~I, we combine Gaussian conditioning, a gauge transformation of the warm start, and a reverse planting argument based on adaptive contiguity. The key differences stem from the Riemannian geometry of $\SN$ and the fact that Langevin dynamics is a Markov diffusion process rather than a jump process. To this end, the coordinatewise sign flips of Part~I are replaced by random rotations, and the dynamical change of measure is performed by Girsanov's theorem.

\subsection{The Average Gradient Gauge Transformation}
\label{subsec:average-gradient-gauge}

The following lemma, which is a straightforward consequence of Gaussian conditioning, allows us to control the dependence of $u_H$ on the disorder.

\begin{lemma}[Decomposition of the Spherical $k$-Spin Hamiltonian]
\label{lem:kspin-gaussian-conditioning}
Let $k\ge3$ be odd, and let $\zeta\doteq\E_\pi[\gradsp H_0]$. Then, $H_0(x)=\kappa_N^{-1}\dotp{\zeta}{x}+H_0^\perp(x)$, where $H_0^\perp:\SN\to\R$ is a centered Gaussian process independent of $\zeta$ with
\begin{align*}
    \E\bigl[H_0^\perp(x)H_0^\perp(y)\bigr]
    =N\left(\frac{\dotp{x}{y}}{N}\right)^k-\frac{\sigma_N^2}{\kappa_N^2}\dotp{x}{y},
\end{align*}
where $\sigma_N\asymp_kN^{-\tfrac{k-1}{4}}$ is as defined in \cref{thm:avg-spherical-grad-snr}.
\end{lemma}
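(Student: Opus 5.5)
We define $H_0^\perp(x)\doteq H_0(x)-\kappa_N^{-1}\dotp{\zeta}{x}$, so that the decomposition holds by construction. It then remains to show that $H_0^\perp$ is a centered Gaussian process independent of $\zeta$, and to compute its covariance.

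Both $\zeta=\kappa_N\E_\pi[xH_0(x)]$ (by \cref{lem:spherical-stein}) and each $H_0(x)$ are linear functionals of the Gaussian tensor $G$. Hence the family $\{\zeta\}\cup\{H_0^\perp(x)\}_{x\in\SN}$ is jointly centered Gaussian. For jointly Gaussian families, independence of $\zeta$ from the process $H_0^\perp$ is equivalent to vanishing cross-covariance, since all finite-dimensional marginals are then independent. We therefore need to show that $\E[H_0^\perp(x)\zeta^\top]=0$ for every $x\in\SN$.

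To compute $\E[H_0(x)\zeta]$, we use \cref{eq:kspin-covariance}:
\begin{align*}
    \E[H_0(x)\zeta]=\kappa_N N\,\E_{y\sim\pi}\Bigl[y\Bigl(\frac{\dotp{x}{y}}{N}\Bigr)^k\Bigr].
\end{align*}
By \cref{lem:sphere-moments}, exactly as in \cref{eq:average-gradient-signal}, the vector $\E_y[y(\dotp{x}{y}/N)^k]$ is parallel to $x$. Its component along $x/\sqrt N$ equals $\sqrt N\,\E_\pi[m^{k+1}]$, so the vector itself is $\E_\pi[m^{k+1}]\,x$. This gives
\begin{align*}
    \E[H_0(x)\zeta]=\kappa_N N\,\E_\pi[m^{k+1}]\,x=\frac{\sigma_N^2}{\kappa_N}\,x,
\end{align*}
where the last step uses $\sigma_N^2=\kappa_N^2N\E_\pi[m^{k+1}]$ from the proof of \cref{thm:avg-spherical-grad-snr}.

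Since \cref{thm:avg-spherical-grad-snr} gives $\E[\zeta\zeta^\top]=\sigma_N^2I$, we also have $\E[\kappa_N^{-1}\dotp{\zeta}{x}\zeta]=\kappa_N^{-1}\sigma_N^2x$. Subtracting the two expressions yields $\E[H_0^\perp(x)\zeta]=0$, which proves independence.

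For the covariance, independence lets us expand
\begin{align*}
    \E[H_0(x)H_0(y)]=\E\bigl[H_0^\perp(x)H_0^\perp(y)\bigr]+\kappa_N^{-2}\,x^\top\E[\zeta\zeta^\top]\,y.
\end{align*}
The last term equals $\kappa_N^{-2}\sigma_N^2\dotp{x}{y}$, and \cref{eq:kspin-covariance} gives the left-hand side. Rearranging produces the claimed formula, and the bound $\sigma_N\asymp_kN^{-\frac{k-1}{4}}$ is inherited from \cref{thm:avg-spherical-grad-snr}.

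The main step requiring care is the cross-covariance computation. Its key point is that the $y$-average collapses onto the direction $x$ with exactly the same constant $\E_\pi[m^{k+1}]$ that defines $\sigma_N^2$, so the two contributions cancel exactly.
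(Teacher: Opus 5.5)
Your proposal is correct and matches the paper's proof: the cross-covariance $\E[H_0(x)\zeta]$ collapses onto the direction $x$ with constant $\kappa_N N\,\E_\pi[m^{k+1}]=\sigma_N^2/\kappa_N$, which cancels against $\kappa_N^{-1}\E[\zeta\zeta^\top]x=\kappa_N^{-1}\sigma_N^2x$. The differences are cosmetic: you check vanishing cross-covariance directly rather than citing the Gaussian conditioning formula, so you never need $\sigma_N^2>0$, and you drop the cross terms by independence instead of expanding all four terms of the covariance.
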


\begin{proof}
By Stein's lemma (\cref{lem:spherical-stein}), $\zeta=\kappa_N\E_{y\sim\pi}[yH_0(y)]$. Fix $x\in\SN$. For every $v\in\bS^{N-1}$ orthogonal to $x$, the second claim of \cref{lem:sphere-moments}, applied with $(w,z)=(v,x/\sqrt N)$, $g(t)=t$, and $h(t)=t^k$, yields $\E_{y\sim\pi}[\dotp{v}{y}\dotp{x}{y}^k]=0$. Hence, $\E_{y\sim\pi}[y\dotp{x}{y}^k]$ is parallel to $x$. Taking the inner product with $x$, we obtain the following from \cref{eq:kspin-covariance} and \cref{eq:average-gradient-signal,eq:average-gradient-overlap-moment} in the proof of \cref{thm:avg-spherical-grad-snr}:
\begin{align*}
    \E[H_0(x)\zeta]
    &=\kappa_N\E_{y\sim\pi}\bigl[y\,\E[H_0(x)H_0(y)]\bigr]
    =\kappa_NN\,\E_{y\sim\pi}\left[y\left(\frac{\dotp{x}{y}}{N}\right)^k\right]\\
    &=\kappa_NN\,\E_{y\sim\pi}\left[\left(\frac{\dotp{x}{y}}{N}\right)^{k+1}\right]x
    =\kappa_NN\,\E_{y\sim\pi}\bigl[m(y)^{k+1}\bigr]\,x
    =\sqrt N\alpha_N\,x
    =\frac{\sigma_N^2}{\kappa_N}x,
\end{align*}
where the last equality uses $\sigma_N^2=\kappa_N\sqrt N\alpha_N$, as stated in \cref{thm:avg-spherical-grad-snr}.

Since $k$ is odd, \cref{thm:avg-spherical-grad-snr} implies that $\sigma_N^2>0$. Since $(H_0(x))_{x\in\SN}$ and $\zeta$ are jointly Gaussian and $\E[\zeta\zeta^\top]=\sigma_N^2I$, the Gaussian conditioning formula~\citep[Chapter~9]{janson1997gaussian} implies that, for every $x\in\SN$, ${\E[H_0(x)\mid\zeta]}=\kappa_N^{-1}\dotp{\zeta}{x}$, and that $H_0^\perp(x)\doteq H_0(x)-\kappa_N^{-1}\dotp{\zeta}{x}$ is a centered Gaussian process independent of $\zeta$. Moreover, since $H_0^\perp(x)=H_0(x)-\kappa_N^{-1}\dotp{\zeta}{x}$, we obtain the following:
\begin{align*}
    \E\bigl[H_0^\perp(x)H_0^\perp(y)\bigr]
    &=\E\Bigl[\bigl(H_0(x)-\kappa_N^{-1}\dotp{\zeta}{x}\bigr)\bigl(H_0(y)-\kappa_N^{-1}\dotp{\zeta}{y}\bigr)\Bigr]\\
    &=\E[H_0(x)H_0(y)]-\frac{\dotp{y}{\E[H_0(x)\zeta]}}{\kappa_N}-\frac{\dotp{x}{\E[H_0(y)\zeta]}}{\kappa_N}+\frac{x^\top\E[\zeta\zeta^\top]y}{\kappa_N^2}\\
    &=N\left(\frac{\dotp{x}{y}}{N}\right)^k-\frac{2\sigma_N^2}{\kappa_N^2}\dotp{x}{y}+\frac{\sigma_N^2}{\kappa_N^2}\dotp{x}{y}\\
    &=N\left(\frac{\dotp{x}{y}}{N}\right)^k-\frac{\sigma_N^2}{\kappa_N^2}\dotp{x}{y},
\end{align*}
which concludes the proof.
\end{proof}

Since $\E_\pi[xx^\top]=I$, Stein's lemma (\cref{lem:spherical-stein}) shows that $\kappa_N^{-1}\dotp{\zeta}{x}=\dotp{\E_{y\sim\pi}[yH_0(y)]}{x}$ is the linear component of $H_0$, i.e., its degree-one spherical harmonic projection. By \cref{thm:avg-spherical-grad-snr}, we have $u_H = \E_\pi[\gradsp H]=N\lambda\alpha_N\plant+\zeta$, and hence \cref{lem:kspin-gaussian-conditioning} shows that $u_H$ is independent of $H_0^\perp$, which collects the higher order spherical harmonic projections of $H_0$ and comprises the bulk of the disorder.

However, the initialization considered in \cref{thm:critical-recovery-warm-start} still depends on the disorder through $u_H$. To simplify the analysis, we apply a randomized gauge transformation, which we call the \emph{average gradient gauge transformation}. This transformation maps the normalized average gradient ${\widehat u_H=\tfrac{\sqrt N\,u_H}{\norm{u_H}}}$ to the fixed state $\sqrt Ne_1$, so that the disorder dependence of the initialization is transferred to the gauge-transformed Hamiltonian. Our approach is motivated by the Nishimori gauge transformation in the spin glass literature~\citep{nishimori2001statistical}, except that we map the initial state of Langevin dynamics, rather than the planted signal, to $\sqrt Ne_1$. Precisely, we define the following random variables:
\begin{align}
\label{eq:gauge-variables}
    R\doteq\norm{u_H},
    \qquad
    \Phi\doteq\Psi\Lambda(u_H),
    \qquad
    \phi\doteq \Phi\plant,
\end{align}
where $\Psi$ is distributed according to the Haar measure~\citep[cf.][Theorem~1.4]{meckes2019random} on the subgroup ${\{S\in O(N,\R):Se_1=e_1\}}\cong O(N-1,\R)$, independently of all other randomness, and $\Lambda(u)$ is the Householder reflection map, defined as follows for every $u\in\R^N\setminus\{0\}$:
\begin{align*}
    \Lambda(u)\doteq
    \begin{cases}
        I-\dfrac{2(u-\norm{u}e_1)(u-\norm{u}e_1)^\top}{\norm{u-\norm{u}e_1}^2}, & \text{if $u\ne\norm{u}e_1$},\\
        I, & \text{if $u=\norm{u}e_1$}.
    \end{cases}
\end{align*}
By definition, $\Psi e_1=e_1$ and the Householder reflection map satisfies $\Lambda(u)u=\norm{u}e_1$ for every nonzero $u \in \R^N$. Replacing $u$ with $cu$ for any $c>0$, we note that $\Lambda(u)$ depends on $u$ only through its direction $u/\norm{u}$. Consequently, $\Phi u_H=Re_1$ and $\Phi\widehat u_H=\sqrt Ne_1\in\SN$. Furthermore, since $\zeta\sim\cN(0,\sigma_N^2I)$ with $\sigma_N^2>0$, $u_H$ is almost surely nonzero, and hence these random variables are almost surely well defined. As we shall see in \cref{subsec:reverse-planting}, the randomization over orthogonal transformations fixing $e_1$ ensures that, when $\lambda=0$, $\phi$ is uniformly distributed on $\bS^{N-1}$ and independent of $R$. Finally, we formally define the average gradient gauge transformation and the associated gauge-transformed dynamics as follows:
\begin{align}
\label{eq:gauge-transformation}
    y=y(x)\doteq \Phi x,
    \qquad
    Y_t(e)\doteq \Phi X_t(\Phi^\top e).
\end{align}
Observe that $y(\widehat u_H)=\sqrt Ne_1$ and $Y_t(\sqrt Ne_1)=\Phi X_t(\widehat u_H)$. The following lemma analyzes the effect of the gauge transformation on the tensor PCA Hamiltonian.

\begin{lemma}[Gauge-Transformed Hamiltonian]
\label{lem:tpca-hamiltonian-under-gauge}
Let $m_\phi(y)\doteq m(\Phi^\top y)$ and $H_\phi(y)\doteq H(\Phi^\top y)$ denote the gauge-transformed correlation and the gauge-transformed tensor PCA Hamiltonian, respectively. Then, $m_\phi(y)=\tfrac{1}{\sqrt N}\dotp{\phi}{y}$ and $H_\phi=H_{\mathrm{null}}+H_{\mathrm{sig}}$, where
\begin{align*}
    H_{\mathrm{null}}(y)
    &\doteq\frac{R}{\kappa_N}\dotp{e_1}{y}+W(y),
    \qquad
    W(y)\doteq H_0^\perp(\Phi^\top y),\\
    H_{\mathrm{sig}}(y)
    &\doteq N\lambda m_\phi(y)^k-\frac{N\lambda\sigma_N^2}{\kappa_N^2}m_\phi(y).
\end{align*}
Moreover, the following hold:
\begin{enumerate}
    \item The Gaussian process $W$ is independent of $(\Phi,R)$ and has the same law as $H_0^\perp$.
    \item For any initial state $e\in\SN$, possibly depending on $(\Phi,R,W)$, the trajectory $(Y_t(e))_{t\ge0}$ is spherical Langevin dynamics for the gauge-transformed Hamiltonian $H_\phi$ at inverse temperature $\beta$, initialized at $Y_0(e)=e$.
\end{enumerate}
\end{lemma}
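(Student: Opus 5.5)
The decomposition is algebra, and both claims reduce to independence and equivariance facts. First, $m_\phi(y)=\tfrac{1}{\sqrt N}\dotp{\plant}{\Phi^\top y}=\tfrac{1}{\sqrt N}\dotp{\phi}{y}$ is immediate. For $H_\phi$, I would write $H=N\lambda m^k+H_0$. By \cref{lem:kspin-gaussian-conditioning}, $H_0(x)=\kappa_N^{-1}\dotp{\zeta}{x}+H_0^\perp(x)$, and by \cref{thm:avg-spherical-grad-snr}, $\zeta=u_H-N\lambda\alpha_N\plant$. Hence
\begin{align*}
    \kappa_N^{-1}\dotp{\zeta}{\Phi^\top y}
    =\kappa_N^{-1}\dotp{\Phi u_H}{y}-\kappa_N^{-1}N\lambda\alpha_N\dotp{\phi}{y}.
\end{align*}
Substituting $\Phi u_H=Re_1$, $\dotp{\phi}{y}=\sqrt N m_\phi(y)$, and $\sqrt N\alpha_N=\sigma_N^2/\kappa_N$, the last term becomes $N\lambda\sigma_N^2\kappa_N^{-2}m_\phi(y)$. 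This gives exactly $H_{\mathrm{null}}+H_{\mathrm{sig}}$.

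For claim 1, note that $(\Phi,R)$ is a measurable function of $(u_H,\Psi)$. Since $u_H=N\lambda\alpha_N\plant+\zeta$ with $\plant$ deterministic, this means it is a function of $(\zeta,\Psi)$. By \cref{lem:kspin-gaussian-conditioning} and the independence of $\Psi$, the process $H_0^\perp$ is independent of $(\zeta,\Psi)$. Conditioning on $(\Phi,R)=(O,r)$, the process $W=H_0^\perp(O^\top\cdot)$ therefore has the law of $H_0^\perp\circ O^\top$. Its covariance from \cref{lem:kspin-gaussian-conditioning} depends only on $\dotp{x}{y}$, which is $O(N,\R)$-invariant, and the process is centered Gaussian, so $H_0^\perp\circ O^\top\disteq H_0^\perp$. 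The conditional law of $W$ therefore does not depend on $(O,r)$. This gives both the independence and the identity of laws. One measurability point needs care: $W$ should be viewed as a random continuous (indeed smooth) function. I would handle it by working with the joint law of finite-dimensional marginals, or of the field on $C(\SN)$, using continuity of the sample paths.

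For claim 2, I would verify equivariance of the SDE under a fixed orthogonal $O$. If $X_t$ solves \cref{eq:spherical-langevin-sde} with Hamiltonian $H$ and Brownian motion $B$, set $Y_t=OX_t$. Then $Q_{Ox}=OQ_xO^\top$, and with $\widetilde H=H\circ O^\top$ we have $\grad\widetilde H(y)=O\grad H(O^\top y)$. It follows that
\begin{align*}
    dY_t=\Bigl(\beta Q_{Y_t}\grad\widetilde H(Y_t)-\tfrac{N-1}{2N}Y_t\Bigr)dt+Q_{Y_t}\,d(OB)_t,
\end{align*}
and $OB$ is again a standard Brownian motion. With $O=\Phi$ and initial point $\Phi^\top e$, this shows that $Y_t(e)$ is Langevin dynamics for $H_\phi$ started at $e$, driven by $\Phi B$.

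The main obstacle is the randomness of $\Phi$ together with the allowance for $e$ to depend on $(\Phi,R,W)$. The requirement is that $\Phi B$ be a Brownian motion independent of $(\Phi,R,W,e)$. Since $B$ is independent of $H$, $\Psi$ and (by the convention of \cref{subsec:spherical-langevin}) of the initial point, I would condition on $(\Phi,R,W,e)$. Under this conditioning $B$ remains a standard Brownian motion, and rotation invariance of Brownian motion implies that $\Phi B$ is too, with the same conditional law. Since each trajectory is a deterministic function of its initial point and driving noise, the statement follows pathwise on this conditioning.
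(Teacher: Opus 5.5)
Your proposal is correct and follows the paper's proof essentially step for step. It uses the same substitution $\zeta=u_H-N\lambda\alpha_N\plant$ together with $\Phi u_H=Re_1$ and $\sqrt N\alpha_N=\sigma_N^2/\kappa_N$, the same conditioning-plus-isotropy argument for the law of $W$ given $(\Phi,R)$, and the same equivariance computation via $Q_{\Phi x}=\Phi Q_x\Phi^\top$ with $\Phi B$ a Brownian motion independent of $(\Phi,R,W,e)$ (your remark on treating $W$ as a random element of $C(\SN)$ is a harmless refinement the paper leaves implicit).
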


\begin{proof}
Since $\Phi$ is orthogonal and $\phi=\Phi\plant$, by direct computation,
\begin{align*}
    m_\phi(y)
    =m(\Phi^\top y)
    =\frac{1}{\sqrt N}\dotp{\plant}{\Phi^\top y}
    =\frac{1}{\sqrt N}\dotp{\Phi\plant}{y}
    =\frac{1}{\sqrt N}\dotp{\phi}{y}.
\end{align*}
Since $k$ is odd, we recall from \cref{thm:avg-spherical-grad-snr} that $u_H=N\lambda\alpha_N\plant+\zeta$, where ${\sigma_N^2=\kappa_N\sqrt N\alpha_N\asymp_kN^{-\tfrac{k-1}{2}}}$. Using $\dotp{\plant}{x}=\sqrt N\,m(x)$, we then infer from \cref{eq:tensor-pca-overlap-noise,lem:kspin-gaussian-conditioning} that
\begin{align*}
    H(x)
    &=N\lambda m(x)^k+\kappa_N^{-1}\dotp{\zeta}{x}+H_0^\perp(x)\\
    &=\kappa_N^{-1}\dotp{u_H}{x}+N\lambda m(x)^k-\frac{N\lambda\sigma_N^2}{\kappa_N^2}m(x)+H_0^\perp(x).
\end{align*}
Moreover, since $\Phi u_H=Re_1$,
\begin{align*}
    \kappa_N^{-1}\dotp{u_H}{\Phi^\top y}
    =\kappa_N^{-1}\dotp{\Phi u_H}{y}
    =\frac{R}{\kappa_N}\dotp{e_1}{y}.
\end{align*}
Consequently,
\begin{align*}
    H_\phi(y)=H(\Phi^\top y)
    &=\frac{R}{\kappa_N}\dotp{e_1}{y}+H_0^\perp(\Phi^\top y)+N\lambda m_\phi(y)^k-\frac{N\lambda\sigma_N^2}{\kappa_N^2}m_\phi(y)\\
    &=H_{\mathrm{null}}(y)+H_{\mathrm{sig}}(y).
\end{align*}

We now prove the first claim. Since $\plant$ is fixed, \cref{eq:gauge-variables} shows that $(\Phi,R)$ is a measurable function of $(\zeta,\Psi)$. Moreover, by \cref{lem:kspin-gaussian-conditioning}, $H_0^\perp$ is a centered Gaussian process independent of $\zeta$, and $\Psi$ is independent of $(\zeta,H_0^\perp)$ by construction. Since the map $y\mapsto \Phi^\top y$ is an isometry of $\SN$ for every fixed $\Phi\in O(N,\R)$ and the covariance of $H_0^\perp$ depends only on $\dotp{x}{y}$, the law of $H_0^\perp$ is invariant under this transformation. Hence,
\begin{align*}
    \Law(W\mid \Phi,R)
    =\Law\bigl(H_0^\perp(\Phi^\top\cdot)\mid \Phi,R\bigr)
    =\Law(H_0^\perp\mid \Phi,R)
    =\Law(H_0^\perp).
\end{align*}
Thus, $W$ is independent of $(\Phi,R)$ and has the same law as $H_0^\perp$.

The second claim is a consequence of the rotation invariance of spherical Langevin dynamics. Let $(B_t)_{t\ge0}$ denote the driving Brownian motion of the trajectories $(X_t(\cdot))_{t\ge0}$, and write $X_t\doteq X_t(\Phi^\top e)$ and $Y_t\doteq Y_t(e)=\Phi X_t$. Recall that $X_t$ solves \cref{eq:spherical-langevin-sde} for the Hamiltonian $H$ with initial point $X_0=\Phi^\top e$. Since $\Phi$ is fixed at time zero, multiplying \cref{eq:spherical-langevin-sde} by $\Phi$ and using $X_t=\Phi^\top Y_t$, we obtain the following:
\begin{align*}
    dY_t=\Phi\,dX_t
    &=\Bigl(\beta\cdot \Phi Q_{X_t}\grad H(X_t)-\frac{N-1}{2N}\Phi X_t\Bigr)\,dt+\Phi Q_{X_t}\,dB_t\\
    &=\Bigl(\beta\cdot\bigl(\Phi Q_{X_t}\Phi^\top\bigr)\bigl(\Phi\grad H(\Phi^\top Y_t)\bigr)-\frac{N-1}{2N}Y_t\Bigr)\,dt+\bigl(\Phi Q_{X_t}\Phi^\top\bigr)\,\Phi\,dB_t\\
    &=\Bigl(\beta\cdot Q_{Y_t}\grad H_\phi(Y_t)-\frac{N-1}{2N}Y_t\Bigr)\,dt+Q_{Y_t}\,d(\Phi B_t),
\end{align*}
where the last equality uses $\Phi Q_x\Phi^\top=Q_{\Phi x}$ for every $x\in\R^N\setminus\{0\}$, and $\grad H_\phi(y)=\Phi\grad H(\Phi^\top y)$ by the chain rule. Moreover, $Y_0=e$, and since $(B_t)_{t\ge0}$ is independent of $(\Phi,R,W,e)$, the rotation invariance of Brownian motion~\citep[cf.][Theorem~4.1.1]{stroock2007multidimensional} implies that $(\Phi B_t)_{t\ge0}$ is a standard Brownian motion independent of $(\Phi,R,W,e)$. Hence, $(Y_t)_{t\ge0}$ is spherical Langevin dynamics for $H_\phi$ at inverse temperature $\beta$, initialized at $Y_0=e$, with driving Brownian motion $(\Phi B_t)_{t\ge0}$.
\end{proof}

\Cref{lem:tpca-hamiltonian-under-gauge} shows that, modulo the lower order linear terms $\kappa_N^{-1}R\dotp{e_1}{y}$ and $-N\lambda\kappa_N^{-2}\sigma_N^2m_\phi(y)$, the average gradient gauge transformation turns $H$ into a tensor PCA Hamiltonian with planted vector $\sqrt N\phi$ and disorder $W$, which is independent of $(\Phi,R)$. The decomposition $H_\phi=H_{\mathrm{null}}+H_{\mathrm{sig}}$ is constructed so that the entire dependence of $H_\phi$ on $\phi$ resides in $H_{\mathrm{sig}}$. To this end, $H_{\mathrm{null}}$ contains the linear component of $H$ (which takes the form $\kappa_N^{-1}R\dotp{e_1}{y}$ after the gauge transformation), including that of the signal.

\subsection{The Reverse Planting Trick and Adaptive Contiguity}
\label{subsec:reverse-planting}

We use $\cL_{\mathrm{pl}}\doteq\cL_{H_\phi,\beta}$ and $\cL_{\mathrm{null}}\doteq\cL_{H_{\mathrm{null}},\beta}$ to denote the generators of spherical Langevin dynamics at inverse temperature $\beta$ for $H_\phi$ and $H_{\mathrm{null}}$, respectively. By Dynkin's formula, the quantity $(\cL_{\mathrm{null}}m_\phi)(Y_t(e))$ encapsulates the contribution of the disorder to the time evolution of the correlation $m_\phi$ along the Langevin trajectory. Hence, in order to show that the signal contribution dominates that of the disorder until the dynamics achieves critical recovery, our analysis needs to control $(\cL_{\mathrm{null}}m_\phi)(Y_t(e))$ along the gauge-transformed trajectory. Explicitly, since $m_\phi$ is linear, \cref{eq:spherical-derivatives} implies $\gradsp m_\phi(y)=N^{-1/2}Q_y\phi$ and $\lapsp m_\phi=-\kappa_Nm_\phi$, and hence we obtain the following from \cref{eq:langevin-generator} for every $y\in\SN$:
\begin{align*}
    (\cL_{\mathrm{null}}m_\phi)(y)
    =-\frac{\kappa_N}{2}m_\phi(y)+\beta\dotp{\gradsp H_{\mathrm{null}}(y)}{\gradsp m_\phi(y)}
    =\frac{1}{\sqrt N}\dotp{\phi}{\beta\gradsp H_{\mathrm{null}}(y)-\frac{\kappa_N}{2}y}.
\end{align*}
Although the law of $\phi$ is invariant under all rotations fixing $e_1$, the above quantity is hard to control since $\phi$ is correlated with $\gradsp H_{\mathrm{null}}(y)=\kappa_N^{-1}RQ_ye_1+\gradsp W(y)$ through $R$ as per \cref{eq:gauge-variables}. Moreover, the trajectory $Y_t(e)$ along which it is evaluated depends nontrivially on $\phi$ through $H_{\mathrm{sig}}$, and its initial state $e$ may also depend on $\phi$.

We circumvent these difficulties via a change of measure to a carefully constructed null law, under which $\phi$ is uniformly distributed on $\bS^{N-1}$ and independent of the dynamics initialized at $\sqrt Ne_1$.

Since the initialization considered in \cref{thm:critical-recovery-warm-start} may depend on the disorder, we fix an \emph{initialization rule}, i.e., a random initial state of the form $e=e(\Phi,R,W,\omega)\in\SN$, where $e$ is a measurable function and $\omega$ is auxiliary randomness whose law is the same under the planted and null laws below, independent of $(\Phi,R,W)$ and of the driving Brownian motion of the dynamics.

We formally define both the law of the true dynamics (which we call the planted law) and this null law, representing them as laws of spherical Langevin dynamics in the random environment $(\Phi,R,W)$.

\begin{definition}[Planted Law]
\label{def:planted-law}
The planted law $\Prb$ is the law of spherical Langevin dynamics in a random environment, which factorizes as $\Prb=\Prb_{\mathrm{env}}(\Phi,R,W)\cdot\Prb_{\mathrm{dyn}}(\cdot\mid \Phi,R,W)$, where $\Prb_{\mathrm{env}}$ and $\Prb_{\mathrm{dyn}}$ are defined as follows:
\begin{enumerate}
    \item $R=\norm{U}$ and $\Phi=\Psi\Lambda(U)$, where $U\sim\cN(N\lambda\alpha_N\plant,\sigma_N^2I)$, and $\Psi$ is distributed according to the Haar measure on $\{S\in O(N,\R):Se_1=e_1\}$, independently of $U$.
    \item $W$ is independent of $(\Phi,R)$ and has the same law as $H_0^\perp$.
    \item Conditional on the environment $(\Phi,R,W)$, $\Prb_{\mathrm{dyn}}$ is the law of spherical Langevin dynamics at inverse temperature $\beta$ for the Hamiltonian
    \begin{align*}
        H_\phi(y)=\kappa_N^{-1}R\dotp{e_1}{y}+W(y)+N\lambda m_\phi(y)^k-N\lambda\kappa_N^{-2}\sigma_N^2m_\phi(y),
        \qquad
        \phi=\Phi\plant,
    \end{align*}
    initialized at $Y_0=e(\Phi,R,W,\omega)$. We use $\cL_{\mathrm{pl}}$ to denote its generator.
\end{enumerate}
\end{definition}

\begin{definition}[Null Law]
\label{def:null-law}
The null law $\Q$ is the law of spherical Langevin dynamics in a random environment, which factorizes as $\Q=\Q_{\mathrm{env}}(\Phi,R,W)\cdot\Q_{\mathrm{dyn}}(\cdot\mid \Phi,R,W)$, where $\Q_{\mathrm{env}}$ and $\Q_{\mathrm{dyn}}$ are defined as follows:
\begin{enumerate}
    \item $R=\norm{U}$ and $\Phi=\Psi\Lambda(U)$, where $U\sim\cN(0,\sigma_N^2I)$, and $\Psi$ is distributed according to the Haar measure on $\{S\in O(N,\R):Se_1=e_1\}$, independently of $U$.
    \item $W$ is independent of $(\Phi,R)$ and has the same law as $H_0^\perp$.
    \item Conditional on the environment $(\Phi,R,W)$, $\Q_{\mathrm{dyn}}$ is the law of spherical Langevin dynamics at inverse temperature $\beta$ for the Hamiltonian $H_{\mathrm{null}}(y)=\kappa_N^{-1}R\dotp{e_1}{y}+W(y)$, initialized at ${Y_0=e(\Phi,R,W,\omega)}$. We use $\cL_{\mathrm{null}}$ to denote its generator.
\end{enumerate}
\end{definition}

Under both laws, we write $Y_t=Y_t(e)$ for the associated trajectory. By \cref{thm:avg-spherical-grad-snr,lem:tpca-hamiltonian-under-gauge}, $\Prb$ is the law of the true dynamics in the gauge-transformed coordinates. The two laws are best understood by inverting the gauge transformation, which, for simplicity, we do only for $e=\sqrt Ne_1$. Under $\Prb$, $\Phi^\top Y_t(\sqrt Ne_1)$ is spherical Langevin dynamics for $H$ initialized at its normalized average gradient $\widehat u_H=\sqrt N\Phi^\top e_1$. Similarly, under $\Q$, $\Phi^\top Y_t(\sqrt Ne_1)$ is spherical Langevin dynamics for the Hamiltonian
\begin{align*}
    \widetilde H(x)
    \doteq H_{\mathrm{null}}(\Phi x)
    =\kappa_N^{-1}\dotp{\widetilde\zeta}{x}+W(\Phi x),
    \qquad
    \widetilde\zeta\doteq R\Phi^\top e_1,
\end{align*}
initialized at $\sqrt N\Phi^\top e_1=\sqrt N\widetilde\zeta/\norm{\widetilde\zeta}$. Since $\Phi U=\Psi\Lambda(U)U=Re_1$, we have $\widetilde\zeta=U\sim\cN(0,\sigma_N^2I)$ under $\Q$. Moreover, since $W$ is independent of $(\Phi,R)$, $W(\Phi\,\cdot)$ is independent of $\widetilde\zeta$, and is identically distributed as $H_0^\perp$ by isotropy. Applying \cref{lem:kspin-gaussian-conditioning}, we observe that $\Law(\widetilde H,\widetilde\zeta)=\Law(H_0,\zeta)$, where $\zeta=\E_\pi[\gradsp H_0]$, and consequently, $\widetilde\zeta$ is the average spherical gradient of $\widetilde H$.

Thus, we conclude that both the planted and null laws describe spherical Langevin dynamics initialized at the normalized average gradient of their respective underlying Hamiltonians, which correspond to the true tensor PCA Hamiltonian under $\Prb$ and the pure disorder Hamiltonian under $\Q$. Under this setup, $\Q$ corresponds precisely to $\Prb$ with $\lambda=0$, as $U$ is then centered and $H_{\mathrm{sig}}$ vanishes. In particular, the null dynamics carries no information about $\plant$.

Moreover, the randomization over $\Psi$ decouples the null environment, in the sense that under the null law, the gauge rotation $\Phi$ is independent of $(R,W)$, which determine the null Hamiltonian $H_{\mathrm{null}}$. This removes the first difficulty discussed above, namely the correlation between $\phi$ and $\gradsp H_{\mathrm{null}}$ through $R$ under the planted law.

\begin{lemma}[Decoupling of the Null Environment]
\label{lem:null-environment-decoupling}
Under the null law $\Q$, $\Phi$ is distributed according to the Haar measure on $O(N,\R)$, and the random variables $\Phi$, $R$ and $W$ are independent. In particular, $\phi=\Phi\plant$ is uniformly distributed on $\bS^{N-1}$ and independent of $(R,W)$.
\end{lemma}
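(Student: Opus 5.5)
Under $\Q$, the vector $U\sim\cN(0,\sigma_N^2I)$ is isotropic, so we can write $U=R\,\theta$ with $R=\norm{U}$ and $\theta=U/\norm{U}$. Then $R$ and $\theta$ are independent, and $\theta$ is uniform on $\bS^{N-1}$. Since $\Lambda(U)$ depends on $U$ only through $\theta$, the rotation $\Phi=\Psi\Lambda(\theta)$ is a measurable function of $(\Psi,\theta)$. The tuple $(\Psi,\theta)$ is independent of $R$, and by construction $W$ is independent of $(\Phi,R)$. So once we show that $\Phi$ is Haar distributed, all three independence claims follow: $\Phi$ is independent of $R$, and $W$ is independent of the pair $(\Phi,R)$.

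The core step is to show that $\Phi$ is Haar distributed on $O(N,\R)$. We use left invariance. Since $\Lambda(\theta)$ is an orthogonal involution with $\Lambda(\theta)\theta=e_1$, the random element $\Phi$ maps $\theta$ to $e_1$. Equivalently, $\Phi^\top$ is an orthogonal matrix with $\Phi^\top e_1=\theta$ that is otherwise randomized by the Haar element $\Psi$ of the stabilizer $K\doteq\{S:Se_1=e_1\}$.

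Write $g$ for a uniformly chosen element of $O(N,\R)$. Then $g e_1$ is uniform on the sphere, and conditional on $ge_1=\theta$, the element $g$ is distributed as $\Lambda(\theta)^\top K'$, where $K'$ is Haar on $K$. This conditional description follows from the disintegration of Haar measure along the fibration $O(N)\to\bS^{N-1}$, $g\mapsto ge_1$, whose fibers are cosets $hK$ carrying the pushforward of Haar measure on $K$. Applying this with $g=\Phi^\top=\Lambda(\theta)\Psi^\top$ (recall $\Lambda(\theta)^\top=\Lambda(\theta)$), and noting that $\Psi^\top$ is Haar on $K$ and independent of $\theta$, we see that $\Phi^\top$ has exactly this description. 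Hence $\Phi^\top$, and therefore $\Phi$, is Haar distributed.

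To avoid quoting the disintegration, one can instead check directly that $\E f(\Phi)=\E f(\Phi_0 \Phi)$ for every fixed $\Phi_0\in O(N,\R)$, or use the equivalent fact that a random element $\Phi^\top$ whose image $\Phi^\top e_1$ is uniform and whose law is right $K$-invariant (given $\theta$) is Haar. The right $K$-invariance holds because $\Phi^\top S=\Lambda(\theta)\Psi^\top S$ and $\Psi^\top S\disteq\Psi^\top$ by Haar invariance on $K$. A right $K$-invariant law on $O(N)$ is determined by its pushforward under $g\mapsto ge_1$, since $O(N)/K\cong\bS^{N-1}$. The pushforward here is uniform, which matches Haar measure, and this gives the claim.

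Finally, since $\Phi$ is Haar and $\plant$ is a fixed unit vector, $\phi=\Phi\plant$ is uniform on $\bS^{N-1}$. Because $\phi$ is a function of $\Phi$, it is independent of $(R,W)$.

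The only genuinely delicate step is the identification of $\Phi$ as Haar distributed, that is, the statement that a right $K$-invariant law is determined by its projection to $O(N)/K$. I would prove this by averaging a test function over $K$, using uniqueness of Haar measure.
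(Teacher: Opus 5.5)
Your proof is correct and takes essentially the same route as the paper: isotropy of $U$ gives independence of $R$ and $U/R$, $\Phi$ is a function of $(\Psi,U/R)$, and $\Phi^\top=\Lambda(U)\Psi^\top$ has a uniformly distributed first column times an independent Haar factor on the stabilizer of $e_1$. The only difference is in the Haar identification: the paper cites the inductive construction of Haar measure on $O(N,\R)$, whereas you justify it directly, via right $K$-invariance and the fact that such a law is determined by its pushforward to $O(N,\R)/K\cong\bS^{N-1}$.
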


\begin{proof}
Throughout, we work under $\Q$. Since $U\sim\cN(0,\sigma_N^2I)$, its direction $U/R$ is uniformly distributed on $\bS^{N-1}$ and independent of $R$~\citep[Exercise~3.3.7]{vershynin2018high}. Moreover, as noted in \cref{subsec:average-gradient-gauge}, $\Lambda(U)$ depends on $U$ only through $U/R$, and hence $\Phi=\Psi\Lambda(U)$ is a function of $(\Psi,U/R)$, which is independent of $R$ since $\Psi$ is independent of $U$. Since $\Lambda(U)$ is symmetric and orthogonal with $\Lambda(U)(U/R)=e_1$, its first column is $\Lambda(U)e_1=U/R$. Consequently, $\Phi^\top=\Lambda(U)\Psi^\top$ is the product of a random orthogonal matrix whose first column is uniformly distributed on $\bS^{N-1}$, and an independent matrix $\Psi^\top$ distributed according to the Haar measure on $\{S\in O(N,\R):Se_1=e_1\}\cong O(N-1,\R)$. By the inductive construction of the Haar measure on $O(N,\R)$~\citep[cf.][Section~1.2]{meckes2019random}, $\Phi$ is distributed according to the Haar measure on $O(N,\R)$.

Hence, we conclude that $\Phi$ is Haar distributed and independent of $R$, and since $W$ is independent of $(\Phi,R)$ by \cref{def:null-law}, the random variables $\Phi$, $R$ and $W$ are independent. Finally, since ${\plant\in\bS^{N-1}}$ is a fixed vector, $\phi=\Phi\plant$ is uniformly distributed on $\bS^{N-1}$~\citep[cf.][Section~1.2]{meckes2019random}. Moreover, since $\Phi$ is independent of $(R,W)$, so is $\phi$.
\end{proof}

By \cref{lem:null-environment-decoupling}, $\phi$ is independent of the null dynamics initialized at $\sqrt Ne_1$, which depends on the environment only through $H_{\mathrm{null}}$. This independence yields the following uniform concentration guarantee for $\cL_{\mathrm{null}}m_\phi$, which extends to nearby initializations by driving the null trajectories with common random rotations.

\begin{lemma}[Uniform Concentration along the Null Trajectory]
\label{lem:null-trajectory-uniform-concentration}
For every $C_0>0$, there exist constants $c_*,C_*>0$, depending only on $\beta$ and $C_0$, such that the following holds for every $T\in(0,1]$, every $\delta\ge0$, and every initialization rule $e$, possibly randomized:
\begin{align*}
    \E_{\Q}\left[\exp\left(c_*N\sup_{t\in[0,T]}\bigl|(\cL_{\mathrm{null}}m_\phi)(Y_t(e))\bigr|^2\right)\ind{\norm{H_{\mathrm{null}}}_{\cG^4}\le C_0N,\,\norm{e-\sqrt Ne_1}\le\delta\sqrt N}\right]
    \le C_*\exp\bigl(C_*N\delta^2\bigr).
\end{align*}
\end{lemma}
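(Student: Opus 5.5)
The plan is to write the null drift as a linear functional of $\phi$ evaluated along the trajectory, and split it into an initial term, a martingale term, and a drift term. Let $a(y)\doteq N^{-1/2}\bigl(\beta\gradsp H_{\mathrm{null}}(y)-\tfrac{\kappa_N}{2}y\bigr)$, so that $g_\phi(y)\doteq(\cL_{\mathrm{null}}m_\phi)(y)=\dotp{\phi}{a(y)}$. Throughout we work on the event $\norm{H_{\mathrm{null}}}_{\cG^4}\le C_0N$.

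\textbf{Step 1 (deterministic bounds on $a$).} On this event, \cref{def:spherical-sobolev-norm} and \cref{eq:spherical-derivatives} give $\norm{a(y)}\le C$ and $\norm{\gradsp a(y)}_{\mathrm{op}}\le CN^{-1/2}$ uniformly in $y$, with $C$ depending only on $\beta$ and $C_0$. The same bounds hold for $b\doteq\cL_{\mathrm{null}}a$, with one more derivative, via the ladder relations (\cref{lem:ladder-relations}) applied to $g_1=\beta H_{\mathrm{null}}$. This is exactly where the $\cG^4$ hypothesis is used, since it controls $\gradsp b$. Consequently $\norm{b(y)}\le C$ and $\norm{\gradsp b(y)}_{\mathrm{op}}\le CN^{-1/2}$.

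\textbf{Step 2 (Dynkin decomposition).} Applying \cref{lem:langevin-dynkin} entrywise to $a$ along $Y_t(e)$ and pairing with $\phi$ gives
\begin{align*}
    g_\phi(Y_t(e))=\dotp{\phi}{a(e)}+\int_0^t\dotp{\phi}{b(Y_s(e))}\,ds+\dotp{\phi}{M^a_t}.
\end{align*}
Each of the three terms is bounded separately, and then $(x+y+z)^2\le3(x^2+y^2+z^2)$ and H\"older's inequality recombine them, at the cost of shrinking $c_*$.

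\emph{Martingale term.} Conditionally on $\phi$, $\dotp{\phi}{M^a}$ is the Dynkin martingale of $y\mapsto\dotp{\phi}{a(y)}$, whose spherical gradient has norm at most $CN^{-1/2}$ by Step 1. This bound is deterministic, so no independence is needed here. Using \cref{lem:dynkin-martingale-concentration} with $A^2=C/N$ and $T\le1$, the quantity $\sup_{t\le T}\abs{\dotp{\phi}{M^a_t}}$ is sub-Gaussian at scale $N^{-1/2}$, which gives $\E\exp(cN\sup_t|\cdot|^2)\le C$.

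\emph{Initial and drift terms.} These are where the independence of $\phi$ from the dynamics (\cref{lem:null-environment-decoupling}) enters, and it applies only to the trajectory started at $\sqrt Ne_1$. I replace $e$ by $\sqrt Ne_1$ at an additive cost:
\begin{itemize}
    \item For the initial term, $\abs{\dotp{\phi}{a(e)-a(\sqrt Ne_1)}}\le CN^{-1/2}\norm{e-\sqrt Ne_1}\le C\delta$.
    \item For the drift term, I couple $Y_s(e)$ and $Y_s(\sqrt Ne_1)$ under $\Q$ through the same driving Brownian motion. Then $\abs{\dotp{\phi}{b(Y_s(e))-b(Y_s(\sqrt Ne_1))}}\le CN^{-1/2}\norm{Y_s(e)-Y_s(\sqrt Ne_1)}$.
\end{itemize}
For the centered pieces, use that under $\Q$ the vector $\phi$ is uniform on $\bS^{N-1}$ and independent of $(R,W)$ and of the Brownian motion. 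Hence it is independent of the whole path $Y_\cdot(\sqrt Ne_1)$, and for any fixed $w$ with $\norm{w}\le C$ we have $\E\exp(cN\dotp{\phi}{w}^2)\le C'$. This handles $\dotp{\phi}{a(\sqrt Ne_1)}$ directly. For the time integral, Jensen's inequality gives
\begin{align*}
    \exp\Bigl(cN\Bigl(\int_0^T\abs{h_s}\,ds\Bigr)^2\Bigr)\le\frac1T\int_0^T\exp\bigl(cNT^2h_s^2\bigr)\,ds,
\end{align*}
since $T\le1$. Taking expectations conditionally on the path then bounds each integrand uniformly in $s$.

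\textbf{Step 3 (main obstacle: the coupling).} The step I expect to be hardest is showing that $\sup_{s\le T}\norm{Y_s(e)-Y_s(\sqrt Ne_1)}\le C\delta\sqrt N\,(1+\Xi)$ for some $\Xi$ with $\E e^{c\Xi^2}$ bounded. This is needed so that the coupling error contributes only $\exp(C_*N\delta^2)$. First I would try It\^o's formula for $D_t=\norm{Y_t-Y'_t}^2$:
\begin{itemize}
    \item The drifts $\beta Q_y\grad H_{\mathrm{null}}-\tfrac{\kappa_N}{2}y$ are $C$-Lipschitz on $\SN$, by Step 1 and \cref{eq:spherical-lipschitz}.
    \item The diffusion coefficients satisfy $\norm{Q_y-Q_{y'}}_F^2\lesssim\norm{y-y'}^2/N\cdot O(1)$, so the martingale part of $D_t$ has quadratic variation $\lesssim D_t^2$.
\end{itemize}
A stochastic Gr\"onwall argument on $\log D_t$ on $[0,1]$ should then give $D_t\le D_0e^{C+\text{(Gaussian martingale)}}$, with the required moments. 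If the $Q$-difference term causes trouble, the fallback is the hint in the statement: instead of a common Brownian motion, drive $Y(e)$ by the rotated noise $\Theta_tB$, where $\Theta_t$ is the minimal rotation carrying $Y_t(\sqrt Ne_1)$ to $Y_t(e)$ (a reflection/rotation coupling on the sphere). Under this coupling the diffusion terms match exactly, so only the Lipschitz drift remains and Gr\"onwall gives $\norm{Y_t(e)-Y_t(\sqrt Ne_1)}\le e^{CT}\delta\sqrt N$ deterministically. Combining Steps 1–3 yields the claimed bound $C_*\exp(C_*N\delta^2)$.
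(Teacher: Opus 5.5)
Your argument is correct if you use your fallback coupling, but it is organized differently from the paper's proof.

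\textbf{Where the decomposition is anchored.} The paper runs the Dynkin decomposition only along $Y_t(\sqrt Ne_1)$, where independence of $\phi$ from $(R,W,B)$ controls the initial and drift pieces. It then transfers the whole supremum to $Y_t(e)$ in a single pathwise step, using only that $\cL_{\mathrm{null}}m_\phi$ is $CK/\sqrt N$-Lipschitz. You decompose along $Y_t(e)$ instead. This lets you bound the martingale term directly, with no coupling, but you then have to transfer the drift integrand. That requires a bound on $\gradsp(\cL_{\mathrm{null}}^2m_\phi)$, one derivative more than the paper uses. The bound does follow from $\norm{H_{\mathrm{null}}}_{\cG^4}\le C_0N$, but only by unpacking $\cL_{\mathrm{null}}^2m_\phi$ with the two elementary ladder relations, e.g.\ $\norm{\lapsp h}_{\cG^1}\le\norm{h}_{\cG^3}\le\tfrac{C}{N}\norm{\beta H_{\mathrm{null}}}_{\cG^4}\norm{m_\phi}_{\cG^4}$ for $h=\dotp{\gradsp(\beta H_{\mathrm{null}})}{\gradsp m_\phi}$. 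The composite bound of \cref{lem:ladder-relations} with $n=1$, $\ell=2$ would ask for $\cG^5$.

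\textbf{The coupling.} Write $D_t\doteq Y_t(e)-Y_t(\sqrt Ne_1)$. The paper drives both copies by one infinitesimal rotation $\tfrac{1}{\sqrt N}\,d\mathfrak B_t\,Y_t$, which is linear in the state and globally defined, so the noise preserves $\norm{D_t}$ exactly. Your minimal-rotation coupling is a parallel-transport coupling (not a reflection coupling), and it also gives a deterministic Gr\"onwall bound. Indeed, the cross term $\dotp{D_t}{(\Theta_t-I)Q_{Y_t(\sqrt Ne_1)}\,dB_t}$ vanishes and the It\^o term is only $\norm{D_t}^2/N$. However, $\Theta_t$ is undefined at antipodal points and singular on the diagonal, so you must do three things:
\begin{itemize}
\item treat $D_0=0$ separately;
\item restrict to small $\delta$, which is harmless because large $\delta$ is trivial: $N\sup_t\abs{(\cL_{\mathrm{null}}m_\phi)(Y_t(e))}^2\le K^2N$ on the regularity event;
\item verify via L\'evy's characterization that $\int_0^t\Theta_s\,dB_s$ is a Brownian motion independent of the environment and of $\omega$, so that the coupled copy of $Y(e)$ has the correct joint law.
\end{itemize}

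\textbf{Your primary synchronous-coupling route does not close as stated.} A bound $\sup_{s\le T}\norm{D_s}\le C\delta\sqrt N(1+\Xi)$ with merely $\E e^{c\Xi^2}<\infty$ does not give $\E\exp\bigl(cN\delta^2(1+\Xi)^2\bigr)\lesssim e^{CN\delta^2}$ once $N\delta^2\to\infty$. That is exactly the regime in which the lemma is used in \cref{thm:critical-recovery-warm-start}. The route can be repaired only by keeping the factor $1/N$ that you dropped. Since $\norm{Q_y-Q_{y'}}_{\mathrm{op}}\le\norm{y-y'}/\sqrt N$, the martingale part of $\log\norm{D_t}^2$ has quadratic variation at most $4T/N$. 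Together with the a priori bound $\norm{D_t}\le2\sqrt N$, this tames the doubly exponential moment $\E\exp(cN\delta^2e^{2\Xi})$. With quadratic variation only $\lesssim\norm{D_t}^4$, as you wrote, the argument fails.
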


\begin{proof}
Throughout, we work under $\Q$ and write $\cE_{\mathrm{reg}}\doteq\{\norm{H_{\mathrm{null}}}_{\cG^4}\le C_0N\}$. The proof proceeds in two steps. The first step establishes the bound for the null trajectory initialized at $\sqrt Ne_1$, which is independent of the uniformly random direction $\phi$. The second step transfers this bound to the null trajectory initialized at $e$, which may depend on $\phi$, by coupling the two trajectories through common random rotations.

We begin with uniform bounds on $\cE_{\mathrm{reg}}$, which are used in both steps.

Since $\norm{\beta H_{\mathrm{null}}}_{\cG^4}\le\beta C_0N$ on $\cE_{\mathrm{reg}}$, \cref{lem:ladder-relations} with $g_1=\beta H_{\mathrm{null}}$, together with \cref{lem:overlap-sobolev-bounds} applied to $m_\phi$, implies that ${\norm{\cL_{\mathrm{null}}m_\phi}_{\cG^1}\le K}$ and ${\norm{\cL_{\mathrm{null}}^2m_\phi}_{\cG^0}\le K}$ on $\cE_{\mathrm{reg}}$ for some constant $K>0$ depending only on $\beta$ and $C_0$.

Moreover, as $m_\phi$ is linear and $\cL_{\mathrm{null}}$ acts entrywise on vector-valued observables, we infer from \cref{eq:spherical-derivatives,eq:langevin-generator} that the following holds for every $y\in\SN$:
\begin{align}
\label{eq:null-drift-linear}
\begin{aligned}
    (\cL_{\mathrm{null}}m_\phi)(y)&=\frac{1}{\sqrt N}\dotp{\phi}{F(y)},
    &\qquad
    F(y)&\doteq\beta\gradsp H_{\mathrm{null}}(y)-\frac{\kappa_N}{2}y,\\
    (\cL_{\mathrm{null}}^2m_\phi)(y)&=\frac{1}{\sqrt N}\dotp{\phi}{(\cL_{\mathrm{null}}F)(y)},
    &\qquad
    \gradsp(\cL_{\mathrm{null}}m_\phi)(y)&=\frac{1}{\sqrt N}\gradsp F(y)\phi.
\end{aligned}
\end{align}
Since the above bounds hold with any unit vector in place of $\phi$, taking the supremum over $\phi\in\bS^{N-1}$ in \cref{eq:null-drift-linear} and using \cref{def:spherical-sobolev-norm}, we obtain the following on $\cE_{\mathrm{reg}}$ for every $y\in\SN$:
\begin{align}
\label{eq:null-drift-vector-bounds}
    \norm{F(y)}\le K\sqrt N,
    \qquad
    \norm{(\cL_{\mathrm{null}}F)(y)}\le K\sqrt N,
    \qquad
    \norm{\gradsp F(y)}_{\mathrm{op}}\le K.
\end{align}

For the first step, consider the null trajectory $(Y_t(\sqrt Ne_1))_{t\ge0}$ initialized at $\sqrt Ne_1$, and let $(B_t)_{t\ge0}$ denote its driving Brownian motion, which is independent of the environment. By \cref{lem:null-environment-decoupling}, $\phi$ is uniformly distributed on $\bS^{N-1}$ and independent of $(R,W,B)$, whereas $\cE_{\mathrm{reg}}$ and $(Y_t(\sqrt Ne_1))_{t\ge0}$ are determined by $(R,W,B)$, since $H_{\mathrm{null}}$ depends on the environment only through $(R,W)$. Since the one-dimensional marginals of $\sqrt N\phi\sim\pi$ are uniformly subgaussian~\citep[cf.][Theorem~3.4.6]{vershynin2018high}, this independence yields the following for every $(R,W,B)$-measurable random vector $v\in\R^N$ satisfying $\norm{v}\le K\sqrt N$ on $\cE_{\mathrm{reg}}$, where $c>0$ is a sufficiently small universal constant:
\begin{align}
\label{eq:sphere-projection-conditional}
    \E_\Q\left[\exp\left(\frac{c\dotp{\phi}{v}^2}{K^2}\right)\ind{\cE_{\mathrm{reg}}}\right]
    \le2.
\end{align}

To control the supremum over $t\in[0,T]$, let $M_t$ denote the Dynkin martingale of $\cL_{\mathrm{null}}m_\phi$ along $Y_t(\sqrt Ne_1)$. Applying \cref{lem:langevin-dynkin} conditionally on the environment, we obtain the following for every $t\ge0$:
\begin{align}
    (\cL_{\mathrm{null}}m_\phi)(Y_t(\sqrt Ne_1))
    &=(\cL_{\mathrm{null}}m_\phi)(\sqrt Ne_1)+\int_0^t(\cL_{\mathrm{null}}^2m_\phi)(Y_s(\sqrt Ne_1))\,ds+M_t,\nonumber\\
    \sup_{t\in[0,T]}\bigl|(\cL_{\mathrm{null}}m_\phi)(Y_t(\sqrt Ne_1))\bigr|
    &\le\bigl|(\cL_{\mathrm{null}}m_\phi)(\sqrt Ne_1)\bigr|+\int_0^T\bigl|(\cL_{\mathrm{null}}^2m_\phi)(Y_s(\sqrt Ne_1))\bigr|\,ds+\sup_{t\in[0,T]}|M_t|.
    \label{eq:null-center-decomposition}
\end{align}
By \cref{eq:null-drift-linear}, the first term and the integrand of the second term on the right-hand side of \cref{eq:null-center-decomposition} are projections of vectors that are determined by $(R,W,B)$ onto the uniformly random direction $\phi$, modulo the normalizing factor $N^{-1/2}$. Moreover, the last term is the supremum of a martingale with quadratic variation at most $K^2T/N$ on $\cE_{\mathrm{reg}}$. Hence, all three terms are of order $N^{-1/2}$. For the first term, we apply \cref{eq:sphere-projection-conditional} with $v=F(\sqrt Ne_1)$ to conclude
\begin{align}
\label{eq:null-center-first-term}
    \E_\Q\left[\exp\left(\frac{cN}{K^2}\bigl|(\cL_{\mathrm{null}}m_\phi)(\sqrt Ne_1)\bigr|^2\right)\ind{\cE_{\mathrm{reg}}}\right]
    \le2.
\end{align}
For the second term, the Cauchy--Schwarz inequality and Jensen's inequality imply
\begin{align}
\label{eq:null-center-jensen}
\begin{aligned}
    \exp\left(\frac{cN}{K^2}\left(\int_0^T\bigl|(\cL_{\mathrm{null}}^2m_\phi)(Y_s(\sqrt Ne_1))\bigr|\,ds\right)^2\right)
    &\le\exp\left(\frac{cNT}{K^2}\int_0^T(\cL_{\mathrm{null}}^2m_\phi)(Y_s(\sqrt Ne_1))^2\,ds\right)\\
    &\le\frac{1}{T}\int_0^T\exp\left(\frac{cNT^2}{K^2}(\cL_{\mathrm{null}}^2m_\phi)(Y_s(\sqrt Ne_1))^2\right)ds.
\end{aligned}
\end{align}
Taking expectations on both sides and applying \cref{eq:sphere-projection-conditional} with ${v=(\cL_{\mathrm{null}}F)(Y_s(\sqrt Ne_1))}$ for each $s\in[0,T]$ and using $T\le1$, we conclude:
\begin{align}
\label{eq:null-center-second-term}
    \E_\Q\left[\exp\left(\frac{cN}{K^2}\left(\int_0^T\bigl|(\cL_{\mathrm{null}}^2m_\phi)(Y_s(\sqrt Ne_1))\bigr|\,ds\right)^2\right)\ind{\cE_{\mathrm{reg}}}\right]
    \le2.
\end{align}
For the third term, since $\norm{\gradsp(\cL_{\mathrm{null}}m_\phi)}\le K/\sqrt N$ on $\cE_{\mathrm{reg}}$, \cref{lem:dynkin-martingale-concentration} applied conditionally on the environment implies the following for every $r>0$ on $\cE_{\mathrm{reg}}$:
\begin{align}
\label{eq:null-center-martingale-tail}
    \Q\left[\sup_{t\in[0,T]}|M_t|\ge r\,\middle|\,\Phi,R,W\right]
    \le2\exp\left(-\frac{Nr^2}{2K^2T}\right).
\end{align}
Since $T\le1$ and $c$ is sufficiently small, integrating the tail bound in \cref{eq:null-center-martingale-tail} conditionally on the environment~\citep[cf.][Proposition~2.5.2]{vershynin2018high}, we conclude:
\begin{align}
\label{eq:null-center-term-bounds}
    \E_\Q\left[\exp\left(\frac{cN}{K^2}\sup_{t\in[0,T]}|M_t|^2\right)\ind{\cE_{\mathrm{reg}}}\right]
    \le2.
\end{align}
Let $a_1$, $a_2$ and $a_3$ denote the three terms on the right-hand side of \cref{eq:null-center-decomposition}. Then
\begin{align}
\label{eq:null-center-mgf}
\begin{aligned}
    &\E_\Q\left[\exp\left(\frac{cN}{9K^2}\sup_{t\in[0,T]}\bigl|(\cL_{\mathrm{null}}m_\phi)(Y_t(\sqrt Ne_1))\bigr|^2\right)\ind{\cE_{\mathrm{reg}}}\right]\\
    &\qquad\le\E_\Q\left[\prod_{i=1}^3\left(\exp\left(\frac{cN}{3K^2}a_i^2\right)\ind{\cE_{\mathrm{reg}}}\right)\right]
    \le\prod_{i=1}^3\E_\Q\left[\exp\left(\frac{cN}{K^2}a_i^2\right)\ind{\cE_{\mathrm{reg}}}\right]^{\nicefrac{1}{3}}
    \le2,
\end{aligned}
\end{align}
where the first inequality follows from \cref{eq:null-center-decomposition} and $(a_1+a_2+a_3)^2\le3(a_1^2+a_2^2+a_3^2)$, the second follows from H\"older's inequality, and the last follows from \cref{eq:null-center-first-term,eq:null-center-second-term,eq:null-center-term-bounds}.

It remains to transfer \cref{eq:null-center-mgf} to the initial state $e$. By \cref{eq:null-drift-linear,eq:null-drift-vector-bounds,eq:spherical-lipschitz}, the following holds on $\cE_{\mathrm{reg}}$ for all $y,y'\in\SN$, where $C>0$ is a universal constant:
\begin{align}
\label{eq:null-drift-lipschitz}
    \norm{F(y)-F(y')}\le CK\norm{y-y'},
    \qquad
    \bigl|(\cL_{\mathrm{null}}m_\phi)(y)-(\cL_{\mathrm{null}}m_\phi)(y')\bigr|\le\frac{CK}{\sqrt N}\norm{y-y'}.
\end{align}

By the second bound in \cref{eq:null-drift-lipschitz}, the following holds on $\cE_{\mathrm{reg}}$ for any two paths $(y_t)_{t\in[0,T]}$ and $(y'_t)_{t\in[0,T]}$ in $\SN$:
\begin{align}
\label{eq:null-path-comparison}
    \sup_{t\in[0,T]}\bigl|(\cL_{\mathrm{null}}m_\phi)(y_t)\bigr|
    \le\sup_{t\in[0,T]}\bigl|(\cL_{\mathrm{null}}m_\phi)(y'_t)\bigr|
    +\frac{CK}{\sqrt N}\sup_{t\in[0,T]}\norm{y_t-y'_t}.
\end{align}
Hence, to transfer \cref{eq:null-center-mgf} to $e$, it suffices to couple two copies of the null trajectory, initialized at $e$ and at $\sqrt Ne_1$ respectively, with each having the same joint law with the environment, such that their distance remains of order $\norm{e-\sqrt Ne_1}$ up to time $T$. This ensures that the last term in \cref{eq:null-path-comparison} is of order $\delta$ on $\{\norm{e-\sqrt Ne_1}\le\delta\sqrt N\}$.

For constructing such a coupling, driving both trajectories by the same Brownian motion in \cref{eq:spherical-langevin-sde} is inconvenient. This is because their noise terms are projections of a common increment $dB_t$ onto different tangent spaces, which leads to the Brownian term itself perturbing the distance between them. To alleviate this, we realize the projected Brownian increments $Q_{Y_t}\,dB_t$ by infinitesimal random rotations of $Y_t$. Both realizations produce a centered Gaussian displacement of $Y_t$ in its tangent space, with covariance $Q_{Y_t}\,dt$, and hence yield the same law for each trajectory. However, a common random rotation moves all states in a distance-preserving fashion, and thus does not perturb the distance between trajectories driven by it.

Precisely, let $\mathfrak B$ be a Brownian motion on the space of antisymmetric $N\times N$ matrices, i.e., $\mathfrak B_{ji}=-\mathfrak B_{ij}$, where $(\mathfrak B_{ij})_{i<j}$ are independent standard Brownian motions on $\R$, and let $B^0$ be a standard Brownian motion on $\R$, such that $\mathfrak B$, $B^0$, the environment and the auxiliary randomness of the initialization rule are independent. By the antisymmetry of $\mathfrak B$, we have the following for every $v\in\R^N$:
\begin{align*}
    d\mathfrak B_t\,v
    =\sum_{i,j=1}^Nv_j\,(d\mathfrak B_t)_{ij}\,e_i
    =\sum_{i<j}\bigl(v_je_i-v_ie_j\bigr)\,(d\mathfrak B_t)_{ij}.
\end{align*}
Consequently, since the entries $(\mathfrak B_{ij})_{i<j}$ are independent standard Brownian motions, we obtain the following for every $v\in\R^N$:
\begin{align}
\label{eq:rotation-identities}
\begin{aligned}
    \dotp{v}{d\mathfrak B_t\,v}
    &=\sum_{i<j}\bigl(v_jv_i-v_iv_j\bigr)\,(d\mathfrak B_t)_{ij}
    =0,\\
    (d\mathfrak B_t\,v)(d\mathfrak B_t\,v)^\top
    &=\sum_{i<j}\bigl(v_je_i-v_ie_j\bigr)\bigl(v_je_i-v_ie_j\bigr)^\top\,dt\\
    &=\frac12\sum_{i,j=1}^N\bigl(v_j^2e_ie_i^\top+v_i^2e_je_j^\top-v_iv_je_ie_j^\top-v_iv_je_je_i^\top\bigr)\,dt
    =\bigl(\norm{v}^2I-vv^\top\bigr)\,dt,
\end{aligned}
\end{align}
For every initial state $y\in\SN$ determined by the environment and the auxiliary randomness, let $(Y^{\mathrm{rot}}_t(y))_{t\ge0}$ denote the solution of
\begin{align}
\label{eq:rotation-sde}
    dY^{\mathrm{rot}}_t(y)=F(Y^{\mathrm{rot}}_t(y))\,dt+\frac{1}{\sqrt N}\,d\mathfrak B_t\,Y^{\mathrm{rot}}_t(y),
    \qquad
    Y^{\mathrm{rot}}_0(y)=y.
\end{align}
Since $F$ is the drift of \cref{eq:spherical-langevin-sde} for $H_{\mathrm{null}}$, \cref{eq:rotation-sde} differs from \cref{eq:spherical-langevin-sde} only in its noise term. We now verify that $Y^{\mathrm{rot}}(y)$ is a copy of the null dynamics initialized at $y$, and throughout this verification, we write $Y^{\mathrm{rot}}_t\doteq Y^{\mathrm{rot}}_t(y)$. We first show that $Y^{\mathrm{rot}}_t$ remains on $\SN$. By It\^o's formula and \cref{eq:rotation-sde}, we have
\begin{align*}
    d\norm{Y^{\mathrm{rot}}_t}^2
    &=2\dotp{Y^{\mathrm{rot}}_t}{F(Y^{\mathrm{rot}}_t)}\,dt
    +\frac{2}{\sqrt N}\dotp{Y^{\mathrm{rot}}_t}{d\mathfrak B_t\,Y^{\mathrm{rot}}_t}
    +\frac1N\Tr\Bigl((d\mathfrak B_t\,Y^{\mathrm{rot}}_t)(d\mathfrak B_t\,Y^{\mathrm{rot}}_t)^\top\Bigr)\\
    &=-\kappa_N\norm{Y^{\mathrm{rot}}_t}^2\,dt+\kappa_N\norm{Y^{\mathrm{rot}}_t}^2\,dt
    =0,
\end{align*}
where the second equality follows from \cref{eq:rotation-identities} and $\dotp{x}{F(x)}=-\tfrac{\kappa_N}{2}\norm{x}^2$, which holds since $Q_xx=0$. Hence, $\norm{Y^{\mathrm{rot}}_t}^2=N$ for all $t\ge0$.

We then identify the driving Brownian motion. Consider the process
\begin{align*}
    \widehat B_t
    \doteq\frac{1}{\sqrt N}\int_0^t\bigl(d\mathfrak B_s\,Y^{\mathrm{rot}}_s+Y^{\mathrm{rot}}_s\,dB_s^0\bigr),
\end{align*}
which is a continuous martingale with respect to the filtration generated by the environment, the auxiliary randomness, $\mathfrak B$ and $B^0$. Since $\mathfrak B$ and $B^0$ are independent, \cref{eq:rotation-identities} and $\norm{Y^{\mathrm{rot}}_t}^2=N$ imply that
\begin{align*}
    d\langle\widehat B\rangle_t
    &=\frac1N\Bigl((d\mathfrak B_t\,Y^{\mathrm{rot}}_t)(d\mathfrak B_t\,Y^{\mathrm{rot}}_t)^\top+Y^{\mathrm{rot}}_t(Y^{\mathrm{rot}}_t)^\top\,dt\Bigr)\\
    &=\frac1N\Bigl(\norm{Y^{\mathrm{rot}}_t}^2I-Y^{\mathrm{rot}}_t(Y^{\mathrm{rot}}_t)^\top+Y^{\mathrm{rot}}_t(Y^{\mathrm{rot}}_t)^\top\Bigr)\,dt
    =I\,dt.
\end{align*}
Hence, by L\'evy's characterization of Brownian motion~\citep[cf.][Theorem~4.1.1]{stroock2007multidimensional}, $\widehat B$ is a standard Brownian motion with respect to this filtration. In particular, $\widehat B$ is independent of the environment and of the auxiliary randomness.

Lastly, we show that the noise term in \cref{eq:rotation-sde} is the projected Brownian increment $Q_{Y^{\mathrm{rot}}_t}\,d\widehat B_t$. Since $Q_{Y^{\mathrm{rot}}_t}Y^{\mathrm{rot}}_t=0$ and $Q_{Y^{\mathrm{rot}}_t}=I-\tfrac1NY^{\mathrm{rot}}_t(Y^{\mathrm{rot}}_t)^\top$, we infer from \cref{eq:rotation-identities} that
\begin{align*}
    Q_{Y^{\mathrm{rot}}_t}\,d\widehat B_t
    &=\frac{1}{\sqrt N}\Bigl(Q_{Y^{\mathrm{rot}}_t}\,d\mathfrak B_t\,Y^{\mathrm{rot}}_t+Q_{Y^{\mathrm{rot}}_t}Y^{\mathrm{rot}}_t\,dB_t^0\Bigr)\\
    &=\frac{1}{\sqrt N}\Bigl(d\mathfrak B_t\,Y^{\mathrm{rot}}_t-\frac1NY^{\mathrm{rot}}_t\dotp{Y^{\mathrm{rot}}_t}{d\mathfrak B_t\,Y^{\mathrm{rot}}_t}\Bigr)
    =\frac{1}{\sqrt N}\,d\mathfrak B_t\,Y^{\mathrm{rot}}_t.
\end{align*}
Substituting this into \cref{eq:rotation-sde}, we obtain
\begin{align*}
    dY^{\mathrm{rot}}_t=F(Y^{\mathrm{rot}}_t)\,dt+Q_{Y^{\mathrm{rot}}_t}\,d\widehat B_t,
    \qquad
    Y^{\mathrm{rot}}_0=y,
\end{align*}
so that $Y^{\mathrm{rot}}$ solves \cref{eq:spherical-langevin-sde} for $H_{\mathrm{null}}$, initialized at $y$ and driven by $\widehat B$. For a given Hamiltonian, the solution of \cref{eq:spherical-langevin-sde} is a deterministic function of its initial point and its driving Brownian motion (\cref{subsec:spherical-langevin}). Moreover, both $\widehat B$ and the driving Brownian motion of $Y(y)$ are independent of the environment and of the auxiliary randomness, which determine $H_{\mathrm{null}}$ and $y$. Consequently, the following holds jointly with the environment and the auxiliary randomness:
\begin{align}
\label{eq:rotation-law}
    (Y^{\mathrm{rot}}_t(y))_{t\ge0}\disteq(Y_t(y))_{t\ge0}.
\end{align}

We now show that the trajectories initialized at $e$ and at $\sqrt Ne_1$ remain close under this coupling.

Let $D_t\doteq Y^{\mathrm{rot}}_t(e)-Y^{\mathrm{rot}}_t(\sqrt Ne_1)$, and note that $D_0=e-\sqrt Ne_1$. Since the Brownian term in \cref{eq:rotation-sde} is linear in $Y^{\mathrm{rot}}_t(y)$, we have
\begin{align*}
    dD_t=\bigl(F(Y^{\mathrm{rot}}_t(e))-F(Y^{\mathrm{rot}}_t(\sqrt Ne_1))\bigr)\,dt+\frac{1}{\sqrt N}\,d\mathfrak B_t\,D_t.
\end{align*}
Consequently, by It\^o's formula, we obtain
\begin{align}
\label{eq:null-coupling-ito}
\begin{aligned}
    d\norm{D_t}^2
    &=2\dotp{D_t}{F(Y^{\mathrm{rot}}_t(e))-F(Y^{\mathrm{rot}}_t(\sqrt Ne_1))}\,dt+\frac{2}{\sqrt N}\dotp{D_t}{d\mathfrak B_t\,D_t}\\
    &\qquad+\frac1N\Tr\Bigl((d\mathfrak B_t\,D_t)(d\mathfrak B_t\,D_t)^\top\Bigr)\\
    &=2\dotp{D_t}{F(Y^{\mathrm{rot}}_t(e))-F(Y^{\mathrm{rot}}_t(\sqrt Ne_1))}\,dt+\frac1N\Tr\bigl(\norm{D_t}^2I-D_tD_t^\top\bigr)\,dt\\
    &=2\dotp{D_t}{F(Y^{\mathrm{rot}}_t(e))-F(Y^{\mathrm{rot}}_t(\sqrt Ne_1))}\,dt+\kappa_N\norm{D_t}^2\,dt\\
    &=2\beta\dotp{D_t}{\gradsp H_{\mathrm{null}}(Y^{\mathrm{rot}}_t(e))-\gradsp H_{\mathrm{null}}(Y^{\mathrm{rot}}_t(\sqrt Ne_1))}\,dt,
\end{aligned}
\end{align}
where the second equality follows from \cref{eq:rotation-identities}, and the last equality follows from the definition of $F$ in \cref{eq:null-drift-linear}. As anticipated, the same infinitesimal rotation is applied to both trajectories and preserves the distance between them, so that only the difference of the gradient drifts controls their separation.

Since \cref{eq:null-coupling-ito} contains no martingale term, $t\mapsto\norm{D_t}^2$ is continuously differentiable, with derivative given by \cref{eq:null-coupling-ito}. Hence, by the Cauchy--Schwarz inequality, \cref{eq:null-drift-lipschitz} and $\kappa_N\le1$, the following holds on $\cE_{\mathrm{reg}}$ for every $t\ge0$:
\begin{align}
\label{eq:null-coupling-differential}
\begin{aligned}
    \frac{d}{dt}\norm{D_t}^2
    &=2\dotp{D_t}{F(Y^{\mathrm{rot}}_t(e))-F(Y^{\mathrm{rot}}_t(\sqrt Ne_1))}+\kappa_N\norm{D_t}^2\\
    &\le2\norm{D_t}\bigl\|F(Y^{\mathrm{rot}}_t(e))-F(Y^{\mathrm{rot}}_t(\sqrt Ne_1))\bigr\|+\norm{D_t}^2
    \le(2CK+1)\norm{D_t}^2.
\end{aligned}
\end{align}

Applying Gr\"onwall's inequality to \cref{eq:null-coupling-differential} and using $T\le1$, we obtain the following on $\cE_{\mathrm{reg}}$:
\begin{align}
\label{eq:null-coupling-distance}
    \sup_{t\in[0,T]}\norm{D_t}
    \le e^{CK+\nicefrac{1}{2}}\norm{D_0}
    =e^{CK+\nicefrac{1}{2}}\norm{e-\sqrt Ne_1}.
\end{align}

Finally, we combine the two steps. By \cref{eq:rotation-law} with $y=\sqrt Ne_1$, the bound \cref{eq:null-center-mgf} also holds with $Y^{\mathrm{rot}}_t(\sqrt Ne_1)$ in place of $Y_t(\sqrt Ne_1)$, since the expectation therein depends only on the joint law of the environment and the trajectory. Let $c_*\doteq\tfrac{c}{18K^2}$ and $C_*\doteq\max\{2,\tfrac{cC^2}{9}e^{2CK+1}\}$.

On $\cE_{\mathrm{reg}}\cap\{\norm{e-\sqrt Ne_1}\le\delta\sqrt N\}$, \cref{eq:null-path-comparison,eq:null-coupling-distance} imply that
\begin{align*}
    \sup_{t\in[0,T]}\bigl|(\cL_{\mathrm{null}}m_\phi)(Y^{\mathrm{rot}}_t(e))\bigr|
    \le\sup_{t\in[0,T]}\bigl|(\cL_{\mathrm{null}}m_\phi)(Y^{\mathrm{rot}}_t(\sqrt Ne_1))\bigr|+CKe^{CK+\nicefrac{1}{2}}\delta.
\end{align*}

Hence, the inequality $(a+b)^2\le2a^2+2b^2$ and \cref{eq:null-center-mgf} for $Y^{\mathrm{rot}}_t(\sqrt Ne_1)$ yields the following
\begin{align*}
    &\E_{\Q}\left[\exp\left(c_*N\sup_{t\in[0,T]}\bigl|(\cL_{\mathrm{null}}m_\phi)(Y^{\mathrm{rot}}_t(e))\bigr|^2\right)\ind{\cE_{\mathrm{reg}}}\ind{\norm{e-\sqrt Ne_1}\le\delta\sqrt N}\right]\\
    &\qquad\le e^{C_*N\delta^2}\,\E_{\Q}\left[\exp\left(\frac{cN}{9K^2}\sup_{t\in[0,T]}\bigl|(\cL_{\mathrm{null}}m_\phi)(Y^{\mathrm{rot}}_t(\sqrt Ne_1))\bigr|^2\right)\ind{\cE_{\mathrm{reg}}}\right]
    \le2e^{C_*N\delta^2}
    \le C_*e^{C_*N\delta^2}.
\end{align*}

By \cref{eq:rotation-law} with $y=e$, the left-hand side is unchanged if $Y^{\mathrm{rot}}_t(e)$ is replaced by $Y_t(e)$, which concludes the proof.
\end{proof}

By Markov's inequality, \cref{lem:null-trajectory-uniform-concentration} implies sharp subgaussian-style concentration for $\cL_{\mathrm{null}}m_\phi$ along the entire Langevin trajectory under the null law, provided that the null Hamiltonian is regular and the initialization is sufficiently close to $\sqrt Ne_1$. We wish to use this estimate to control the influence of the disorder along the true Langevin trajectory, which requires transferring it from the null law to the planted law. We call this approach the \emph{reverse planting trick}, as it runs in the direction opposite to canonical applications of the planting trick~\citep{achlioptas2008algorithmic}, where the null model is the true distribution of interest, and statements proved under the planted model are transferred to it via contiguity.

However, the planted and null laws are far from contiguous, as their Hamiltonians differ by the signal term $H_{\mathrm{sig}}$, whose leading order term is $N\lambda m_\phi^k$. The discrepancy between the two laws is thus governed by the strength of the signal. Since we only need to transfer the uniform concentration of $\cL_{\mathrm{null}}m_\phi$ to the planted law until the dynamics achieves critical recovery, we instead prove the following fine-grained contiguity estimate, which adapts to the signal strength along the trajectory, and applies to any stopping time.

\begin{theorem}[Adaptive Contiguity Estimate]
\label{thm:adaptive-contiguity}
Fix an initialization rule $e$ and $T>0$. Let $(\cF_t)_{0\le t\le T}$ denote the joint filtration of the environment $(\Phi,R,W)$ and of the trajectory $(Y_t(e))_{0\le t\le T}$. Let $\tau$ be an $(\cF_t)$-adapted stopping time satisfying $\tau\le T$ almost surely. Then, for every $E\in\cF_\tau$,
\begin{align*}
    \Prb(E)
    \le\exp\left(\frac{N\lambda^2\sigma_N^2}{2\kappa_N^2}\right)\E_{\Q}\left[\exp\left(\beta^2\int_0^\tau\norm{\gradsp H_{\mathrm{sig}}(Y_t(e))}^2\,dt\right)\ind{E}\right]^{\nicefrac{1}{2}}.
\end{align*}
\end{theorem}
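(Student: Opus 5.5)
The plan is to factor the Radon--Nikodym derivative $d\Prb/d\Q$ on $\cF_\tau$ into an environment part and a dynamics part. We then apply Cauchy--Schwarz so that the environment part produces the prefactor $\exp(N\lambda^2\sigma_N^2/(2\kappa_N^2))$ and the dynamics part produces the square-root term.

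\textbf{Step 1: the environment part.} Under both laws, $W$ has the same law and is independent of $(\Phi,R)$. The auxiliary randomness $\omega$ also has the same law under both. The pair $(\Phi,R)$ is the same measurable function of $(U,\Psi)$, where $U\sim\cN(N\lambda\alpha_N\plant,\sigma_N^2I)$ under $\Prb$ and $U\sim\cN(0,\sigma_N^2I)$ under $\Q$. It is therefore convenient to enlarge the probability space to include $(U,\Psi)$, which only enlarges $\cF_\tau$, and to work with the likelihood ratio
\begin{align*}
    \ell(U)=\exp\Bigl(\tfrac{N\lambda\alpha_N}{\sigma_N^2}\dotp{\plant}{U}-\tfrac{N^2\lambda^2\alpha_N^2}{2\sigma_N^2}\Bigr).
\end{align*}
Its second moment under $\Q$ is $\E_\Q[\ell^2]=\exp(N^2\lambda^2\alpha_N^2/\sigma_N^2)$. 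Using $\sigma_N^2=\kappa_N\sqrt N\alpha_N$, we get $N^2\lambda^2\alpha_N^2/\sigma_N^2=N\lambda^2\sigma_N^2/\kappa_N^2$.

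\textbf{Step 2: the dynamics part.} Conditionally on the environment and $\omega$, both laws run spherical Langevin dynamics from the same initial point $e$. The Hamiltonians are $H_\phi=H_{\mathrm{null}}+H_{\mathrm{sig}}$ and $H_{\mathrm{null}}$, respectively. By \cref{lem:langevin-change-measure} with $\theta_t=\beta\gradsp H_{\mathrm{sig}}(Y_t)$, the conditional density on $\cF_\tau$ is the stopped exponential
\begin{align*}
    L_\tau=\exp\Bigl(\int_0^\tau\dotp{\theta_s}{dB_s}-\tfrac12\int_0^\tau\norm{\theta_s}^2ds\Bigr),
\end{align*}
where $B$ is the $\Q$-Brownian motion. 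The theorem requires a deterministic bound on $\theta$. The fixed environment gives $\norm{\gradsp H_{\mathrm{sig}}}\lesssim\lambda\sqrt N$ by \cref{lem:overlap-sobolev-bounds}, so $\theta$ is bounded and $L$ is a true martingale. Optional stopping at the bounded time $\tau\le T$ then gives $\Prb(E)=\E_\Q[\ell\,L_\tau\ind{E}]$ for $E\in\cF_\tau$.

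\textbf{Step 3: Cauchy--Schwarz.} We bound
\begin{align*}
    \E_\Q[\ell L_\tau\ind E]\le\E_\Q[\ell^2]^{1/2}\,\E_\Q[L_\tau^2\ind E]^{1/2}.
\end{align*}
For the second factor, write $L_\tau^2=\cE_\tau(2\theta)\exp\bigl(\int_0^\tau\norm{\theta_s}^2ds\bigr)$, where $\cE(2\theta)$ is the stochastic exponential of $2\theta$. A naive Hölder split would cost another square root. Instead we change measure once more, to $\widetilde\Q$ defined by $d\widetilde\Q=\cE_T(2\theta)\,d\Q$; this is legitimate because $\cE(2\theta)$ is a bounded-drift mean-one martingale. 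Then
\begin{align*}
    \E_\Q[L_\tau^2\ind E]=\E_{\widetilde\Q}\Bigl[\exp\Bigl(\beta^2\int_0^\tau\norm{\gradsp H_{\mathrm{sig}}}^2\Bigr)\ind E\Bigr].
\end{align*}
This is not yet the $\Q$-expectation the statement asks for, and this is the main obstacle.

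The fix I would try first is to apply Cauchy--Schwarz differently, namely against $L_\tau^{-1}$:
\begin{align*}
    \Prb(E)=\E_\Prb[\ind E]=\E_\Q[\ell L_\tau\ind E].
\end{align*}
Write $\ell L_\tau\ind{E}=\bigl(\ell L_\tau^{1/2}\bigr)\bigl(L_\tau^{1/2}\ind{E}\bigr)$ and use $\E_\Q[\ell^2 L_\tau]=\E_\Q[\ell^2]$, which holds because $L$ is a mean-one $\Q$-martingale conditionally on the environment. This reduces the problem to bounding $\E_\Q[L_\tau\ind E]$ by the exponential moment.

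Since $L_\tau=\cE_\tau(\theta)$, write
\begin{align*}
    L_\tau=\cE_\tau(2\theta)^{1/2}\exp\Bigl(\tfrac12\int_0^\tau\norm{\theta}^2\Bigr).
\end{align*}
Cauchy--Schwarz then gives
\begin{align*}
    \E_\Q[L_\tau\ind E]\le\E_\Q[\cE_\tau(2\theta)]^{1/2}\,\E_\Q\Bigl[e^{\int_0^\tau\norm\theta^2}\ind E\Bigr]^{1/2}.
\end{align*}
The first factor is at most $1$ by optional stopping. Combining the pieces, $\Prb(E)\le\E_\Q[\ell^2]^{1/2}\,\E_\Q[L_\tau\ind E]^{1/2}$ yields a fourth root rather than a square root of the exponential moment, with the prefactor $\exp\bigl(N\lambda^2\sigma_N^2/(2\kappa_N^2)\bigr)$. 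Since the quantity is a probability at most $1$, the fourth root is stronger than the stated square root whenever the expectation is at least $1$. To match the statement exactly, I would instead choose the exponents so the root is $1/2$: apply the Cauchy--Schwarz step on the $\cE$-split directly to $\E_\Q[\ell L_\tau\ind E]$, keeping track of constants so that the final bound has exactly the stated form.
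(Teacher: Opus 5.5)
Your Steps~1 and~2 match the paper's proof. Your $\ell(U)$ is the paper's $Z_{\mathrm{env}}$, since $U=R\Phi^\top e_1$, and $\Prb(E)=\E_\Q[\ell L_\tau\ind{E}]$ follows by optional stopping. Step~3, however, never reaches the stated bound.

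The only chain of inequalities you complete applies Cauchy--Schwarz twice. It yields $\Prb(E)\le\exp\bigl(\tfrac{N\lambda^2\sigma_N^2}{2\kappa_N^2}\bigr)X^{1/4}$, where $X\doteq\E_\Q\bigl[\exp\bigl(\int_0^\tau\norm{\theta_s}^2ds\bigr)\ind{E}\bigr]$ and $\norm{\theta_s}^2=\beta^2\norm{\gradsp H_{\mathrm{sig}}(Y_s(e))}^2$. Your claim that this is stronger than the statement does not hold where it matters:
\begin{itemize}
\item If $X\ge1$, the stated bound is trivial, because its right-hand side is at least $1\ge\Prb(E)$.
\item If $X<1$, then $X^{1/4}>X^{1/2}$, so your bound is strictly weaker. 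This is the only regime in which the theorem has content. It is also the regime used in the proof of \cref{thm:critical-recovery-warm-start}, where $X$ is exponentially small.
\end{itemize}
Your fourth-root bound would still suffice downstream after enlarging $C_{\beta,k}$, but it is not the theorem. Your closing sentence gestures at the repair without carrying it out.

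The missing step is a single Cauchy--Schwarz in which your $\cE_\tau(2\theta)^{1/2}=(L^{2\theta}_\tau)^{1/2}$ sits next to $\ell$, not next to $\ind{E}$:
\begin{align*}
    \Prb(E)
    =\E_\Q\Bigl[\bigl(\ell\,(L^{2\theta}_\tau)^{1/2}\bigr)\cdot\bigl(e^{\frac12\int_0^\tau\norm{\theta_s}^2ds}\ind{E}\bigr)\Bigr]
    \le\E_\Q\bigl[\ell^2L^{2\theta}_\tau\bigr]^{1/2}X^{1/2}.
\end{align*}
The first factor evaluates as
\begin{align*}
    \E_\Q\bigl[\ell^2L^{2\theta}_\tau\bigr]
    =\E_\Q\bigl[\ell^2\,\E_\Q[L^{2\theta}_\tau\mid\cF_0]\bigr]
    =\E_\Q[\ell^2]
    =\exp\bigl(\tfrac{N\lambda^2\sigma_N^2}{\kappa_N^2}\bigr).
\end{align*}
This holds because $\ell$ is $\cF_0$-measurable, and $L^{2\theta}$ is a mean-one martingale conditionally on the environment ($2\theta$ is bounded and $\tau\le T$, so optional stopping applies). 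This gives exactly the stated bound and is the paper's argument.

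Your first attempt stalled because pairing $\ell$ alone against $L_\tau\ind{E}$ leaves all of $L_\tau^2=L^{2\theta}_\tau e^{\int_0^\tau\norm{\theta_s}^2ds}$ on the event side, which forces the tilt to $\widetilde\Q$. Your second attempt lost a square root by separating the environment factor $\ell$ from $L^{2\theta}_\tau$.
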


\begin{proof}
Throughout, we write $Y_t=Y_t(e)$. For $t\le T$, let $\Prb^t$ and $\Q^t$ denote the joint laws of the environment $(\Phi,R,W)$ and of the trajectory $(Y_s)_{0\le s\le t}$ under the planted and null laws, respectively. By \cref{def:planted-law,def:null-law}, these laws factorize as $\Prb^t=\Prb_{\mathrm{env}}\cdot\Prb_{\mathrm{dyn}}^t$ and $\Q^t=\Q_{\mathrm{env}}\cdot\Q_{\mathrm{dyn}}^t$, where the superscript $t$ denotes the restriction to the trajectory up to time $t$. Hence, the likelihood ratio $Z(t)\doteq\frac{d\Prb^t}{d\Q^t}$ factorizes as follows:
\begin{align*}
    Z(t)=Z_{\mathrm{env}}Z_{\mathrm{dyn}}(t),
    \qquad
    Z_{\mathrm{env}}\doteq\frac{d\Prb_{\mathrm{env}}}{d\Q_{\mathrm{env}}},
    \qquad
    Z_{\mathrm{dyn}}(t)\doteq\frac{d\Prb_{\mathrm{dyn}}^t(\cdot\mid \Phi,R,W)}{d\Q_{\mathrm{dyn}}^t(\cdot\mid \Phi,R,W)}.
\end{align*}

We begin by analyzing $Z_{\mathrm{env}}$. By \cref{def:planted-law,def:null-law}, $W$ is independent of $(\Phi,R)$ and has the same law under $\Prb_{\mathrm{env}}$ and $\Q_{\mathrm{env}}$. Moreover, $(\Phi,R)=(\Psi\Lambda(U),\norm{U})$, where $U\sim\cN(N\lambda\alpha_N\plant,\sigma_N^2I)$ under $\Prb_{\mathrm{env}}$ and $U\sim\cN(0,\sigma_N^2I)$ under $\Q_{\mathrm{env}}$, and $U=R\Phi^\top e_1$ since $\Psi\Lambda(U)U=\norm{U}e_1$. Hence, $Z_{\mathrm{env}}$ is the likelihood ratio of these Gaussian laws evaluated at $R\Phi^\top e_1$, and we obtain the following via a standard Gaussian MGF computation.
\begin{align}
\label{eq:environment-likelihood-second-moment}
    \begin{aligned}
        \E_{\Q}\bigl[Z_{\mathrm{env}}^2\bigr]
        &=\E_{U\sim\cN(0,\sigma_N^2I)}\left[\exp\left(-\frac{1}{\sigma_N^2}\left(\norm{U-N\lambda\alpha_N\plant}^2-\norm{U}^2\right)\right)\right]\\
        &=\exp\left(-\frac{N^2\lambda^2\alpha_N^2}{\sigma_N^2}\right)\E_{U\sim\cN(0,\sigma_N^2I)}\left[\exp\left(\frac{2N\lambda\alpha_N}{\sigma_N^2}\dotp{\plant}{U}\right)\right]\\
        &=\exp\left(\frac{N^2\lambda^2\alpha_N^2}{\sigma_N^2}\right)
        =\exp\left(\frac{N\lambda^2\sigma_N^2}{\kappa_N^2}\right),
    \end{aligned}
\end{align}
where the last equality uses $\sigma_N^2=\kappa_N\sqrt N\alpha_N$.

We now analyze $Z_{\mathrm{dyn}}$. By \cref{def:planted-law,def:null-law}, conditional on the environment, $\Prb_{\mathrm{dyn}}$ and $\Q_{\mathrm{dyn}}$ are the laws of spherical Langevin dynamics at inverse temperature $\beta$ for $H_\phi$ and $H_{\mathrm{null}}$, respectively, initialized at $Y_0=e$, and we let $(B_t)_{0\le t\le T}$ denote the driving Brownian motion under $\Q_{\mathrm{dyn}}$. Since $m_\phi$ is linear, \cref{eq:spherical-derivatives} implies $\gradsp m_\phi(y)=N^{-1/2}Q_y\phi$, and hence, for every $y\in\SN$,
\begin{align}
\label{eq:signal-spherical-gradient}
    \gradsp H_{\mathrm{sig}}(y)
    =\sqrt N\lambda\left(km_\phi(y)^{k-1}-\frac{\sigma_N^2}{\kappa_N^2}\right)Q_y\phi.
\end{align}
For $t\le T$, define $\theta_t\doteq\beta\gradsp H_{\mathrm{sig}}(Y_t)$. Observe that, conditional on the environment, $\theta_t$ is progressively measurable with respect to the joint filtration of the initial point and the driving Brownian motion. Moreover, since $|m_\phi|\le1$ and $\norm{Q_y\phi}\le1$, \cref{eq:signal-spherical-gradient} implies that $\norm{\theta_t}\le\beta\sqrt N\lambda(k+\kappa_N^{-2}\sigma_N^2)$, which is deterministic. We define the stochastic process $(L_t^\theta)_{0\le t\le T}$ and the tilted law $\Q_{\mathrm{dyn}}^{T,\theta}$ as follows:
\begin{align}
\label{eq:dynamical-exponential-martingale}
    L_t^\theta
    \doteq\exp\left(\int_0^t\dotp{\theta_s}{dB_s}-\frac12\int_0^t\norm{\theta_s}^2\,ds\right),
    \qquad
    \frac{d\Q_{\mathrm{dyn}}^{T,\theta}}{d\Q_{\mathrm{dyn}}^T}
    \doteq L_T^\theta.
\end{align}
Then, by \cref{lem:langevin-change-measure} applied conditionally on the environment, $L_t^\theta$ is a mean-one martingale under $\Q_{\mathrm{dyn}}^T$, and thus $\Q_{\mathrm{dyn}}^{T,\theta}$ is a well-defined probability measure. Moreover, by \cref{lem:langevin-change-measure} with $(V,U)=(H_{\mathrm{null}},H_{\mathrm{sig}})$, the trajectory $(Y_t)_{0\le t\le T}$ under $\Q_{\mathrm{dyn}}^{T,\theta}$ is spherical Langevin dynamics at inverse temperature $\beta$ for $H_{\mathrm{null}}+H_{\mathrm{sig}}=H_\phi$, initialized at $Y_0=e$. Hence, the law of the trajectory under $\Q_{\mathrm{dyn}}^{T,\theta}$ is $\Prb_{\mathrm{dyn}}^T$. Finally, since $\theta_t\in T_{Y_t}\SN$, we infer the following from \cref{eq:spherical-langevin-sde}
\begin{align*}
    \dotp{\theta_t}{dB_t}
    =\dotp{\theta_t}{Q_{Y_t}\,dB_t}
    =\dotp{\theta_t}{dY_t}-\beta\dotp{\theta_t}{\gradsp H_{\mathrm{null}}(Y_t)}\,dt.
\end{align*}
Hence $L_t^\theta$ is determined by the environment and the trajectory up to time $t$. Consequently,
\begin{align}
\label{eq:dynamical-likelihood-ratio-path}
    Z_{\mathrm{dyn}}(t)
    =L_t^\theta
    =\exp\left(\int_0^t\dotp{\theta_s}{dY_s}-\beta\int_0^t\dotp{\theta_s}{\gradsp H_{\mathrm{null}}(Y_s)}\,ds-\frac12\int_0^t\norm{\theta_s}^2\,ds\right),
    \qquad t\le T.
\end{align}
Since $L_t^\theta$ is a mean-one martingale under $\Q_{\mathrm{dyn}}$, so is $Z_{\mathrm{dyn}}(t)$.

We now control $Z_{\mathrm{dyn}}(t)^2$. Since $2\theta_t$ is also bounded by a deterministic constant, using \cref{eq:dynamical-likelihood-ratio-path} and applying \cref{lem:langevin-change-measure} with $2\theta$ in place of $\theta$, we conclude that the following process is a mean-one martingale under $\Q_{\mathrm{dyn}}$:
\begin{align}
\label{eq:dynamical-likelihood-ratio-square}
    L_t\doteq L_t^{2\theta}
    =\exp\left(2\int_0^t\dotp{\theta_s}{dB_s}-2\int_0^t\norm{\theta_s}^2\,ds\right)
    =Z_{\mathrm{dyn}}(t)^2\exp\left(-\int_0^t\norm{\theta_s}^2\,ds\right).
\end{align}

Finally, we combine the two factors via an optional stopping argument. Since $Z_{\mathrm{env}}$ is a function of the environment, it is $\cF_0$-measurable. Moreover, recall that conditional on the environment, $Z_{\mathrm{dyn}}(t)$ and $L_t$ are mean-one martingales with respect to the filtration of the trajectory. Hence, for any $0\le s\le t\le T$,
\begin{align*}
    \E_\Q\bigl[Z(t)\mid\cF_s\bigr]
    =Z_{\mathrm{env}}\E_\Q\bigl[Z_{\mathrm{dyn}}(t)\mid\cF_s\bigr]
    =Z_{\mathrm{env}}Z_{\mathrm{dyn}}(s)
    =Z(s),
    \qquad
    \E_\Q\bigl[L_t\mid\cF_s\bigr]
    =L_s,
\end{align*}
and thus, $Z(t)$ and $L_t$ are martingales with respect to $(\cF_t)$ under $\Q$. Since $\tau\le T$ is an $(\cF_t)$-adapted stopping time, Doob's optional stopping theorem~\citep[Theorem~1.2.5]{stroock2007multidimensional} implies that $\E_\Q[Z(T)\mid\cF_\tau]=Z(\tau)$ and $\E_\Q[L_\tau\mid\cF_0]=L_0=1$. Consequently, for any $E\in\cF_\tau$,
\begin{align}
\label{eq:adaptive-contiguity-optional-stopping}
    \begin{aligned}
        \E_\Q\bigl[Z(\tau)\ind{E}\bigr]
        &=\E_\Q\bigl[\E_\Q[Z(T)\mid\cF_\tau]\ind{E}\bigr]
        =\E_\Q\bigl[\E_\Q[Z(T)\ind{E}\mid\cF_\tau]\bigr]\\
        &=\E_\Q\bigl[Z(T)\ind{E}\bigr]
        =\E_\Q\left[\frac{d\Prb^T}{d\Q^T}\ind{E}\right]
        =\Prb(E).
    \end{aligned}
\end{align}
By \cref{eq:adaptive-contiguity-optional-stopping} and the Cauchy--Schwarz inequality, we obtain
\begin{align}
\label{eq:adaptive-contiguity-cauchy-schwarz}
    \begin{aligned}
        \Prb(E)
        &=\E_\Q\left[Z(\tau)\exp\left(-\frac12\int_0^\tau\norm{\theta_s}^2\,ds\right)\cdot\exp\left(\frac12\int_0^\tau\norm{\theta_s}^2\,ds\right)\ind{E}\right]\\
        &\le\E_\Q\left[Z(\tau)^2\exp\left(-\int_0^\tau\norm{\theta_s}^2\,ds\right)\right]^{\nicefrac{1}{2}}\E_\Q\left[\exp\left(\int_0^\tau\norm{\theta_s}^2\,ds\right)\ind{E}\right]^{\nicefrac{1}{2}}.
    \end{aligned}
\end{align}
For the first expectation on the right-hand side, since $Z(\tau)=Z_{\mathrm{env}}Z_{\mathrm{dyn}}(\tau)$ and $Z_{\mathrm{env}}$ is $\cF_0$-measurable, we infer from \cref{eq:environment-likelihood-second-moment,eq:dynamical-likelihood-ratio-square} that
\begin{align}
\label{eq:adaptive-contiguity-first-expectation}
    \begin{aligned}
        \E_\Q\left[Z(\tau)^2\exp\left(-\int_0^\tau\norm{\theta_s}^2\,ds\right)\right]
        &=\E_\Q\left[Z_{\mathrm{env}}^2\,\E_\Q\left[Z_{\mathrm{dyn}}(\tau)^2\exp\left(-\int_0^\tau\norm{\theta_s}^2\,ds\right)\,\middle|\,\cF_0\right]\right]\\
        &=\E_\Q\bigl[Z_{\mathrm{env}}^2\,\E_\Q[L_\tau\mid\cF_0]\bigr]
        =\E_\Q\bigl[Z_{\mathrm{env}}^2\bigr]
        =\exp\left(\frac{N\lambda^2\sigma_N^2}{\kappa_N^2}\right),
    \end{aligned}
\end{align}
where the third equality uses $\E_\Q[L_\tau\mid\cF_0]=1$. Substituting \cref{eq:adaptive-contiguity-first-expectation} into \cref{eq:adaptive-contiguity-cauchy-schwarz} and recalling that $\norm{\theta_s}^2=\beta^2\norm{\gradsp H_{\mathrm{sig}}(Y_s)}^2$, we conclude the proof.
\end{proof}

\begin{remark}
\label{rem:adaptive-contiguity-transfer}
If $\int_0^\tau\norm{\gradsp H_{\mathrm{sig}}(Y_t(e))}^2\,dt\le\Gamma$ holds on $E$ for some deterministic $\Gamma\ge0$, then \cref{thm:adaptive-contiguity} implies that $\Prb(E)\le\exp\bigl(\tfrac{N\lambda^2\sigma_N^2}{2\kappa_N^2}+\tfrac12\beta^2\Gamma\bigr)\Q(E)^{1/2}$. Thus, events that are (sufficiently) exponentially unlikely under the null law remain so under the planted law, albeit at an exponential cost due to the change in environment and the accumulated signal contribution along the trajectory.
\end{remark}

\subsection{Critical Recovery via the Reverse Planting Trick}
\label{subsec:critical-recovery-reverse-planting}

In this subsection, we use the reverse planting trick to establish \cref{thm:critical-recovery-warm-start}. Our proof makes use of the following regularity estimate for the null Hamiltonian.

\begin{lemma}[Regularity of the Null Hamiltonian]
\label{lem:null-hamiltonian-regularity}
Suppose that $\lambda\sigma_N\le1$. There exist constants $C_0,c>0$, depending only on $k$, such that the following holds under the planted law for all sufficiently large $N$:
\begin{align*}
    \Prb\bigl(\norm{H_{\mathrm{null}}}_{\cG^4}>C_0N\bigr)
    \le\exp(-cN).
\end{align*}
\end{lemma}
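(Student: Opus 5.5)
We split $H_{\mathrm{null}}(y)=\kappa_N^{-1}R\dotp{e_1}{y}+W(y)$ and bound the $\cG^4$ norm of each piece separately via the triangle inequality. Under the planted law, $W$ has the law of $H_0^\perp$ by \cref{def:planted-law}. By \cref{lem:kspin-gaussian-conditioning}, $H_0^\perp=H_0-\kappa_N^{-1}\dotp{\zeta}{\cdot}$. Hence
\[
\norm{W}_{\cG^4}\disteq\norm{H_0^\perp}_{\cG^4}\le\norm{H_0}_{\cG^4}+\kappa_N^{-1}\norm{\dotp{\zeta}{\cdot}}_{\cG^4}.
\]
\cref{lem:kspin-sobolev-norm-concentration} with $n=4$ and $r=N$ gives $\norm{H_0}_{\cG^4}\le 2C_{k,4}N$ with probability at least $1-C\exp(-cN)$.

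For the linear terms, write $\dotp{v}{y}=\sqrt N\norm{v}\,g(y)$ with $g(y)=\dotp{v/\norm{v}}{y}/\sqrt N$. \cref{lem:overlap-sobolev-bounds} then yields
\[
\norm{\dotp{v}{\cdot}}_{\cG^4}\le C\sqrt N\norm{v}.
\]
Consequently, $\norm{\dotp{\zeta}{\cdot}}_{\cG^4}\lesssim\sqrt N\norm{\zeta}$, and the first term of $H_{\mathrm{null}}$ has $\cG^4$ norm at most $C\kappa_N^{-1}R\sqrt N$.

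It remains to show that $\norm{\zeta}$ and $R$ are $O(\sqrt N)$ with exponentially high probability. We have $\zeta\sim\cN(0,\sigma_N^2I)$ with $\sigma_N^2\asymp_kN^{-(k-1)/2}\le1$, so Gaussian norm concentration gives $\norm{\zeta}\le2\sqrt N$ with probability at least $1-e^{-cN}$. Under $\Prb$, $R=\norm{U}$ with $U\sim\cN(N\lambda\alpha_N\plant,\sigma_N^2I)$, so
\[
R\le N\lambda\alpha_N+\norm{U-\E U}.
\]
Using $N\lambda\alpha_N=\sqrt N\lambda\sigma_N^2/\kappa_N$ together with the hypothesis $\lambda\sigma_N\le1$ and $\sigma_N\le1$, the mean term is at most $C\sqrt N$. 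The fluctuation $\norm{U-\E U}$ is at most $2\sigma_N\sqrt N\le2\sqrt N$ with probability at least $1-e^{-cN}$.

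Combining these bounds, both linear terms have $\cG^4$ norm $O(N)$, and a union bound gives $\norm{H_{\mathrm{null}}}_{\cG^4}\le C_0N$ with probability at least $1-\exp(-cN)$ for large $N$, absorbing constants. The only step that needs care is the mean of $R$ under the planted law; this is exactly where the hypothesis $\lambda\sigma_N\le1$ is used, and after the identity above it reduces to an elementary check.
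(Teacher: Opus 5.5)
Your proposal is correct and follows essentially the same route as the paper. Both arguments split $H_{\mathrm{null}}$ into the linear term and $W\disteq H_0^\perp=H_0-\kappa_N^{-1}\dotp{\zeta}{\cdot}$, bound linear functionals by $\norm{\dotp{v}{\cdot}}_{\cG^4}\lesssim\sqrt N\norm{v}$ via \cref{lem:overlap-sobolev-bounds}, control $\norm{H_0}_{\cG^4}$ by \cref{lem:kspin-sobolev-norm-concentration}, and show $R,\norm{\zeta}=O(\sqrt N)$ using $N\lambda\alpha_N=\sqrt N\lambda\sigma_N^2/\kappa_N$, $\lambda\sigma_N\le1$, $\sigma_N\le1$ (valid for sufficiently large $N$) and Gaussian concentration, followed by a union bound.
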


\begin{proof}
Since $\sigma_N^2\asymp_kN^{-\tfrac{k-1}{2}}$ and $N$ is sufficiently large, we have $\sigma_N\le1$. By \cref{lem:tpca-hamiltonian-under-gauge,def:planted-law,lem:kspin-gaussian-conditioning}, the null Hamiltonian takes the form
\begin{align*}
    H_{\mathrm{null}}=\kappa_N^{-1}R\dotp{e_1}{\cdot}+W,
    \qquad
    R=\norm{U},
    \qquad
    W\disteq H_0^\perp=H_0-\kappa_N^{-1}\dotp{\zeta}{\cdot},
\end{align*}
where $U\sim\cN(N\lambda\alpha_N\plant,\sigma_N^2I)$ and $\zeta\sim\cN(0,\sigma_N^2I)$. To bound the Sobolev norms of the linear terms, observe that, for every nonzero $v\in\R^N$, we have ${\norm{\dotp{v}{\cdot}}_{\cG^4}\lesssim\sqrt N\norm{v}}$ by \cref{lem:overlap-sobolev-bounds} applied to the unit vector $v/\norm{v}$. Combining this bound with the triangle inequality and $\kappa_N\ge\nicefrac{2}{3}$, we obtain
\begin{align}
\label{eq:critical-recovery-null-sobolev}
    \norm{H_{\mathrm{null}}}_{\cG^4}
    \lesssim\sqrt N\,R+\norm{W}_{\cG^4},
    \qquad
    \norm{H_0^\perp}_{\cG^4}
    \lesssim\norm{H_0}_{\cG^4}+\sqrt N\norm{\zeta}.
\end{align}

We now show that $R\lesssim\sqrt N$ and $\norm{W}_{\cG^4}\lesssim_kN$ with high probability. Since $\sigma_N^2=\kappa_N\sqrt N\alpha_N$, $\kappa_N\ge\nicefrac{2}{3}$, $\lambda\sigma_N\le1$ and $\sigma_N\le1$, the triangle inequality yields
\begin{align*}
    R
    =\norm{U}
    \le N\lambda\alpha_N+\norm{U-N\lambda\alpha_N\plant}
    =\frac{\sqrt N\lambda\sigma_N^2}{\kappa_N}+\norm{U-N\lambda\alpha_N\plant}
    \lesssim\sqrt N+\norm{U-N\lambda\alpha_N\plant}.
\end{align*}
Since $U-N\lambda\alpha_N\plant\sim\cN(0,\sigma_N^2I)$ with $\sigma_N\le1$, we obtain the following via Gaussian concentration~\citep[cf.][Theorem~3.1.1]{vershynin2018high}
\begin{align}
\label{eq:critical-recovery-R-tail}
    \Prb\bigl(R>C\sqrt N\bigr)
    \le\exp(-cN).
\end{align}

We now bound $\norm{W}_{\cG^4}$. Since $\zeta\sim\cN(0,\sigma_N^2I)$ with $\sigma_N\le1$, \cref{lem:kspin-sobolev-norm-concentration} with $n=4$ and $r=N$, together with Gaussian concentration, implies that $\norm{H_0}_{\cG^4}\lesssim_kN$ and $\norm{\zeta}\lesssim\sqrt N$ with probability at least $1-\exp(-c_kN)$. Since $W\disteq H_0^\perp$, we infer the following from the second bound in \cref{eq:critical-recovery-null-sobolev}
\begin{align}
\label{eq:critical-recovery-W-tail}
    \Prb\bigl(\norm{W}_{\cG^4}>C_kN\bigr)
    =\Prb\bigl(\norm{H_0^\perp}_{\cG^4}>C_kN\bigr)
    \le\exp(-c_kN).
\end{align}

On the event ${\{R\le C\sqrt N\}\cap\{\norm{W}_{\cG^4}\le C_kN\}}$, the first bound in \cref{eq:critical-recovery-null-sobolev} yields $\norm{H_{\mathrm{null}}}_{\cG^4}\lesssim_kN$. We now fix $C_0$ sufficiently large, depending only on $k$, so that $\norm{H_{\mathrm{null}}}_{\cG^4}\le C_0N$ on this event. Hence, by a union bound, we conclude the following from \cref{eq:critical-recovery-R-tail,eq:critical-recovery-W-tail}
\begin{align*}
    \Prb\bigl(\norm{H_{\mathrm{null}}}_{\cG^4}>C_0N\bigr)
    \le\Prb\bigl(R>C\sqrt N\bigr)+\Prb\bigl(\norm{W}_{\cG^4}>C_kN\bigr)
    \le\exp(-cN),
\end{align*}
\end{proof}

Our proof also makes use of the following Gronwall-type comparison inequality from Ben Arous, Gheissari, and Jagannath~\citep{arous2020algorithmic}.

\begin{lemma}[{\cite[Lemma~5.1]{arous2020algorithmic}}]
\label{lem:power-law-gronwall}
Let $a,c,T>0$, and let $f:[0,T]\to\R$ be a continuous function which satisfies the following:
\begin{align*}
    f(t)\ge a+c\int_0^tf(s)^{k-1}\,ds
    \qquad\forall\,t\in[0,T].
\end{align*}
Then, the following holds for every $t\in[0,T]$ with $(k-2)ca^{k-2}t<1$:
\begin{align*}
    f(t)\ge\bigl[a^{-(k-2)}-(k-2)ct\bigr]^{-\frac{1}{k-2}}.
\end{align*}
\end{lemma}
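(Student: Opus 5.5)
We will prove the power-law Gr\"onwall comparison by constructing the explicit solution of the associated integral equation and comparing $f$ against it. Set
\begin{align*}
    g(t)\doteq\bigl[a^{-(k-2)}-(k-2)ct\bigr]^{-\frac{1}{k-2}},
\end{align*}
which is defined, positive and increasing on $[0,t_*)$, where $t_*\doteq\bigl((k-2)ca^{k-2}\bigr)^{-1}$. A direct differentiation gives $g'=cg^{k-1}$ with $g(0)=a$, so $g(t)=a+c\int_0^tg(s)^{k-1}\,ds$. The goal is to show $f\ge g$ on $[0,T]\cap[0,t_*)$.

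\textbf{Step 1: positivity.} Set $F(t)\doteq a+c\int_0^tf(s)^{k-1}\,ds$, so that $f\ge F$. We first check that $f>0$. If $k$ is even, then $k-1$ is odd and the sign of $f^{k-1}$ is not automatic, so we argue by continuity. Since $f(0)\ge a>0$, let $s_0$ be the first zero of $f$, if there is one. On $[0,s_0]$ we have $f\ge0$, hence $f(s_0)\ge F(s_0)\ge a>0$, which is a contradiction. Thus $f>0$ on all of $[0,T]$, so $F$ is nondecreasing and $F\ge a$. For odd $k$ the integrand is nonnegative anyway.

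\textbf{Step 2: differential inequality for $F$.} The function $F$ is $C^1$ with $F'=cf^{k-1}\ge cF^{k-1}$, using $f\ge F>0$ and monotonicity of $x\mapsto x^{k-1}$ on $(0,\infty)$. Define $\psi(t)\doteq F(t)^{-(k-2)}$. Then
\begin{align*}
    \psi'=-(k-2)F^{-(k-1)}F'\le-(k-2)c.
\end{align*}
Integrating from $0$ to $t$ gives $\psi(t)\le a^{-(k-2)}-(k-2)ct$. For $t$ with $(k-2)ca^{k-2}t<1$, the right-hand side is positive. Raising both sides to the power $-1/(k-2)$, which reverses the inequality, yields $F(t)\ge g(t)$, and therefore $f(t)\ge F(t)\ge g(t)$.

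There is no substantial obstacle; the only care needed is the positivity of $f$ in Step 1, which justifies both $f^{k-1}\ge F^{k-1}$ and the division by powers of $F$. Alternatively, the lemma can be cited directly from \citep[Lemma~5.1]{arous2020algorithmic}.
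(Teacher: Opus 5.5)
Your proof is correct. The paper gives no proof of its own; it simply cites \citep[Lemma~5.1]{arous2020algorithmic}. Your argument is the standard Bihari--LaSalle comparison: pass to $F(t)=a+c\int_0^tf(s)^{k-1}\,ds$, use $F'=cf^{k-1}\ge cF^{k-1}$, and observe that $F^{-(k-2)}$ decreases at rate at least $(k-2)c$. This makes the paper self-contained at this point.

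Your Step~1 covers a case the paper never needs. In its only application (the proof of \cref{thm:critical-recovery-warm-start}), $k$ is odd and $f=m_t\ge 31a_0/32>0$ on $[0,\tau]$, so positivity is automatic there.

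Two small remarks. First, the comparison function $g$ and the identity $g'=cg^{k-1}$ serve only as motivation and play no role in Step~2. Second, since $\psi=F^{-(k-2)}>0$ on all of $[0,T]$, your bound $\psi(t)\le a^{-(k-2)}-(k-2)ct$ shows that the hypothesis $(k-2)ca^{k-2}t<1$ holds automatically for every $t\in[0,T]$. In other words, a continuous $f$ on $[0,T]$ satisfying the integral inequality cannot outlive the blow-up time of the comparison ODE.
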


Equipped with \cref{lem:null-hamiltonian-regularity,lem:power-law-gronwall}, we now establish \cref{thm:critical-recovery-warm-start} by applying the adaptive contiguity estimate up to a carefully chosen stopping time, thereby proving that the null drift is unlikely to overtake the signal drift before the dynamics achieves critical recovery.

\subsubsection{Proof of \texorpdfstring{\cref{thm:critical-recovery-warm-start}}{Theorem~\ref{thm:critical-recovery-warm-start}}}
\label{subsubsec:proof-critical-recovery-warm-start}

\begin{proof}
Throughout this proof, we work under the planted law. We assume $N$ is sufficiently large and that $C_{\beta,k}$ is a sufficiently large constant, depending only on $\beta$ and $k$. Since $e$ may depend on $H$, we write $e=e(H,\omega)$, where $e$ is a measurable function and $\omega$ is auxiliary randomness with a fixed distribution, independent of $H$ and of the driving Brownian motion $(B_t)_{t\ge0}$ of the second phase. For instance, in \cref{alg:averaged-annealed-langevin,alg:annealed-langevin-ergodic-second-moments}, $\omega$ encapsulates the initialization $\tilX_0$ and the driving Brownian motion $(\widetilde B_t)_{0\le t\le T_{\mathrm{av}}}$ of the first phase.

Consider the trajectory $(X_t(e))_{t\ge0}$ of spherical Langevin dynamics for $H$ initialized at $e$. Applying the gauge transformation in \cref{eq:gauge-transformation} to this trajectory, we define
\begin{align}
\label{eq:critical-recovery-gauge-trajectory}
    Y_t\doteq \Phi X_t(e),
    \qquad
    m_t\doteq m_\phi(Y_t).
\end{align}
Since $e= \Phi^\top \Phi e$, \cref{eq:gauge-transformation} implies that $Y_t=Y_t(\Phi e)$, and hence, by \cref{lem:tpca-hamiltonian-under-gauge}, $(Y_t)_{t\ge0}$ is spherical Langevin dynamics for $H_\phi$ initialized at $Y_0=\Phi e$. Since \cref{lem:tpca-hamiltonian-under-gauge} defines $m_\phi(y)=m(\Phi^\top y)$, we also have $m_t=m(X_t(e))$, i.e., $m_t$ is the correlation along the actual trajectory of the algorithm. Moreover, since $H(x)=H_\phi(\Phi x)$, where $H_\phi$ is determined by $(\Phi,R,W)$ as per \cref{lem:tpca-hamiltonian-under-gauge}, the gauge-transformed initial state $\Phi e$ is a measurable function of $(\Phi,R,W,\omega)$, and is therefore a valid initialization rule in the sense of \cref{subsec:reverse-planting}. Now, let $\delta\doteq\lambda\sigma_NN^{-\vartheta}$, and consider the event
\begin{align*}
    \cA\doteq\bigl\{\norm{Y_0-\sqrt Ne_1}\le\delta\sqrt N\bigr\}.
\end{align*}
Recall that $u_H=\E_\pi[\gradsp H]$ denotes the average spherical gradient, and ${\widehat u_H=\tfrac{\sqrt N\,u_H}{\norm{u_H}}}$ denotes its normalization. Since $\Phi \in O(N,\R)$ with $\Phi\widehat u_H=\sqrt Ne_1$, we have $\cA=\{\norm{e-\widehat u_H}\le\delta\sqrt N\}$, which corresponds to the warm start event in \cref{thm:critical-recovery-warm-start}.

We begin by fixing some constants. Since $\sigma_N^2\asymp_kN^{-\tfrac{k-1}{2}}$ and $\tfrac{k-2}{4}\le\gamma\le\tfrac{k-2}{4}+\tfrac{1}{4k}$, we have
\begin{align*}
    N^{-\nicefrac{1}{4}}\lesssim\lambda\sigma_N\lesssim N^{-\tfrac14+\tfrac{1}{4k}},
    \qquad
    \min\bigl\{1,\lambda N^{-\tfrac{k-1}{4}}\bigr\}\asymp_k\lambda\sigma_N,
    \qquad
    \min\bigl\{N,\lambda^2N^{-\tfrac{k-3}{2}}\bigr\}\asymp_kN\lambda^2\sigma_N^2\to\infty.
\end{align*}
Moreover, since $\Phi\widehat u_H=\sqrt Ne_1$, we have $m_\phi(\sqrt Ne_1)=m(\widehat u_H)$ under the planted law. Adjusting constants appropriately, we infer from \cref{cor:avg-spherical-grad-correlation} that there exists a constant $c_0>0$, depending only on $k$, such that
\begin{align}
\label{eq:critical-recovery-warm-start}
    \Prb\bigl[m_\phi(\sqrt Ne_1)\ge2c_0\lambda\sigma_N\bigr]
    \ge1-\exp\bigl(-cN\lambda^2\sigma_N^2\bigr).
\end{align}
We now define
\begin{align*}
    a_0\doteq c_0\lambda\sigma_N,
    \qquad
    \eta\doteq\frac{3\beta k}{16},
    \qquad
    T_\lambda\doteq\frac{C_T}{\lambda a_0^{k-2}},
    \qquad
    C_T\doteq\frac{2^{k-1}}{(k-2)\eta}.
\end{align*}
Since $C_{\beta,k}$ is sufficiently large and
\begin{align*}
    \lambda a_0^{k-2}
    =c_0^{k-2}\lambda^{k-1}\sigma_N^{k-2}
    \asymp_kC_{\beta,k}^{k-1}N^{(k-1)\left(\gamma-\tfrac{k-2}{4}\right)},
\end{align*}
we may assume that $\lambda a_0^{k-2}\ge C_T$, and hence $T_\lambda\le1$.

We now compute the signal drift. Since $m_\phi$ is linear, \cref{eq:spherical-derivatives} implies $\gradsp m_\phi(y)=N^{-1/2}Q_y\phi$, and $\norm{Q_y\phi}^2=1-m_\phi(y)^2$. Since $H_\phi=H_{\mathrm{null}}+H_{\mathrm{sig}}$, where ${H_{\mathrm{sig}}=N\lambda m_\phi^k-\tfrac{N\lambda\sigma_N^2}{\kappa_N^2}m_\phi}$ by \cref{lem:tpca-hamiltonian-under-gauge}, we obtain the following from \cref{eq:langevin-generator,eq:signal-spherical-gradient} for every $y\in\SN$:
\begin{align}
\label{eq:critical-recovery-signal-drift}
\begin{aligned}
    (\cL_{\mathrm{pl}}-\cL_{\mathrm{null}})m_\phi(y)
    &=\beta\dotp{\gradsp H_{\mathrm{sig}}(y)}{\gradsp m_\phi(y)}
    =\beta\lambda\left(km_\phi(y)^{k-1}-\frac{\sigma_N^2}{\kappa_N^2}\right)\bigl(1-m_\phi(y)^2\bigr),\\
    \norm{\gradsp H_{\mathrm{sig}}(y)}^2
    &=N\lambda^2\left(km_\phi(y)^{k-1}-\frac{\sigma_N^2}{\kappa_N^2}\right)^2\bigl(1-m_\phi(y)^2\bigr).
\end{aligned}
\end{align}
Since $\lambda\sigma_N\gtrsim N^{-\nicefrac{1}{4}}$ and $\sigma_N^2\asymp_kN^{-\tfrac{k-1}{2}}$, we have ${\sigma_N^2/\kappa_N^2\le\tfrac{k}{2}(a_0/2)^{k-1}}$. Moreover, $Cr_N\le\nicefrac{1}{2}$, since $r_N\to0$. Hence, whenever $a_0/2\le m_\phi(y)\le Cr_N$, we have ${\sigma_N^2/\kappa_N^2\le\tfrac{k}{2}m_\phi(y)^{k-1}}$ and $1-m_\phi(y)^2\ge\nicefrac{3}{4}$, and we infer from \cref{eq:critical-recovery-signal-drift} that
\begin{align}
\label{eq:signal-gradient-bounds}
    (\cL_{\mathrm{pl}}-\cL_{\mathrm{null}})m_\phi(y)
    \ge2\eta\lambda m_\phi(y)^{k-1},
    \qquad
    \norm{\gradsp H_{\mathrm{sig}}(y)}^2
    \le k^2N\lambda^2m_\phi(y)^{2k-2}.
\end{align}

We now define the Dynkin martingale of $m_\phi$. By \cref{def:planted-law}, conditional on the environment, $(Y_t)_{t\ge0}$ is spherical Langevin dynamics for $H_\phi$ at inverse temperature $\beta$. Hence, by \cref{lem:langevin-dynkin}, the following process is a continuous martingale conditional on the environment:
\begin{align}
\label{eq:critical-recovery-dynkin}
    M_t\doteq m_t-m_0-\int_0^t(\cL_{\mathrm{pl}}m_\phi)(Y_s)\,ds.
\end{align}
We now define the following regularity events:
\begin{align*}
    \cI\doteq\{m_0\ge a_0\},
    \qquad
    \cE_{\mathrm{reg}}\doteq\bigl\{\norm{H_{\mathrm{null}}}_{\cG^4}\le C_0N\bigr\},
    \qquad
    \calM_{T_\lambda}\doteq\left\{\sup_{0\le t\le T_\lambda}\abs{M_t}\le\frac{a_0}{32}\right\},
\end{align*}
where $C_0>0$ is the constant in \cref{lem:null-hamiltonian-regularity}. We shall show that each of these events holds with high probability on $\cA$.

We begin with the event $\cI$. Since $\phi \in \bS^{N-1}$,  $m_\phi$ is $N^{-1/2}$-Lipschitz on $\SN$. Moreover, $\delta\le a_0$ for all sufficiently large $N$, since $\delta=\lambda\sigma_NN^{-\vartheta}$ and $a_0=c_0\lambda\sigma_N$. Hence, the following holds on ${\cA\cap\{m_\phi(\sqrt Ne_1)\ge2a_0\}}$:
\begin{align*}
    m_0
    =m_\phi(Y_0)
    \ge m_\phi(\sqrt Ne_1)-\frac{1}{\sqrt N}\norm{Y_0-\sqrt Ne_1}
    \ge2a_0-\delta
    \ge a_0.
\end{align*}
Consequently, $\cA\cap\cI^c\subseteq\{m_\phi(\sqrt Ne_1)<2a_0\}$. Since $a_0=c_0\lambda\sigma_N$, we infer from \cref{eq:critical-recovery-warm-start} that
\begin{align}
\label{eq:critical-recovery-initial-correlation}
    \Prb(\cA\cap\cI^c)\le\exp\bigl(-cN\lambda^2\sigma_N^2\bigr).
\end{align}

We now consider the event $\cE_{\mathrm{reg}}$. Since $\lambda\sigma_N\le1$, \cref{lem:null-hamiltonian-regularity} implies that
\begin{align}
\label{eq:critical-recovery-regularity}
    \Prb(\cE_{\mathrm{reg}}^c)\le\exp(-cN).
\end{align}

Finally, we consider the event $\calM_{T_\lambda}$. Since $\gradsp m_\phi(y)=N^{-1/2}Q_y\phi$ and $\norm{Q_y\phi}\le1$, the observable $m_\phi$ satisfies the hypothesis of \cref{lem:dynkin-martingale-concentration} with $A=N^{-1/2}$. Hence, applying \cref{lem:dynkin-martingale-concentration} to the Dynkin martingale in \cref{eq:critical-recovery-dynkin} conditionally on the environment, with $T=T_\lambda$ and $r=a_0/32$, we obtain
\begin{align}
\label{eq:critical-recovery-martingale}
    \Prb(\calM_{T_\lambda}^c)
    \le2\exp\left(-\frac{N(a_0/32)^2}{2T_\lambda}\right)
    \le2\exp\bigl(-cN\lambda^2\sigma_N^2\bigr),
\end{align}
where the last step uses $T_\lambda\le1$ and $a_0=c_0\lambda\sigma_N$.

We now define the stopping time up to which we apply the adaptive contiguity estimate. To this end, let $(\cF_t)_{t\ge0}$ denote the joint filtration of the environment $(\Phi,R,W)$ and of the trajectory $(Y_t)_{t\ge0}$, as in \cref{thm:adaptive-contiguity}. Since $m_t$ and $(\cL_{\mathrm{null}}m_\phi)(Y_t)$ are continuous and $(\cF_t)$-adapted, the following are $(\cF_t)$-adapted stopping times:
\begin{align*}
    T_*^Y
    &\doteq\inf\bigl\{t\ge0:m_t\ge Cr_N\bigr\},\\
    \tau^{\downarrow}
    &\doteq\inf\Bigl\{t\ge0:m_t\le\frac{a_0}{2}\Bigr\},\\
    \tau^{\mathrm{bad}}
    &\doteq\inf\bigl\{t\ge0:\abs{(\cL_{\mathrm{null}}m_\phi)(Y_t)}\ge\eta\lambda m_t^{k-1}\bigr\},\\
    \tau
    &\doteq T_\lambda\wedge T_*^Y\wedge\tau^{\downarrow}\wedge\tau^{\mathrm{bad}}.
\end{align*}

For every $t<\tau$, we have $a_0/2<m_t<Cr_N$. Hence, \cref{eq:signal-gradient-bounds} implies the following:
\begin{align}
\label{eq:critical-recovery-signal-gradient}
    \norm{\gradsp H_{\mathrm{sig}}(Y_t)}^2
    \le k^2N\lambda^2m_t^{2k-2}
    \qquad\forall\,t<\tau.
\end{align}
Moreover, \cref{eq:signal-gradient-bounds} and $\tau\le\tau^{\mathrm{bad}}$ imply that
\begin{align}
\label{eq:critical-recovery-planted-drift}
\begin{aligned}
    (\cL_{\mathrm{pl}}m_\phi)(Y_t)
    &=(\cL_{\mathrm{null}}m_\phi)(Y_t)+(\cL_{\mathrm{pl}}-\cL_{\mathrm{null}})m_\phi(Y_t)\\
    &\ge2\eta\lambda m_t^{k-1}-\abs{(\cL_{\mathrm{null}}m_\phi)(Y_t)}
    \ge\eta\lambda m_t^{k-1}
    >0
    \qquad\forall\,t<\tau.
\end{aligned}
\end{align}

We now control the correlation up to time $\tau$. Define $\calM_\tau\doteq\{\sup_{0\le t\le\tau}\abs{M_t}\le a_0/32\}$. By \cref{eq:critical-recovery-dynkin,eq:critical-recovery-planted-drift}, the following holds on $\cI\cap\calM_\tau$ for any $0\le s\le t\le\tau$:
\begin{align*}
    m_t-m_s
    =\int_s^t(\cL_{\mathrm{pl}}m_\phi)(Y_u)\,du+M_t-M_s
    \ge M_t-M_s
    \ge-\frac{a_0}{16}.
\end{align*}
Since $M_0=0$ and $m_0\ge a_0$, we also have
\begin{align}
\label{eq:critical-recovery-overlap-bounds}
    m_t\ge m_0+M_t\ge\frac{31a_0}{32}
    \quad\forall\,t\le\tau,
    \qquad
    m_s\le m_t+\frac{a_0}{16}\le2m_t
    \quad\forall\,s\le t\le\tau.
\end{align}
Similarly, we infer from \cref{eq:critical-recovery-dynkin,eq:critical-recovery-planted-drift} that
\begin{align}
\label{eq:critical-recovery-integral-inequality}
    m_t
    =m_0+\int_0^t(\cL_{\mathrm{pl}}m_\phi)(Y_s)\,ds+M_t
    \ge\frac{31a_0}{32}+\eta\lambda\int_0^tm_s^{k-1}\,ds
    \qquad\forall\,t\le\tau.
\end{align}

We now show that the null drift is unlikely to dominate the signal drift before the dynamics achieves critical recovery. To this end, define the failure event
\begin{align*}
    \cE_{\mathrm{fail}}\doteq\cA\cap\cI\cap\cE_{\mathrm{reg}}\cap\calM_\tau\cap\bigl\{\tau=\tau^{\mathrm{bad}}<T_*^Y\bigr\}.
\end{align*}
Since $\cA$ and $\cI$ are determined by the environment and the initial state $Y_0$, $\cE_{\mathrm{reg}}$ depends only on the environment, and the remaining events are determined by the environment and the trajectory up to time $\tau$, we have $\cE_{\mathrm{fail}}\in\cF_\tau$. To bound $\Prb(\cE_{\mathrm{fail}})$ by \cref{thm:adaptive-contiguity}, we first control the accumulated signal drift on $\cE_{\mathrm{fail}}$ pathwise, i.e., irrespective of the underlying law. By \cref{eq:critical-recovery-signal-gradient,eq:critical-recovery-overlap-bounds}, the following holds on $\cE_{\mathrm{fail}}$:
\begin{align}
\label{eq:critical-recovery-energy-moment}
    \int_0^\tau\norm{\gradsp H_{\mathrm{sig}}(Y_t)}^2\,dt
    \le k^2N\lambda^2\int_0^\tau m_t^{2k-2}\,dt
    \le k^2N\lambda^2(2m_\tau)^{k-1}\int_0^\tau m_t^{k-1}\,dt.
\end{align}
Since $m_0+M_\tau\ge31a_0/32$ by \cref{eq:critical-recovery-overlap-bounds}, we infer the following from \cref{eq:critical-recovery-dynkin,eq:critical-recovery-planted-drift}:
\begin{align}
\label{eq:critical-recovery-drift-integral}
    \eta\lambda\int_0^\tau m_t^{k-1}\,dt
    \le\int_0^\tau(\cL_{\mathrm{pl}}m_\phi)(Y_t)\,dt
    =m_\tau-m_0-M_\tau
    \le m_\tau-\frac{a_0}{2}.
\end{align}
Moreover, since $m_\tau\ge a_0/2$, we also have
\begin{align}
\label{eq:critical-recovery-power-gap}
    m_\tau^{2k-2}-\Bigl(\frac{a_0}{2}\Bigr)^{2k-2}
    \ge m_\tau^{2k-3}\Bigl(m_\tau-\frac{a_0}{2}\Bigr)
    \ge\Bigl(\frac{a_0}{2}\Bigr)^{k-2}m_\tau^{k-1}\Bigl(m_\tau-\frac{a_0}{2}\Bigr).
\end{align}
Combining \cref{eq:critical-recovery-energy-moment,eq:critical-recovery-drift-integral,eq:critical-recovery-power-gap}, we infer that the following holds on $\cE_{\mathrm{fail}}$:
\begin{align}
\label{eq:critical-recovery-energy-combined}
\begin{aligned}
    \int_0^\tau\norm{\gradsp H_{\mathrm{sig}}(Y_t)}^2\,dt
    &\le\frac{2^{k-1}k^2N\lambda}{\eta}m_\tau^{k-1}\Bigl(m_\tau-\frac{a_0}{2}\Bigr)\\
    &\le\frac{2^{k-1}k^2N}{\eta^3\lambda(a_0/2)^{k-2}}\Bigl[\eta^2\lambda^2m_\tau^{2k-2}-\eta^2\lambda^2\Bigl(\frac{a_0}{2}\Bigr)^{2k-2}\Bigr].
\end{aligned}
\end{align}
Since $\tau=\tau^{\mathrm{bad}}$ on $\cE_{\mathrm{fail}}$, we have
\begin{align}
\label{eq:critical-recovery-bad-time}
    \eta^2\lambda^2m_\tau^{2k-2}
    \le\bigl|(\cL_{\mathrm{null}}m_\phi)(Y_\tau)\bigr|^2
    \le\sup_{t\in[0,T_\lambda]}\bigl|(\cL_{\mathrm{null}}m_\phi)(Y_t)\bigr|^2.
\end{align}
Let $c_*$ and $C_*$ denote the constants in \cref{lem:null-trajectory-uniform-concentration} for our choice of $C_0$, which depend only on $\beta$ and $k$. Since $C_{\beta,k}$ is sufficiently large, we may assume that $\eta^3\lambda(a_0/2)^{k-2}\ge2^{k-1}\beta^2k^2/c_*$. Hence, writing $c_3\doteq2^{k-1}\beta^2k^2/\eta$ and combining \cref{eq:critical-recovery-energy-combined,eq:critical-recovery-bad-time}, we conclude that the following holds on $\cE_{\mathrm{fail}}$:
\begin{align}
\label{eq:critical-recovery-energy-bound}
    \beta^2\int_0^\tau\norm{\gradsp H_{\mathrm{sig}}(Y_t)}^2\,dt
    \le c_*N\sup_{t\in[0,T_\lambda]}\bigl|(\cL_{\mathrm{null}}m_\phi)(Y_t)\bigr|^2-c_3N\lambda\Bigl(\frac{a_0}{2}\Bigr)^k.
\end{align}

We now bound $\Prb(\cE_{\mathrm{fail}})$ by transferring the uniform concentration of $\cL_{\mathrm{null}}m_\phi$ from the null law to the planted law. Since $\cE_{\mathrm{fail}}\in\cF_\tau$ and $\tau\le T_\lambda$, applying \cref{thm:adaptive-contiguity} with $\Phi e$ in place of $e$ and $T=T_\lambda$ to $E=\cE_{\mathrm{fail}}$, we infer from \cref{eq:critical-recovery-energy-bound} that
\begin{align*}
    \Prb(\cE_{\mathrm{fail}})
    &\le\exp\left(\frac{N\lambda^2\sigma_N^2}{2\kappa_N^2}\right)\E_\Q\left[\exp\left(\beta^2\int_0^\tau\norm{\gradsp H_{\mathrm{sig}}(Y_t)}^2\,dt\right)\ind{\cE_{\mathrm{fail}}}\right]^{\nicefrac{1}{2}}\\
    &\le\exp\left(\frac{N\lambda^2\sigma_N^2}{2\kappa_N^2}-\frac{c_3}{2}N\lambda\Bigl(\frac{a_0}{2}\Bigr)^k\right)\E_\Q\left[\exp\left(c_*N\sup_{t\in[0,T_\lambda]}\bigl|(\cL_{\mathrm{null}}m_\phi)(Y_t)\bigr|^2\right)\ind{\cE_{\mathrm{reg}}\cap\cA}\right]^{\nicefrac{1}{2}}\\
    &\le C_*^{\nicefrac{1}{2}}\exp\left(\frac{N\lambda^2\sigma_N^2}{2\kappa_N^2}+\frac{C_*}{2}N\delta^2-\frac{c_3}{2}N\lambda\Bigl(\frac{a_0}{2}\Bigr)^k\right),
\end{align*}
where the second inequality uses $\cE_{\mathrm{fail}}\subseteq\cE_{\mathrm{reg}}\cap\cA$, and the last step uses \cref{lem:null-trajectory-uniform-concentration} with $\Phi e$ in place of $e$ and $T=T_\lambda\le1$. Moreover, since $C_{\beta,k}$ is sufficiently large, we may assume that $c_3c_0^2\lambda(a_0/2)^{k-2}\ge32$, and hence
\begin{align*}
    \frac{c_3}{2}N\lambda\Bigl(\frac{a_0}{2}\Bigr)^k
    =\frac{c_3c_0^2}{8}\lambda\Bigl(\frac{a_0}{2}\Bigr)^{k-2}N\lambda^2\sigma_N^2
    \ge4N\lambda^2\sigma_N^2.
\end{align*}
Since $\kappa_N\ge\nicefrac{2}{3}$, $N\delta^2=N^{-2\vartheta}N\lambda^2\sigma_N^2$ and $N\lambda^2\sigma_N^2\gtrsim\sqrt N$, we conclude that
\begin{align}
\label{eq:critical-recovery-failure-probability}
    \Prb(\cE_{\mathrm{fail}})
    \le C_*^{\nicefrac{1}{2}}\exp\bigl(-2N\lambda^2\sigma_N^2\bigr)
    \le\exp\bigl(-N\lambda^2\sigma_N^2\bigr).
\end{align}

We are now ready to show that the dynamics achieves critical recovery before time $T_\lambda$. Define the good event
\begin{align*}
    \cE_{\mathrm{good}}\doteq\cA\cap\cI\cap\cE_{\mathrm{reg}}\cap\calM_{T_\lambda}\cap\bigl(\{\tau<\tau^{\mathrm{bad}}\}\cup\{\tau=T_*^Y\}\bigr).
\end{align*}
Observe that
\begin{align*}
    \bigl\{\tau=\tau^{\mathrm{bad}}<T_*^Y\bigr\}^c
    =\{\tau<\tau^{\mathrm{bad}}\}\cup\{\tau=T_*^Y\}.
\end{align*}
Moreover, since $\tau\le T_\lambda$, we have $\calM_{T_\lambda}\subseteq\calM_\tau$. Hence, $\cE_{\mathrm{good}}=\cA\cap\cI\cap\cE_{\mathrm{reg}}\cap\calM_{T_\lambda}\cap\cE_{\mathrm{fail}}^c$, and by a union bound, we infer from \cref{eq:critical-recovery-initial-correlation,eq:critical-recovery-regularity,eq:critical-recovery-martingale,eq:critical-recovery-failure-probability} that
\begin{align*}
    \Prb(\cA\setminus\cE_{\mathrm{good}})
    \le\Prb(\cA\cap\cI^c)+\Prb(\cE_{\mathrm{reg}}^c)+\Prb(\calM_{T_\lambda}^c)+\Prb(\cE_{\mathrm{fail}})
    \le5\exp\bigl(-cN\lambda^2\sigma_N^2\bigr).
\end{align*}
Henceforth, we work on $\cE_{\mathrm{good}}$. Since $\calM_{T_\lambda}\subseteq\calM_\tau$, \cref{eq:critical-recovery-overlap-bounds,eq:critical-recovery-integral-inequality} hold. We first turn \cref{eq:critical-recovery-integral-inequality} into an explicit lower bound on the correlation. To this end, define the blow-up time
\begin{align*}
    t_\infty
    \doteq\frac{(a_0/2)^{-(k-2)}}{(k-2)\eta\lambda}
    =\frac{2^{k-2}}{(k-2)\eta\lambda a_0^{k-2}}.
\end{align*}
By our choice of $C_T$, we have $t_\infty=T_\lambda/2<T_\lambda$. Since $31a_0/32\ge a_0/2$ and $t\mapsto m_t$ is continuous, applying \cref{lem:power-law-gronwall} to $f(t)=m_t$ on $[0,\tau]$ with $a=a_0/2$ and $c=\eta\lambda$, we infer from \cref{eq:critical-recovery-integral-inequality} that
\begin{align}
\label{eq:critical-recovery-power-law}
    m_t
    \ge\Bigl[\Bigl(\frac{a_0}{2}\Bigr)^{-(k-2)}-(k-2)\eta\lambda t\Bigr]^{-\frac{1}{k-2}}
    \qquad\forall\,t\in[0,\tau\wedge t_\infty).
\end{align}

We now examine which of the stopping times attains the minimum in the definition of $\tau$. By \cref{eq:critical-recovery-overlap-bounds}, we have $m_\tau\ge31a_0/32>a_0/2$, and hence $\tau\ne\tau^{\downarrow}$. Therefore, one of the following two cases holds.

\textbf{Case 1: $\tau=T_*^Y$.} In this case, we have $T_*^Y=\tau\le T_\lambda$, and we are done.

\textbf{Case 2: $\tau<T_*^Y$.} By definition of $\cE_{\mathrm{good}}$, we must have $\tau<\tau^{\mathrm{bad}}$, and hence $\tau=T_\lambda$. Moreover, since $T_*^Y>0$, we have $a_0\le m_0<Cr_N$. Define
\begin{align*}
    t_*\doteq\frac{(a_0/2)^{-(k-2)}-(2Cr_N)^{-(k-2)}}{(k-2)\eta\lambda}.
\end{align*}
Since $a_0/2<2Cr_N$, we have $0<t_*<t_\infty<T_\lambda=\tau$. Hence, \cref{eq:critical-recovery-power-law} holds at $t=t_*$, which implies that
\begin{align*}
    m_{t_*}
    \ge\Bigl[\Bigl(\frac{a_0}{2}\Bigr)^{-(k-2)}-(k-2)\eta\lambda t_*\Bigr]^{-\frac{1}{k-2}}
    =2Cr_N.
\end{align*}
Consequently, $T_*^Y\le t_*<\tau$, which contradicts $\tau<T_*^Y$. Hence, this case cannot occur.

Combining the two cases, we have $T_*^Y\le T_\lambda$ on $\cE_{\mathrm{good}}$. Finally, we invert the gauge transformation in \cref{eq:critical-recovery-gauge-trajectory}. Since $m_t=m(X_t(e))$, we have $T_*^Y=T_*(e)$, and we recall that $\cA=\{\norm{e-\widehat u_H}\le\delta\sqrt N\}$ with $\delta=\lambda\sigma_NN^{-\vartheta}$. Hence, after adjusting constants, we conclude that
\begin{align*}
    \Prb\left[\frac{1}{\sqrt N}\norm{e-\widehat u_H}\le\lambda\sigma_NN^{-\vartheta}
    \ \ \text{and}\ \
    T_*(e)>T_\lambda\right]
    \le\Prb(\cA\setminus\cE_{\mathrm{good}})
    \le\exp\bigl(-cN\lambda^2\sigma_N^2\bigr).
\end{align*}
The proof is concluded by noting that $T_\lambda=C_T/(\lambda a_0^{k-2})\lesssim N^{-(k-1)(\gamma-\frac{k-2}{4})}$, with implicit constants depending only on $\beta$ and $k$.
\end{proof}

\subsection{Proof of \texorpdfstring{\cref{thm:persistent-strong-recovery-algorithmic-threshold}}{Theorem~\ref{thm:persistent-strong-recovery-algorithmic-threshold}}}
\label{subsec:proof-persistent-strong-recovery-algorithmic-threshold}

\begin{proof}
Throughout the proof, we assume that $N$ is sufficiently large, and we set $T_{\mathrm{str}}\doteq C_1+2$, where $C_1$ is the constant in \cref{thm:critical-recovery-warm-start}, which depends only on $\beta$ and $k$. Since $k$ is odd, we have ${\lfloor k/2\rfloor=\tfrac{k-1}{2}}$. Since $\beta_0^{-2}\lambda^{-2}N^{k-1+2\vartheta}\asymp_kN^{2k-2+4\vartheta}$, our choices of $\beta_0$ and $T_{\mathrm{av}}$ satisfy the assumptions of \cref{thm:tpca-averaged-differential-recovery}. By construction, $(X_t)_{t\ge0}$ is spherical Langevin dynamics for $H$ at inverse temperature $\beta$, and on the event of \cref{thm:tpca-averaged-differential-recovery}, we have $\Xbar_{T_{\mathrm{av}}}\ne0$ and $X_0=\widehat X_{T_{\mathrm{av}}}$. If $\lambda\ge C_{\beta,k}N^{\tfrac{k-2}{4}+\tfrac{1}{4k}}$, the claim follows from \cref{cor:averaged-langevin-persistent-strong-recovery}, since $T_{\mathrm{str}}\ge2$ and $\tfrac{k+1}{2k}>\nicefrac{1}{2}$.

Henceforth, we assume that $\lambda=C_{\beta,k}N^\gamma$ with $\tfrac{k-2}{4}\le\gamma<\tfrac{k-2}{4}+\tfrac{1}{4k}$, and let $r_N\doteq\lambda^{-\tfrac{1}{k-1}}$, as in \cref{subsec:critical-recovery-warm-start}. We first show that the second phase attains critical correlation in constant time. To begin, note that \cref{thm:tpca-averaged-differential-recovery} implies that the following holds with probability at least $1-\exp(-c_kN)$:
\begin{align*}
    \frac{1}{\sqrt N}\norm{X_0-\widehat u_H}
    \le N^{-\vartheta}\min\{1,\lambda\sigma_N\}
    \le\lambda\sigma_NN^{-\vartheta}.
\end{align*}
Let $\bar r_N\doteq N^{-\tfrac{\gamma}{k-1}}$, which is a constant multiple of $r_N$. As \cref{thm:critical-recovery-warm-start} allows the initial state to depend on $H$, we may apply it to $e=X_0$, with the constant $C$ therein chosen such that $Cr_N=\bar r_N$. Since $\gamma\ge\tfrac{k-2}{4}$ and $N\lambda^2\sigma_N^2\gtrsim_k\sqrt N$, we obtain the following with probability at least $1-\exp(-c_{\beta,k}\sqrt N)$:
\begin{align}
\label{eq:algorithmic-threshold-critical-time}
    T_*(X_0)=\inf\bigl\{t\ge0:m(X_t)\ge\bar r_N\bigr\}\le C_1.
\end{align}
Moreover, since $\lambda\bar r_N^{k-1}=C_{\beta,k}$ and $\tfrac{\gamma}{k-1}<\tfrac14$, we may apply \cref{thm:strong-recovery-after-critical-correlation} with $s=\tfrac{\gamma}{k-1}$, for which ${\tau_{\bar r_N}=T_*(X_0)}$. Since ${T_{\mathrm{str}}\ge T_*(X_0)+2}$ on the event in \cref{eq:algorithmic-threshold-critical-time} and $N\bar r_N^2\ge\sqrt N$, we conclude the following via a union bound
\begin{align*}
    \Prb\left[\inf_{t\in[T_{\mathrm{str}},T]}m(X_t)<1-N^{-\eta}\right]
    \le\exp\bigl(-c_{\beta,k}\sqrt N\bigr)+\exp\bigl(-c_kN^{\min\{1/2,\,1-2\eta\}}\bigr),
\end{align*}
which proves the claim after adjusting constants appropriately.
\end{proof}

\section{Strong Recovery at the Algorithmic Threshold for Every Tensor Order}
\label{sec:strong-recovery-every-order}

In this section, we analyze \cref{alg:annealed-langevin-ergodic-second-moments} and show that it achieves persistent strong recovery above the algorithmic threshold for any $k\ge3$, irrespective of its parity. Specifically, we prove the following theorem, which is a formal version of \cref{thm:langevin-every-k-informal}. The complete proof is presented in \cref{subsec:proof-persistent-strong-recovery-every-order}.

\begin{theorem}[Persistent Strong Recovery above the Algorithmic Threshold for Every Tensor Order]
\label{thm:persistent-strong-recovery-every-order}
Let $k\ge3$, let $\beta>0$, and assume $\lambda\ge C_{\beta,k}N^{\tfrac{k-2}{4}}$, where $C_{\beta,k}$ is a sufficiently large constant. Fix ${\vartheta\in(0,1)}$ and ${\eta\in(0,\min\{\nicefrac{1}{2},\tfrac{k-2}{4}\})}$, and let $(X_t)_{t\ge0}$ denote the trajectory of the second phase of \cref{alg:annealed-langevin-ergodic-second-moments}, initialized at $\tilX_0\sim\pi$ with inverse temperatures $\beta_0$ and $\beta$ for the two respective phases, where ${\beta_0\asymp_k\lambda^{-1}N^{-\nicefrac{k}{2}-\vartheta}}$ and ${T_{\mathrm{av}}\asymp_kN^{2k+4\vartheta}}$. Then, there exists a constant $T_{\mathrm{str}}>0$, depending only on $\beta$ and $k$, such that the following holds for all sufficiently large $N$ and every deterministic $T$ satisfying ${T_{\mathrm{str}}\le T\le\poly(N)}$:
\begin{align*}
    \Prb\left[\inf_{t\in[T_{\mathrm{str}},T]}\abs{m(X_t)}\ge1-N^{-\eta}\right]
    \ge1-\exp\left(-c_{\beta,k}N^{\min\{1/2,\,1-2\eta\}}\right).
\end{align*}
Moreover, if $k$ is odd, the same bound holds with $m(X_t)$ in place of $\abs{m(X_t)}$.
\end{theorem}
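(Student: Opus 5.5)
We treat the augmented ergodic second moment $\bar M_{T_{\mathrm{av}}}=\bigl(\begin{smallmatrix}\Sigmabar & \Xbar\\ \Xbar^\top & 1\end{smallmatrix}\bigr)$ as a perturbation of $I_{N+1}+2\beta_0\cA_H$, where $\cA_H=\bigl(\begin{smallmatrix}A_H & \kappa_N^{-1}u_H\\ \kappa_N^{-1}u_H^\top & 0\end{smallmatrix}\bigr)$. We then split by parity and reduce each case to results already proved.

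\textbf{Step 1 (perturbation bound).} We apply \cref{thm:langevin-recovers-average-gradient,thm:langevin-recovers-average-hessian} with $V=H$, $B\asymp_k\lambda$ and $\delta=e^{-N}$, on the event $\norm{H}_{\cG^2}\le C_k\lambda N$ from \cref{rem:tensor-pca-sobolev-bound}. This gives, with probability $1-e^{-c_kN}$,
\begin{align*}
\norm{\bar M_{T_{\mathrm{av}}}-I_{N+1}-2\beta_0\cA_H}_{\mathrm{op}}\lesssim\sqrt N\,\beta_0\eps_N+\beta_0\eps_N,
\end{align*}
with $\eps_N$ as in \cref{eq:tpca-averaging-normalized-errors}. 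Under the stated choices $\beta_0\asymp\lambda^{-1}N^{-k/2-\vartheta}$ and $T_{\mathrm{av}}\asymp N^{2k+4\vartheta}$, these are exactly the hypotheses of \cref{thm:tpca-averaged-differential-recovery} with $\lfloor k/2\rfloor$ replaced by $k/2$. Hence $\sqrt N\eps_N\le c_k\lambda N^{-(k-1)/2-\vartheta}$ for either parity. Note that for odd $k$ this is an extra factor $N^{-1/2}$ smaller than what Section~5 needed, which gives some room.

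\textbf{Step 2, even $k$.} Here $u_H=0$, so $\cA_H=\mathrm{diag}(A_H,0)$. We write $A_H=A_{\mathrm{sig}}+W_N$ using \cref{thm:avg-spherical-hess-snr}. The matrix $I+2\beta_0\,\mathrm{diag}(A_{\mathrm{sig}},0)$ has top eigenvector $(\plant,0)$ with gap $\gtrsim\beta_0\lambda\gamma_N$; the extra coordinate has eigenvalue $1$, which sits below the spike at $1+2\beta_0\lambda\gamma_N(1-1/N)$ and above the bulk. Davis--Kahan, exactly as in the proof of \cref{thm:tpca-averaged-differential-recovery}, gives $\min_s\norm{(v,a)-s(\plant,0)}\lesssim_k N^{-\vartheta}+N^{(k-2)/4}/\lambda$. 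This is a small constant once $C_{\beta,k}$ is large. Hence $\norm v\ge1/2$ and $|m(\sqrt N v/\norm v)|\ge1/2$.

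Since $H(-x)=H(x)$, the dynamics from $-X_0$ is the negation of the dynamics from $X_0$. Without loss of generality the sign is positive, and we apply \cref{thm:strong-recovery-after-critical-correlation}. This is valid by \cref{rem:strong-recovery-parity-free}, with $s$ small, $C=1$, $\bar r_N=N^{-s}$ and $\tau=0$, noting $\lambda\bar r_N^{k-1}\ge C_{\beta,k}$. This yields the claim with $|m|$ and $T_{\mathrm{str}}=2$.

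\textbf{Step 3, odd $k$.} Here $A_H=0$, so $\cA_H$ has rank two with eigenvalues $\pm\kappa_N^{-1}\norm{u_H}$. Its top eigenvector is $\tfrac1{\sqrt2}(u_H/\norm{u_H},1)$, and the gap is $\gtrsim\kappa_N^{-1}\norm{u_H}$. By Davis--Kahan with the bound on $\norm{u_H}$ from \cref{cor:avg-spherical-grad-correlation}, the unit eigenvector $(v,a)$ satisfies $\norm{(v,a)-\tfrac1{\sqrt2}(\widehat u_H/\sqrt N,1)}\lesssim\eps'/\norm{u_H}$, where $\eps'$ is the Step~1 error divided by $2\beta_0$. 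The orientation $a\ge0$ fixes the sign, once this error is small enough that $a$ stays near $1/\sqrt2$. Then \cref{lem:normalization-lipschitz} gives
\begin{align*}
\tfrac1{\sqrt N}\bigl\|\sqrt N v/\norm v-\widehat u_H\bigr\|\lesssim\eps'/\norm{u_H}\lesssim c_kN^{-\vartheta}\min\{1,\lambda\sigma_N\},
\end{align*}
using \cref{eq:tpca-gradient-relative-error}. This is precisely the warm-start conclusion of \cref{thm:tpca-averaged-differential-recovery}. The remainder of the argument, \cref{cor:averaged-langevin-persistent-strong-recovery} above $N^{(k-2)/4+1/(4k)}$ and \cref{thm:critical-recovery-warm-start,thm:strong-recovery-after-critical-correlation} below it, only uses that the initial state is a (possibly $H$-dependent) point within $\lambda\sigma_NN^{-\vartheta}$ of $\widehat u_H$. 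It therefore transfers verbatim and gives the statement with $m(X_t)$ itself.

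\textbf{Main obstacle.} The delicate step is the odd-$k$ eigenvector perturbation. We must check that the error $\sqrt N\eps_N$ in the $\Sigmabar$ block, which now enters the same matrix as the $\Xbar$ block, is still $\ll\norm{u_H}$ relative to the required accuracy $N^{-\vartheta}\min\{1,\lambda\sigma_N\}$. This is the reason for the larger $T_{\mathrm{av}}$ and smaller $\beta_0$. We also need to handle the orientation convention, so that $a\ge0$ selects $+\widehat u_H$ rather than $-\widehat u_H$.
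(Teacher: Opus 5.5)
Your proposal is correct and follows the paper's proof essentially step for step: the blockwise bound on $\bar M_{T_{\mathrm{av}}}-I_{N+1}-2\beta_0\cA_H$, Davis--Kahan with the orientation $a\ge0$ and \cref{lem:normalization-lipschitz} for odd $k$ followed by the reduction to the odd-$k$ warm-start argument, and a sign flip for even $k$ (you flip the trajectory together with its driving Brownian motion, whereas \cref{thm:strong-recovery-after-weak-recovery} flips the planted vector; the two are equivalent). One arithmetic correction to Step~1: the hypotheses give $\eps_N\le c_k\lambda N^{-\frac{k-1}{2}-\vartheta}$ and hence only $\sqrt N\eps_N\le c_k\lambda N^{-\frac{k-2}{2}-\vartheta}$, so the extra factor $N^{-1/2}$ you noted applies to the $\Xbar$ block alone, and for odd $k$ the $\Sigmabar$-block error lands exactly at the level that \cref{eq:tpca-gradient-relative-error} requires rather than leaving room---which still suffices, and is in fact the bound your even-$k$ estimate $N^{-\vartheta}+N^{\frac{k-2}{4}}/\lambda$ already uses.
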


\subsection{Properties of the Augmented Ergodic Second Moment}
\label{subsec:recovery-ergodic-second-moments}

In this section, we analyze the properties of the augmented ergodic second moment of the first phase of \cref{alg:annealed-langevin-ergodic-second-moments}, defined by $\bar M_T\doteq\frac1T\int_0^TF(\tilX_t)^{\otimes2}\,dt$, where $F(x)\doteq(x,1)\in\R^{N+1}$ and $(\tilX_t)_{t\ge0}$ denotes the trajectory of this phase. Recall from \cref{sec:averaged-spherical-derivatives} that the average spherical gradient and the average spherical Hessian of $H$ are $u_H=\E_\pi[\gradsp H]$ and $A_H=\E_\pi[\gradsp^2H]$, respectively. Expanding $F(\tilX_t)^{\otimes2}$ into blocks, we obtain
\begin{align*}
    \bar M_T
    =\begin{pmatrix}
        \Sigmabar_T & \Xbar_T\\
        \Xbar_T^\top & 1
    \end{pmatrix},
\end{align*}
where $\Xbar_T\doteq\frac1T\int_0^T\tilX_t\,dt$ and $\Sigmabar_T\doteq\frac1T\int_0^T\tilX_t\tilX_t^\top\,dt$ denote the ergodic average and the ergodic second moment of the first phase, respectively. Consequently, by \cref{thm:langevin-recovers-average-gradient,thm:langevin-recovers-average-hessian}, at high temperature we expect $\bar M_{T_{\mathrm{av}}}$ to concentrate around $I_{N+1}+2\beta_0\cA_H$, where
\begin{align*}
    \cA_H\doteq\begin{pmatrix}
        A_H & \kappa_N^{-1}u_H\\
        \kappa_N^{-1}u_H^\top & 0
    \end{pmatrix}.
\end{align*}
Here, the factor $\kappa_N^{-1}$ in the off-diagonal blocks of $\cA_H$ stems from the normalization $\tfrac{2\beta_0}{\kappa_N}$ in \cref{thm:langevin-recovers-average-gradient}, as opposed to $2\beta_0$ in \cref{thm:langevin-recovers-average-hessian}.

As discussed in \cref{subsec:sketch-any-order}, the parity of $k$ determines which block of $\cA_H$ carries the signal. To observe this, let us first consider the case when $k$ is odd. In this case, \cref{thm:avg-spherical-grad-snr,thm:avg-spherical-hess-snr} imply that $A_H=0$ and $u_H$ is a noisy observation of $\xopt$. Hence, the top eigenvector of $\cA_H$ is $\tfrac{1}{\sqrt2}(u_H/\norm{u_H},1)$, and rescaling its first $N$ coordinates to lie on $\SN$ yields the normalized average spherical gradient ${\widehat u_H=\tfrac{\sqrt N\,u_H}{\norm{u_H}}}$. Consequently, we expect the initial state of the second phase of \cref{alg:annealed-langevin-ergodic-second-moments} to lie close to $\widehat u_H$, which parallels the setting in \cref{alg:averaged-annealed-langevin}.

When $k$ is even, \cref{thm:avg-spherical-grad-snr} implies that $u_H=0$, so $\cA_H$ is block diagonal, and its top eigenvector is obtained by appending a zero to a top eigenvector of $A_H$. By \cref{thm:avg-spherical-hess-snr}, $A_H$ has a rank-one spike in the planted direction $\plant$, and hence, by \cref{rem:average-spherical-hessian-snr}, its top eigenvector achieves constant absolute correlation with $\plant$. Consequently, we expect the initial state of the second phase to weakly recover the planted signal up to a global sign.

We note that \cref{alg:annealed-langevin-ergodic-second-moments} does not need to know the parity of $k$, since the same top eigenvector computation extracts the signal from whichever block of $\cA_H$ carries it. We formalize these heuristics in the following theorem, which can be interpreted as an analogue of \cref{thm:tpca-averaged-differential-recovery} for \cref{alg:annealed-langevin-ergodic-second-moments}.

\begin{theorem}[Recovery by the Augmented Ergodic Second Moment for Tensor PCA]
\label{thm:recovery-ergodic-second-moments}
Let $\lambda\ge C_kN^{\tfrac{k-2}{4}}$, where $C_k$ is a sufficiently large constant, and let $(\tilX_t)_{t\ge0}$ denote spherical Langevin dynamics for the tensor PCA Hamiltonian $H$ at inverse temperature $\beta_0>0$, initialized at $\tilX_0\sim\pi$. Define
\begin{align*}
    \bar M_T\doteq\frac{1}{T}\int_0^TF(\tilX_t)^{\otimes2}\,dt,
    \qquad
    F(x)\doteq(x,1)\in\R^{N+1}.
\end{align*}
Define $u_H=\E_\pi[\gradsp H]$, and let $(v,a)\in\R^N\times\R$ be a top unit eigenvector of $\bar M_{T_{\mathrm{av}}}$ with $a\ge0$. For every fixed $\vartheta\in(0,1)$ and all sufficiently large $N$, the following hold whenever $0<\beta_0\lesssim_k\lambda^{-1}N^{-\nicefrac{k}{2}-\vartheta}$ and $T_{\mathrm{av}}\gtrsim_k\beta_0^{-2}\lambda^{-2}N^{k+2\vartheta}$, with sufficiently small and large implicit constants, respectively:
\begin{enumerate}
    \item If $k$ is odd, then, with probability at least $1-\exp(-c_kN)$, we have $v\ne0$ and
    \begin{align*}
        \frac{1}{N}\norm{\frac{\sqrt N\,v}{\norm{v}}-\widehat u_H}^2
        \le N^{-2\vartheta}\min\{1,\lambda^2\sigma_N^2\},
    \end{align*}
    where ${\widehat u_H=\tfrac{\sqrt N\,u_H}{\norm{u_H}}}$ denotes the normalized average spherical gradient and $\sigma_N\asymp_kN^{-\tfrac{k-1}{4}}$ is as defined in \cref{thm:avg-spherical-grad-snr}.

    \item If $k$ is even, then, with probability at least $1-\exp(-c_kN)$, we have $v\ne0$ and
    \begin{align*}
        \left|m\left(\frac{\sqrt N\,v}{\norm{v}}\right)\right|
        \ge 1-C_k\left(\frac{N^{\tfrac{k-2}{2}}}{\lambda^2}+N^{-2\vartheta}\right).
    \end{align*}
\end{enumerate}
\end{theorem}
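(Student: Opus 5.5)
We will argue as in the proof of \cref{thm:tpca-averaged-differential-recovery}. We reduce everything to a perturbation bound for $\bar M_{T_{\mathrm{av}}}$ around $I_{N+1}+2\beta_0\cA_H$, and then apply Davis--Kahan to the rank-structured matrix $\cA_H$.

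\textbf{Step 1: operator-norm perturbation bound.} On $\cE=\{\norm{H}_{\cG^2}\le C_k\lambda N\}$ we apply \cref{thm:langevin-recovers-average-gradient,thm:langevin-recovers-average-hessian} with $V=H$, $B\asymp_k\lambda$ and $\delta=e^{-N}$. Since the lower-right entry of $\bar M_T$ is exactly $1$, we get
\begin{align*}
    \norm{\frac{\bar M_{T_{\mathrm{av}}}-I_{N+1}}{2\beta_0}-\cA_H}_{\mathrm{op}}
    \le\norm{\frac{\Sigmabar_{T_{\mathrm{av}}}-I}{2\beta_0}-A_H}_{\mathrm{op}}+2\norm{\frac{\kappa_N}{2\beta_0}\Xbar_{T_{\mathrm{av}}}-u_H}\kappa_N^{-1}
    \lesssim\sqrt N\,\eps_N,
\end{align*}
with $\eps_N$ as in \cref{eq:tpca-averaging-normalized-errors}. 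Under the stated choices of $\beta_0$ and $T_{\mathrm{av}}$ (now using exponent $k/2$ rather than $\lfloor k/2\rfloor$ for both parities), we obtain $\sqrt N\eps_N\le c_k\lambda N^{1-k/2-\vartheta}$, where $c_k$ is arbitrarily small.

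\textbf{Step 2: odd $k$.} Here $A_H=0$, so $\cA_H$ has eigenvalues $\pm\kappa_N^{-1}\norm{u_H}$ with eigenvectors $\tfrac1{\sqrt2}(\pm u_H/\norm{u_H},1)$, together with $0$. The eigengap is therefore $\kappa_N^{-1}\norm{u_H}$. By \cref{cor:avg-spherical-grad-correlation}, $\norm{u_H}\gtrsim N\lambda\alpha_N+\sigma_N\sqrt N$, and this lower bound is $\gtrsim\lambda N^{1-k/2}$. We then compute the ratio $\sqrt N\eps_N/\norm{u_H}$, following \cref{eq:tpca-gradient-relative-error}. Using $\sigma_N\sqrt N\asymp N^{(3-k)/4}$ and the estimate $\lambda N^{1-k/2}\le N\lambda\alpha_N\cdot O(1)$, the ratio is at most $c_kN^{-\vartheta}\min\{1,\lambda\sigma_N\}$.

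Davis--Kahan then gives a top eigenvector $(v,a)$ of $\bar M_{T_{\mathrm{av}}}$ within this distance of $s\tfrac1{\sqrt2}(u_H/\norm{u_H},1)$ for some sign $s$. The orientation $a\ge0$, together with $a\approx s/\sqrt2$, forces $s=+1$. Since $\norm{v}\approx1/\sqrt2$ is bounded away from zero, $v\ne0$. Finally, \cref{lem:normalization-lipschitz} applied to $v$ and $u_H/(\sqrt2\norm{u_H})$ yields the claimed bound after squaring.

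\textbf{Step 3: even $k$.} Here $u_H=0$, so $\cA_H=\mathrm{diag}(A_H,0)$. We write $A_H=A_{\mathrm{sig}}+W_N$ using \cref{thm:avg-spherical-hess-snr}. The matrix $\mathrm{diag}(A_{\mathrm{sig}},0)$ has top eigenvector $(\plant,0)$ with eigenvalue $\lambda\gamma_N(1-1/N)$. Its next eigenvalue is $\max\{0,-\lambda\gamma_N/N\}=0$, so the gap is $\asymp\lambda\gamma_N$.

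The perturbation consists of $\norm{W_N}_{\mathrm{op}}\lesssim N^{-(k-2)/4}$ plus the Step 1 error. Dividing by the gap, as in \cref{eq:tpca-hessian-perturbation-bound}, gives a relative size of $\lesssim N^{-\vartheta}+N^{(k-2)/4}/\lambda$. This is small once $C_k$ is large. Davis--Kahan then yields $\min_s\norm{(v,a)-s(\plant,0)}\lesssim N^{-\vartheta}+N^{(k-2)/4}/\lambda$. Hence $\norm{v}\ge1-o(1)$, so $v\ne0$. Moreover, $|\dotp{v/\norm v}{\plant}|$ differs from $1$ by at most the square of this quantity, because normalizing $v$ (with $|a|$ small) perturbs it only to second order. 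Concretely, we write $v=\norm v\,w$ and use $1-|\dotp{w}{\plant}|=\tfrac12\min_s\norm{w-s\plant}^2$, with \cref{lem:normalization-lipschitz}.

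\textbf{Main obstacle.} The main obstacle is the bookkeeping in Step 1. The gradient block's error $\eps_N$ is now measured against the Hessian-scale parameters ($\beta_0\sim N^{-k/2}$, which is smaller than needed in the odd case). We need to check that the $\kappa_N^{-1}$ factor and the off-diagonal block coupling do not degrade the rate: the operator norm of a symmetric block matrix is bounded by the sum of the diagonal-block norm and twice the off-diagonal vector norm. We also need to verify that the relative error for odd $k$ still carries the factor $\min\{1,\lambda\sigma_N\}$.
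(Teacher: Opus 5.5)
Your proposal is correct and follows essentially the same route as the paper. It bounds $\norm{(2\beta_0)^{-1}(\bar M_{T_{\mathrm{av}}}-I_{N+1})-\cA_H}_{\mathrm{op}}$ blockwise by $\lesssim\sqrt N\eps_N$, then applies Davis--Kahan: to $\cA_H$ for odd $k$ (gap $\kappa_N^{-1}\norm{u_H}$, sign fixed by $a\ge0$), and to $\mathrm{diag}(A_{\mathrm{sig}},0)$ for even $k$, finishing with \cref{lem:normalization-lipschitz}. The only slips are cosmetic. Your factor $2$ on the off-diagonal block is harmless, though that block's operator norm is exactly $\kappa_N^{-1}\norm{\tfrac{\kappa_N}{2\beta_0}\Xbar_{T_{\mathrm{av}}}-u_H}$. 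In the even case, $\norm{v}$ is $1-O(C_k^{-1}+N^{-\vartheta})$ rather than $1-o(1)$, which still suffices since $C_k$ is large.
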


\begin{proof}
Throughout the proof, we assume that $N$ is sufficiently large. We first show that the rescaled augmented ergodic second moment $\widehat\cA\doteq(2\beta_0)^{-1}(\bar M_{T_{\mathrm{av}}}-I_{N+1})$ is close to $\cA_H$ in operator norm. Since $\beta_0>0$, $(v,a)$ is also a top unit eigenvector of $\widehat\cA$. In the notation of \cref{thm:tpca-averaged-differential-recovery}, we have
\begin{align*}
    \bar M_{T_{\mathrm{av}}}
    =\begin{pmatrix}
        \Sigmabar_{T_{\mathrm{av}}} & \Xbar_{T_{\mathrm{av}}}\\
        \Xbar_{T_{\mathrm{av}}}^\top & 1
    \end{pmatrix}.
\end{align*}
Since the bottom-right entries of $\bar M_{T_{\mathrm{av}}}-I_{N+1}$ and $\cA_H$ both vanish, we obtain
\begin{align*}
    \widehat\cA-\cA_H
    =\begin{pmatrix}
        \frac{\Sigmabar_{T_{\mathrm{av}}}-I}{2\beta_0}-A_H & 0\\
        0 & 0
    \end{pmatrix}
    +\frac{1}{\kappa_N}\begin{pmatrix}
        0 & \frac{\kappa_N}{2\beta_0}\Xbar_{T_{\mathrm{av}}}-u_H\\
        \bigl(\frac{\kappa_N}{2\beta_0}\Xbar_{T_{\mathrm{av}}}-u_H\bigr)^\top & 0
    \end{pmatrix}.
\end{align*}
The first matrix on the right-hand side is block diagonal, and hence its operator norm equals that of its top-left block. The second matrix has zero diagonal blocks, and its operator norm equals the Euclidean norm of its top-right block. Consequently, the triangle inequality implies
\begin{align}
\label{eq:second-moment-blockwise-error}
    \norm{\widehat\cA-\cA_H}_{\mathrm{op}}
    \le\norm{\frac{\Sigmabar_{T_{\mathrm{av}}}-I}{2\beta_0}-A_H}_{\mathrm{op}}
    +\frac{1}{\kappa_N}\norm{\frac{\kappa_N}{2\beta_0}\Xbar_{T_{\mathrm{av}}}-u_H}.
\end{align}

By \cref{eq:tpca-averaging-normalized-errors} in the proof of \cref{thm:tpca-averaged-differential-recovery}, the following event holds with probability at least $1-\exp(-c_kN)$:
\begin{align*}
    \cE&\doteq\left\{\norm{\frac{\kappa_N}{2\beta_0}\Xbar_{T_{\mathrm{av}}}-u_H}\le\eps_N\right\}
    \cap\left\{\norm{\frac{\Sigmabar_{T_{\mathrm{av}}}-I}{2\beta_0}-A_H}_{\mathrm{op}}\le\sqrt N\,\eps_N\right\},\\
    \eps_N&\doteq C_k\left[(\beta_0^{-1}+\lambda)\sqrt{\frac{N}{T_{\mathrm{av}}}}+\beta_0\lambda^2\sqrt N\right].
\end{align*}
Since our assumptions on $\beta_0$ and $T_{\mathrm{av}}$ are those of \cref{thm:tpca-averaged-differential-recovery} with $\nicefrac{k}{2}$ in place of $\lfloor k/2\rfloor$, the computation in \cref{eq:tpca-averaging-error-scale} also shows that $\sqrt N\eps_N\le c_k\lambda N^{-\tfrac{k-2}{2}-\vartheta}$, where $c_k>0$ can be made arbitrarily small. Henceforth, we work on $\cE$. By \cref{eq:second-moment-blockwise-error} and the definition of $\cE$, and since $\kappa_N^{-1}\le\tfrac32\le\sqrt N$, we obtain
\begin{align}
\label{eq:second-moment-augmented-error}
    \norm{\widehat\cA-\cA_H}_{\mathrm{op}}
    \le\Bigl(\sqrt N+\frac{1}{\kappa_N}\Bigr)\eps_N
    \le2c_k\lambda N^{-\tfrac{k-2}{2}-\vartheta}.
\end{align}

We first consider odd $k$. By \cref{thm:avg-spherical-grad-snr,thm:avg-spherical-hess-snr}, $A_H=0$ and $u_H=N\lambda\alpha_N\plant+\zeta$, where $\zeta\sim\cN(0,\sigma_N^2I)$, $\alpha_N\asymp_kN^{-\nicefrac{k}{2}}$, and $\sigma_N^2=\kappa_N\sqrt N\alpha_N\asymp_kN^{-\tfrac{k-1}{2}}$. Moreover, we infer the following lower bound on $\norm{u_H}$ from \cref{cor:avg-spherical-grad-correlation}:
\begin{align}
\label{eq:second-moment-gradient-norm}
    \Prb\left[\norm{u_H}\ge c\bigl(N\lambda\alpha_N+\sigma_N\sqrt N\bigr)\right]
    \ge1-C\exp(-cN).
\end{align}
Henceforth, we also work on the event in \cref{eq:second-moment-gradient-norm}, on which $u_H\ne0$.

We now show that $v$ is close to a positive multiple of $u_H$. Since $A_H=0$, we have $\cA_H=\kappa_N^{-1}\bigl(\begin{smallmatrix}0&u_H\\u_H^\top&0\end{smallmatrix}\bigr)$, which has top unit eigenvector $w\doteq\tfrac{1}{\sqrt2}(u_H/\norm{u_H},1)$ and eigengap $\kappa_N^{-1}\norm{u_H}$, since its remaining eigenvalues are $0$ and $-\kappa_N^{-1}\norm{u_H}$. Applying the Davis--Kahan theorem~\citep[Theorem~4.5.5]{vershynin2018high} to $\widehat\cA$ and $\cA_H$, we obtain
\begin{align}
\label{eq:second-moment-davis-kahan}
    \min_{s\in\{\pm1\}}\norm{(v,a)-sw}
    \lesssim\frac{\kappa_N\norm{\widehat\cA-\cA_H}_{\mathrm{op}}}{\norm{u_H}}
    \lesssim\frac{c_k\lambda N^{-\tfrac{k-2}{2}-\vartheta}}{N\lambda\alpha_N+\sigma_N\sqrt N}
    \lesssim_kc_kN^{-\vartheta}\min\left\{1,\frac{\lambda\sigma_N}{\kappa_N}\right\},
\end{align}
where the implicit constants do not depend on $c_k$. Here, the second inequality uses \cref{eq:second-moment-augmented-error,eq:second-moment-gradient-norm} and $\kappa_N\le1$, while the last one follows from $\alpha_N\asymp_kN^{-\nicefrac{k}{2}}$ and $\sigma_N^2=\kappa_N\sqrt N\alpha_N$, as defined in \cref{thm:avg-spherical-grad-snr}.

Since the Davis--Kahan theorem determines $(v,a)$ only up to a global sign, we now show that the minimum in \cref{eq:second-moment-davis-kahan} is attained at $s=1$. Indeed, since $a\ge0$ and the last coordinate of $w$ equals $1/\sqrt2$, the last coordinate of $(v,a)+w$ is at least $1/\sqrt2$, and hence
\begin{align*}
    \norm{(v,a)+w}
    \ge a+\frac{1}{\sqrt2}
    \ge\frac{1}{\sqrt2}.
\end{align*}
In contrast, the right-hand side of \cref{eq:second-moment-davis-kahan} is smaller than $1/\sqrt2$ for all sufficiently large $N$. Consequently, the minimum in \cref{eq:second-moment-davis-kahan} is attained at $s=1$. Since the first $N$ coordinates of $w$ are $\tfrac{u_H}{\sqrt2\norm{u_H}}$, we conclude from \cref{eq:second-moment-davis-kahan} that
\begin{align}
\label{eq:second-moment-eigenvector-error}
    \norm{v-\frac{u_H}{\sqrt2\norm{u_H}}}
    \le\norm{(v,a)-w}
    \lesssim_kc_kN^{-\vartheta}\min\left\{1,\frac{\lambda\sigma_N}{\kappa_N}\right\}.
\end{align}

We now compare the normalizations of $v$ and $u_H$. Since ${\norm{\tfrac{u_H}{\sqrt2\norm{u_H}}}=1/\sqrt2}$ and the right-hand side of \cref{eq:second-moment-eigenvector-error} is smaller than $1/\sqrt2$ for all sufficiently large $N$, we have $v\ne0$. Moreover, since $\tfrac{u_H}{\sqrt2\norm{u_H}}$ is a positive multiple of $u_H$, its normalization is ${\tfrac{u_H}{\norm{u_H}}=\tfrac{\widehat u_H}{\sqrt N}}$. Hence, applying \cref{lem:normalization-lipschitz} to $v$ and $\tfrac{u_H}{\sqrt2\norm{u_H}}$, we infer the following from \cref{eq:second-moment-eigenvector-error}:
\begin{align*}
    \frac{1}{\sqrt N}\norm{\frac{\sqrt N\,v}{\norm{v}}-\widehat u_H}
    &=\norm{\frac{v}{\norm{v}}-\frac{u_H}{\norm{u_H}}}
    \le\frac{2}{\max\{\norm{v},1/\sqrt2\}}\norm{v-\frac{u_H}{\sqrt2\norm{u_H}}}\\
    &\le2\sqrt2\,\norm{v-\frac{u_H}{\sqrt2\norm{u_H}}}
    \lesssim_kc_kN^{-\vartheta}\min\left\{1,\frac{\lambda\sigma_N}{\kappa_N}\right\}.
\end{align*}
Since $\kappa_N\ge\tfrac23$ and $c_k$ can be made arbitrarily small, squaring and taking a union bound over $\cE$ and the event in \cref{eq:second-moment-gradient-norm} proves the claim for odd $k$ after adjusting constants appropriately.

We now consider even $k$. By \cref{thm:avg-spherical-grad-snr,thm:avg-spherical-hess-snr}, we have $u_H=0$ and
\begin{align*}
    A_H=A_{\mathrm{sig}}+W_N,
    \qquad
    A_{\mathrm{sig}}\doteq\lambda\gamma_N\left(\plant(\plant)^\top-\frac{I}{N}\right),
    \qquad
    \gamma_N\asymp_kN^{-\tfrac{k-2}{2}},
\end{align*}
where $\norm{W_N}_{\mathrm{op}}\le C_kN^{-\tfrac{k-2}{4}}$ with probability at least $1-\exp(-cN)$. Henceforth, we also work on this event. Since $\cA_H=\bigl(\begin{smallmatrix}A_{\mathrm{sig}}+W_N&0\\0&0\end{smallmatrix}\bigr)$ for even $k$, we infer the following from \cref{eq:second-moment-augmented-error}
\begin{align*}
    \norm{\widehat\cA-\begin{pmatrix}A_{\mathrm{sig}}&0\\0&0\end{pmatrix}}_{\mathrm{op}}
    \le\norm{\widehat\cA-\cA_H}_{\mathrm{op}}+\norm{W_N}_{\mathrm{op}}
    \lesssim_k\lambda N^{-\tfrac{k-2}{2}-\vartheta}+N^{-\tfrac{k-2}{4}}.
\end{align*}
The matrix $\bigl(\begin{smallmatrix}A_{\mathrm{sig}}&0\\0&0\end{smallmatrix}\bigr)$ has top unit eigenvector $(\plant,0)$ and eigengap $\lambda\gamma_N(1-\nicefrac{1}{N})$, since its remaining eigenvalues are $0$ and $-\lambda\gamma_N/N$. Applying the Davis--Kahan theorem to $\widehat\cA$ and this matrix, we obtain
\begin{align}
\label{eq:second-moment-even-davis-kahan}
    \min_{s\in\{\pm1\}}\norm{v-s\plant}
    \le\min_{s\in\{\pm1\}}\norm{(v,a)-s(\plant,0)}
    \lesssim_k\frac{\lambda N^{-\frac{k-2}{2}-\vartheta}+N^{-\frac{k-2}{4}}}{\lambda N^{-\frac{k-2}{2}}}
    =N^{-\vartheta}+\frac{N^{\tfrac{k-2}{4}}}{\lambda}.
\end{align}
In particular, since the leading constant $C_k$ in our assumption $\lambda\ge C_kN^{\tfrac{k-2}{4}}$ is sufficiently large, we have $\min_{s\in\{\pm1\}}\norm{v-s\plant}<1=\norm{\plant}$, and hence $v\ne0$. Since $m(\sqrt N\,v/\norm{v})=\dotp{\plant}{v/\norm{v}}$, applying \cref{lem:normalization-lipschitz} to $v$ and the unit vector $s\plant$ for each $s\in\{\pm1\}$, we infer the following from \cref{eq:second-moment-even-davis-kahan}:
\begin{align*}
    1-\left|m\left(\frac{\sqrt N\,v}{\norm{v}}\right)\right|
    &=\min_{s\in\{\pm1\}}\left(1-s\dotp{\plant}{\frac{v}{\norm{v}}}\right)
    =\frac12\min_{s\in\{\pm1\}}\norm{\frac{v}{\norm{v}}-s\plant}^2\\
    &\le\frac12\min_{s\in\{\pm1\}}\left(\frac{2\norm{v-s\plant}}{\max\{\norm{v},1\}}\right)^2
    \le2\min_{s\in\{\pm1\}}\norm{v-s\plant}^2
    \lesssim_k\frac{N^{\tfrac{k-2}{2}}}{\lambda^2}+N^{-2\vartheta}.
\end{align*}
Here, the second equality holds since ${\norm{u-s\plant}^2=2-2s\dotp{\plant}{u}}$ for every unit vector $u$.
A union bound proves the claim for even $k$.
\end{proof}

\begin{remark}
\label{rem:second-moment-parameter-cost}
Compared to \cref{thm:tpca-averaged-differential-recovery}, \cref{thm:recovery-ergodic-second-moments} requires, for odd $k$, an inverse temperature $\beta_0$ that is smaller by a factor of $\sqrt N$, and hence an averaging time that is longer by a factor of $N^2$. This is because, by \cref{eq:tpca-averaging-normalized-errors} in the proof of \cref{thm:tpca-averaged-differential-recovery}, the estimation error of the ergodic second moment is larger than that of the ergodic average by a factor of $\sqrt N$. Moreover, the top eigenvector of $\bar M_{T_{\mathrm{av}}}$ inherits the estimation error of the ergodic second moment, irrespective of the parity of $k$. For even $k$, \cref{thm:tpca-averaged-differential-recovery} already relies on the ergodic second moment, and both theorems impose the same requirements on $\beta_0$ and $T_{\mathrm{av}}$.
\end{remark}

\subsection{Boosting Weak Recovery to Persistent Strong Recovery}
\label{subsec:boosting-weak-to-strong-recovery}

The following theorem shows that, for even $k$, once spherical Langevin dynamics has weakly recovered the planted signal up to a global sign, running the process for an additional constant time suffices to achieve $1-\poly(\nicefrac{1}{N})$ correlation with high probability. Beyond that, the correlation stays above this level throughout any $\poly(N)$ time horizon.

The proof relies on the fact that, for even $k$, the tensor PCA Hamiltonian does not change when the planted vector $\xopt$ is replaced by $\pm \xopt$. Hence, at the time of weak recovery, we may flip the sign of the planted vector if necessary, so that the correlation is positive and bounded away from zero, and consequently, far exceeds the critical correlation $\lambda^{-\tfrac{1}{k-1}}$. In this regime, the signal drift dominates the disorder drift uniformly over $\SN$, and the proof of \cref{thm:strong-recovery-after-critical-correlation}, which does not rely on the parity of $k$, applies with minor modifications. Consequently, the guarantee holds for any initial state, even one that depends on the disorder, such as the initial state of the second phase of \cref{alg:annealed-langevin-ergodic-second-moments}.

\begin{theorem}[Persistent Strong Recovery after Weak Recovery]
\label{thm:strong-recovery-after-weak-recovery}
Let $k$ be even, let $(X_t)_{t\ge0}$ denote spherical Langevin dynamics for $H$ at inverse temperature $\beta>0$, and fix arbitrary constants $\rho\in(0,1)$ and $\eta\in(0,\nicefrac{1}{2})$. Assume $\lambda\ge C_{\beta,k}N^{\tfrac{k-2}{4}}$, where $C_{\beta,k}$ is a sufficiently large constant, and define the stopping time
\begin{align*}
    \tau_\rho\doteq\inf\bigl\{t\ge0:\abs{m(X_t)}\ge\rho\bigr\}.
\end{align*}
Then, the following holds for all sufficiently large $N$, every deterministic $T$ satisfying $2\le T\le\poly(N)$, and any initial state $X_0$, possibly depending on $H$,
\begin{align*}
    \Prb\left[\inf_{t\in[2,T]}\abs{m(X_{\tau_\rho+t})}\ge1-N^{-\eta}\right]
    \ge1-\exp(-c_kN)-2\exp\left(-cN^{1-2\eta}\right).
\end{align*}
\end{theorem}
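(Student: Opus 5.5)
The plan is to reduce to \cref{thm:strong-recovery-after-critical-correlation} via a sign symmetry, and then rerun its deterministic Duhamel argument with a constant threshold in place of $C\bar r_N$.

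\textbf{Sign reduction.} Since $\tau_\rho<\infty$ almost surely by the support theorem, the strong Markov property at $\tau_\rho$ lets us assume $\abs{m(X_0)}\ge\rho$, with $X_0$ allowed to depend on $H$. For even $k$, the tensor $(\plant)^{\otimes k}$ is invariant under $\plant\mapsto-\plant$. Hence $H$, and therefore the dynamics, is unchanged if we replace $\xopt$ by $s\xopt$ with $s\doteq\sgn(m(X_0))$. The sign $s$ is $\cF_0$-measurable and is fixed once and for all; it is not a time-dependent choice. Working with $m_s(x)\doteq s\,m(x)$, we have $m_s(X_0)\ge\rho$, and it suffices to show $\inf_{t\in[2,T]}m_s(X_t)\ge1-N^{-\eta}$, since $\abs{m}\ge m_s$.

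\textbf{Generator identity.} The computation \cref{eq:strong-recovery-overlap-drift} carries over verbatim to $m_s$:
\[
\cL_{H,\beta}m_s=\nu(1-m_s)-b,
\qquad
\nu\doteq\beta\lambda k\,m_s^{k-1}(1+m_s),
\qquad
b\doteq\tfrac{\kappa_N}{2}m_s-\beta\dotp{\gradsp H_0}{\gradsp m_s}.
\]
Because $k$ is even, $s^k=1$, so $m^{k-1}\gradsp m$ becomes $s^{k-2}m_s^{k-1}\gradsp m_s=m_s^{k-1}\gradsp m_s$. The sign enters only through $s\,\gradsp H_0$ in $b$, which has the same norm bound.

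\textbf{Rate bounds.} Work on $\cE=\{\max\norm{\gradsp H_0}\le C_1\sqrt N\}$ and on the martingale event $\calM$ with increment tolerance $\eps_N\doteq\tfrac1{32}\min\{\rho,N^{-\eta}\}$. The bound $b\le\tfrac12+\beta C_1$ holds as before.
\begin{itemize}
\item Whenever $m_s\ge\rho/2$, we have $\nu\ge\beta\lambda k(\rho/2)^{k-1}\gtrsim C_{\beta,k}N^{(k-2)/4}$. This is larger than $\max\{1,16(\tfrac12+\beta C_1)\}$ for large $N$, since $\rho$ is a fixed constant.
\item Whenever $m_s\ge\tfrac13$, we need $\nu\ge\nu_\star\ge\tfrac{64}{q_N}(\tfrac12+\beta C_1)$ with $q_N=N^{-\eta}$. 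The proof of \cref{thm:strong-recovery-after-critical-correlation} used $\eta<\tfrac{k-2}{4}$ here. For even $k\ge4$ we have $\tfrac{k-2}{4}\ge\tfrac12>\eta$, so $\nu_\star\gtrsim N^{(k-2)/4}\gg N^{\eta}$ and this step goes through.
\end{itemize}

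\textbf{Running the argument.} The integrating-factor/Duhamel representation \cref{eq:strong-recovery-duhamel}, the increment bound \cref{eq:strong-recovery-increment-bound}, and the error bound $\abs{\xi_t}\le7\eps_N$ all apply unchanged. The no-exit step now reads $1-m_s(X_t)\le1-\rho+7\eps_N<1-\rho/2$, so $m_s$ never drops to $\rho/2$ on $[0,T]$. The steps showing $m_s>\tfrac13$ on $[1,T]$ and then $1-m_s<q_N$ on $[2,T]$ are identical to the original proof.

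\textbf{Probability.} A union bound gives failure probability at most $\exp(-c_kN)+2\lceil T\rceil\exp(-cN\eps_N^2)$. Since $N\eps_N^2\gtrsim N^{1-2\eta}$, the factor $\lceil T\rceil\le\poly(N)$ is absorbed, yielding the stated bound.

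\textbf{Main obstacle.} The only delicate point is justifying the sign flip with a state-dependent initial condition. This is handled by noting that $H$ depends on $\xopt$ only through $(\xopt)^{\otimes k}$, so the event is a deterministic function of $(H,X_0,B)$ and replacing $\xopt$ by $s\xopt$ changes nothing. Everything else is a transcription of the earlier proof.
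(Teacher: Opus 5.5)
Your proposal is correct and follows the paper's proof. Like the paper, you apply the strong Markov property at $\tau_\rho$, replace $\xopt$ by $\sgn(m(X_0))\xopt$ (which leaves $H$ and $H_0$ unchanged for even $k$, and whose correlation has Dynkin martingale $\sgn(m(X_0))M_t^m$), and then rerun \cref{thm:strong-recovery-after-critical-correlation}. The only difference is that the paper avoids re-verifying the deterministic steps by taking $C=1$ and $\bar r_N=N^{-s}$ with $s=\min\{\eta,\tfrac{k-2}{4(k-1)}\}$, so that theorem applies verbatim with $\tau_{\bar r_N}=0$.

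If you keep the constant threshold $\rho/2$, note that your no-exit bound $1-m_s(X_t)\le1-\rho+7\eps_N$ silently uses $\tfrac1{16}\le1-\rho$, which fails for $\rho>\tfrac{15}{16}$. Either bound the convex combination by $\max\{1-\rho,\tfrac1{16}\}+7\eps_N$, which is still $<1-\tfrac{\rho}{2}$, or first lower $\rho$ to $\min\{\rho,\tfrac12\}$.
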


\begin{proof}
Throughout the proof, we assume that $N$ is sufficiently large and $C_{\beta,k}$ is a sufficiently large constant. As in the proof of \cref{thm:strong-recovery-after-critical-correlation}, $\tau_\rho<\infty$ holds almost surely by the support theorem~\citep{millet1994simple}. Hence, by the strong Markov property~\citep[Theorem~6.2.2]{stroock2007multidimensional} applied at $\tau_\rho$, it suffices to consider an initial state $X_0$ with $\abs{m(X_0)}\ge\rho$, possibly depending on $H$. For such an initial state, we define
\begin{align*}
    \widetilde x^\star\doteq\sgn\bigl(m(X_0)\bigr)\xopt,
    \qquad
    \widetilde m(x)\doteq\frac1N\dotp{\widetilde x^\star}{x}.
\end{align*}
Observe that $\widetilde m(X_0)=\abs{m(X_0)}\ge\rho$ and $\abs{m(x)}\ge\widetilde m(x)$ for every $x\in\SN$. Moreover, since $k$ is even, we have $H=N\lambda\widetilde m^k+H_0$. In other words, $H$ is also the tensor PCA Hamiltonian with planted vector $\widetilde x^\star\in\SN$ and disorder $H_0$, and $\widetilde m$ is the corresponding correlation.

We now repeat the steps of the proof of \cref{thm:strong-recovery-after-critical-correlation} with $\widetilde x^\star$ and $\widetilde m$ in place of $\xopt$ and $m$. To this end, we take $C=1$ and $\bar r_N\doteq N^{-s}$, where ${s\doteq\min\{\eta,\tfrac{k-2}{4(k-1)}\}}$. With these choices, the assumptions of \cref{thm:strong-recovery-after-critical-correlation}, together with the standing assumption of \cref{sec:strong-recovery-odd} with $\gamma=\tfrac{k-2}{4}$, read as follows:
\begin{align*}
    \eta<\min\left\{\frac12,\frac{k-2}{4}\right\},
    \qquad
    s<\frac12,
    \qquad
    \lambda\ge C_{\beta,k}N^{\tfrac{k-2}{4}},
    \qquad
    \lambda\bar r_N^{k-1}\ge C_{\beta,k}.
\end{align*}
Here, the first condition holds since $k\ge4$ is even and $\eta<\nicefrac{1}{2}$, the second holds since $s\le\tfrac{k-2}{4(k-1)}<\nicefrac{1}{2}$, and the third is our assumption on $\lambda$. Since $(k-1)s\le\tfrac{k-2}{4}$, the third condition also implies the fourth. Moreover, since $\bar r_N\to0$, we have $\widetilde m(X_0)\ge\rho\ge\bar r_N$, and hence the stopping time $\tau_{\bar r_N}$ in \cref{thm:strong-recovery-after-critical-correlation}, defined with $\widetilde m$ in place of $m$, equals zero.

As noted in \cref{rem:strong-recovery-parity-free}, the proof of \cref{thm:strong-recovery-after-critical-correlation} does not use the parity of $k$. In particular, since $H=N\lambda\widetilde m^k+H_0$, repeating the computation in \cref{eq:strong-recovery-overlap-drift} with $\widetilde m$ in place of $m$, we obtain
\begin{align*}
    \cL_{H,\beta}\widetilde m(x)
    &=\widetilde\nu(x)\bigl(1-\widetilde m(x)\bigr)-\widetilde b(x),\\
    \widetilde\nu(x)
    &\doteq\beta\lambda k\,\widetilde m(x)^{k-1}\bigl(1+\widetilde m(x)\bigr),
    \qquad
    \widetilde b(x)\doteq\frac{\kappa_N}{2}\widetilde m(x)-\beta\dotp{\gradsp H_0(x)}{\gradsp\widetilde m(x)}.
\end{align*}
Similarly, multiplying \cref{eq:strong-recovery-dynkin} by $\sgn(m(X_0))$, which does not depend on $t$, we infer that the Dynkin martingale associated with $\widetilde m$ is $\sgn(m(X_0))M_t^m$, i.e.,
\begin{align*}
    \sgn\bigl(m(X_0)\bigr)M_t^m
    =\widetilde m(X_t)-\widetilde m(X_0)-\int_0^t\cL_{H,\beta}\widetilde m(X_s)\,ds.
\end{align*}

As in the proof of \cref{thm:strong-recovery-after-critical-correlation}, we consider the events
\begin{align*}
    \cE\doteq\left\{\max_{x\in\SN}\norm{\gradsp H_0(x)}\le C_1\sqrt N\right\},
    \qquad
    \calM\doteq\bigcap_{j=0}^{\lceil T\rceil-1}\left\{\sup_{j\le t\le j+1}\abs{M_t^m-M_j^m}\le\frac{N^{-\eta}}{32}\right\},
\end{align*}
where the threshold $\tfrac{1}{32}N^{-\eta}$ in $\calM$ equals $\tfrac{1}{32}\min\{\bar r_N,N^{-\eta}\}$, since $s\le\eta$. As shown therein, the following holds irrespective of how the initial state $X_0$ depends on $H$:
\begin{align}
\label{eq:weak-to-strong-event-bounds}
    \Prb[\cE^c]\le\exp(-c_kN),
    \qquad
    \Prb[\calM^c]\le2\lceil T\rceil\exp\left(-cN^{1-2\eta}\right).
\end{align}
Henceforth, we work on $\cE\cap\calM$.

We emphasize that the initial state $X_0$, and hence the planted vector $\widetilde x^\star$, may depend on the disorder $H_0$. This does not affect the argument, since neither $\cE$ nor $\calM$ involves $\widetilde x^\star$. Moreover, on $\calM$, the increments of the Dynkin martingale $\sgn(m(X_0))M_t^m$ associated with $\widetilde m$ are bounded exactly as those of $M_t^m$. Since the remaining steps of the proof of \cref{thm:strong-recovery-after-critical-correlation} are deterministic on $\cE\cap\calM$, repeating them, we conclude that $\widetilde m(X_t)>1-N^{-\eta}$ for every $t\in[2,T]$.

Finally, since $\abs{m}\ge\widetilde m$, we infer that $\abs{m(X_t)}>1-N^{-\eta}$ for every $t\in[2,T]$ on $\cE\cap\calM$. Hence, we obtain the following from \cref{eq:weak-to-strong-event-bounds} via a union bound:
\begin{align*}
    \Prb\left[\inf_{t\in[2,T]}\abs{m(X_t)}<1-N^{-\eta}\right]
    \le\Prb[\cE^c]+\Prb[\calM^c]
    \le\exp(-c_kN)+2\lceil T\rceil\exp\left(-cN^{1-2\eta}\right).
\end{align*}
Since $T\le\poly(N)$ and $\eta<\nicefrac{1}{2}$, the proof is concluded by absorbing the factor $\lceil T\rceil$ after adjusting constants appropriately.
\end{proof}

\subsection{Proof of \texorpdfstring{\cref{thm:persistent-strong-recovery-every-order}}{Theorem~\ref{thm:persistent-strong-recovery-every-order}}}
\label{subsec:proof-persistent-strong-recovery-every-order}

\begin{proof}
Throughout the proof, we assume that $N$ is sufficiently large. We set $T_{\mathrm{str}}\doteq C_1+2$, where $C_1$ is the constant in \cref{thm:critical-recovery-warm-start}, which depends only on $\beta$ and $k$. Since $\beta_0^{-2}\lambda^{-2}N^{k+2\vartheta}\asymp_kN^{2k+4\vartheta}$, our choices of $\beta_0$ and $T_{\mathrm{av}}$ satisfy the assumptions of \cref{thm:recovery-ergodic-second-moments}. By construction, $(X_t)_{t\ge0}$ is spherical Langevin dynamics for $H$ at inverse temperature $\beta$, and on the event of \cref{thm:recovery-ergodic-second-moments}, we have $v\ne0$ and ${X_0=\tfrac{\sqrt N\,v}{\norm{v}}}$. Since \cref{thm:strong-recovery-after-critical-correlation,thm:critical-recovery-warm-start,thm:strong-recovery-after-weak-recovery} allow the initial state to depend on $H$, we may apply them to $(X_t)_{t\ge0}$.

We first analyze the case when $k$ is odd, wherein we prove the claim with $m(X_t)$ in place of $\abs{m(X_t)}$. Without loss of generality, let $\lambda=C_{\beta,k}N^\gamma$ with $\gamma\ge\tfrac{k-2}{4}$. We infer from \cref{thm:recovery-ergodic-second-moments} that the following holds with probability at least $1-\exp(-c_kN)$:
\begin{align}
\label{eq:every-order-warm-start}
    \frac{1}{\sqrt N}\norm{X_0-\widehat u_H}
    \le N^{-\vartheta}\min\{1,\lambda\sigma_N\}.
\end{align}

We first consider the regime $\gamma\ge\tfrac{k-2}{4}+\tfrac{1}{4k}$. In this regime, we show that the correlation of $X_0$ is already large enough to apply \cref{thm:strong-recovery-after-critical-correlation}. Define $a\doteq\tfrac{k-1}{4k}$, and note that $\gamma\ge(k-1)a$ and $\gamma-\tfrac{k-1}{4}\ge-a$. Since $\sigma_N\asymp_kN^{-\tfrac{k-1}{4}}$ by \cref{thm:avg-spherical-grad-snr}, it follows that $\lambda\sigma_N\gtrsim_kC_{\beta,k}N^{-a}$. Moreover, $\abs{m(x)-m(y)}\le N^{-1/2}\norm{x-y}$ for all $x,y\in\R^N$ by the Cauchy--Schwarz inequality. Hence, we infer from \cref{eq:every-order-warm-start}, \cref{cor:avg-spherical-grad-correlation} and a union bound that the following holds with probability at least $1-\exp(-c_kN^{(k+1)/(2k)})$, after choosing $C_{\beta,k}$ sufficiently large:
\begin{align}
\label{eq:every-order-initial-correlation}
    m(X_0)
    \ge m(\widehat u_H)-\frac{1}{\sqrt N}\norm{X_0-\widehat u_H}
    \ge\bigl(c_k-N^{-\vartheta}\bigr)\min\{1,\lambda\sigma_N\}
    \ge N^{-a}.
\end{align}
Here, the second inequality combines the lower bound on $m(\widehat u_H)$ from \cref{cor:avg-spherical-grad-correlation} with \cref{eq:every-order-warm-start}, and the last inequality uses $\lambda\sigma_N\gtrsim_kC_{\beta,k}N^{-a}$. Let $\bar r_N\doteq N^{-a}$. Since $\gamma\ge(k-1)a$, we have $\lambda\bar r_N^{k-1}\ge C_{\beta,k}$. Hence, applying \cref{thm:strong-recovery-after-critical-correlation} with $s=a$ and $C=1$, for which $\tau_{\bar r_N}=0$ on the event in \cref{eq:every-order-initial-correlation}, and using $T_{\mathrm{str}}\ge2$ and $N\bar r_N^2=N^{\tfrac{k+1}{2k}}$, we obtain the following via a union bound:
\begin{align*}
    \Prb\left[\inf_{t\in[T_{\mathrm{str}},T]}m(X_t)<1-N^{-\eta}\right]
    \le\exp\left(-c_kN^{\min\{\frac{k+1}{2k},\,1-2\eta\}}\right).
\end{align*}
Since $\tfrac{k+1}{2k}>\nicefrac{1}{2}$, the claim follows in this regime after adjusting constants appropriately.

We now consider the regime $\tfrac{k-2}{4}\le\gamma<\tfrac{k-2}{4}+\tfrac{1}{4k}$, and let $r_N\doteq\lambda^{-\tfrac{1}{k-1}}$, as in \cref{subsec:critical-recovery-warm-start}. In this regime, our argument closely mirrors the proof of \cref{thm:persistent-strong-recovery-algorithmic-threshold}. We first show that the second phase attains critical correlation in constant time. By \cref{eq:every-order-warm-start}, we have ${\tfrac{1}{\sqrt N}\norm{X_0-\widehat u_H}\le\lambda\sigma_NN^{-\vartheta}}$. Let $\bar r_N\doteq N^{-\tfrac{\gamma}{k-1}}$, which is a constant multiple of $r_N$. Applying \cref{thm:critical-recovery-warm-start} to $e=X_0$, with the constant $C$ therein chosen such that $Cr_N=\bar r_N$, and using $\gamma\ge\tfrac{k-2}{4}$ and $N\lambda^2\sigma_N^2\gtrsim_k\sqrt N$, we obtain the following with probability at least $1-\exp(-c_{\beta,k}\sqrt N)$:
\begin{align}
\label{eq:every-order-critical-time}
    T_*(X_0)=\inf\bigl\{t\ge0:m(X_t)\ge\bar r_N\bigr\}\le C_1.
\end{align}
Moreover, since $\lambda\bar r_N^{k-1}=C_{\beta,k}$ and $\tfrac{\gamma}{k-1}<\tfrac14$, we may apply \cref{thm:strong-recovery-after-critical-correlation} with $s=\tfrac{\gamma}{k-1}$ and $C=1$, for which $\tau_{\bar r_N}=T_*(X_0)$. Since $T_{\mathrm{str}}\ge T_*(X_0)+2$ on the event in \cref{eq:every-order-critical-time} and $N\bar r_N^2\ge\sqrt N$, we obtain the following via a union bound:
\begin{align*}
    \Prb\left[\inf_{t\in[T_{\mathrm{str}},T]}m(X_t)<1-N^{-\eta}\right]
    \le\exp\bigl(-c_{\beta,k}\sqrt N\bigr)+\exp\bigl(-c_kN^{\min\{1/2,\,1-2\eta\}}\bigr),
\end{align*}
which proves the claim for odd $k$ after adjusting constants appropriately.

Finally, we analyze the case when $k$ is even. Since $C_{\beta,k}$ is sufficiently large, we infer from \cref{thm:recovery-ergodic-second-moments} that the following holds with probability at least $1-\exp(-c_kN)$:
\begin{align}
\label{eq:every-order-weak-recovery}
    \abs{m(X_0)}
    =\left|m\left(\frac{\sqrt N\,v}{\norm{v}}\right)\right|
    \ge\frac12.
\end{align}
In particular, the stopping time $\tau_{1/2}$ in \cref{thm:strong-recovery-after-weak-recovery} equals zero on the event in \cref{eq:every-order-weak-recovery}. Hence, applying \cref{thm:strong-recovery-after-weak-recovery} with $\rho=\nicefrac{1}{2}$, and using $T_{\mathrm{str}}\ge2$, we obtain the following via a union bound:
\begin{align*}
    \Prb\left[\inf_{t\in[T_{\mathrm{str}},T]}\abs{m(X_t)}<1-N^{-\eta}\right]
    \le\exp(-c_kN)+\exp\bigl(-cN^{1-2\eta}\bigr).
\end{align*}
Since $1-2\eta\ge\min\{\nicefrac{1}{2},1-2\eta\}$, the claim follows after adjusting constants appropriately.
\end{proof}

\printbibliography

\end{document}